\documentclass[11pt]{article}

\usepackage{enumitem}
\usepackage{amsthm} 
\RequirePackage{newtxtext}
\RequirePackage{newtxmath}
\RequirePackage{bm}
\RequirePackage{endnotes}
\usepackage{setspace}

\usepackage{mathtools}
\usepackage[ruled,vlined,linesnumbered]{algorithm2e}
\usepackage{float} 
\usepackage{algpseudocode}
\usepackage{booktabs}
\usepackage{tikz}
\usepackage{tabularx}
\usepackage{amsmath}
\usepackage{graphicx}
\usepackage{makecell}
\usepackage{multirow}
\usepackage{subcaption} 
\usepackage[margin=1in]{geometry}
\captionsetup[figure]{justification=centering, singlelinecheck=false}
\captionsetup[subfigure]{justification=centering, singlelinecheck=false}

\usepackage{natbib}
 \bibpunct[, ]{(}{)}{,}{a}{}{,}%
 %
 %
 %
 %
 %


\newcommand{\CVaR}{\mathrm{CVaR}}

\newcommand{\E}{\mathbb{E}}

\renewcommand{\[}{\left[}

\renewcommand{\ge}{\geqslant}
\renewcommand{\le}{\leqslant}
\renewcommand{\geq}{\geqslant}
\renewcommand{\leq}{\leqslant}
\renewcommand{\epsilon}{\varepsilon}

\renewcommand{\cdots}{\dots}

\newtheorem{theorem}{Theorem}

\newtheorem{lemma}[theorem]{Lemma}
\newtheorem{remark}[theorem]{Remark}
\newtheorem{proposition}[theorem]{Proposition}
\newtheorem{definition}{Definition}[section]
\newtheorem{assumption}{Assumption}[section]
\newtheorem{example}{Example}[section]




\begin{document}
\onehalfspacing
\title{Conditional Risk Minimization with Side Information: A Tractable, Universal Optimal Transport Framework}

\author{Xinqiao Xie,\; Jonathan Yu-Meng Li\\
\\
Telfer School of Management\\
University of Ottawa \\
Ottawa, Ontario, Canada K1N 6N5\\  xxie087@uottawa.ca, jonathan.li@telfer.uottawa.ca
}
\maketitle

\begin{center}
\textbf{Abstract}
\end{center}
Conditional risk minimization arises in high-stakes decisions where risk must be assessed in light of side information--such as stressed economic conditions, specific customer profiles, or other contextual covariates. Constructing reliable conditional distributions from limited data is notoriously difficult, motivating a series of optimal-transport-based proposals that address this uncertainty in a distributionally robust manner. Yet these approaches remain fragmented, each constrained by its own limitations: some rely on point estimates or restrictive structural assumptions, others apply only to narrow classes of risk measures, and their structural connections are unclear. We introduce a universal framework for distributionally robust conditional risk minimization, built on a novel union-ball formulation in optimal transport. This framework offers three key advantages: interpretability, by subsuming existing methods as special cases and revealing their deep structural links; tractability, by yielding convex reformulations for virtually all major risk functionals studied in the literature; and scalability, by supporting cutting-plane algorithms for large-scale conditional risk problems. Applications to portfolio optimization with rank-dependent expected utility highlight the practical effectiveness of the framework, with conditional models converging to optimal solutions where unconditional ones clearly do not.




\textbf{Key words:} Contextual Optimization, Distributionally robust optimization, Optimal Transport, Risk Measures


\section{Introduction}\label{sec:Intro}
Leveraging contextual side information--such as customer profiles, economic indicators, or individualized health records--has become integral to modern data-driven decision-making across a wide range of business domains. These problems are classically formulated through conditional expectation optimization (e.g., \cite{BK20}):
\begin{align} \label{pro1}
\min_{\alpha \in \mathcal{A}} \, \E_{\mathbb{Q}}\left[\ell(Y, \alpha) \mid X \in \mathcal{N} \right],
\end{align}
where $Y \in \mathbb{R}^m$ denotes an uncertain quantity of interest, $X \in \mathbb{R}^n$ represents covariates encoding side information, and $\alpha \in \mathcal{A}$ is a decision that minimizes the mean loss of $\ell(Y,\alpha)$, conditioned on the covariates $X$ lying in the region $\mathcal{N} \subseteq \mathbb{R}^n$. While analytically convenient, the conditional mean often fails to capture the realities of high-stakes decision-making, where performance is rarely judged by averages alone. Financial institutions use measures such as conditional value-at-risk (CoVaR) to monitor systemic fragility (see \cite{AB11} for more details of CoVaR); healthcare systems prioritize robustness to adverse scenarios (e.g., demand surges or rare severe outcomes); and personalized operations seek guarantees that account for tail risk beyond average performance. To address such diverse risk tasks, we formalize the notion of conditional risk minimization (CRM):
\begin{align} \label{pro2}
\min_{\alpha \in \mathcal{A}} \, \rho_{\mathbb{Q}}\left[\ell(Y, \alpha) \mid X \in \mathcal{N} \right],
\end{align}
where $\rho$ denotes a general risk-aware decision criterion that subsumes expectation as a special case and captures a broad spectrum of nonlinear objectives tailored to different risk preferences.

A fundamental challenge, already present in the classical expectation setting \eqref{pro1}, is that the conditional distribution is not directly observable; in practice, only samples from the joint distribution are available. To bridge this gap, methods to approximate the conditional law range from kernel regression \citep{BR19}, $k$-nearest neighbors, local-linear estimators, and tree-based methods \citep{BK20} to covariate-SAA frameworks that embed prediction models within sample-average approximations \citep{KBL25}. These techniques differ in how side information is incorporated—either by reweighting observed samples or by reshaping the support set—yet in all cases the resulting estimates remain vulnerable to modeling bias and statistical error. Consequently, decisions based on these approximations can undermine out-of-sample reliability, underscoring the need for a more principled framework that directly addresses ambiguity in the conditional distribution.

The success of distributionally robust optimization (DRO) in ensuring reliable out-of-sample performance in unconditional settings has inspired analogous efforts to robustify the conditional expectation problem \eqref{pro1} using optimal transport–based ambiguity sets. Two main modeling approaches have emerged. The first places ambiguity directly over the space of conditional distributions, a regime that we refer to as {\it predict-then-robustify}: a conditional distribution is first estimated using machine-learning techniques, and a Wasserstein ball is then formed around the estimate (see \cite{WCW21}, \cite{WCW24}, \cite{KBL24}). For example, \citet{WCW21} use a kernel-regression-based conditional distribution as the ball’s center, while \citet{KBL24} employ a residual-based construction as the center; \citet{WCW24} further consider the intersection of two Wasserstein balls to capture covariate shifts. While these formulations largely retain the tractability of unconditional Wasserstein DRO and can, in principle, accommodate general risk criteria, their reliance on conditional distribution estimates inherits the attendant bias and error, and they remain largely restricted to singleton conditions, ${\cal N}=\{x_0\}$, rather than arbitrary covariate sets.

The second approach bypasses the prediction step and constructs ambiguity sets directly on the joint distribution, typically centered at the empirical law. Along this line, \citet{N20} and \citet{N24} employ optimal transport balls augmented with constraints that guarantee positive probability mass on the conditioning set. Because conditional expectations are nonlinear in the joint distribution, the resulting problems are more challenging to solve, and extensions to risk criteria lead only to conservative approximations. \citet{E22} takes a different angle by restricting the admissible joint distributions to those supported on the conditioning set in the covariates. While this makes the conditional expectation linear in the distribution, it requires the use of partial mass transport metrics to construct ambiguity sets from the empirical law, whose interpretation remains less transparent. Despite the comparison made in \citet{E22} with the formulation of \citet{N20}, the two models are difficult to relate beyond relative conservativeness—each carrying its own interpretation, with structural connections and limitations not yet fully understood.

Despite their promise, both paradigms fall short of providing a universal, robust, and tractable framework for the conditional risk minimization problem \eqref{pro2}. Predict-then-robustify methods retain the flexibility and tractability of unconditional DRO but are brittle, as their performance hinges on the quality of the prediction step. Joint-distribution approaches avoid this dependence but remain narrow in scope: they are largely confined to conditional expectation problems \eqref{pro1}, and their extensions to richer risk criteria typically yield only conservative relaxations. For instance, \citet{N24} extend their framework to risk measures such as mean–variance and mean–conditional value-at-risk (CVaR), but their formulation applies only to those admitting a finite-dimensional expectation representation—a condition not necessarily satisfied by certain important coherent risk measures. Moreover, their tractable reformulations provide no guarantees of tightness. These challenges underscore the difficulty of extending methods designed for the conditional expectation problem \eqref{pro1} to the broader CRM problem \eqref{pro2}. More fundamentally, existing approaches are built on structurally distinct formulations, leaving open how they can be compared, related, or generalized. These limitations motivate our development of a universal and tractable framework for conditional risk minimization—one that recovers existing paradigms as special cases and enables principled extensions to a broader class of risk criteria.

Our key innovation lies in uncovering a previously unexplored ambiguity-set structure in DRO: the union of optimal transport balls. While other constructions--such as intersections of balls--have been explored, the use of a union as an ambiguity set remains, to our knowledge, entirely absent. Our formulation arises naturally from a core limitation of the predict-then-robustify paradigm, which fixes both the reference distribution and robustness radius despite uncertainty in each. We instead treat both as variables and define the ambiguity set as the union of balls over all admissible (distribution, radius) pairs. This union-ball structure yields three key breakthroughs: (1) it eliminates the brittleness of predict-then-robustify methods; (2) it provides a unified framework that subsumes existing paradigms for conditional optimization as special cases, enabling exact and transparent comparisons through their corresponding union-ball representations; and (3) most importantly, it yields a structural reformulation of worst-case conditional risk in CRM as a maximization over unconditional counterparts, which, as shown throughout this paper, can in turn be reformulated into tractable convex programs for a broad class of risk measures—including those previously beyond reach.

We demonstrate the universality and tractability of our union-ball formulation through a comprehensive suite of conditional risk minimization problems spanning a wide range of risk measures. This includes problems studied in \citet{N24}, such as mean–variance and mean–CVaR, where their approach yields only conservative approximations, while ours delivers exact and simpler solutions. More broadly, our framework applies to risk criteria entirely outside the scope of existing methods, including those lacking finite-dimensional expectation representations. In particular, we establish a tractable distributionally robust conditional formulation of the celebrated rank-dependent expected utility (RDEU) from prospect theory—an important case for which no such formulation was previously available. Finally, we show how our universal framework enables scalable cutting-plane algorithms to solve large-scale conditional RDEU problems with practical efficiency.

\subsection{Related Literature}
Beyond the methods already discussed in the introduction, two additional strands of work are worth noting. The first is the \emph{predict-then-optimize} regime, which does not attempt to estimate the full conditional distribution but instead predicts conditional statistics—such as conditional expectations or risk measures—from side information, and then uses these predictions as inputs to an optimization problem. Examples include \citet{DK17}, \citet{EG22}, and \citet{MPM22}, with further analysis of generalization and stability properties in \citet{EEGT19} and \citet{HKM22}. While effective in certain settings, these methods inherit the fragility of the prediction step, provide no explicit safeguard against distributional ambiguity, and are difficult to extend to general risk minimization problems that require a full distributional specification. 

A second, related but distinct, line of work transforms conditional expectation problems into unconditional ones by embedding side information directly into the decision rules. Notably, \citet{ZYG24} and \citet{YZG22} adopt this paradigm, formulating objectives as expectations under the joint distribution rather than conditional criteria. In particular, \citet{YZG22} introduce a causal transport distance within this decision-rule framework. While these approaches provide an alternative perspective, they shift the problem away from the CRM formulation studied here.

\section{A Universal OT Framework for Conditional Optimization with Side Information}\label{main-result}
\subsection{Preliminaries and Existing OT-Based Approaches}\label{other_model}
To set the stage for our universal framework, we first recap the three existing {optimal transport (OT)-based} approaches through their precise formulations, along with the minimal preliminaries needed to state them formally. Recall the following definition of optimal transport cost:
\begin{definition}[Optimal transport cost]
Let $\Xi \subseteq \mathbb{R}^d$ be a Polish space for some integer $d$, and let $\mathbb{D} : \Xi \times \Xi \to \mathbb{R}_+$ be a continuous cost function. The optimal transport cost between two probability distributions $\mathbb{Q}_1$ and $\mathbb{Q}_2$ supported on $\Xi$ is defined as
$$
\mathcal{W}(\mathbb{Q}_1, \mathbb{Q}_2) \triangleq \min_{\pi \in \Pi(\mathbb{Q}_1, \mathbb{Q}_2)} \mathbb{E}_{\pi}[\mathbb{D}(\xi_1, \xi_2)],
$$
where $\Pi(\mathbb{Q}_1, \mathbb{Q}_2)$ denotes the set of all couplings of $\mathbb{Q}_1$ and $\mathbb{Q}_2$ on $\Xi \times \Xi$.\footnote{Here we adopt the minimization form not the infimum form to define the optimal transport cost. This choice is appropriate because, when the underlying space $\Xi$ is Polish, the existence of an optimal joint distribution is guaranteed by Theorem 4.1 of \cite{V08}.}
\end{definition}

An OT ball is defined as
$$
\mathbb{B}_{\delta}(\widehat{\mathbb{P}}) = \left\{ \mathbb{Q} \in \mathcal{P}(\Xi) : \mathcal{W}(\mathbb{Q}, \widehat{\mathbb{P}}) \leq \delta \right\},
$$
where $\widehat{\mathbb{P}}$ is the reference distribution and $\delta\ge 0$ is the ball radius.
It consists of all distributions within $\delta$-distance from $\widehat{\mathbb{P}}$ under the cost $\mathcal{W}$.

We begin with the \textit{Predict-then-Robustify} approach as a baseline, given its conceptual simplicity, followed by the more sophisticated formulations developed in \citet{N24} and \citet{E22}. For a random pair $(X,Y)$ on $\mathcal{X}\times\mathcal{Y}$ with joint distribution $\mathbb{P}$, we denote the conditional law of $Y$ given $X$ by ${\mathbb{P}}_{Y|X}$. To emphasize it as a distribution of $Y$ corresponding to a fixed covariate value $X=x$, we write ${\mathbb{P}}^{{x}}|_{Y}$. More generally, we use 
${\mathbb{P}}|_{Y}$ to denote the distribution of $Y$ under a conditioning event on $X$.

\paragraph{{\bf (1) Predict-then-Robustify.}}
This baseline approach can readily accommodate solving the conditional risk minimization (CRM) problem~\eqref{pro2} for general risk criteria, given its plug-and-play nature. Given $N$ samples $(\widehat{X},\widehat{Y})=\{(\widehat{x}_i, \widehat{y}_i)\}_{i=1}^N$ drawn from the joint distribution, one first applies a predictive model (e.g., kernel regression) to estimate the conditional distribution $\widehat{\mathbb{P}}_{Y|X}$, and then solves a standard DRO problem with a OT ball centered at the estimate $\widehat{\mathbb{P}}^{x_0}|_{Y}$:
\begin{align}
\min_{\alpha \in \mathcal{A}} \sup_{\mathbb{Q}|_{Y} \in \mathbb{B}_{\widehat{\delta}}(\widehat{\mathbb{P}}^{x_0}|_{Y})} \rho_{\mathbb{Q}|_{Y}}\left[\ell(Y, \alpha)\right], \label{kernel}
\end{align}
where $\mathbb{B}_{\widehat{\delta}}(\widehat{\mathbb{P}}^{x_0}|_{Y})$ is an OT ball of radius $\widehat{\delta}$ centered at $\widehat{\mathbb{P}}^{x_0}|_{Y}$. However, this setup inherits a modeling bias tied to the choice of predictive method (e.g., kernel type, bandwidth), which propagates through the decision process. Moreover, it is typically limited to the singleton case $\mathcal{N} = \{{x_0}\}$ in~\eqref{pro2}, since most predictive models are not designed to output conditional distributions over non-singleton covariate sets.

\paragraph{{\bf (2) Full Optimal Transport.}} 
The approach of \cite{N24} circumvents the need to estimate a conditional distribution by placing the ambiguity set over joint distributions $\mathbb{Q}$ supported on $\mathcal{X}\times\mathcal{Y}$. Let 
$\mathbb{D}_\mathcal{X} : \mathcal{X} \times \mathcal{X} \to \mathbb{R}_+$ denote a distance matric on $\mathcal{X}$. For $x_0\in\mathcal{X}$ and $\gamma\ge0$, define the covariate region $\mathcal{N}_\gamma(x_0) \triangleq \left\{
x \in \mathcal{X} : \mathbb{D}_{\mathcal{X}}(x, x_0) \leq \gamma
\right\}$. Given such a region ${\cal N}:=\mathcal{N}_\gamma(x_0)$, they propose the
following distributionally robust counterpart to problem~\eqref{pro1}:
\begin{align}\label{m01}
\min_{\alpha \in \mathcal{A}} \;
\sup_{\substack{\mathbb{Q} \in \mathbb{B}_{\delta_0}(\widehat{\mathbb{P}}) \\ \mathbb{Q}\left(X \in \mathcal{N}_\gamma(x_0)\right) \in \mathcal{N}_{\omega}}} 
\E_{\mathbb{Q}}\left[\ell(Y, \alpha) \mid X \in \mathcal{N}_\gamma(x_0)\right],
\end{align}
where
$$
\mathbb{B}_{\delta_0}(\widehat{\mathbb{P}}) = \left\{
\mathbb{Q} \in \mathcal{M}(\mathcal{X} \times \mathcal{Y}) : 
\mathcal{W}(\mathbb{Q}, \widehat{\mathbb{P}}) \leq \delta_0
\right\},
$$
{with $\delta_0 > 0$, $\widehat{\mathbb{P}}=\frac{1}{N}\sum_{i=1}^N\mathrm{I}_{(\widehat{x}_i, \widehat{y}_i)}$, and  $\mathcal{N}_{\omega} =[\omega,1]$ for $\omega>0$ or $\mathcal{N}_{\omega} =(\omega,1]$ for $\omega=0$}. This formulation evaluates the worst-case conditional expectation over distributions $\mathbb{Q}$ lying within a Wasserstein ball centered at $\widehat{\mathbb{P}}$. The constraint $\mathbb{Q}(X \in \mathcal{N}_\gamma(x_0)) \in \mathcal{N}_{\omega}$ specifies an admissible range for the probability mass placed on the conditioning region.

\paragraph{{\bf (3) Partial Mass Transport.}} The approach of \cite{E22} likewise circumvents the need to estimate a conditional distribution, but instead places the ambiguity set directly on joint distributions $\mathbb{Q}$ supported within a covariate region $\mathcal{N} := \mathcal{N}_\gamma(x_0)$. 
They construct this ambiguity set using the partial optimal transport distance, which measures the proximity between a candidate distribution $\mathbb{Q}$ and a trimmed version of the empirical distribution $\widehat{\mathbb{P}}$:
$$
\mathcal{W}\left(\mathbb{Q}, \mathcal{R}_{1-\beta}(\widehat{\mathbb{P}}) \right)
\triangleq 
\min_{\widetilde{{\mathbb{P}}} \in \mathcal{R}_{1-\beta}(\widehat{\mathbb{P}})}
\mathcal{W}\left(\mathbb{Q}, \widetilde{{\mathbb{P}}} \right),
$$
where the trimming set is defined as
$$
\mathcal{R}_{1-\beta}(\widehat{\mathbb{P}}) \triangleq \left\{
\mathbb{Q} \ll \widehat{\mathbb{P}} : \frac{d\mathbb{Q}}{d\widehat{\mathbb{P}}} \leq \frac{1}{\beta}
\right\}, \quad\beta\in (0, 1].
$$
The resulting distributionally robust counterpart to problem~\eqref{pro1} is:
\begin{align}\label{m02}
\min_{\alpha \in \mathcal{A}} \;
\sup_{\substack{\mathbb{Q} \in \mathbb{B}_{\delta_0}(\widehat{\mathbb{P}};\beta) \\ \mathbb{Q}\left(X \in \mathcal{N}_\gamma(x_0)\right) = 1}} 
\E_{\mathbb{Q}}\left[\ell(Y, \alpha)\right],
\end{align}
where $\beta\in(0,1]$ is a given constant and $\mathbb{B}_{\delta_0}(\widehat{\mathbb{P}};\beta)$ is the ambiguity set defined via partial mass transport:
$$
\mathbb{B}_{\delta_0}(\widehat{\mathbb{P}};\beta) \triangleq \left\{
\mathbb{Q} \in \mathcal{M}(\mathcal{X} \times \mathcal{Y}) : 
\mathcal{W}\left(\mathbb{Q}, \mathcal{R}_{1-\beta}(\widehat{\mathbb{P}}) \right) \leq \delta_0
\right\}.
$$

Despite their individual merits, it remains unclear how to choose among these three approaches, how to compare them on equal footing, and to what extent they can accommodate broader classes of risk minimization problems. Their differing structural assumptions and fixed modeling choices also raise concerns about hidden biases. These limitations motivate the need for a more universal formulation—one that systematically integrates these modeling elements within a single optimal transport framework.

\subsection{A Universal Framework via Union-Ball Formulation}
To introduce our framework, we begin by revisiting the Predict-then-Robustify paradigm and its vulnerability to modeling bias--stemming both from reliance on point estimates of conditional distributions and from fixed choices of the radius.  A natural way to mitigate this bias is to explicitly account for the uncertainty in both the reference distribution and the radius, by treating them as variables ranging over an admissible set.
Concretely, we consider discrete distributions on $\mathcal{Y}$ supported on empirical outcomes, i.e., distributions of the form $\sum_{i=1}^N p_i\mathrm{I}_{\widehat{y}_i}$ for some $p_i \geq 0$, and define $\mathcal{V}$ as a {convex} set of all admissible pairs $(p, \delta)$ of distribution weights and radius values. The resulting ambiguity set takes the form of a union:
$$\mathop{\bigcup}\limits_{(p, \delta) \in \mathcal{V}} \mathbb{B}_{\delta}\left(\sum_{i=1}^N p_i\mathrm{I}_{\widehat{y}_i}\right),$$
capturing the joint uncertainty over both the center and the size of the OT ball.

This leads to the following distributionally robust formulation:
\begin{align}
\min_{\alpha \in \mathcal{A}} \sup_{\mathbb{Q}|_{Y} \in \mathop{\bigcup}\limits_{(p, \delta) \in \mathcal{V}} \mathbb{B}_{\delta}\left(\sum_{i=1}^N p_i\mathrm{I}_{\widehat{y}_i}\right)} \rho_{\mathbb{Q}|_{Y}}\left[\ell(Y, \alpha)\right], \label{m0}
\end{align}
which serves as a robust counterpart to the Predict-then-Robustify formulation~\eqref{kernel}, with  $(\widehat{\mathbb{P}}^{x_0}|_{Y}, \widehat{\delta}) \in \mathcal{V}$. 

As we will show, this natural--yet previously unexplored--DRO formulation proves remarkably powerful: it unifies and generalizes existing approaches, reveals hidden tractability, uncovers deep structural connections, and offers full flexibility to encode modeling preferences and prior knowledge. Throughout the paper, we adopt the following definition of the optimal transport metric over the joint space $\Xi=\mathcal{X} \times \mathcal{Y}$, as used in \cite{N24}.

\begin{definition} [Separable Transport Cost]
We endow the joint space $\mathcal{X} \times \mathcal{Y}$ with a separable cost function of the form
$$
\mathbb{D}\left((x, y), (x', y')\right) = \mathbb{D}_{\mathcal{X}}(x, x') + \mathbb{D}_{\mathcal{Y}}(y, y'),
$$
where $\mathbb{D}_{\mathcal{X}}$ and $\mathbb{D}_{\mathcal{Y}}$ are symmetric, continuous, and nonnegative ground costs on $\mathcal{X} \times \mathcal{X}$ and $\mathcal{Y} \times \mathcal{Y}$, respectively, and satisfy $\mathbb{D}((x, y), (x', y')) = 0$ if and only if $(x, y) = (x', y')$.
\end{definition}

To best illustrate the expressive power of our formulation in \eqref{m0}, we begin with the following generalization of the full optimal transport model in \cite{N24}:
\begin{align}\label{m1}
\min_{\alpha \in \mathcal{A}} \;
\sup_{\substack{\mathbb{Q} \in \mathbb{B}_{\delta_0}(\widehat{\mathbb{P}}) \\ \mathbb{Q}\left(X \in \mathcal{N}_\gamma(x_0)\right) \in \mathcal{N}_{\omega}}} 
\rho_{\mathbb{Q}}\left[\ell(Y, \alpha) \mid X \in \mathcal{N}_\gamma(x_0)\right],
\end{align}
{with $\delta_0 > 0$, $\gamma \geq 0$, and a convex set $\mathcal{N}_{\omega} \subset (0, 1]$, which means that $\mathcal{N}_{\omega}$ is an interval contained in $(0,1]$. }This formulation extends \cite{N24} in two significant ways. First, it accommodates arbitrary decision criteria $\rho$, thus enabling a fully general conditional risk minimization framework. In contrast, \cite{N24} focus primarily on expectations or risk measures admitting a minimization form $\rho(Z) = \inf_{t \in \mathbb{R}^k} \mathbb{E}[\ell(Z, t)]$. Second, it allows flexible specification of the admissible mass range $\mathcal{N}_\omega$, offering greater control over the conditioning event when needed.

{For the remainder of this section, we assume without loss of generality that the admissible mass interval is $\mathcal{N}_\omega=[\omega_1,\omega_2]$ with $0<\omega_1\le\omega_2\le1$.} We now show that the general conditional risk minimization problem in \eqref{m1} can be written in the form of the union-ball formulation \eqref{m0}. { We first state the necessary assumptions for the neighborhood set $\mathcal{N}_\gamma(x_0)$.
\begin{assumption}\label{assumption}
(Regularity conditions). The following assumptions hold.
\begin{itemize}
\item [(i)] Projection: For any $i =1,\ldots,N$, there exists a projection $\widehat{x}_i^p \in \partial \mathcal{N}_\gamma\left(x_0\right)$ of $\widehat{x}_i$ onto $\partial\mathcal{N}_\gamma\left(x_0\right)$ under the cost $\mathbb{D}_{\mathcal{X}}$, that is,
$$
0 \leq \min _{x \in \partial\mathcal{N}_\gamma\left(x_0\right)} \mathbb{D}_{\mathcal{X}}\left(x, \widehat{x}_i\right)=\mathbb{D}_{\mathcal{X}}\left(\widehat{x}_i^p, \widehat{x}_i\right):=d_i.
$$
\item [(ii)] Vicinity: For any $x \in \partial \mathcal{N}_\gamma\left(x_0\right)$ and for any radius $r>0$, the neighborhood set $$\left\{x^{\prime} \in \mathcal{X} \backslash \mathcal{N}_\gamma\left(x_0\right)\right.: \left.\mathbb{D}_{\mathcal{X}}\left(x^{\prime}, x\right) \leq r\right\}$$ around the boundary point $x$ is non-empty.
\end{itemize}
\end{assumption}

Without loss of generality, assume that $(\widehat{x}_i, \widehat{y}_i)\notin \{(x,y):x\in\mathcal{N}_\gamma\left(x_0\right)\}$ for $i=1,...,m$, and $(\widehat{x}_i, \widehat{y}_i)\in \{(x,y):x\in\mathcal{N}_\gamma\left(x_0\right)\}$ for $i=m+1,...,N$. {As a first step, we establish the necessary and sufficient condition for feasibility of problem \eqref{m1}, presented in the following proposition.  
 \begin{proposition}\label{feasible_con}
    Let \begin{align*}
        \delta_{min}=\min_{\substack{v_i\in[0,1],i=1,...,N, \\ 1/N\sum_{i=1}^Nv_i\in\mathcal{N}_\omega}}\frac{1}{N}\left(\sum_{i=1}^mv_id_i+\sum_{i=m+1}^N(1-v_i)d_i\right),
    \end{align*}
    with $\mathcal{N}_\omega=[\omega_1,\omega_2]\subset(0,1]$. For the radius $\delta_0$ given in problem \eqref{m1}, we have
    \begin{itemize}
        \item[(i)] For $\frac{N-m}{N}>\omega_2$, problem \eqref{m1} is feasible if and only if $\delta_0>\delta_{min}$.
        \item[(ii)] For $\frac{N-m}{N}\le\omega_2$, problem \eqref{m1} is feasible if and only if $\delta_0\ge\delta_{min}$. Specifically, if $\frac{N-m}{N}\in [\omega_1,\omega_2]$, $\delta_{min}=0$.
    \end{itemize}
\end{proposition}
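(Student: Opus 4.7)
The plan is to characterize feasibility of problem \eqref{m1} by computing the infimum of $\mathcal{W}(\mathbb{Q}, \widehat{\mathbb{P}})$ subject to the mass constraint $\mathbb{Q}(X \in \mathcal{N}_\gamma(x_0)) \in [\omega_1, \omega_2]$ and then comparing this infimum with $\delta_0$. The distinction between cases (i) and (ii) will hinge entirely on whether this infimum, namely $\delta_{min}$, is actually \emph{attained} by some admissible coupling.

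To begin, I would parameterize any transport plan $\pi \in \Pi(\mathbb{Q}, \widehat{\mathbb{P}})$ by variables $v_i \in [0,1]$ representing the fraction of the mass $1/N$ at $(\widehat{x}_i,\widehat{y}_i)$ that is sent to a destination whose $X$-coordinate lies in $\mathcal{N}_\gamma(x_0)$. Under this parameterization, the mass constraint becomes $\frac{1}{N}\sum_{i=1}^N v_i \in [\omega_1,\omega_2]$, and, holding the $Y$-coordinate fixed, the separable cost decomposes per sample. For $i \leq m$ (originally outside), Assumption \ref{assumption}(i) identifies the cheapest inward destination as the boundary projection $\widehat{x}_i^p$, so the contribution $v_i d_i/N$ is \emph{attained}. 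For $i > m$ (originally inside), the infimum cost to transport $(1-v_i)/N$ mass into the open complement $\mathcal{X} \setminus \mathcal{N}_\gamma(x_0)$ equals $(1-v_i)d_i/N$ but is \emph{not attained}, since no point of the complement sits on $\partial\mathcal{N}_\gamma(x_0)$; Assumption \ref{assumption}(ii) nonetheless guarantees that this bound can be approached within any $\epsilon>0$. Summing the per-sample bounds and minimizing over admissible $v$ yields precisely the expression defining $\delta_{min}$.

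I would then close each case as follows. In case (ii), when $\frac{N-m}{N} \leq \omega_2$, I can take $v_i^* = 1$ for all $i > m$, so the optimal plan requires no outward transport, every contribution attains its lower bound at a boundary projection, and there exists a feasible $\mathbb{Q}$ with $\mathcal{W}(\mathbb{Q},\widehat{\mathbb{P}}) = \delta_{min}$; hence feasibility holds iff $\delta_0 \geq \delta_{min}$. The sub-case $\frac{N-m}{N}\in[\omega_1,\omega_2]$ is immediate because $\widehat{\mathbb{P}}$ itself satisfies the mass constraint with zero transport, giving $\delta_{min}=0$. In case (i), when $\frac{N-m}{N} > \omega_2$, the constraint $\sum_{i=1}^N v_i/N \leq \omega_2$ forces $v_i < 1$ for at least one index $i > m$, so every feasible coupling must transport strictly positive mass into the open complement; combined with non-attainment of the outward infimum, this gives $\mathcal{W}(\mathbb{Q},\widehat{\mathbb{P}}) > \delta_{min}$ strictly for every feasible $\mathbb{Q}$, while Assumption \ref{assumption}(ii) still lets us construct feasible $\mathbb{Q}$ with cost within $\epsilon$ of $\delta_{min}$ for any $\epsilon>0$. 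Therefore feasibility is equivalent to $\delta_0 > \delta_{min}$.

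The main obstacle will be the strict inequality in case (i), where I need to argue that even indices with $d_i = 0$ (interior samples lying on $\partial\mathcal{N}_\gamma(x_0)$) still contribute a strictly positive outward cost whenever $v_i<1$, so that the global inequality $\mathcal{W}(\mathbb{Q},\widehat{\mathbb{P}}) > \delta_{min}$ cannot degenerate into equality. This requires leveraging openness of $\mathcal{X}\setminus\mathcal{N}_\gamma(x_0)$ together with Assumption \ref{assumption}(ii) to cleanly separate approximability from attainability at this zero-distance edge case, and to ensure that the infimum-but-not-minimum behavior persists uniformly across all such indices when aggregated into the transport cost.
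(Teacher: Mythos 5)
Your approach is essentially the same as the paper's: you parameterize transport plans by the in-region fraction $v_i$ of each sample's mass, bound the cost below by $\frac{1}{N}\bigl(\sum_{i\le m}v_i d_i + \sum_{i>m}(1-v_i)d_i\bigr)$, and observe that inward costs to boundary projections are attained while outward costs into the open complement are only approached. The paper packages the identical calculation through the union-ball decomposition of Theorem~\ref{thm1} (closed balls indexed by $\mathcal{V}_1$ when $v_i=1$ for all $i>m$, open balls indexed by $\mathcal{V}_2$ otherwise), but the underlying transport-plan analysis is the same, and the case split on $\tfrac{N-m}{N}$ versus $\omega_2$ matches as well.

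One assertion in your case~(ii) is left unjustified and deserves proof: that when $\tfrac{N-m}{N}\le\omega_2$ the minimizer of the linear program defining $\delta_{\min}$ can be chosen with $v_i^*=1$ for all $i>m$. This is not automatic, since raising those $v_i$ to $1$ changes the mass budget $\tfrac1N\sum_i v_i$ and may force adjustments among the $v_i$ for $i\le m$. The paper establishes the claim by an exchange argument: given any feasible minimizer with $v_i^*<1$ for some $i>m$, set $v_i=1$ for $i>m$ and reduce the $v_i$'s for $i\le m$ by a total of $\sum_{i>m}(1-v_i^*)$ to preserve the mass constraint, which keeps the objective no larger; the reduction is possible precisely because $\tfrac{N-m}{N}\le\omega_2$ (or $\le\omega_1$ in the relevant sub-case) guarantees $\sum_{i\le m}v_i^*\ge \sum_{i>m}(1-v_i^*)$. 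Without this step, the attainability of $\delta_{\min}$ by a cost-$\delta_{\min}$ coupling is not secured and the "if and only if $\delta_0\ge\delta_{\min}$" conclusion would hang. You correctly identified the strictness issue in case~(i) as the other sensitive point, and your sketch handles it in the same spirit as the paper's $\psi_{\mathbb{Q}}$ construction.
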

\begin{remark}
    If $\omega_2=1$, then $\delta_{min}$ coincides with the minimum radius given in Proposition 2.5 of \cite{N24}. For the more general case of $\mathcal{N}_\omega$, however, the radius in Proposition 2.5 of \cite{N24} no longer applies, whereas $\delta_{min}$ yields the correct value.
\end{remark}}

Before presenting the main result, we first formalize the admissible conditional distributions of $Y$, induced by joint laws $\mathbb{Q}\in\mathbb{B}_{\delta_0}(\widehat{\mathbb{P}})$ satisfying $\mathbb{Q}(X \in \mathcal{N}_\gamma(x_0)) \in \mathcal{N}_\omega$--as this structure will play a key role in what follows. Specifically,
\begin{align*}
\mathcal{K}({\mathcal{N}_\gamma\left(x_0\right),\mathcal{N}_{\omega},\mathbb{B}_{\delta_0}}(\widehat{\mathbb{P}}))=\Bigg\{\mathbb{Q}|_Y:&\mathbb{Q}|_Y(A)=\frac{\mathbb{Q}\left(\mathcal{N}_\gamma\left(x_0\right)\times A\right)}{\mathbb{Q}\left(\mathcal{N}_\gamma\left(x_0\right)\times \mathcal{Y}\right)}\text{ for any measurable set }A \subset \mathcal{Y},\\
&\text{ where } \mathbb{Q}\in\mathbb{B}_{\delta_0}(\widehat{\mathbb{P}})\text{ satisfies that }\mathbb{Q}\left(X \in \mathcal{N}_\gamma\left(x_0\right)\right) \in\mathcal{N}_{\omega}
\Bigg\}.
\end{align*}}

We impose the following mild assumptions on the risk measure $\rho$ and the loss function $\ell$. As shown in Lemma \ref{OT-continuity} in the next section, virtually all major risk functionals studied in the literature satisfy them.
\begin{assumption}\label{assum2}
\begin{itemize}
    \item[(i)] $\rho$ is law-invariant, that is, if $Y_1\overset{d}{=}Y_2$, $\rho(Y_1)=\rho(Y_2)$, where $Y_1 \overset{d}{=} Y_2$ means that $Y_1$ and $Y_2$ have the same distribution. Assuming the distribution of $Y$ is $\mathbb{Q}$, then we can denote $\rho(Y)$ as $\rho(\mathbb{Q})$.
    \item[(ii)] Denote $\rho\left[\ell(Y, \alpha) \right]=\rho\circ\ell(Y,\alpha)=\rho\circ\ell(\mathbb{Q},\alpha)$, where $\mathbb{Q}$ is the distribution of $Y$. $\rho\circ\ell$ is OT-continuity, that is, for any distribution $\mathbb{Q}$, if there exists a series of distribution $\{\mathbb{Q}_n\}$ such that $\mathcal{W}(\mathbb{Q}_n, \mathbb{Q})\rightarrow
    0$ as $n\rightarrow\infty$, then $\rho\circ\ell(\mathbb{Q}_n,\alpha)\rightarrow\rho\circ\ell(\mathbb{Q},\alpha)$ for any $\alpha\in\mathcal{A}$.
\end{itemize}
\end{assumption}


\begin{theorem} \label{thm1} Suppose the neighborhood set $\mathcal{N}_\gamma(x_0)$ satisfies Assumption~\ref{assumption}, the radius $\delta_0$ satisfies the feasibility condition of problem~\eqref{m1} in Proposition~\ref{feasible_con}, and the loss function $\ell$ and risk measure $\rho$ satisfy Assumption~\ref{assum2}. Then problem~\eqref{m1}, equivalently
\begin{align}\label{innerproblem*}\min_{\alpha\in\mathcal{A}}\sup _{\mathbb{Q}|_{{Y}} \in\mathcal{K}({\mathcal{N}_\gamma\left(x_0\right),\mathcal{N}_{\omega},\mathbb{B}_{\delta_0}}(\widehat{\mathbb{P}})) } \rho_{\mathbb{Q}|_{{Y}}}\left[\ell(Y, \alpha) \right],\end{align}
admits the union-ball reformulation~\eqref{m0} with
	$${\cal V}:=\left\{(p,\delta):\exists \epsilon \text{~such that }(p,\delta,\epsilon)\in \mathcal{V}_0 \right\},$$
 where 
 \begin{align*}
 	{\cal V}_0:=\Bigg\{(p,\delta,\epsilon):&\frac{1}{\epsilon}\in \mathcal{N}_{\omega}, p_i\in\left[0,\frac{\epsilon}{N}\right], i=1,..., N, \sum_{i=1}^Np_i=1,\\
 	&\delta = \epsilon\left(\delta_0-\frac{1}{N}\sum_{i=m+1}^Nd_i\right)-\sum_{i=1}^{m}p_id_i+\sum_{i=m+1}^Np_id_i\Bigg\}.
 \end{align*}
\end{theorem}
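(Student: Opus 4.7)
My plan is to establish the set identity (or at least an equality of suprema after invoking OT-continuity)
\begin{align*}
\mathcal{K}\bigl(\mathcal{N}_\gamma(x_0),\mathcal{N}_\omega,\mathbb{B}_{\delta_0}(\widehat{\mathbb{P}})\bigr) \;=\; \bigcup_{(p,\delta)\in\mathcal{V}} \mathbb{B}_\delta\Bigl(\sum_{i=1}^N p_i \mathrm{I}_{\widehat{y}_i}\Bigr),
\end{align*}
from which the theorem follows. The equivalence of \eqref{m1} with its conditional form \eqref{innerproblem*} is essentially by definition: law-invariance of $\rho$ (Assumption~\ref{assum2}(i)) forces $\rho_{\mathbb{Q}}[\ell(Y,\alpha)\mid X\in\mathcal{N}_\gamma(x_0)]$ to depend on $\mathbb{Q}$ only through the conditional law $\mathbb{Q}|_Y$, so the inner sup over admissible joints becomes a sup over admissible conditionals $\mathcal{K}$; the set identity then converts that inner sup into the union-ball form \eqref{m0}. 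Throughout, I would exploit the separable cost $\mathbb{D}((x,y),(x',y')) = \mathbb{D}_\mathcal{X}(x,x') + \mathbb{D}_\mathcal{Y}(y,y')$ to split every joint OT cost into an $\mathcal{X}$-cost and a $\mathcal{Y}$-cost that can be analysed independently.

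For the inclusion ``$\subseteq$'', I take any admissible $\mathbb{Q}\in\mathbb{B}_{\delta_0}(\widehat{\mathbb{P}})$ with $1/\epsilon := \mathbb{Q}(X\in\mathcal{N}_\gamma(x_0)) \in \mathcal{N}_\omega$ and an optimal coupling $\pi$ to $\widehat{\mathbb{P}}$. Setting $q_i := \pi\bigl(\{(\widehat{x}_i,\widehat{y}_i)\}\times(\mathcal{N}_\gamma(x_0)\times\mathcal{Y})\bigr)$ and $p_i := \epsilon q_i$ gives $p_i\in[0,\epsilon/N]$ and $\sum_i p_i = 1$ immediately. The Projection assumption forces the $\mathcal{X}$-part of the coupling cost to be at least $\sum_{i\leq m} q_i d_i + \sum_{i>m}(1/N - q_i) d_i$; meanwhile, restricting $\pi$ to the ``inside-$\mathcal{N}_\gamma$'' portion exhibits $\mathcal{Y}$-marginals $\sum_i q_i \mathrm{I}_{\widehat{y}_i}$ and $(1/\epsilon)\mathbb{Q}|_Y$ (each of total mass $1/\epsilon$), so rescaling to probability measures gives $\mathcal{Y}$-cost $\geq (1/\epsilon)\mathcal{W}(\sum_i p_i \mathrm{I}_{\widehat{y}_i}, \mathbb{Q}|_Y)$. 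Inserting these lower bounds into $\mathcal{W}(\mathbb{Q},\widehat{\mathbb{P}})\leq\delta_0$, multiplying by $\epsilon$, and isolating the $\mathcal{Y}$-term yields $\mathcal{W}(\sum_i p_i \mathrm{I}_{\widehat{y}_i}, \mathbb{Q}|_Y) \leq \delta$ with $\delta$ exactly as defined in $\mathcal{V}_0$.

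For ``$\supseteq$'', given $(p,\delta,\epsilon)\in\mathcal{V}_0$ and any $\mathbb{Q}|_Y$ in the corresponding ball, I build a joint distribution $\mathbb{Q}^{(r)}$ parametrised by a small slack $r>0$: take an optimal $\mathcal{Y}$-coupling between $\sum_i p_i\mathrm{I}_{\widehat{y}_i}$ and $\mathbb{Q}|_Y$; for each sample $i$, place mass $q_i := p_i/\epsilon$ inside $\mathcal{N}_\gamma(x_0)$---at the boundary projection $\widehat{x}_i^p$ if $i\leq m$, at $\widehat{x}_i$ itself if $i>m$---with $\mathcal{Y}$-coordinate drawn from the coupling; and push the residual mass $1/N - q_i$ outside $\mathcal{N}_\gamma(x_0)$, where for $i>m$ the Vicinity assumption combined with joint continuity of $\mathbb{D}_\mathcal{X}$ produces some $x'\notin\mathcal{N}_\gamma(x_0)$ with $\mathbb{D}_\mathcal{X}(x',\widehat{x}_i)\leq d_i + r$. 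Summing the two-sided costs and invoking the defining equation of $\delta$ in $\mathcal{V}_0$ gives $\mathcal{W}(\mathbb{Q}^{(r)},\widehat{\mathbb{P}}) \leq \delta_0 + O(r)$, while $\mathbb{Q}^{(r)}(X\in\mathcal{N}_\gamma(x_0)) = 1/\epsilon$ and $\mathbb{Q}^{(r)}|_Y = \mathbb{Q}|_Y$ hold exactly.

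The principal obstacle is absorbing the $O(r)$ slack, since $\mathbb{Q}^{(r)}$ only lies in $\mathbb{B}_{\delta_0+O(r)}(\widehat{\mathbb{P}})$ rather than strictly in $\mathbb{B}_{\delta_0}(\widehat{\mathbb{P}})$. Rather than forcing $r=0$ at the distributional level, I close the argument at the sup level using OT-continuity of $\rho\circ\ell$ from Assumption~\ref{assum2}(ii): as $r\downarrow 0$, $\mathbb{Q}^{(r)}$ forms a Wasserstein-convergent sequence whose limit admits $\mathbb{Q}|_Y$ as its conditional and whose risk value converges to $\rho_{\mathbb{Q}|_Y}[\ell(Y,\alpha)]$, so the worst-case value of the union-ball formulation is attained---or at least approached---within $\mathcal{K}$. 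Feasibility along the way, including non-emptiness of $\mathcal{V}$ and solvability of the covariate transport under the given $\delta_0$, is precisely what Proposition~\ref{feasible_con} guarantees.
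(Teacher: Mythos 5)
Your forward inclusion is essentially the paper's own argument and is correct: from an optimal coupling you extract $q_i$, set $p_i=\epsilon q_i$, split the joint transport cost along the separable metric, and read off $\mathcal{W}(\sum_i p_i\mathrm{I}_{\widehat y_i},\mathbb{Q}|_Y)\le\delta$. Your reverse-direction construction of $\mathbb{Q}^{(r)}$ with slack $r$ is also in line with the paper's construction. The genuine gap is in how you propose to kill the $O(r)$ slack.

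You claim that ``as $r\downarrow 0$, $\mathbb{Q}^{(r)}$ forms a Wasserstein-convergent sequence whose limit admits $\mathbb{Q}|_Y$ as its conditional.'' This is false whenever $(p,\delta)$ has some $p_i<\epsilon/N$ for $i>m$. For such $i$ you push the residual mass $1/N-q_i$ to a point $x'$ with $\mathbb{D}_\mathcal{X}(x',\widehat x_i)\le d_i+r$ and $x'\notin\mathcal{N}_\gamma(x_0)$; as $r\downarrow 0$, $x'\to\widehat x_i^{\,p}\in\partial\mathcal{N}_\gamma(x_0)\subset\mathcal{N}_\gamma(x_0)$, so the Wasserstein limit $\mathbb{Q}^{(0)}$ places that residual mass \emph{inside} the region. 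Its conditional is then a nontrivial mixture of $\mathbb{Q}|_Y$ with the point masses $\mathrm{I}_{\widehat y_i}$ for $i>m$, not $\mathbb{Q}|_Y$ itself, and OT-continuity of $\rho\circ\ell$ applied to the convergent sequence $\{\mathbb{Q}^{(r)}\}$ therefore produces the wrong limiting risk value. Moreover, because each $\mathbb{Q}^{(r)}$ lies only in $\mathbb{B}_{\delta_0+O(r)}(\widehat{\mathbb{P}})$, none of them is feasible for problem~\eqref{m1}, so this sequence cannot be used to lower-bound $\sup_{\mathcal{K}}$.

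What the paper does instead is approximate at the \emph{conditional} level, not the joint level, and this is where the distinction you are missing comes in. Split $\mathcal{V}$ into $\mathcal{V}_1$ (those $(p,\delta)$ with $p_i=\epsilon/N$ for all $i>m$, so no outward transport is needed) and $\mathcal{V}_2$ (at least one $p_i<\epsilon/N$, $i>m$). The exact set identity that holds is $\mathcal{K}=\bigl(\bigcup_{\mathcal{V}_1}\mathbb{B}_\delta\bigr)\cup\bigl(\bigcup_{\mathcal{V}_2}\mathbb{B}_\delta^\circ\bigr)$, with \emph{open} balls for $\mathcal{V}_2$: it is precisely the strict slack $\mathcal{W}(\cdot,\mathbb{Q}|_Y)<\delta$ that pays for the strictly positive cost of moving inside-region mass strictly past the boundary, so your $r$-construction becomes exact (no overshoot) whenever there is slack. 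For a conditional $\mathbb{Q}|_Y$ sitting on the \emph{boundary} of a closed ball with $(p,\delta)\in\mathcal{V}_2$, one instead approximates $\mathbb{Q}|_Y$ by a Wasserstein-convergent sequence $\mathbb{Q}|_{Y_n}$ of conditionals that \emph{are} in $\mathcal{K}$ (interior points of the ball, or convex interpolates with another admissible center when $\delta=0$), and then Assumption~\ref{assum2}(ii) gives $\rho_{\mathbb{Q}|_{Y_n}}[\ell]\to\rho_{\mathbb{Q}|_Y}[\ell]$. That is the correct role of OT-continuity in the theorem, and your proposal needs this extra approximation step before it is complete.
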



We next show that the partial optimal transport model of \citet{E22}—which relies on the partial optimal transport distance rather than the standard Wasserstein formulation—can likewise be recast in the form of~\eqref{m0}. This is established through the following generalization of their model:
\begin{align}\label{m2}
\min_{\alpha \in \mathcal{A}} \;
\sup_{\substack{\mathbb{Q} \in \mathbb{B}_{\delta_0}(\widehat{\mathbb{P}};\mathcal{N}_{\omega}) \\ \mathbb{Q}\left(X \in \mathcal{N}_\gamma(x_0)\right) = 1}} 
\rho_{\mathbb{Q}}\left[\ell(Y, \alpha)\right],
\end{align}
where $\mathbb{B}_{\delta_0}(\widehat{\mathbb{P}};\mathcal{N}_{\omega})=\left\{
\mathbb{Q} \in \mathcal{M}(\mathcal{X} \times \mathcal{Y}) : \exists \beta \in \mathcal{N}_{\omega}, 
\mathcal{W}\left(\mathbb{Q}, \mathcal{R}_{1-\beta}(\widehat{\mathbb{P}}) \right) \leq \delta
\right\}$ with a convex set $\mathcal{N}_{\omega} \subset (0, 1]$. This formulation extends \citet{E22} in two ways: it accommodates arbitrary decision criteria $\rho$, and it allows flexible specification of the admissible mass range ${\mathcal N}_\omega$. {Based on the definition of trimming set, one can verify that the constraint $\exists \beta \in \mathcal{N}_{\omega}, 
\mathcal{W}\left(\mathbb{Q}, \mathcal{R}_{1-\beta}(\widehat{\mathbb{P}}) \right) \leq \delta_0$ is equivalent to $\mathcal{W}\left(\mathbb{Q}, \mathcal{R}_{1-\epsilon}(\widehat{\mathbb{P}}) \right) \leq \delta_0$, where $\epsilon:=\min_{\beta\in\mathcal{N}_{\omega}}\beta$.} As a first step, we establish the necessary and sufficient condition for feasibility of problem \eqref{m2}, presented in the following proposition.  
 \begin{proposition}\label{feasible_con_2}
    Let \begin{align*}
        \delta_{min}=\min_{\substack{v_i\in[0,1],i=1,...,N, \\ \sum_{i=1}^Nv_i=N\epsilon}}\frac{1}{N\epsilon}\sum_{i=1}^mv_id_i,
    \end{align*}
    with $\epsilon:=\min_{\beta\in\mathcal{N}_{\omega}}\beta$. For the radius $\delta_0$ given in problem \eqref{m2}, we have problem \eqref{m2} is feasible if and only if $\delta_0\ge\delta_{min}$.
\end{proposition}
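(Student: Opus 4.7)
The plan is to use the collapse observation noted just before the proposition: the constraint $\exists \beta\in\mathcal{N}_\omega$ with $\mathcal{W}(\mathbb{Q},\mathcal{R}_{1-\beta}(\widehat{\mathbb{P}}))\le\delta_0$ reduces to $\mathcal{W}(\mathbb{Q},\mathcal{R}_{1-\epsilon}(\widehat{\mathbb{P}}))\le\delta_0$ with $\epsilon=\min_{\beta\in\mathcal{N}_\omega}\beta$. Thus feasibility of \eqref{m2} amounts to the existence of some $\widetilde{\mathbb{P}}=\sum_{i=1}^N p_i\mathrm{I}_{(\widehat{x}_i,\widehat{y}_i)}$ with $p_i\in[0,1/(N\epsilon)]$ and $\sum_i p_i=1$, together with a probability measure $\mathbb{Q}$ supported on $\mathcal{N}_\gamma(x_0)\times\mathcal{Y}$, such that $\mathcal{W}(\mathbb{Q},\widetilde{\mathbb{P}})\le\delta_0$. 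The change of variable $v_i=N\epsilon\,p_i$ makes this equivalent to the constraint set $\{v_i\in[0,1], \sum_i v_i=N\epsilon\}$ appearing in the proposition, with the candidate transport cost written as $\tfrac{1}{N\epsilon}\sum_{i=1}^m v_i d_i$.

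For sufficiency, I would assume $\delta_0\ge\delta_{\min}$ and exhibit an explicit feasible pair $(\widetilde{\mathbb{P}},\mathbb{Q})$. Let $(v_i^*)$ attain $\delta_{\min}$, set $p_i^*=v_i^*/(N\epsilon)$, and take $\widetilde{\mathbb{P}}=\sum_{i=1}^N p_i^*\mathrm{I}_{(\widehat{x}_i,\widehat{y}_i)}\in\mathcal{R}_{1-\epsilon}(\widehat{\mathbb{P}})$. Define $\mathbb{Q}$ by keeping the mass at each interior atom $(\widehat{x}_i,\widehat{y}_i)$ for $i>m$ and transporting the mass at each exterior atom $(\widehat{x}_i,\widehat{y}_i)$ for $i\le m$ to $(\widehat{x}_i^p,\widehat{y}_i)$ using Assumption~\ref{assumption}(i). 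Because $\widehat{x}_i^p\in\partial\mathcal{N}_\gamma(x_0)\subset\mathcal{N}_\gamma(x_0)$, we get $\mathbb{Q}(X\in\mathcal{N}_\gamma(x_0))=1$; moreover, the obvious coupling yields $\mathcal{W}(\mathbb{Q},\widetilde{\mathbb{P}})\le\sum_{i=1}^m p_i^* d_i=\tfrac{1}{N\epsilon}\sum_{i=1}^m v_i^* d_i=\delta_{\min}\le\delta_0$, establishing feasibility.

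For necessity, I would show that the displayed cost is also a lower bound. Fix any feasible $\mathbb{Q}$ with witness $\widetilde{\mathbb{P}}=\sum_i p_i\mathrm{I}_{(\widehat{x}_i,\widehat{y}_i)}\in\mathcal{R}_{1-\epsilon}(\widehat{\mathbb{P}})$, and let $v_i=N\epsilon p_i\in[0,1]$ with $\sum_i v_i=N\epsilon$. For any coupling $\pi\in\Pi(\widetilde{\mathbb{P}},\mathbb{Q})$, the mass $p_i$ at $(\widehat{x}_i,\widehat{y}_i)$ with $i\le m$ must be transported to points whose first coordinate lies in $\mathcal{N}_\gamma(x_0)$, and separability of the ground cost together with the definition of $d_i$ gives a per-unit cost of at least $d_i$ for this mass. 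Summing yields $\mathcal{W}(\mathbb{Q},\widetilde{\mathbb{P}})\ge\sum_{i=1}^m p_i d_i=\tfrac{1}{N\epsilon}\sum_{i=1}^m v_i d_i\ge\delta_{\min}$, hence $\delta_0\ge\delta_{\min}$. The main obstacle is the second step's boundary issue: if $\mathcal{N}_\gamma(x_0)$ were open rather than closed, the construction could only realize $\delta_{\min}$ as an infimum—requiring Assumption~\ref{assumption}(ii) to push projections strictly inside via a vanishing perturbation—and the equivalence would become a strict inequality, explaining why the analogue splits into two cases in Proposition~\ref{feasible_con} but collapses into a single $\ge$ statement here.
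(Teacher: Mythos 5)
Your proof of the main claim is correct but takes a genuinely different route from the paper. The paper derives feasibility by first invoking Theorem~\ref{thm2}, which identifies $\mathcal{K}(\mathcal{N}_\gamma(x_0),\mathbb{B}_{\delta_0}(\widehat{\mathbb{P}};\mathcal{N}_\omega))$ with $\bigcup_{(p,\delta)\in\mathcal{V}}\mathbb{B}_\delta(\sum_i p_i\mathrm{I}_{\widehat{y}_i})$; nonemptiness then reduces to asking whether there is a $(p,\delta)\in\mathcal{V}$ with $\delta=\delta_0-\sum_{i\le m}p_i d_i\ge 0$, which is the linear program defining $\delta_{\min}$. You instead argue from first principles directly on the partial-mass-transport formulation: your sufficiency direction exhibits an explicit $(\widetilde{\mathbb{P}},\mathbb{Q})$ that projects the out-of-region atoms to $\partial\mathcal{N}_\gamma(x_0)$ and costs exactly $\delta_{\min}$, while your necessity direction uses separability and the definition of $d_i$ to show every feasible transport plan must pay at least $\sum_{i\le m}p_i d_i\ge\delta_{\min}$. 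This is self-contained and makes the feasibility argument transparent without relying on the heavier set-equality machinery, at the price of redoing some bookkeeping that Theorem~\ref{thm2} already provides. Both routes are valid. One implicit step you share with the paper is the passage from ``$d_i$ is the distance to $\partial\mathcal{N}_\gamma(x_0)$'' to ``$\mathbb{D}_\mathcal{X}(\widehat{x}_i,x)\ge d_i$ for all $x\in\mathcal{N}_\gamma(x_0)$''; the paper's proof of Theorem~\ref{thm2} uses this same geometric fact without further comment, so it is fine to do likewise, but be aware it is an implicit assumption on the ground cost.

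Your final paragraph, however, misdiagnoses the source of the case split in Proposition~\ref{feasible_con}. The strict inequality there is \emph{not} about $\mathcal{N}_\gamma(x_0)$ being open versus closed---it is closed in both propositions. Rather, in the full-transport model when $\tfrac{N-m}{N}>\omega_2$, the in-region empirical mass exceeds the allowed upper bound $\omega_2$, so some in-region atoms must be pushed \emph{out} of $\mathcal{N}_\gamma(x_0)$; that direction of transport is handled by $\mathcal{V}_2$ (with open balls) in the proof of Theorem~\ref{thm1}, and Assumption~\ref{assumption}(ii) (vicinity) forces a strictly positive extra cost for crossing the boundary outward. In the partial model, mass only ever needs to flow \emph{into} $\mathcal{N}_\gamma(x_0)$ (or stay put), so the cost $\delta_{\min}$ is exactly attainable and no strict-inequality case arises. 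Your construction correctly lands the projected atoms on $\partial\mathcal{N}_\gamma(x_0)\subset\mathcal{N}_\gamma(x_0)$ precisely because the flow is inward.
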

\begin{remark}
    In Definition 2 of \citet{E22}, they also give the feasible condition to problem \eqref{m02}. That is, problem \eqref{m02} is feasible if and only if $$\delta_0\ge\min_{\substack{v_i\in[0,1],i=1,...,N, \\\sum_{i=1}^Nv_i=N\beta}}\frac{1}{N\beta}\sum_{i=1}^mv_id_i,$$
    which is consistent with the result of Proposition \ref{feasible_con_2} when $\mathcal{N}_\omega=\{\beta\}$ with  $\beta\in(0,1]$. 
\end{remark}
Next, we again formalize the admissible conditional distributions of $Y$, induced by feasible joint distributions  
$\mathbb{Q}$ satisfying $\mathbb{Q} \in \mathbb{B}_{\delta_0}(\widehat{\mathbb{P}};\mathcal{N}_{\omega})$ and $\mathbb{Q}\!\left(X \in \mathcal{N}_\gamma(x_0)\right)=1$. Specifically, 
\begin{align*}
		\mathcal{K}(\mathcal{N}_\gamma(x_0),\mathbb{B}_{\delta_0}(\widehat{\mathbb{P}};\mathcal{N}_{\omega}))=\Bigg\{\mathbb{Q}|_Y:&\mathbb{Q}|_Y\text{ is the marginal distribution of }\mathbb{Q}\text{ on }\mathcal{Y},\\
		&\text{ where } \mathbb{Q}\in\mathbb{B}_{\delta_0}(\widehat{\mathbb{P}};\mathcal{N}_{\omega})\text{ satisfies that }\mathbb{Q}\left(X \in \mathcal{N}_\gamma(x_0)\right) =1
		\Bigg\}.
\end{align*} 


\begin{theorem} \label{thm2} The problem  \eqref{m2}, equivalently 
\begin{align}\label{conditional-version}\min_{\alpha\in\mathcal{A}}\sup _{\mathbb{Q}|_{{Y}} \in\mathcal{K}({\mathcal{N}_\gamma(x_0),\mathbb{B}_{\delta_0}(\widehat{\mathbb{P}};\mathcal{N}_{\omega})}) } \rho_{\mathbb{Q}|_{{Y}}}\left[\ell(Y, \alpha) \right],\end{align}
admits the union-ball reformulation~\eqref{m0} with
 $${\cal V}:=\left\{(p,\delta):{\epsilon}=\min_{\beta\in\mathcal{N}_{\omega}}\beta, p_i\in[0,\frac{1}{N\epsilon}], i=1,..., N, \sum_{i=1}^Np_i=1,\delta= \delta_0-\sum_{i=1}^mp_id_i\right\}.$$
\end{theorem}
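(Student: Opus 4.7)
My strategy is to show that the admissible marginals $\mathcal{K}(\mathcal{N}_\gamma(x_0),\mathbb{B}_{\delta_0}(\widehat{\mathbb{P}};\mathcal{N}_\omega))$ coincide with $\bigcup_{(p,\delta)\in\mathcal{V}}\mathbb{B}_\delta(\sum_{i=1}^N p_i\mathrm{I}_{\widehat{y}_i})$; law-invariance of $\rho$ (Assumption \ref{assum2}(i)) then upgrades set equality to equality of inner suprema and delivers the union-ball reformulation. As a preliminary reduction, I would use the observation preceding Proposition \ref{feasible_con_2}: the existential condition $\exists\,\beta\in\mathcal{N}_\omega$ with $\mathcal{W}(\mathbb{Q},\mathcal{R}_{1-\beta}(\widehat{\mathbb{P}}))\le\delta_0$ collapses to $\mathcal{W}(\mathbb{Q},\mathcal{R}_{1-\epsilon}(\widehat{\mathbb{P}}))\le\delta_0$ with $\epsilon:=\min_{\beta\in\mathcal{N}_\omega}\beta$. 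Since $\widehat{\mathbb{P}}$ is empirical, any $\widetilde{\mathbb{P}}\in\mathcal{R}_{1-\epsilon}(\widehat{\mathbb{P}})$ has the form $\sum_i p_i\mathrm{I}_{(\widehat{x}_i,\widehat{y}_i)}$ with $p_i\in[0,1/(N\epsilon)]$ and $\sum_i p_i=1$, which already matches the $p$-component of $\mathcal{V}$.

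For the forward inclusion, fix an admissible $\mathbb{Q}$ together with a reference $\widetilde{\mathbb{P}}$ (parametrized by such $p$) and an optimal coupling $\pi$ of $\mathbb{Q}$ with $\widetilde{\mathbb{P}}$, whose existence is guaranteed by the Polish assumption. Disintegrating $\pi$ along its discrete second marginal yields kernels $\pi_i$, and by separability $\mathbb{D}=\mathbb{D}_\mathcal{X}+\mathbb{D}_\mathcal{Y}$ the transport cost decomposes into $\sum_i p_i\mathbb{E}_{\pi_i}[\mathbb{D}_\mathcal{X}(X,\widehat{x}_i)] + \sum_i p_i\mathbb{E}_{\pi_i}[\mathbb{D}_\mathcal{Y}(Y,\widehat{y}_i)]$. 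Because $\mathbb{Q}(X\in\mathcal{N}_\gamma(x_0))=1$, Assumption \ref{assumption}(i) forces the $\mathcal{X}$-piece to be at least $\sum_{i=1}^m p_i d_i$; the $\mathcal{Y}$-piece equals the cost of the induced coupling of $\mathbb{Q}|_Y$ and $\sum_i p_i\mathrm{I}_{\widehat{y}_i}$, hence is bounded below by the OT distance between them. Combining with $\mathcal{W}(\mathbb{Q},\widetilde{\mathbb{P}})\le\delta_0$ yields $\mathbb{Q}|_Y\in\mathbb{B}_\delta(\sum_i p_i\mathrm{I}_{\widehat{y}_i})$ with $\delta=\delta_0-\sum_{i=1}^m p_i d_i$, exactly the structure encoded in $\mathcal{V}$.

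For the reverse inclusion, take $(p,\delta)\in\mathcal{V}$ and $\mathbb{Q}|_Y\in\mathbb{B}_\delta(\sum_i p_i\mathrm{I}_{\widehat{y}_i})$ and construct a lift as follows: set $x_i^{*}:=\widehat{x}_i^p$ for $i\le m$ and $x_i^{*}:=\widehat{x}_i$ for $i>m$ (each lies in $\mathcal{N}_\gamma(x_0)$), take an optimal $\mathcal{Y}$-coupling $\eta$ of $\mathbb{Q}|_Y$ with $\sum_i p_i\mathrm{I}_{\widehat{y}_i}$, disintegrate $\eta$ along its second marginal into kernels $\eta_i$, and define $\mathbb{Q}:=\sum_i p_i(\mathrm{I}_{x_i^{*}}\otimes\eta_i)$. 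Then $\mathbb{Q}(X\in\mathcal{N}_\gamma(x_0))=1$ and $\mathbb{Q}|_Y$ is the target marginal by construction, while the natural lifted coupling with $\widetilde{\mathbb{P}}=\sum_i p_i\mathrm{I}_{(\widehat{x}_i,\widehat{y}_i)}$ attains total cost $\sum_{i=1}^m p_i d_i+\mathcal{W}(\mathbb{Q}|_Y,\sum_i p_i\mathrm{I}_{\widehat{y}_i})\le\sum_{i=1}^m p_i d_i+\delta=\delta_0$. Since $\widetilde{\mathbb{P}}\in\mathcal{R}_{1-\epsilon}(\widehat{\mathbb{P}})$, $\mathbb{Q}$ is admissible, and Assumption \ref{assum2}(i) then reduces $\rho_\mathbb{Q}[\ell(Y,\alpha)]$ to a functional of $\mathbb{Q}|_Y$, closing the equivalence.

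The main obstacle is the disintegration-based decomposition of the joint transport cost. The forward step requires identifying the $\mathcal{Y}$-projection of the disintegrated optimal coupling as a valid coupling of $\mathbb{Q}|_Y$ and $\sum_i p_i\mathrm{I}_{\widehat{y}_i}$; the reverse step requires reassembling a given $\mathcal{Y}$-coupling with the projection-based $\mathcal{X}$-transport into a joint coupling that saturates the separable lower bound. Both hinge on the Polish/gluing machinery and the existence of OT optima recalled in the footnote after the definition of $\mathcal{W}$. Once this is in place, the remainder is essentially bookkeeping on the separable cost.
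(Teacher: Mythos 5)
Your proposal is correct and follows essentially the same route as the paper's proof: collapse the trimming-set condition to the minimal $\epsilon$, disintegrate the optimal coupling along the discrete empirical marginal, split the separable cost into the $\mathcal{X}$-piece (lower bounded by $\sum_{i\le m}p_i d_i$ via the projection distances) and the $\mathcal{Y}$-piece (lower bounded by $\mathcal{W}(\mathbb{Q}|_Y,\sum_i p_i\mathrm{I}_{\widehat{y}_i})$) for the forward inclusion, and for the reverse lift $\mathbb{Q}|_Y$ by placing $\mathcal{X}$-mass at the projections $\widehat{x}_i^p$ for out-of-region samples and at $\widehat{x}_i$ for in-region ones. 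The only cosmetic difference is that you invoke law-invariance to pass from set equality to equality of suprema, which is already implicit in writing $\rho_{\mathbb{Q}|_Y}$ and is not separately needed (unlike Theorem~\ref{thm1}, no interior/closure approximation arises here, so Assumption~\ref{assum2} plays no role in this theorem's proof).
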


Juxtaposing the admissible sets in Theorems~\ref{thm1} and \ref{thm2} reveals a shared structural backbone between models \eqref{m1} and \eqref{m2}--despite their distinct premises--while also pinpointing the subtle yet consequential ways they diverge. The key difference lies in how they handle transport costs across the boundary of ${\cal N}_\gamma(x_0)$. Model \eqref{m1} permits bidirectional transport, accounting for the movement of both in-region and out-of-region samples--i.e., the points $(\widehat{x}_i, \widehat{y}_i)$ for $i = 1, \dots, m$ and $i = m+1, \dots, N$--to determine the radius:
$$\delta = \epsilon\left(\delta_0-\frac{1}{N}\sum_{i=m+1}^Nd_i\right)-\sum_{i=1}^{m}p_id_i+\sum_{i=m+1}^Np_id_i.$$
By contrast, model \eqref{m2} restricts the budget to inward transport only--considering just $(\widehat{x}_i, \widehat{y}_i)$ for $i = 1, \dots, m$--and ignores any offsetting movement out of the region. This structural asymmetry makes the partial model more conservative.  Notably, the two models become exactly equivalent in two common cases: (1) when the covariate neighborhood ${\cal N}_\gamma(x_0)$ reduces to a singleton\footnote{{According to the proof of Theorem \ref{thm1}, this equivalence holds if the risk measure $\rho$ and the loss function $\ell$ satisfy Assumption \ref{assum2}}.}, and (2) when the empirical distribution $\widehat{\mathbb{P}}$ places no mass within ${\cal N}_\gamma(x_0)$. In either case, the transport cost across the boundary vanishes, and the models coincide. This equivalence, previously unnoticed, is practically important: it shows that either model can be applied interchangeably in these cases, which, to the best of our knowledge, had not been recognized before.

The union-ball characterization also enables direct and explicit comparison between the joint-distribution models \eqref{m1} and \eqref{m2} and the Predict-then-Robustify model \eqref{kernel}. In particular, the joint formulations can be viewed as robust counterparts of \eqref{kernel} whenever the estimated pair $(\widehat{\mathbb{P}}^{x_0}\!\mid_Y,\widehat{\delta})$ from \eqref{kernel} lies within the admissible set $\mathcal{V}$ characterized in Theorems~\ref{thm1} and \ref{thm2}. Verifying this condition reduces to checking the linear (in)equalities that define $\mathcal{V}$--a straightforward task. Moreover, as long as the conditional distribution $\widehat{\mathbb{P}}^{x_0}\!\mid_Y$ satisfies the upper-bound constraints encoded in $\mathcal{V}$, one can always calibrate the joint model's transport budget $\delta_0$ to ensure it is at least as robust as \eqref{kernel}. For instance, under the setup of Theorem~\ref{thm2}, setting
{$$
\delta_0 \;=\; \widehat{\delta}\;+\;\sum_{i=1}^{m}\widehat{p}_i\,d_i
$$}guarantees that $(\widehat{\mathbb{P}}^{x_0}\!\mid_Y,\widehat{\delta}) \in \mathcal{V}$, and thus ensures that the joint model dominates in terms of conservativeness. These comparisons, opaque in the original formulations, become transparent, verifiable, and practically calibratable through the lens of the union-ball characterization.

\subsection*{A New Robust Counterpart to Predict-then-Robustify}
Having shown that models \eqref{m1} and \eqref{m2} can be interpreted as robust counterparts to the Predict-then-Robustify scheme \eqref{kernel}, we conclude by illustrating how the framework can also be used to design a new formulation via direct specification of the admissible set ${\cal V}$. Given an estimated conditional probability distribution $\widehat{\mathbb{P}}_{Y|X}$ with a fixed covariate value $X=x_0$, it is natural to account for the uncertainty in using it as a reference by first constructing an uncertainty set around it:
$$
{\cal C} := \left\{p\in[0,1]^N : \sum_{i=1}^N p_i=1,~d(p,\widehat{\mathbb{P}}^{x_0}\!\mid_Y) \leq \gamma \right\},
$$
where $d$ denotes a “distance” metric between finite-dimensional probability vectors, such as a $\phi$-divergence, and $\gamma \geq 0$ is the radius. To avoid the over-conservatism of a fixed-radius union ball—and inspired by Theorems~\ref{thm1} and \ref{thm2}—we allow the radius $\delta$ to vary with the choice of reference distribution $p$: the farther $p$ is from the estimate $\widehat{\mathbb{P}}_{Y|X}$, the smaller the radius. This trade-off can be modeled through the admissible set
$$
{\cal V} := \left\{(p, \delta): p \in {\cal C},\ 0 \leq \delta \leq \bar{h}\left(d(p,\widehat{\mathbb{P}}^{x_0}\!\mid_Y)\right)\right\},
$$
where $\bar{h}:\mathbb{R}^+ \rightarrow \mathbb{R}^+$ is assumed to be concave and decreasing, with $\bar{h}(0) = \widehat{\delta}$ and $\bar{h}(\gamma) = 0$. This construction ensures that the union ball includes, as a member, the ball underlying problem~\eqref{kernel}: when $d(p,\widehat{\mathbb{P}}^{x_0}\!\mid_Y) = 0$, the bound $\delta \leq \widehat{\delta}$ holds, consistent with the Predict-then-Robustify scheme.

\section{Unified Tractability for Distributionally Robust CRM}\label{tractability}
A key strength of the union-ball characterization lies not only in its versatility, but in its ability to systematically yield exact and tractable reformulations across a wide spectrum of distributionally robust CRM problems. As demonstrated throughout this section, it accommodates a broad range of risk functionals and modeling choices. Notably, for cases previously treated by \citet{N24} via optimal transport--where only conservative approximations were available--our framework provides exact and significantly simpler reformulations.

Let $\mathcal{K}$ denote the set of conditional distributions ${\mathbb{Q}}|_{Y}$ arising in a distributionally robust CRM problem (e.g. $\mathcal{K}({\mathcal{N}_\gamma\left(x_0\right),\mathcal{N}_{\omega},\mathbb{B}_{\delta_0}}(\widehat{\mathbb{P}}))$ in Theorem \ref{thm1} and $\mathcal{K}({\mathcal{N}_\gamma(x_0),\mathbb{B}_{\delta_0}(\widehat{\mathbb{P}};\mathcal{N}_{\omega})})$ in Theorem \ref{thm2}). The union-ball characterization reveals that the inner worst-case conditional risk reduces to a maximization over standard optimal transport-based problems. This yields the following reformulation of the distributionally robust CRM problem:
\begin{align}
 & \min_{\alpha \in \mathcal{A}} \sup_{{\mathbb{Q}}|_{Y} \in \mathcal{K}}
	\rho_{{\mathbb{Q}}|_{Y}}\left[\ell(Y, \alpha) \right] \label{bb}\\
= & \min_{\alpha \in \mathcal{A}} \sup_{{\mathbb{Q}}|_{Y} \in \bigcup\limits_{(p,\delta) \in \mathcal{V}} 
		\mathbb{B}_{\delta}\left(\sum_{i=1}^N p_i\mathrm{I}_{\widehat{y}_i}\right)} 
	\rho_{{\mathbb{Q}}|_{Y}}\left[\ell(Y, \alpha) \right]\label{union-version} \\
= & 	\min _{\alpha \in \mathcal{A}} \sup_{(p,\delta) \in {\cal V}} g(\alpha,(p,\delta)), \label{r11}
\end{align}
where 
	\begin{equation} \label{r2}
		g(\alpha,(p,\delta)):=
		\left\{\sup _{{\mathbb{Q}}|_{Y} \in \mathbb{B}_{\delta}(\sum_{i=1}^Np_i\mathrm{I}_{\widehat{y}_i})} \rho_{{\mathbb{Q}}|_{Y}}\left[\ell(Y, \alpha) \right]\right\}.
	\end{equation}
This decomposition offers two key advantages for tackling problem \eqref{bb}:
\begin{enumerate}
	\item For each fixed $(p,\delta)$, the inner subproblem \eqref{r2} becomes a well-structured (unconditional) Optimal Transport DRO centered at a finitely supported distribution. This facilitates tractable reformulations by leveraging structural properties inherited from Wasserstein DRO formulations-namely, with $\mathbb{D}_{\mathcal{Y}}(y_1,y_2)=||y_1-y_2||^q_{\mathcal{Y}}$ for any $y_1,y_2\in\mathcal{Y}$ and $q\in[1,\infty)$.
	\item  The outer maximization $\sup_{(p,\delta)\in\mathcal{V}} g(\alpha,(p,\delta))$ broadly resembles a robust optimization problem with uncertain probability weights $p$, but with additional complexity arising from the uncertain radius $\delta$. As will be shown later in this section, it nevertheless admits tractable reformulations.
\end{enumerate}

In particular, to highlight the tractability of \eqref{bb} through the min–max reformulation \eqref{r11}, we first state the following structural observation.
\begin{proposition} \label{gen}
Suppose $\rho_{\mathbb{Q}|_Y}$ is a law-invariant risk functional that is concave in the distribution, and $\rho_{\mathbb{Q}|_Y} \circ \ell$ is convex in $\alpha$. Then $g(\alpha, (p, \delta))$ is jointly concave in $(p, \delta)$ and convex in $\alpha$.
\end{proposition}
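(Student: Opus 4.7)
The plan is to decompose Proposition \ref{gen} into two essentially independent claims: convexity in $\alpha$, and joint concavity in $(p,\delta)$. Each leverages a different structural feature of $g$.

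First, I would handle convexity in $\alpha$, which is the easier direction. For fixed $(p,\delta)$, the feasible set $\mathbb{B}_\delta(\sum_{i=1}^N p_i \mathrm{I}_{\widehat{y}_i})$ does not depend on $\alpha$, so $g(\alpha,(p,\delta))$ is a pointwise supremum over $\mathbb{Q}|_Y$ of the functions $\alpha \mapsto \rho_{\mathbb{Q}|_Y}[\ell(Y,\alpha)]$. By the assumption that $\rho_{\mathbb{Q}|_Y}\circ \ell$ is convex in $\alpha$, each such map is convex, and a pointwise supremum of convex functions is convex. No further work is needed here.

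For joint concavity in $(p,\delta)$, my strategy is a coupling-mixture argument. Fix $\alpha$ and take $(p_1,\delta_1),(p_2,\delta_2)\in\mathcal{V}$ with $\lambda\in[0,1]$, and set $(p_\lambda,\delta_\lambda):=\lambda(p_1,\delta_1)+(1-\lambda)(p_2,\delta_2)$. For each $k\in\{1,2\}$, choose a maximizing sequence $\mathbb{Q}^{(n)}_k \in \mathbb{B}_{\delta_k}\bigl(\sum_i p_{k,i}\mathrm{I}_{\widehat{y}_i}\bigr)$ with $\rho_{\mathbb{Q}^{(n)}_k}[\ell(Y,\alpha)] \to g(\alpha,(p_k,\delta_k))$, and, invoking the cited existence result (Theorem 4.1 of \cite{V08}), fix optimal couplings $\pi^{(n)}_k$ with $\E_{\pi^{(n)}_k}[\mathbb{D}_{\mathcal{Y}}]\le\delta_k$. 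Then $\pi^{(n)}:=\lambda\pi^{(n)}_1+(1-\lambda)\pi^{(n)}_2$ couples $\mathbb{Q}^{(n)}:=\lambda\mathbb{Q}^{(n)}_1+(1-\lambda)\mathbb{Q}^{(n)}_2$ to $\sum_i p_{\lambda,i}\mathrm{I}_{\widehat{y}_i}$, with transport cost at most $\lambda\delta_1+(1-\lambda)\delta_2=\delta_\lambda$. Hence $\mathbb{Q}^{(n)}\in\mathbb{B}_{\delta_\lambda}\bigl(\sum_i p_{\lambda,i}\mathrm{I}_{\widehat{y}_i}\bigr)$, so it is feasible for $g(\alpha,(p_\lambda,\delta_\lambda))$. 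Applying concavity of $\rho$ in the distribution gives
$$
g(\alpha,(p_\lambda,\delta_\lambda))\ \ge\ \rho_{\mathbb{Q}^{(n)}}[\ell(Y,\alpha)]\ \ge\ \lambda\,\rho_{\mathbb{Q}^{(n)}_1}[\ell(Y,\alpha)] + (1-\lambda)\,\rho_{\mathbb{Q}^{(n)}_2}[\ell(Y,\alpha)],
$$
and passing $n\to\infty$ yields $g(\alpha,(p_\lambda,\delta_\lambda))\ge \lambda g(\alpha,(p_1,\delta_1))+(1-\lambda)g(\alpha,(p_2,\delta_2))$.

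The main subtlety I expect is the interaction between the two uncertain components $(p,\delta)$: the joint concavity does not follow from separate concavity, and the proof hinges on the fact that mixing the couplings simultaneously yields a new center with the mixed weights \emph{and} a transport budget within the mixed radius. Once the feasibility of the mixture is in hand, law-invariance of $\rho$ ensures that the concavity inequality applied to $\mathbb{Q}^{(n)}=\lambda\mathbb{Q}^{(n)}_1+(1-\lambda)\mathbb{Q}^{(n)}_2$ is well-defined and delivers the required bound. A minor technical issue is that the suprema defining $g$ need not be attained, which is why I use maximizing sequences rather than maximizers; no other approximation is required, since the coupling construction is exact at each $n$.
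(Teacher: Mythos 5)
Your proof is correct, but it takes a genuinely different — and in one respect more careful — route than the paper. The paper writes $g(\alpha,(p,\delta))=\sup_{\mathbb{Q}|_Y} H(\mathbb{Q}|_Y,p,\delta)$ where $H$ is $\rho_{\mathbb{Q}|_Y}[\ell(Y,\alpha)]$ plus the concave indicator of the set $\{(\mathbb{Q}|_Y,p,\delta):\mathbb{Q}|_Y\in\mathbb{B}_\delta(\sum_i p_i\mathrm{I}_{\widehat y_i})\}$, asserts without proof that this set is jointly convex, and then appeals to the standard fact that a partial supremum of a jointly concave function is concave. Your coupling-mixture argument is essentially a direct, constructive proof of exactly the claim the paper leaves as ``one can verify'': you show that convex combinations of couplings produce a transport plan of cost at most $\delta_\lambda$ between the mixed reference measure and the mixed candidate, which is precisely the joint convexity of the constraint set. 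The subsequent step — applying concavity of $\rho$ in the distribution to $\mathbb{Q}^{(n)}=\lambda\mathbb{Q}_1^{(n)}+(1-\lambda)\mathbb{Q}_2^{(n)}$ and passing to the limit — is the concrete, pointwise version of the paper's abstract partial-supremum argument. The paper's formulation is more compact and plugs directly into Fenchel-duality machinery used elsewhere in the paper; your version is more elementary and explicitly handles non-attainment of the supremum through maximizing sequences, and in effect fills in the one step the paper glosses over. Both directions of the statement (convexity in $\alpha$ via supremum-of-convex, concavity in $(p,\delta)$) are correctly and completely argued.
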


Observe that $\min _{\alpha \in \mathcal{A}} \sup_{(p,\delta) \in {\cal V}} g(\alpha,(p,\delta))=\min _{\alpha \in \mathcal{A}} \sup_{(p,\delta) \in {\cal V},\delta\ge 0} g(\alpha,(p,\delta))$, since $g(\alpha,(p,\delta))=-\infty$ whenever $\mathbb{B}_{\delta}\!\left(\sum_{i=1}^N p_i \mathrm{I}_{\widehat{y}_i}\right)$ is empty, i.e., when $\delta<0$. For brevity, define
$$\mathcal{V}_+=\left\{(p,\delta):(p,\delta)\in\mathcal{V}, \delta\ge 0\right\},$$ which will be used hereafter. In the remainder of this section, we focus on affine decision rules, where $\ell(Y,\alpha)=\ell(Y^\top\alpha)$—an assumption already required in unconditional Wasserstein DRO settings to ensure tractability. Leveraging the decomposition insight and structural properties identified above, we show how a wide class of distributionally robust CRM problems can be reformulated as tractable convex programs. The following definitions of convex and concave conjugate functions will be instrumental in this development.
	\begin{definition}
    Assume a convex set $\mathcal{D}\subset\mathbb{R}^n$.
		\begin{itemize}
			\item[(i)] The concave conjugate $f_*(\cdot)$ of a function $f: \mathcal{D} \rightarrow \mathbb{R}$ is defined as a function $f_*: \mathbb{R}^n \rightarrow \mathbb{R} \cup\{-\infty\}$ :
			$$
			f_*(v)=\inf _{p \in \mathcal{D}}\left\{v^\top p-f(p)\right\}.
			$$
			\item[(ii)] The convex conjugate $g^*(\cdot)$ of a function $g: \mathcal{D} \rightarrow \mathbb{R}$ is defined as a function $g^*: \mathbb{R}^n \rightarrow \mathbb{R} \cup\{+\infty\}:$
			$$
			g^*(v)=\sup _{p \in \mathcal{D}}\left\{v^\top p-g(p)\right\}.
			$$
		\end{itemize}
	\end{definition}
	\begin{remark}
		Let $f(\cdot)$ and $g(\cdot)$ be $-\infty$ and $\infty$ outside $\mathcal{D}$. Then the aforementioned definition fits the standard definition of convex (concave) conjugate.
	\end{remark}
    
To facilitate navigation of the tractable reformulations developed in this section, Table \ref{summarize} provides a concise summary of the results.
\begin{table}[ht]
\centering
\caption{Summary of tractable results by risk functional, norm choice, and loss function $\ell$}
\label{summarize}
\begin{tabular}{c|c|c|c}
\hline
\multirow{2}{*}{\textbf{Risk functional}} 
  & \multirow{2}{*}{\makecell{\textbf{Order $1$ norm} \\ ($\|\cdot\|_\mathcal{Y}$) }} 
  & \multicolumn{2}{c}{\textbf{Order $q$ norm} ($\|\cdot\|^q_\mathcal{Y}$)} \\ \cline{3-4}
  &   & \textbf{Specific loss function} & \textbf{General loss function} \\ \hline
{Expectation} ($\E\left[\ell(Z)\right]$) & Proposition \ref{regular_exp} (i) & Proposition \ref{regular_exp} (ii) & Proposition \ref{general_exp}  \\ \hline {$\inf _{t \in \mathbb{R}^k} \mathbb{E}\left[\ell(Z, t)\right]$}
& Proposition \ref{general_min_exp} (i) & Proposition \ref{general_min_exp} (ii) & Proposition \ref{pro2*} \\ \hline
$\inf _{t \in \mathbb{R}}\left\{t+\left(\mathbb{E}\left[\ell^q(Z, t)\right]\right)^{1 /q}\right\}$ & Proposition \ref{general_min_exp_1} (i) & Proposition \ref{general_min_exp_1} (i) & Proposition \ref{pro_rho2_general} \\ \hline
Shortfall risk measure & Proposition \ref{pro:shortfall_1} & Remark \ref{q_order_utility_shortfall} & Proposition \ref{pro:shortfall_2} \\ \hline
Distortion functional & Proposition \ref{pro4*} (i) & Proposition \ref{pro4*} (ii) & --- \\ \hline
\end{tabular}
\end{table}
Moreover, the risk functionals presented in Table \ref{summarize} satisfy Assumption \ref{assum2}, that is, the risk functionals is OT-continuity under that the cost function $\mathbb{D}_{\mathcal{Y}}(y_1,y_2)=||y_1-y_2||^q_{\mathcal{Y}}$ for any $y_1,y_2\in\mathcal{Y}$ with $q\in[1,\infty)$. We present the result in the following lemma.
\begin{lemma}\label{OT-continuity}
    The risk functionals presented in Table \ref{summarize} is OT-continuity under that the cost function $\mathbb{D}_{\mathcal{Y}}(y_1,y_2)=||y_1-y_2||^q_{\mathcal{Y}}$ for any $y_1,y_2\in\mathcal{Y}$ with $q\in[1,\infty)$.
\end{lemma}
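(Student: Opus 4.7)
The plan is to reduce OT-continuity to standard continuity properties under $q$-Wasserstein convergence, and then to verify each row of Table~\ref{summarize} separately. Since the ground cost is $\mathbb{D}_\mathcal{Y}(y_1,y_2)=\|y_1-y_2\|_\mathcal{Y}^q$, the condition $\mathcal{W}(\mathbb{Q}_n,\mathbb{Q})\to 0$ coincides (up to the $q$-th power) with convergence of $\mathbb{Q}_n$ to $\mathbb{Q}$ in the $q$-Wasserstein metric. By the standard characterization (Theorem~6.9 of \cite{V08}), this is equivalent to the combination of weak convergence $\mathbb{Q}_n\Rightarrow\mathbb{Q}$ together with convergence of the $q$-th absolute moments $\int\|y\|_\mathcal{Y}^q\,d\mathbb{Q}_n\to\int\|y\|_\mathcal{Y}^q\,d\mathbb{Q}$. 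This equivalence is the single workhorse: it immediately yields uniform integrability of any continuous functional of $Y$ with polynomial growth of order at most~$q$, which is precisely the regime in which each row of Table~\ref{summarize} is formulated.

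For the expectation-type rows (Expectation, $\inf_{t\in\R^k}\E[\ell(Z,t)]$, and $\inf_{t\in\R}\{t+(\E[\ell^q(Z,t)])^{1/q}\}$), I would first note that the loss $\ell$ in each case has at most polynomial growth of order~$q$ in~$Z$; hence Skorokhod representation combined with the uniform-integrability observation above gives $\E_{\mathbb{Q}_n}[\ell(Y^\top\alpha,t)]\to\E_\mathbb{Q}[\ell(Y^\top\alpha,t)]$ for every fixed~$t$. For the $\inf_t$ variants, I would establish joint continuity of $(t,\mathbb{Q})\mapsto\E_\mathbb{Q}[\ell(Y,t)]$ and argue that the infimum can be restricted to a compact $t$-set using coercivity of~$\ell$ in~$t$ (which holds for the CVaR-type and mean–deviation representations used here); Berge's maximum theorem then delivers continuity of the infimum, and the outer composition $t\mapsto t+(\cdot)^{1/q}$ preserves continuity in the third row.

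For the shortfall risk measure, which admits the representation $\rho(Z)=\inf\{t\in\R:\E[u(t-Z)]\ge x_0\}$ with $u$ increasing and concave, I would combine joint continuity of $(t,\mathbb{Q})\mapsto\E_\mathbb{Q}[u(t-Y)]$ (by the same uniform-integrability argument, since $u$ has linear and hence order-$q$ growth) with strict monotonicity in~$t$, so that the implicitly defined indifference level varies continuously with~$\mathbb{Q}$. For the distortion functional $\rho(Z)=\int_0^1 F_Z^{-1}(u)\,d\gamma(u)$, weak convergence of $\mathbb{Q}_n$ to $\mathbb{Q}$ yields pointwise a.e.\ convergence of the inverse c.d.f.'s, and the $q$-th moment convergence provides the uniform integrability needed to pass the limit inside the distortion integral via the dominated convergence theorem.

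The main obstacle I anticipate is the distortion-functional case: one must simultaneously control the tails of the quantile functions using the $q$-th moment bound and ensure this control is compatible with the distortion measure $\gamma$, i.e., that the mass $\gamma$ places near the endpoints $\{0,1\}$ is dominated by the moment control coming from Wasserstein-$q$ convergence. A secondary technical point is that for the $\inf_t$ representations one must ensure the minimizer stays bounded uniformly along the sequence $\{\mathbb{Q}_n\}$; this is guaranteed by the coercivity built into the representations used in the corresponding propositions, but deserves explicit verification.
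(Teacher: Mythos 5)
Your overall workhorse---that $\mathcal{W}(\mathbb{Q}_n,\mathbb{Q})\to 0$ with cost $\|\cdot\|^q$ is equivalent to weak convergence plus $q$-th moment convergence, hence gives uniform integrability of any continuous functional with growth of order at most $q$---is exactly the role played by the paper's auxiliary Lemma~\ref{UI_probability_ell(Yalpha)} (proved there via Vitali's theorem rather than Skorokhod, but to the same effect). The expectation and shortfall cases then go through in essentially the paper's way, though you have the shortfall definition wrong: the paper uses $\rho^{(3)}(Z)=\inf\{\kappa:\E[u(-Z-\kappa)]\le l\}$ with $u$ \emph{convex} and increasing, not concave, and the sign convention matters for the monotonicity sandwiching argument. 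For the $\inf_t$ rows, your Berge-plus-compactness route is heavier machinery than the paper's direct two-sided bound $|\inf_t \E_{\mathbb{Q}_n}[\ell(\cdot,t)]-\inf_t\E_{\mathbb{Q}}[\ell(\cdot,t)]|\le|\E_{\mathbb{Q}_n}[\ell(\cdot,t_n)]-\E_{\mathbb{Q}}[\ell(\cdot,t_n)]|$ with $t_n$ a minimizer, but it is arguably more careful since one must ensure the convergence in Lemma~\ref{UI_probability_ell(Yalpha)} holds uniformly along the sequence $\{t_n\}$; either way both are viable.

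The distortion case is where you genuinely diverge from the paper, and where you over-complicate. You flag as the ``main obstacle'' the compatibility of quantile-tail control with the distortion measure near $\{0,1\}$, and plan to pass to the limit under the distortion integral via pointwise quantile convergence and dominated convergence. This obstacle simply does not arise in the paper's argument because the distortion row of Table~\ref{summarize} is stated under the assumption that $\ell$ is \emph{Lipschitz}, and the paper exploits this by bounding the distortion difference directly with a H\"older-type estimate:
\begin{align*}
\left|\rho^h[\ell(Y_n^\top\alpha)]-\rho^h[\ell(Y^\top\alpha)]\right|
\le \|h'_-\|_{q^*}\,\E\!\left[|\ell(Y_n^\top\alpha)-\ell(Y^\top\alpha)|^q\right]^{1/q}
\le \|h'_-\|_{q^*}\operatorname{Lip}(\ell)\,\|\alpha\|_*\,\E\!\left[\|Y_n-Y\|^q\right]^{1/q},
\end{align*}
which goes to zero directly from $\mathcal{W}(\mathbb{Q}_n,\mathbb{Q})\to 0$. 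No quantile representation, Skorokhod coupling, or dominated convergence on the inverse c.d.f.\ is needed. Your route would work with additional effort, but it discards the Lipschitz structure that makes the distortion case by far the easiest row rather than the hardest one.
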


\subsection{Conditional Expectation}\label{CE}
We begin with the base case of the conditional expectation problem under its union-ball formulation:
\begin{align}\label{pro1:exp1-1}
		\min_{\alpha \in \mathcal{A}} \sup_{{\mathbb{Q}}|_{Y} \in \bigcup\limits_{(p,\delta) \in \mathcal{V}} 
			\mathbb{B}_{\delta}\left(\sum_{i=1}^N p_i\mathrm{I}_{\widehat{y}_i}\right)} 
		\E_{{\mathbb{Q}}|_{Y}}\left[\ell(Y^\top\alpha) \right] = \min_{\alpha \in \mathcal{A}} \sup_{(p,\delta) \in \mathcal{V}} \left\{ \sup_{{\mathbb{Q}}|_{Y} \in \mathbb{B}_{\delta}\left(\sum_{i=1}^N p_i\mathrm{I}_{\widehat{y}_i}\right)} \E_{{\mathbb{Q}}|_{Y}}\left[\ell(Y^\top\alpha) \right] \right\}.
\end{align}

Invoking the regularization reformulation of the inner worst-case subproblem (\citet{W22}, see Section~EC.2), we obtain a family of robust optimization problems. As we show below, these problems admit convex reformulations.
\noindent
\begin{proposition}\label{regular_exp}
\begin{itemize}
	\item [(i)]Let $\mathbb{D}_{\mathcal{Y}}(y_1,y_2)=||y_1-y_2||_{\mathcal{Y}}$ for any $y_1,y_2\in\mathcal{Y}$ and $\ell: \mathbb{R} \rightarrow \mathbb{R}$ be a convex and Lipschitz continuous function. 
	The problem \eqref{pro1:exp1-1} is equivalent to the robust optimization problem
$$\min_{\alpha \in\mathcal{A}} \sup_{(p,\delta) \in \mathcal{V}_+ } \sum_{i=1}^N p_i \ell(\widehat{y}_i^\top\alpha)  + \operatorname{Lip}(\ell)\delta ||\alpha||_*,$$	
and can be solved by the convex program
	\begin{align*}
	 &\inf_{\alpha \in\mathcal{A},v}~\sigma^*(v\mid\mathcal{V}_+)\\
		&~~~\text{s.t.}~ v_i\ge \ell(\widehat{y}_i^\top\alpha), ~i=1,...,N, \\
		&~~~~~~~~~v_{N+1}\ge \operatorname{Lip}(\ell)||\alpha||_*.
	\end{align*}
	\item[(ii)]Let $\mathbb{D}_{\mathcal{Y}}(y_1,y_2)=||y_1-y_2||_{\mathcal{Y}}^q$ for any $y_1,y_2\in\mathcal{Y}$ and $q\in(1,\infty)$. 
\begin{itemize}
	\item [(1)]Assume the loss function $\ell$ takes one of the following two forms, multiplied by $C>0$ :
	\begin{itemize}
		\item [(a)]$\ell_1(x)=x+b$ or $\ell_1(x)=-x+b$ with some $b \in \mathbb{R}$;
		\item[(b)]  $\ell_2(x)=\left|x-b_1\right|+b_2$ with some $b_1, b_2 \in \mathbb{R}$.
	\end{itemize}The problem \eqref{pro1:exp1-1} is equivalent to the robust optimization problem
$$\min_{\alpha \in\mathcal{A}} \sup_{(p,\delta) \in \mathcal{V}_+ } \sum_{i=1}^N p_i \ell(\widehat{y}_i^\top\alpha) +C\delta^\frac{1}{q} ||\alpha||_*,$$
and can be solved by the convex program
	\begin{align*}
		&\inf_{\alpha  \in\mathcal{A},v}~\sigma^*(v|\mathcal{V}_+)+C_qv_{N+1}\left\|\frac{\alpha}{v_{N+1}}\right\|_*^{\frac{q}{q-1}}\\
		&~~~\text{s.t.}~v_i\ge \ell(\widehat{y}_i^\top\alpha), ~i=1,...,N,\\
		&~~~~~~~~~v_{N+1}\ge 0,
	\end{align*}
	where $C_q=C^{\frac{q}{q-1}}\left(q^{\frac{1}{1-q}}-q^{\frac{q}{1-q}}\right)$.
	\item[(2)] Assume the loss function $\ell$ is of the form $\ell(x) = \left(C \cdot \ell_i(x)\right)^q$, where $C > 0$, $q \in (1, \infty)$, and $\ell_i$ is one of the following:	
	\begin{itemize}
		\item [(a)]$\ell_1(x)=(x-b)_{+}$with some $b \in \mathbb{R}$;
		\item[(b)] $\ell_2(x)=(x-b)_{-}$with some $b \in \mathbb{R}$;
		\item[(c)] $\ell_3(x)=\left(\left|x-b_1\right|-b_2\right)_{+}$with some $b_1 \in \mathbb{R}$ and $b_2 \geqslant 0$;
		\item[(d)] $\ell_4(x)=\left|x-b_1\right|+b_2$ with some $b_1 \in \mathbb{R}$ and $b_2>0$.
	\end{itemize} 
	The problem \eqref{pro1:exp1-1} is equivalent to the robust optimization problem
	
$$ \min_{\alpha \in\mathcal{A}} \sup_{(p,\delta) \in \mathcal{V}_+ } \left[\left(\sum_{i=1}^N p_i \ell(\widehat{y}_i^\top\alpha) \right)^{1/q} + C\delta^\frac{1}{q} ||\alpha||_*\right]^{q},$$	
and can be solved by the convex program	
	\begin{align*}
		&\inf_{\alpha\in\mathcal{A},v,y\geq 0}\left[\sigma^*(v|\mathcal{V}_+)+C_q^1y+C_qv_{N+1}\left\|\frac{\alpha}{v_{N+1}}\right\|_*^{\frac{q}{q-1}}\right]^q\\
		&~~~~~~\text{s.t.}~v_i\ge \frac{\ell(\widehat{y}_i^\top\alpha)}{y^{q-1}}, ~i=1,...,N,\\
		&~~~~~~~~~~~~v_{N+1}\ge 0.
	\end{align*}
	where $C_q^1=q^{\frac{1}{1-q}}-q^{\frac{q}{1-q}}$ and $C_q=C^{\frac{q}{q-1}}\left(q^{\frac{1}{1-q}}-q^{\frac{q}{1-q}}\right)$.  
\end{itemize}
\end{itemize}
\end{proposition}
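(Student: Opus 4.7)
The plan is to exploit the union-ball decomposition \eqref{union-version}--\eqref{r2}, which rewrites the left-hand side as $\min_{\alpha\in\mathcal A}\sup_{(p,\delta)\in\mathcal V_+} g(\alpha,(p,\delta))$ with $g$ being an ordinary (unconditional) Wasserstein DRO around the finitely supported reference $\sum_{i=1}^N p_i\mathrm{I}_{\widehat y_i}$, and then to obtain a closed-form expression for $g$ by invoking the order-$1$ and order-$q$ regularization reformulations of \citet{W22} cited in Section~EC.2. Clauses (i), (ii)(1) and (ii)(2) correspond exactly to the three regimes in which those reformulations hold with equality: a linear reformulation $g = \sum_i p_i\ell(\widehat y_i^\top\alpha)+\operatorname{Lip}(\ell)\delta\|\alpha\|_*$ for the order-$1$ cost with Lipschitz $\ell$; an order-$q$ reformulation $g = \sum_i p_i\ell(\widehat y_i^\top\alpha)+C\delta^{1/q}\|\alpha\|_*$ for the affine or absolute-value templates; and its nested version $g = [(\sum_i p_i\ell)^{1/q}+C\delta^{1/q}\|\alpha\|_*]^q$ for the piecewise-power templates (a)--(d). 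The otherwise surprising case split in (ii) is dictated precisely by the loss families on which this regularization is tight.

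For part (i), $g(\alpha,\cdot)$ is affine in $(p,\delta)$, so by definition of the support functional, $\sup_{(p,\delta)\in\mathcal V_+} g(\alpha,(p,\delta)) = \sigma^*(v\mid\mathcal V_+)$ evaluated at $v_i=\ell(\widehat y_i^\top\alpha)$ and $v_{N+1}=\operatorname{Lip}(\ell)\|\alpha\|_*$. Because $\mathcal V_+\subset\R_+^{N+1}$, $\sigma^*(\cdot\mid\mathcal V_+)$ is nondecreasing in $v$, so the two equalities can be relaxed to the inequalities displayed in the convex program without changing the optimum; convexity of $\ell$ then makes the first family of constraints convex, and the second is a norm inequality, yielding (i).

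For part (ii), the additional work is to absorb the concave-in-$\delta$ factor $C\delta^{1/q}\|\alpha\|_*$. The identity I plan to use is the Young/perspective formula
\begin{equation*}
C\,\delta^{1/q}\|\alpha\|_* \;=\; \inf_{v_{N+1}>0}\Bigl\{v_{N+1}\,\delta \;+\; C_q\,v_{N+1}\,\bigl\|\alpha/v_{N+1}\bigr\|_*^{q/(q-1)}\Bigr\},
\end{equation*}
obtained by a one-line first-order calculation whose optimizer reproduces exactly the constant $C_q = C^{q/(q-1)}(q^{1/(1-q)}-q^{q/(1-q)})$ of the proposition. Plugging this identity into $\sup_{(p,\delta)\in\mathcal V_+}\{\sum_i p_i\ell+C\delta^{1/q}\|\alpha\|_*\}$ and swapping $\inf_{v_{N+1}}$ with $\sup_{(p,\delta)}$---the integrand is linear in $(p,\delta)$ and jointly convex in $(\alpha,v_{N+1})$ by joint convexity of the perspective $v_{N+1}\|\alpha/v_{N+1}\|_*^{q/(q-1)}$, while $\mathcal V_+$ is bounded and convex, so Sion's theorem applies---reduces the sup to $\sigma^*(v\mid\mathcal V_+)$ with $v_i=\ell(\widehat y_i^\top\alpha)$ and an additional free variable $v_{N+1}\ge 0$. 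The monotonicity argument of part (i) then relaxes the $v_i$ equalities to inequalities and establishes (ii)(1). For (ii)(2) I first pull the outer $q$-th power out by monotonicity of $(\cdot)^q$ on $\R_+$, then apply the same perspective identity to the $C\delta^{1/q}\|\alpha\|_*$ summand and a second Young identity $a^{1/q}=\inf_{y>0}\{a/y^{q-1}+C_q^1 y\}$ to the $(\sum_i p_i\ell)^{1/q}$ summand. The resulting constraint $v_i\ge \ell(\widehat y_i^\top\alpha)/y^{q-1}$ is jointly convex in $(\alpha,y,v_i)$ because the templates (a)--(d) make $\ell$ the $q$-th power of a nonnegative convex function, so $\ell/y^{q-1}$ is the joint perspective of that $q$-th power and inherits joint convexity.

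The main obstacle, as I see it, is the minimax exchange in part (ii) together with the boundary handling of the perspective at $v_{N+1}=0$: I would need to justify Sion's theorem on (a compact truncation of) $\mathcal V_+$ and then argue that writing $v_{N+1}\ge 0$ rather than $v_{N+1}>0$ in the final program is without loss, by using the lower-semicontinuous envelope of the perspective, which sends $v_{N+1}\|\alpha/v_{N+1}\|_*^{q/(q-1)}$ to $+\infty$ when $v_{N+1}=0$ and $\alpha\neq 0$ and to $0$ when both vanish. A secondary, bookkeeping-level task is to verify that the order-$q$ regularization of \citet{W22} is an equality on every loss template listed in (ii)(1)(a)--(b) and (ii)(2)(a)--(d); these are exactly the exhaustive tight cases, which is precisely what dictates the case split in the statement.
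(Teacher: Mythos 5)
Your proposal is correct, and the strategy up to and including the reduction to the robust‐optimization problems in the three clauses is the same as the paper's: decompose via \eqref{union-version}--\eqref{r2} and invoke the regularization reformulations of \citet{W22} (the order‑$1$ Lipschitz case, Theorem~3, and Theorem~4) on each inner Wasserstein ball. The divergence is in how you convert $\sup_{(p,\delta)\in\mathcal V_+}$ into the displayed convex program. The paper's proof treats $\sup_{p'\in\mathcal V_+}\{f(p')-\sigma(p'\mid\mathcal V_+)\}$ as a Fenchel pair and directly computes the concave conjugate $f_*$ (or $g_*$, $h_*$) of the linear-plus-power objective; in that calculation the term $\inf_{\delta\ge 0}\{v_{N+1}\delta - C\delta^{1/q}\|\alpha\|_*\}$ is evaluated in closed form, which is exactly where the constant $C_q$ appears. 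You instead insert the equivalent Young/perspective identity $C\delta^{1/q}\|\alpha\|_* = \inf_{v_{N+1}>0}\{v_{N+1}\delta + C_q v_{N+1}\|\alpha/v_{N+1}\|_*^{q/(q-1)}\}$ before the supremum, then swap $\inf$ and $\sup$ by Sion, then read off the support function; for (ii)(2) you add a second Young identity for $(\cdot)^{1/q}$, reproducing $C_q^1$. The two derivations are dual expressions of the same underlying Fenchel--Young computation (the constants match, which you verified), so the statement is proved either way. What your route buys is transparency of where $C_q$ and $C_q^1$ come from and a cleaner separation between the radius term and the weight term; what it costs is an explicit minimax exchange that requires compactness of $\mathcal V_+$ (the paper invokes the same compactness when it uses Sion in the proof of Proposition~\ref{general_exp}, so this is consistent with the paper's standing assumptions) plus the lsc-envelope boundary argument at $v_{N+1}=0$ that you already flag. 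Your monotonicity argument for replacing equalities by inequalities in the constraints, using $\mathcal V_+\subset\mathbb R_+^{N+1}$, is a valid substitute for the Fenchel step that produces the inequality form directly, and your joint-convexity claim for $v_i\ge\ell(\widehat y_i^\top\alpha)/y^{q-1}$ in (ii)(2) is correct once one notes that $(a,b)\mapsto a^q/b^{q-1}$ is the perspective of $x\mapsto x^q$ and hence jointly convex, and that composing with a nonnegative convex $h$ in the first slot preserves joint convexity by monotonicity of $x\mapsto x^q$ on $\mathbb R_+$.
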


More generally, even without regularization equivalents, the decomposition remains powerful: by isolating each Wasserstein DRO subproblem, it enables us to exploit their projection property \citep{W22}:
\begin{align*}
\sup_{{\mathbb{Q}}|_{Y} \in \mathbb{B}_{\delta}\left(\sum_{i=1}^N p_i \mathrm{I}_{\widehat{y}_i}\right)} 
\E_{{\mathbb{Q}}|_{Y}} \left[ \ell(Y^\top \alpha) \right] 
= \sup_{{\mathbb{Q}}|_{Z} \in \mathbb{B}_{\delta \|\alpha\|_*} \left( \sum_{i=1}^N p_i \mathrm{I}_{\widehat{y}_i^\top \alpha} \right)} 
\E_{{\mathbb{Q}}|_{Z}} \left[ \ell(Z) \right],
\end{align*}
which yields a one-dimensional reformulation that is considerably easier to handle. Leveraging this insight, we derive a final convex program reformulation of the conditional expectation problem \eqref{pro1:exp1-1}, applicable to any suitably regular convex loss function~$\ell$.

\begin{proposition}\label{general_exp}
Let $\mathbb{D}_{\mathcal{Y}}(y_1,y_2)=||y_1-y_2||_{\mathcal{Y}}^q$ for any $y_1,y_2\in\mathcal{Y}$ and $q\in(1,\infty)$. {Assuming the loss function $\ell$ is convex and satisfies $\ell(x)-\ell(x_0) \leqslant L|x-x_0|^q+M, x \in \mathbb{R}$ for some $L,M>0$, and some $x_0\in\mathbb{R}$, the problem} \eqref{pro1:exp1-1} can be solved by the convex program
	\begin{align}
		&\inf_{\alpha\in\mathcal{A},v,\eta \geq 0}~\sigma^*(v|\mathcal{V}_+)\notag\\
		&~~~~~~\text{s.t.}~v_i\ge \sup _{z \in \mathbb{R}}\left\{z \cdot \widehat{y}_i^{\top} \alpha-\ell^*(z)+\eta \cdot \frac{|z|^{q^*}}{q^* q^{q^*-1}}\right\}, ~i=1,...,N, \label{ex1}\\
		&~~~~~~~~~~~~v_{N+1}\ge ||\alpha||_*^q/\eta^{q-1}.\notag
	\end{align}
\end{proposition}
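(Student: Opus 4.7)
\textbf{Proof plan for Proposition \ref{general_exp}.} My plan is to combine the union-ball decomposition with the projection property, apply standard Wasserstein strong duality on the induced one-dimensional DRO, and then linearize in $(p,\delta)$ using the convex conjugate $\ell^{*}$ so that the outer maximization is absorbed into the support function $\sigma^{*}(\cdot\mid\mathcal{V}_{+})$.

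First, I would use the union-ball representation (and the identity $\min_{\alpha}\sup g=\min_{\alpha}\sup_{(p,\delta)\in\mathcal{V}_{+}}g$ established just before the proposition) to write the problem as
\begin{equation*}
\min_{\alpha\in\mathcal{A}}\;\sup_{(p,\delta)\in\mathcal{V}_{+}}\;\sup_{\mathbb{Q}|_{Y}\in\mathbb{B}_{\delta}(\sum_{i=1}^{N}p_{i}\mathrm{I}_{\widehat{y}_{i}})}\E_{\mathbb{Q}|_{Y}}[\ell(Y^{\top}\alpha)].
\end{equation*}
Because $\ell$ acts only on $Y^{\top}\alpha$ and $\mathbb{D}_{\mathcal{Y}}(y_{1},y_{2})=\|y_{1}-y_{2}\|^{q}_{\mathcal{Y}}$, the projection property recalled right before the proposition (with the $q$-th power scaling) collapses the inner DRO to a one-dimensional OT ball of radius $\delta\|\alpha\|_{*}^{q}$ centered at $\sum p_{i}\mathrm{I}_{\widehat{y}_{i}^{\top}\alpha}$.

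Next I would invoke the standard Wasserstein-DRO strong duality (e.g.\ Blanchet--Murthy / Gao--Kleywegt, whose conditions are exactly the growth hypothesis $\ell(x)-\ell(x_{0})\leq L|x-x_{0}|^{q}+M$ assumed in the statement) to obtain, for each fixed $\alpha$ and $(p,\delta)$,
\begin{equation*}
\sup_{\mathbb{Q}|_{Z}\in\mathbb{B}_{\delta\|\alpha\|_{*}^{q}}}\E_{\mathbb{Q}|_{Z}}[\ell(Z)]=\inf_{\lambda\geq 0}\Big\{\lambda\delta\|\alpha\|_{*}^{q}+\sum_{i=1}^{N}p_{i}\sup_{z\in\mathbb{R}}\big[\ell(z)-\lambda|z-\widehat{y}_{i}^{\top}\alpha|^{q}\big]\Big\}.
\end{equation*}
I then rewrite the inner univariate supremum by inserting $\ell(z)=\sup_{s}[sz-\ell^{*}(s)]$, switching suprema, and evaluating $\sup_{w}[sw-\lambda|w|^{q}]=|s|^{q^{*}}/(q^{*}q^{q^{*}-1}\lambda^{q^{*}-1})$ in closed form via first-order conditions. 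With the substitution $\eta=1/\lambda^{q^{*}-1}$ (so $\lambda=1/\eta^{q-1}$ since $1/(q^{*}-1)=q-1$), the penalty term $\lambda\delta\|\alpha\|_{*}^{q}$ becomes $\delta\|\alpha\|_{*}^{q}/\eta^{q-1}$, producing the two structural ingredients of the stated program.

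Finally, I swap the $\sup_{(p,\delta)}$ with the $\inf_{\lambda\geq 0}$; this is legitimate because, by Proposition \ref{gen}, the bracketed expression is concave in $(p,\delta)$ over the convex set $\mathcal{V}_{+}$ and is convex in $\lambda$, so Sion's theorem applies. Introducing epigraph variables $v_{i}\geq\sup_{s}\{s\,\widehat{y}_{i}^{\top}\alpha-\ell^{*}(s)+\eta|s|^{q^{*}}/(q^{*}q^{q^{*}-1})\}$ and $v_{N+1}\geq\|\alpha\|_{*}^{q}/\eta^{q-1}$ linearizes the objective in $(p,\delta)$, so that $\sup_{(p,\delta)\in\mathcal{V}_{+}}[\sum v_{i}p_{i}+v_{N+1}\delta]$ is exactly $\sigma^{*}(v\mid\mathcal{V}_{+})$, yielding the stated convex program. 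The main obstacle I expect is justifying the sup--inf interchange together with the interior attainment needed for strong duality (so that $\lambda$ can be finite and the substitution $\eta=\lambda^{-(q^{*}-1)}$ is valid, including the boundary case $\eta=0$ treated as a limit via convex closure); beyond that, verifying the Blanchet--Murthy hypotheses from the growth bound and carrying out the $\ell^{*}$-based conjugate manipulation are largely mechanical.
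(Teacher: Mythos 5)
Your plan matches the paper's proof essentially step for step: the paper factors the projection-plus-strong-duality-plus-conjugate-substitution portion into a standalone Theorem~\ref{regular-projection}, then applies Sion's minimax theorem (linearity in $(p,\delta)$ over the compact convex $\mathcal{V}_+$, convexity in $\eta$) and Fenchel/epigraph duality to reach $\sigma^*(v\mid\mathcal{V}_+)$, which is exactly what you re-derive inline. One small note: for the sup--inf interchange you appeal to Proposition~\ref{gen}, but after strong duality has been applied the objective is already \emph{linear} in $(p,\delta)$ for each fixed $\lambda$ (or $\eta$), so the cleaner justification -- and the one the paper uses -- is direct linearity plus compactness of $\mathcal{V}_+$, not the concavity of the worst-case map $g$.
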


Constraints like \eqref{ex1}, where the right-hand side involves an optimization problem, commonly arise in dual reformulations of Wasserstein DRO. What sets \eqref{ex1} apart is that the optimization is univariate, which not only simplifies analysis and computation but also enables further problem-specific reductions, as we demonstrate below.

\begin{example}
    \begin{itemize}
        \item[(i)](Piecewise linear convex loss function) Let $\ell(x)=\sup_{j=1,...,k}a_jx+b_j$. The convex conjugate function is $$
\ell^*(z)= \begin{cases}-\max _{j: a_j=z} b_j & \text { if } z \in\left\{a_1, \ldots, a_k\right\}, \\ +\infty & \text { otherwise }.\end{cases}
$$ Without loss of generality, assume that $a_{j_1}\neq a_{j_2}$ for ${j_1}\neq {j_2}$. Thus, for $q>1$, we have 
\begin{align*}
    \sup _{z \in \mathbb{R}}\left\{zt-\ell^*(z)+\eta \cdot \frac{|z|^{q^*}}{q^* q^{q^*-1}}\right\}=\max _{j=1, \ldots, k}\left\{a_jt+b_j+\eta \cdot \frac{|a_j|^{q^*}}{q^* q^{q^*-1}}\right\} .
\end{align*}
        \item[(ii)](Piecewise quadratic convex loss function) Let $\ell(x)=\sup_{j=1,...,k}a_jx^2+b_jx+c_j$ and $a_{j}>0$ for $j=1,...,k$. The convex conjugate function is 
        $$
\ell^*(z)=\min _{j=1, \ldots, k}\left\{\frac{\left(z-b_j\right)^2}{4 a_j}-c_j\right\} .
$$
Thus, for $q=2$, we have 
\begin{align*}
    \sup _{z \in \mathbb{R}}\left\{zt-\ell^*(z)+\eta \cdot \frac{|z|^{q^*}}{q^* q^{q^*-1}}\right\}=\max _{j=1, \ldots, k}\left\{\frac{a_j t^2+b_j t}{1-a_j \eta}+c_j+\frac{a_j b_j^2\eta}{4\left(1-a_j \eta\right)}\right\},
\end{align*}
and $\eta<\min_{j=1,...,k}\frac{1}{a_j}$.
\item[(iii)](Power loss function) Let $\ell(x)=\frac{|x|^q}{q}$ for $q>1$. The convex conjugate function is
    $$
    \ell^*(z)=\frac{|z|^{q^*}}{q^*}.
    $$
    Thus, we have 
    \begin{align*}
    \sup _{z \in \mathbb{R}}\left\{zt-\ell^*(z)+\eta \cdot \frac{|z|^{q^*}}{q^* q^{q^*-1}}\right\}=\frac{|t|^q}{q}\left(1-\frac{\eta}{q^{q^*-1}}\right)^{-(q-1)},
\end{align*}
and $\eta<q^{q^*-1}$.
    \end{itemize}
\end{example}

Thus far, our tractable reformulations have been kept general, applicable to any distributionally robust CRM that admits a union-ball representation with a convex admissible set~$\mathcal{V}$. To illustrate how the support function~$\sigma^*(v|\mathcal{V}_+)$ is derived, we now apply the admissible sets from Theorems~\ref{thm1} and~\ref{thm2}. These examples also enable direct comparison between the final formulations from our union-ball approach and those in existing OT-based methods. Recall that $\mathcal{N}_{\omega}=[\omega_1,\omega_2]\subset(0,1]$.

\begin{example}\label{full-conjugate}
Denote $p'=(p,\delta,\epsilon)$.  Following Theorem~\ref{thm1}, the conjugate of the indicator function of $\mathcal{V}_+$ is given by 
    \begin{align*}
\sigma^*(v\mid \mathcal{V}_+)
= \sup_{(p,\delta)\in\mathcal{V}_+} v^\top (p,\delta)
= \begin{cases}\sup v^\top (p,\delta)+0\cdot\epsilon\\
\text{s.t. } Ap' \le b,\; p' \ge 0
\end{cases} = \begin{cases}\inf z^\top b\\
\text{s.t. } A^\top z \ge (v,0),\; z \ge 0,
\end{cases}
\end{align*}
where $A\in\mathbb{R}^{(N+6)\times(N+2)}$,and  $b\in\mathbb{R}^{N+6}$ are defined by 
 \begin{align*}
         A = \begin{pmatrix}
		1& \cdots & 1 & 0 &0 \\
		-1& \cdots & -1 & 0&0\\
		0& \cdots & 0 & 0&1\\
		0& \cdots & 0 & 0&-1\\
		1 & \cdots & 0 & 0&-\frac{1}{N}\\
		\vdots & \ddots & \vdots&\vdots&\vdots \\
		0 & \cdots & 1&0&-\frac{1}{N}\\
		-a_1 & \cdots & -a_{N}&-a_{N+1}&-a_{N+2}\\
		a_1 & \cdots & a_{N}&a_{N+1}& a_{N+2}
	\end{pmatrix},\quad\text{ with }
  a = \begin{pmatrix}-d_1 \\ \vdots \\ -d_m \\ d_{m+1} \\ \vdots \\ d_N \\ -1 \\[.5ex]\displaystyle \delta_0 - \frac1N\sum_{i=m+1}^N d_i\end{pmatrix},\quad\text{ and }
  b = 
    \begin{pmatrix}
      1 \\ -1 \\[3pt]
      \tfrac1{\omega_1} \\ -\tfrac1{\omega_2} \\[3pt]
      0 \\ \vdots \\ 0
    \end{pmatrix}.
    \end{align*}
\end{example}

\begin{example}\label{partial-conjugate}
Note that $\mathcal{N}_{\omega}=[\omega_1,\omega_2]\subset(0,1]$. Then ${\epsilon}=\min_{\beta\in\mathcal{N}_{\omega}}\beta=\omega_1$. Following Theorem~\ref{thm2}, the conjugate of the indicator function of $\mathcal{V}_+$ is given by 
 \begin{align*}
		\sigma^*(v\mid \mathcal{V}_+)=\sup_{p'\in\mathcal{V}_+}v^\top p'
		=\begin{cases}\sup v^\top p'\\
		\text{s.t.}~A'p'\le b',~p'\ge 0\end{cases}
        =\begin{cases}\inf z^\top b'\\
		\text{s.t.}~(A')^\top z\ge v,~z\ge 0,\end{cases}
	\end{align*}
where $A'\in\mathbb{R}^{(N+4)\times(N+1)}$, and  $b'\in\mathbb{R}^{N+4}$ are defined by 
	 \begin{align*}
         A'=\begin{pmatrix}
		1& \cdots & 1 & 0  \\
		-1& \cdots & -1 & 0\\
		1 & \cdots & 0 & 0\\
		\vdots & \ddots & \vdots&\vdots \\
		0 & \cdots & 1&0\\
		-a'_1 & \cdots & -a'_{N}&-a'_{N+1}\\
		a'_1 & \cdots & a'_{N}&a'_{N+1}
	\end{pmatrix},\quad\text{ with }
  a' = \begin{pmatrix}d_1 \\ \vdots \\ d_m \\ 0 \\ \vdots \\ 0 \\ 1 \end{pmatrix},\quad\text{ and }
  b' = 
    \begin{pmatrix}
      1 \\ -1 \\[3pt]
      \frac{1}{N\omega_1}\\ \vdots \\ \frac{1}{N\omega_1} \\
      -\delta_0 \\ \delta_0
    \end{pmatrix}.
    \end{align*}
\end{example}
{\begin{remark}
Building on the conjugates in Examples \ref{full-conjugate} and \ref{partial-conjugate}, we provide full reformulations of the problems in Appendix EC.3. There, we also discuss how our union-ball reformulation substantially simplifies the final form, in comparison with the reformulation given in \citet{N24}, as highlighted in Remark \ref{comparison_full}.
\end{remark}}

\subsection{Conditional Risk Functionals in Expectation Form}\label{Expectation Form}
The decomposition framework naturally extends to a broader class of risk functionals that admit expectation-based representations. {In this section, we focus on solving the following problem 
\begin{align}\label{general_rho_union_version}
    \min_{\alpha \in \mathcal{A}} \sup_{{\mathbb{Q}}|_{Y} \in \bigcup\limits_{(p,\delta) \in \mathcal{V}} 
		\mathbb{B}_{\delta}\left(\sum_{i=1}^N p_i\mathrm{I}_{\widehat{y}_i}\right)} 
	\rho_{{\mathbb{Q}}|_{Y}}\left(Y^\top \alpha \right)=\min_{\alpha \in \mathcal{A}} \sup_{(p,\delta) \in \mathcal{V}} \left\{ \sup_{{\mathbb{Q}}|_{Y} \in \mathbb{B}_{\delta}\left(\sum_{i=1}^N p_i\mathrm{I}_{\widehat{y}_i}\right)} \rho_{{\mathbb{Q}}|_{Y}}\left(Y^\top \alpha \right) \right\}.
\end{align}} We begin with risk measures of the form
\begin{align}\label{rho_1}
\rho^{(1)}(Z)=\inf _{t \in \mathbb{R}^k} \mathbb{E}\left[\ell(Z, t)\right], 
\end{align}
as considered in \citet{N24}, where only conservative reformulations were available. In contrast, our approach yields reformulations that are not only provably exact, but also substantially simpler--requiring fewer variables, fewer and simpler constraints, and exhibiting a modular structure that facilitates both analysis and computation. {The following property of a function is necessary for the propositions in this section.
\begin{definition}(Level-bounded)    
A function $\ell: \mathbb{R}^k \rightarrow\mathbb{R}$ is level bounded if for each $A \in \mathbb{R}$, the sublevel set $\left\{t \in \mathbb{R}^k: \ell(t) \leq A\right\}$ is bounded (possibly empty).
\end{definition}} As in the case of conditional expectation, we first exploit regularization equivalents to obtain robust optimization reformulations of \eqref{general_rho_union_version}, and then show that these admit convex minimization reformulations.
{\begin{proposition}\label{general_min_exp}
\begin{itemize}
\item[(i)]  Let $\mathbb{D}_{\mathcal{Y}}(y_1,y_2)=||y_1-y_2||_{\mathcal{Y}}$ for any $y_1,y_2\in\mathcal{Y}$. Suppose that $\ell(z,t): \mathbb{R}\times\mathbb{R}^k\rightarrow\mathbb{R}$ is jointly convex in $(z,t)$, level-bounded in $t$ for all $z \in \mathbb{R}$, and Lipschitz continuous in $z$ for all $t \in \mathbb{R}^k$ with a uniform Lipschitz constant $\operatorname{Lip}(\ell)$. The problem \eqref{general_rho_union_version} with $\rho=\rho^{(1)}$ is equivalent to the robust optimization problem
    \begin{align*}
        \inf_{\alpha\in\mathcal{A},t\in\mathbb{R}^k}\sup_{(p,\delta) \in \mathcal{V}_+ } \sum_{i=1}^N p_i \ell(\widehat{y}_i^\top\alpha,t)  + \operatorname{Lip}(\ell)\delta ||\alpha||_*,
    \end{align*}
    and can be solved by the convex program\begin{align*}
		&\inf_{\alpha\in\mathcal{A},t\in\mathbb{R}^k,v}\sigma^*(v|\mathcal{V}_+)\\
		&~~~~~~\text{s.t.}~v_i\ge\ell(\widehat{y}_i^\top\alpha,t),i=1,...,N,\\
        &~~~~~~~~~~~~v_{N+1}\ge \operatorname{Lip}(\ell)\|\alpha\|_*.
	\end{align*}
    \item[(ii)] Let $\mathbb{D}_{\mathcal{Y}}(y_1,y_2)=||y_1-y_2||_{\mathcal{Y}}^q$ for any $y_1,y_2\in\mathcal{Y}$, $q\in(1,\infty)$ and $k=1$. Assume the loss function $\ell$ is the form $\ell(z, t):=\left(C\bar{\ell}_i(z-t)\right)^q$, where $C>0$, and $\bar{\ell}_i$ is one of the following:
    \begin{itemize}
	\item [(a)] $\bar{\ell}_1(x)=\left(\left|x-b_1\right|-b_2\right)_{+}$with some $b_1 \in \mathbb{R}$ and $b_2 \geqslant 0$;
	\item[(b)] $\bar{\ell}_2(x)=\left|x-b_1\right|+b_2$ with some $b_1 \in \mathbb{R}$ and $b_2>0$.
\end{itemize}
The problem \eqref{general_rho_union_version} with $\rho=\rho^{(1)}$ is equivalent to the robust optimization problem
    $$ \inf_{\alpha \in\mathcal{A},t\in\mathbb{R}} \sup_{(p,\delta) \in \mathcal{V}_+ } \left[\left(\sum_{i=1}^N p_i \ell(\widehat{y}_i^\top\alpha,t) \right)^{1/q} + C\delta^\frac{1}{q} ||\alpha||_*\right]^{q},$$	
and can be solved by the convex program	
	\begin{align*}
		&\inf_{\alpha\in\mathcal{A},t\in\mathbb{R},y\geq 0,v}\left[\sigma^*(v|\mathcal{V}_+)+C_q^1y+C_qv_{N+1}\left\|\frac{\alpha}{v_{N+1}}\right\|_*^{\frac{q}{q-1}}\right]^q\\
		&~~~~~~~~\text{s.t.}~v_i\ge \frac{\ell(\widehat{y}_i^\top\alpha,t)}{y^{q-1}}, ~i=1,...,N,\\
		&~~~~~~~~~~~~~~v_{N+1}\ge 0,
	\end{align*}
	where $C_q^1=q^{\frac{1}{1-q}}-q^{\frac{q}{1-q}}$ and $C_q=C^{\frac{q}{q-1}}\left(q^{\frac{1}{1-q}}-q^{\frac{q}{1-q}}\right)$.  
\end{itemize}   
\end{proposition}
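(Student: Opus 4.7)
The plan is to extend the union-ball tractability pipeline from Propositions \ref{regular_exp}--\ref{general_exp} by inserting the extra infimum over $t$ that defines $\rho^{(1)}$. Starting from \eqref{general_rho_union_version}, substitute $\rho^{(1)}(Z)=\inf_{t\in\mathbb{R}^k}\mathbb{E}[\ell(Z,t)]$ to obtain
\begin{align*}
\min_{\alpha\in\mathcal{A}}\sup_{(p,\delta)\in\mathcal{V}_+}\sup_{\mathbb{Q}|_Y\in\mathbb{B}_\delta(\sum_i p_i\mathrm{I}_{\widehat{y}_i})}\inf_{t\in\mathbb{R}^k}\mathbb{E}_{\mathbb{Q}|_Y}[\ell(Y^\top\alpha,t)].
\end{align*}
The first key step is to swap the inner $\sup_{\mathbb{Q}|_Y}$ with $\inf_t$ via Sion's minimax theorem: the objective is linear (hence concave) in $\mathbb{Q}|_Y$ on the weakly compact, convex Wasserstein ball, and convex in $t$ by joint convexity of $\ell$; level-boundedness of $\ell(z,\cdot)$ uniformly over the bounded range of $\widehat{y}_i^\top\alpha$ restricts $t$ to a compact set uniformly across the ball, while Assumption~\ref{assum2}(ii) supplies the weak continuity needed on the distributional side.

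With the infimum outside, the innermost problem is a standard unconditional Wasserstein DRO. I would invoke the projection property of \citet{W22} to reduce it to a 1-D Wasserstein ball of radius $\delta\|\alpha\|_*$ under order-1 cost (Part (i)) or $\delta\|\alpha\|_*^q$ with line cost $|z_1-z_2|^q$ under order-$q$ cost (Part (ii)), and then apply the corresponding regularization identity of \citet{W22}. For Part (i), Lipschitzness in $z$ uniformly in $t$ closes the supremum as $\sum_i p_i\ell(\widehat{y}_i^\top\alpha,t)+\operatorname{Lip}(\ell)\delta\|\alpha\|_*$. For Part (ii), the loss family $(C\bar{\ell}_i(z-t))^q$ satisfies exactly the structural hypotheses of Proposition~\ref{regular_exp}(ii)(2) once $t$ is treated as a shift parameter, giving $[(\sum_i p_i\ell(\widehat{y}_i^\top\alpha,t))^{1/q}+C\delta^{1/q}\|\alpha\|_*]^q$. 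This yields the two claimed robust reformulations. To pass to the convex programs I would introduce epigraphical variables: $v_i\ge\ell(\widehat{y}_i^\top\alpha,t)$ and $v_{N+1}\ge\operatorname{Lip}(\ell)\|\alpha\|_*$ in Part (i); $v_i\ge\ell(\widehat{y}_i^\top\alpha,t)/y^{q-1}$ with $y\ge 0$ and $v_{N+1}\ge 0$ in Part (ii), where the perspective expression $v_{N+1}\|\alpha/v_{N+1}\|_*^{q/(q-1)}$ together with the $y$-variable implements the Young-type linearization of $(a^{1/q}+b)^q$ used in Proposition~\ref{regular_exp}(ii)(2), with $C_q$ and $C_q^1$ arising from the optimal split. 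Because $\mathcal{V}_+\subset\mathbb{R}^{N+1}_+$, the support function $\sigma^*(\cdot|\mathcal{V}_+)$ is componentwise nondecreasing, so these epigraphical relaxations are tight.

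The main obstacle I expect is the rigorous justification of the $\sup$--$\inf$ interchange. Wasserstein balls are not norm-compact, so the exchange has to be argued on the weak topology side, which demands both weak continuity of $\mathbb{Q}\mapsto\mathbb{E}_{\mathbb{Q}}[\ell(\cdot,t)]$ for fixed $t$ and a uniform coercivity bound that confines candidate minimizers $t$ to a compact set independent of $\mathbb{Q}|_Y$. The former is supplied by the growth control built into Assumption~\ref{assum2}(ii) for the loss forms considered (Lipschitz in Part (i), polynomial growth of order $q$ in Part (ii)), and the latter follows from level-boundedness combined with the finite value of the worst-case expectation at any fixed reference $t_0$. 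Once this exchange is secured, the rest of the argument is a mechanical assembly of the projection property, the regularization identities, and the support-function calculus already established in the paper.
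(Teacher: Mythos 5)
Your proposal follows essentially the same route as the paper's proof: a $\sup$--$\inf$ interchange to bring $\inf_t$ outside the inner worst-case expectation, application of the relevant regularization identities from \cite{W22}, a second minimax swap over the compact convex set $\mathcal{V}_+$, and finally a Fenchel-duality / epigraphical reformulation to obtain the convex program. You correctly flag the $\sup_{\mathbb{Q}|_Y}$--$\inf_t$ interchange as the delicate step, and your sketch of the coercivity argument (confining candidate minimizers $t$ to a compact set uniformly over the ball) is precisely what the paper's Lemma~\ref{high_dimen_min_max} establishes, using the Lipschitz bound $\bigl|\E[\ell(Y,t)] - \E[\ell(Y_0,t)]\bigr| \le \operatorname{Lip}(\ell)\|\alpha\|_*\delta^{1/q}$ on the ball and level-boundedness of $\ell(z,\cdot)$.

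Two minor deviations are worth pointing out. First, the paper does not argue weak compactness of the Wasserstein ball and apply Sion there; instead, Lemma~\ref{high_dimen_min_max} shows the optimal $t$ across the ball lies in a fixed compact ball $\{t: \|t - t(\mathbb{Q}_0)\| \le \Delta\}$, and Sion's theorem is applied with compactness on the $t$-side, sidestepping topological questions about the distribution side. Since weak compactness of the ball does require cost-specific justification, the paper's route is cleaner and you should prefer it over your proposed argument via weak compactness of $\mathbb{B}_\delta$. Second, for Part~(i) with order-$1$ cost you invoke the projection property, but this is unnecessary: the identity \eqref{regular:p=1} already gives the regularized form without any reduction to a one-dimensional ball. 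For Part~(ii), the paper simply cites Corollary~1 of \cite{W22}, which packages the $\sup$--$\inf$ swap and the regularization into a single identity in terms of $\rho^{(1)}$; your explicit two-step version reaches the same place but does more work. Your componentwise-monotonicity argument justifying tightness of the epigraphical relaxation (because $\mathcal{V}_+\subset\mathbb{R}^{N+1}_+$) is a valid and in fact slightly more transparent alternative to the direct concave-conjugate computation the paper uses.
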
}

For general convex loss functions $\ell$, we obtain the following convex reformulation by leveraging two key ingredients inherited from the unconditional Wasserstein DRO setting: (i) the projection property, and (ii) the validity of a min–max switching argument, established in the appendix (see Lemma~\ref{high_dimen_min_max}).

\begin{proposition} \label{pro2*}
Let $\mathbb{D}_{\mathcal{Y}}(y_1,y_2)=||y_1-y_2||_{\mathcal{Y}}^q$ for any $y_1,y_2\in\mathcal{Y}$ and $q\in(1,\infty)$. Assume the loss function $\ell$ is the form $\ell(z,t)=\sum_{q_1\in\mathcal{I}}C_{q_1}\left(\bar{\ell}_{q_1}(z,t)\right)^{q_1}$, with a finite index set $\mathcal{I}\subset[1,q]$  and $C_{q_1}\ge 0$, where $\bar{\ell}_{q_1}(z,t):\mathbb{R}\times\mathbb{R}^k\rightarrow\mathbb{R}$ is nonnegative, jointly convex in $(z,t)\in \mathbb{R}^{k+1}$, level-bounded in $t\in\mathbb{R}^k$ for all $z \in \mathbb{R}$, and Lipschitz continuous in $z$ for  all $t \in \mathbb{R}^k$ with a uniform Lipschitz constant $\operatorname{Lip}(\bar{\ell}_{q_1})$, for $q_1\in\mathcal{I}$. The problem \eqref{general_rho_union_version} with $\rho=\rho^{(1)}$ can be solved by the convex program
	\begin{align*}
		&\inf_{\alpha\in\mathcal{A},t\in\mathbb{R}^k,\eta\ge 0,v}~\sigma^*(v|\mathcal{V}_+)\\
		&~~~~~~~~~\text{s.t.}~v_i\ge \sup _{z \in \mathbb{R}}\left\{z \cdot \widehat{y}_i^{\top} \alpha-(\ell(\cdot,t))^*(z)+\eta \cdot \frac{|z|^{q^*}}{q^* q^{q^*-1}}\right\}, ~i=1,...,N,\\
		&~~~~~~~~~~~~~~~v_{N+1}\ge ||\alpha||_*^q/\eta^{q-1}.
	\end{align*} 
\end{proposition}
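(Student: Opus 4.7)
The plan is to follow the three-step pattern already used in the proof of Proposition~\ref{general_exp}---a min--max interchange, a dimension reduction by projection, and strong Wasserstein duality---adapted to accommodate the extra layer $\inf_{t\in\mathbb{R}^k}$ coming from $\rho^{(1)}(Z)=\inf_{t}\mathbb{E}[\ell(Z,t)]$. The support-function representation of the outer maximization over $(p,\delta)\in\mathcal{V}_+$ then yields the convex program in the statement.

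After substituting $\rho^{(1)}_{\mathbb{Q}|_Y}[\ell(Y^\top\alpha)]=\inf_{t}\mathbb{E}_{\mathbb{Q}|_Y}[\ell(Y^\top\alpha,t)]$ into the decomposition \eqref{r11}--\eqref{r2}, I would first invoke Lemma~\ref{high_dimen_min_max} to interchange $\sup_{\mathbb{Q}|_Y}$ and $\inf_{t}$; the hypotheses of the proposition---joint convexity of $\ell$ in $(z,t)$, level-boundedness of each $\bar{\ell}_{q_1}$ in $t$, and uniform Lipschitz continuity in $z$---are precisely what is needed. The projection property for Wasserstein DRO under the cost $\|y-y'\|_\mathcal{Y}^q$ then reduces the inner $\sup_{\mathbb{Q}|_Y\in\mathbb{B}_\delta(\sum_i p_i\mathrm{I}_{\widehat{y}_i})}\mathbb{E}[\ell(Y^\top\alpha,t)]$ to a scalar Wasserstein DRO $\sup_{\mathbb{Q}|_Z\in\mathbb{B}_{\delta\|\alpha\|_*^q}(\sum_i p_i\mathrm{I}_{\widehat{y}_i^\top\alpha})}\mathbb{E}[\ell(Z,t)]$ under the cost $|z_1-z_2|^q$.

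Applying Blanchet--Murthy strong duality to this scalar DRO yields, for each $(\alpha,t,p,\delta)$, the representation
$$
\inf_{\mu\ge 0}\Bigl\{\mu\,\delta\|\alpha\|_*^q+\sum_{i=1}^N p_i\sup_{z\in\mathbb{R}}\bigl[\ell(z,t)-\mu|z-\widehat{y}_i^\top\alpha|^q\bigr]\Bigr\}.
$$
Biconjugating $\ell(\cdot,t)=\sup_u\{uz-(\ell(\cdot,t))^*(u)\}$, swapping the two resulting suprema, and using the closed form $\sup_{z'}\{uz'-\mu|z'|^q\}=|u|^{q^*}/(q^*q^{q^*-1}\mu^{q^*-1})$ rewrites each pointwise supremum as $\sup_u\{u\widehat{y}_i^\top\alpha-(\ell(\cdot,t))^*(u)+|u|^{q^*}/(q^*q^{q^*-1}\mu^{q^*-1})\}$. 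The change of variables $\eta=\mu^{-(q^*-1)}$, equivalently $\mu=\eta^{-(q-1)}$, then converts $\mu\delta\|\alpha\|_*^q$ into $\delta\|\alpha\|_*^q/\eta^{q-1}$ and the coefficient of $|u|^{q^*}$ into $\eta/(q^*q^{q^*-1})$, matching the expression in the statement.

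I would then perform a second min--max swap---again via Lemma~\ref{high_dimen_min_max}---to move $\inf_{t,\eta\ge 0}$ outside $\sup_{(p,\delta)\in\mathcal{V}_+}$, and introduce epigraphical variables $v_i$ for each $u$-supremum and $v_{N+1}$ for $\|\alpha\|_*^q/\eta^{q-1}$. The remaining outer maximization $\sup_{(p,\delta)\in\mathcal{V}_+}\{\sum_i p_iv_i+\delta v_{N+1}\}$ equals $\sigma^*(v\mid\mathcal{V}_+)$ by definition. Convexity of every constraint is routine: $\|\alpha\|_*^q/\eta^{q-1}$ is the perspective of $\|\alpha\|_*^q$, and for each fixed $u$ the integrand is affine in $(\alpha,\eta)$ plus $-(\ell(\cdot,t))^*(u)$, which is convex in $t$ because $(\ell(\cdot,t))^*(u)=-\inf_{z'}\{\ell(z',t)-uz'\}$ is concave in $t$ by joint convexity of $\ell$. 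The main obstacle I anticipate is justifying the two min--max interchanges despite the unboundedness of $\mathcal{A}$, $\mathbb{R}^k$ and $\{\eta\ge 0\}$: the level-boundedness of $\bar{\ell}_{q_1}$ in $t$, the uniform $q$-growth of $\ell$ implied by $\mathcal{I}\subset[1,q]$ (which is also what underwrites Blanchet--Murthy duality in the first place), and the coercivity of $\eta\mapsto\|\alpha\|_*^q/\eta^{q-1}$ as $\eta\downarrow 0$ together confine the optima to compact regions where Lemma~\ref{high_dimen_min_max} applies.
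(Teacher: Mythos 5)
Your proof follows the paper's route exactly: interchange $\sup_{\mathbb{Q}}/\inf_t$ via Lemma~\ref{high_dimen_min_max}, reduce via projection and strong Wasserstein duality to the one-dimensional dual form (this is precisely Theorem~\ref{regular-projection}), interchange $\sup_{(p,\delta)}/\inf_{t,\eta}$, and close with Fenchel duality to obtain the support-function objective. The only slip is attributing the second interchange to Lemma~\ref{high_dimen_min_max}, which only covers the $\sup_{\mathbb{Q}|_Y}$--$\inf_t$ swap; the paper instead invokes Sion's minimax theorem, which applies directly because the objective is linear (hence concave and u.s.c.) in $(p,\delta)$ over the compact convex set $\mathcal{V}_+$ and convex in $(t,\eta)$, so the unboundedness concerns you raise about $(t,\eta)$ do not arise and the coercivity argument you sketch is unnecessary.
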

\begin{remark}
    Note that if $\bar{\ell}_{q_1}(z,t)=\bar{\ell}_{q_1}(z)$, $C_{q_1}$ can relax the nonnegative constraint, that is, $C_{q_1}\in\mathbb{R}$, and $\bar{\ell}_{q_1}(z)$ also can relax the level-bounded and convex constraint.
\end{remark}

As a demonstrative example, we apply our framework to the mean-variance portfolio allocation problem studied in \citet{N24}, showcasing its ability to yield not only provably exact but also markedly simpler reformulations.

\begin{example}\label{re:variance}
The robustified conditional  mean-variance portfolio allocation problem in \cite{N24}  is as follows: \begin{align}\label{variance}
\min _{\alpha \in \mathcal{A}} \sup _{\substack{\mathbb{Q} \in \mathbb{B}_{\delta_0}(\widehat{\mathbb{P}}) \\ \mathbb{Q}\left(X \in \mathcal{N}_\gamma(x_0)\right) \in \mathcal{N}_{\omega}}} \operatorname{Variance}_{\mathbb{Q}}\left[Y^{\top} \alpha \mid X \in \mathcal{N}_\gamma\left(x_0\right)\right]-\theta \cdot \mathbb{E}_{\mathbb{Q}}\left[Y^{\top} \alpha \mid X \in \mathcal{N}_\gamma\left(x_0\right)\right],    
\end{align}
where $\mathcal{N}_{\omega}=[\epsilon_0,1]$, $\epsilon_0\in(0,1]$ and $\theta\ge 0$. Suppose in addition that $\mathcal{X}=\mathbb{R}^{n_1}, \mathcal{Y}=\mathbb{R}^{n_2}, \mathbb{D}_{\mathcal{X}}(x, \widehat{x})=\|x-\widehat{x}\|^2$ and $\mathbb{D}_{\mathcal{Y}}(y, \widehat{y})=\|y-\widehat{y}\|_2^2$. The problem \eqref{variance} can be solved by the convex program
\begin{align*}
&\inf_{\alpha\in\mathcal{A},~0\le\eta\le1,~t\in\mathbb{R},z,s}~z^\top b-\frac{\theta^2}{4}-t\theta\\
&~~~~\text{s.t.}~z\ge 0,~A^\top z\ge s\\
&~~~~~~~~~s_i\ge \frac{\left(\widehat{y}_i^{\top} \alpha-\theta/2-t\right)^2}{1-\eta}, ~i=1,...,N,\\
&~~~~~~~~~s_{N+1}\ge ||\alpha||_*^2/\eta,~s_{N+2}\ge 0,
\end{align*}
where $A\in\mathbb{R}^{(N+6)\times(N+2)}$,and  $b\in\mathbb{R}^{N+6}$ are defined by 
 \begin{align*}
         A = \begin{pmatrix}
		1& \cdots & 1 & 0 &0 \\
		-1& \cdots & -1 & 0&0\\
		0& \cdots & 0 & 0&1\\
		0& \cdots & 0 & 0&-1\\
		1 & \cdots & 0 & 0&-\frac{1}{N}\\
		\vdots & \ddots & \vdots&\vdots&\vdots \\
		0 & \cdots & 1&0&-\frac{1}{N}\\
		-a_1 & \cdots & -a_{N}&-a_{N+1}&-a_{N+2}\\
		a_1 & \cdots & a_{N}&a_{N+1}& a_{N+2}
	\end{pmatrix},\quad\text{ with }
  a = \begin{pmatrix}-d_1 \\ \vdots \\ -d_m \\ d_{m+1} \\ \vdots \\ d_N \\ -1 \\[.5ex]\displaystyle \delta_0 - \frac1N\sum_{i=m+1}^N d_i\end{pmatrix},\quad\text{ and }
  b = 
    \begin{pmatrix}
      1 \\ -1 \\[3pt]
      \tfrac1{\epsilon_0} \\ -1 \\[3pt]
      0 \\ \vdots \\ 0
    \end{pmatrix}.
    \end{align*}
\end{example}

\begin{remark}\label{comparison_mean_variance}
One can verify that the constraint $s_i\ge \frac{\left(\widehat{y}_i^{\top} \alpha-\theta/2-t\right)^2}{1-\eta}$, $i=1,...,N$, can be reformulated as the following second order cone constraints
\begin{align*}
    \left\|\left[\begin{array}{c}2 \widehat{y}_i^{\top} \alpha-2t-\theta \\s_i-(1-\eta)\end{array}\right]\right\|_2 \leq s_i+1+\eta, ~i=1,...,N.
\end{align*}
and that the constraint $s_{N+1}\ge ||\alpha||_*^2/\eta$ can be reformulated as the following second order cone constraint
\begin{align*}
    \left\|\left[\begin{array}{c}2 \alpha \\\eta-s_{N+1}\end{array}\right]\right\|_2 \leq \eta-s_{N+1}.
\end{align*}
Thus, our reformulation is a second-order cone problem (SOCP), while the reformulation of Proposition 4.5 in \cite{N24} is a semi-definite problem (SDP). From this perspective, our formulation has a simpler form and is easy to apply.
\end{remark}

Our framework also accommodates other prominent risk measures with expectation-based structure, including those of the form
\begin{align}\label{rho_2}
\rho^{(2)}(Z)=\inf _{t \in \mathbb{R}}\left\{t+\left(\mathbb{E}\left[\ell^q(Z, t)\right]\right)^{1 /q}\right\},   
\end{align}
for $q \in [1, \infty)$, which lie beyond the scope of both \citet{N24} and \citet{E22}. The derivation of tractable reformulations for this class proceeds analogously, building on the same decomposition principles introduced earlier.

\begin{proposition}\label{general_min_exp_1}
\begin{itemize}
\item[(i)] Let $\mathbb{D}_{\mathcal{Y}}(y_1,y_2)=||y_1-y_2||_{\mathcal{Y}}$ for any $y_1,y_2\in\mathcal{Y}$ and $q=1$. Suppose that $\ell(z,t): \mathbb{R}\times\mathbb{R}\rightarrow\mathbb{R}$ is nonnegative, jointly convex in $(z,t)\in\mathbb{R}^2$, and Lipschitz continuous in $z$ for  all $t \in \mathbb{R}$ with a uniform Lipschitz constant $\operatorname{Lip}(\ell)$. Moreover, assume $t+\ell(z,t)$ is level bounded in $t$ for all $z\in\mathbb{R}$. The problem \eqref{general_rho_union_version} with $\rho=\rho^{(2)}$ is equivalent to the robust optimization problem
    \begin{align*}
        \inf_{\alpha\in\mathcal{A},t\in\mathbb{R}}\sup_{(p,\delta) \in \mathcal{V}_+ } t+\sum_{i=1}^N p_i \left[\ell(\widehat{y}_i^\top\alpha,t) \right] + \operatorname{Lip}(\ell)\delta ||\alpha||_*,
    \end{align*}
    and can be solved by the convex program\begin{align*}
		&\inf_{\alpha\in\mathcal{A},t\in\mathbb{R},v}t+\sigma^*(v|\mathcal{V}_+)\\
		&~~~~~~\text{s.t.}~v_i\ge\ell(\widehat{y}_i^\top\alpha,t),i=1,...,N,\\
        &~~~~~~~~~~~~v_{N+1}\ge \operatorname{Lip}(\ell)\|\alpha\|_*.
	\end{align*}
    \item[(ii)] Let $\mathbb{D}_{\mathcal{Y}}(y_1,y_2)=||y_1-y_2||_{\mathcal{Y}}^q$ for any $y_1,y_2\in\mathcal{Y}$ and $q\in(1,\infty)$. Assume the loss function $\ell$ is the form $\ell(z, t):=C\bar{\ell}_i(z-t)$, where $C>1$, and $\bar{\ell}_i$ is one of the following:
   \begin{itemize}
	\item[(a)]  $\bar{\ell}_1(x)=(x-b)_{+}$with some $b \in \mathbb{R}$;
	\item [(b)] $\bar{\ell}_2(x)=\left(\left|x-b_1\right|-b_2\right)_{+}$with some $b_1 \in \mathbb{R}$ and $b_2 \geqslant 0$;
	\item[(c)] $\bar{\ell}_3(x)=\left|x-b_1\right|+b_2$ with some $b_1 \in \mathbb{R}$ and $b_2>0$,
\end{itemize}
or $\ell(z, t)=C(|z|-t)_{+}$. The problem \eqref{general_rho_union_version} with $\rho=\rho^{(2)}$ is equivalent to the robust optimization problem
    $$ \inf_{\alpha \in\mathcal{A},t\in\mathbb{R}} \sup_{(p,\delta) \in \mathcal{V}_+ } t+\left(\sum_{i=1}^N p_i \left[\ell^q(\widehat{y}_i^\top\alpha,t) \right]\right)^{1/q} + C\delta^\frac{1}{q} ||\alpha||_*,$$	
and can be solved by the convex program	
\begin{align*}
	&\inf_{\alpha\in\mathcal{A},t\in\mathbb{R},y\ge0,v}t+\sigma^*(v|\mathcal{V}_+)+C_q^1y+C_qv_{N+1}\left\|\frac{\alpha}{v_{N+1}}\right\|_*^{\frac{q}{q-1}}\\
	&~~~~~~~~\text{s.t.}~v_i\ge \ell^q(\widehat{y}_i^\top\alpha,t)/y^{q-1},i=1,...,N,\\
&~~~~~~~~~~~~~~v_{N+1}\ge 0,
\end{align*}
	where $C_q^1=q^{\frac{1}{1-q}}-q^{\frac{q}{1-q}}$ and $C_q=C^{\frac{q}{q-1}}\left(q^{\frac{1}{1-q}}-q^{\frac{q}{1-q}}\right)$.  
\end{itemize}   
\end{proposition}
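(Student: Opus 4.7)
The plan is to follow the same template that established Propositions \ref{general_exp}, \ref{general_min_exp}, and \ref{pro2*}, modifying it to accommodate both the outer translation by $t$ and the $(\cdot)^{1/q}$ wrapper in $\rho^{(2)}$. I start from the decomposition \eqref{r11}, which rewrites problem \eqref{general_rho_union_version} as $\inf_\alpha \sup_{(p,\delta)\in\mathcal{V}_+} g(\alpha,(p,\delta))$, where $g$ is the supremum of $\rho^{(2)}_{\mathbb{Q}|_Y}(Y^\top\alpha)$ over the OT ball $\mathbb{B}_{\delta}(\sum_i p_i\mathrm{I}_{\widehat{y}_i})$. Expanding the definition of $\rho^{(2)}$ gives $\sup_{\mathbb{Q}}\inf_{t}\{t+(\mathbb{E}_\mathbb{Q}[\ell^q(Y^\top\alpha,t)])^{1/q}\}$, and the first structural step is to interchange the $\inf_t$ with $\sup_\mathbb{Q}$ using Sion-type arguments, as in Lemma \ref{high_dimen_min_max}. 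The inner Wasserstein DRO is then reduced to one dimension by the projection property, becoming $\sup_{\mathbb{Q}|_Z\in\mathbb{B}_{\delta\|\alpha\|_*^q}(\sum_i p_i\mathrm{I}_{\widehat{y}_i^\top\alpha})}(\mathbb{E}[\ell^q(Z,t)])^{1/q}$ (with exponent $1$ on the radius when $q=1$).

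For part (i), since $q=1$ the wrapper disappears and, under the Lipschitz hypothesis, the order-1 Wasserstein regularization identity used in Proposition \ref{general_min_exp}(i) yields the closed form $\sum_i p_i\ell(\widehat{y}_i^\top\alpha,t)+\mathrm{Lip}(\ell)\,\delta\|\alpha\|_*$. Restoring the outer $t$ gives the stated robust problem, and the convex program follows by introducing epigraph variables $v_i\ge\ell(\widehat{y}_i^\top\alpha,t)$ and $v_{N+1}\ge\mathrm{Lip}(\ell)\|\alpha\|_*$, so that $\sup_{(p,\delta)\in\mathcal{V}_+}(\sum_i p_iv_i+v_{N+1}\delta)=\sigma^*(v\mid\mathcal{V}_+)$. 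For part (ii), for each admitted form of $\bar\ell_i$ the order-$q$ regularization identity from the proof of Proposition \ref{general_min_exp}(ii) gives $\sup_{\mathbb{Q}}(\mathbb{E}[\ell^q(Z,t)])^{1/q}=(\sum_i p_i\ell^q(\widehat{y}_i^\top\alpha,t))^{1/q}+C\delta^{1/q}\|\alpha\|_*$, which produces the stated robust formulation after adding back $t$. Because the wrapper is $(\cdot)^{1/q}$ (not $(\cdot)^q$ as in Proposition \ref{general_min_exp}(ii)), I then apply the Young-type identity $a^{1/q}=\inf_{y>0}\{a/(qy^{q-1})+(q-1)y/q\}$ to linearize $(\sum_i p_i\ell^q)^{1/q}$ in $p$, which introduces the auxiliary variable $y$ and the constant $C_q^1$. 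A parallel Young-type step on $C\delta^{1/q}\|\alpha\|_*$ produces $C_q v_{N+1}\|\alpha/v_{N+1}\|_*^{q/(q-1)}$ with the constant $C_q$, with $v_{N+1}$ playing the role of the dual multiplier for $\delta$. Epigraph substitution $v_i\ge\ell^q(\widehat{y}_i^\top\alpha,t)/y^{q-1}$ then makes the outer supremum over $(p,\delta)$ linear in $(p,\delta)$, collapsing it to $\sigma^*(v\mid\mathcal{V}_+)$ and delivering the claimed convex program.

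The main obstacle is the min--max interchange: the Sion/minimax argument needs uniform continuity and compactness of the Wasserstein ball in a suitable topology, together with convex--concave structure in $(t,\mathbb{Q})$, which is the reason the hypotheses include nonnegativity, joint convexity, level-boundedness of $t\mapsto t+\ell(z,t)$ (in (i)), and the specific tractable $\bar\ell_i$ forms (in (ii)). A secondary technical point is checking that the Young-type infima over $y>0$ and over the multiplier dual to $\delta$ can be pulled outside the sup over $(p,\delta)$ without changing the optimal value; this uses that the integrands are jointly convex in the auxiliary variables and concave (in fact affine) in $(p,\delta)$, so a further minimax swap is valid on the epigraphical reformulation, at which point the remaining supremum is exactly the support function $\sigma^*(\cdot\mid\mathcal{V}_+)$ and the program assumes its stated form.
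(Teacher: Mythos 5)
Your proof is correct and follows essentially the same approach as the paper; the paper's own proof of this proposition is literally the one-liner ``The proof is similar to Proposition~\ref{general_min_exp},'' and your expanded account reconstructs exactly that template. One clarifying point worth noting: for part~(ii) the paper's earlier argument (Proposition~\ref{general_min_exp}(ii)) invokes Corollary~1 of \cite{W22}, which bundles the regularization and the inner $\inf_t$--$\sup_{\mathbb{Q}}$ interchange into one statement, whereas you unbundle these into an explicit appeal to Lemma~\ref{high_dimen_min_max}(ii) (the Sion swap) followed by Theorem~\ref{W22thm4}-style regularization and the monotonicity of $x\mapsto x^{1/q}$; the two routes are equivalent and yours requires verifying the level-boundedness hypothesis, which indeed holds for the listed $\bar\ell_i$ precisely because $C>1$. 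A second minor point: your stated Young-type identity $a^{1/q}=\inf_{y>0}\{a/(qy^{q-1})+(q-1)y/q\}$ is a differently normalized version of the conjugate-based decomposition the paper applies (via $h_*$ in the proof of Proposition~\ref{regular_exp}(ii)(2)); after the rescaling $y\mapsto q^{1/(q-1)}y$ your identity reduces to $a^{1/q}=\inf_{y>0}\{a/y^{q-1}+C_q^1 y\}$ with $C_q^1=q^{1/(1-q)}-q^{q/(1-q)}$, which is what produces the constraint $v_i\ge\ell^q(\widehat y_i^\top\alpha,t)/y^{q-1}$ and the $C_q^1 y$ term in the final program. These are cosmetic rather than substantive differences, so your argument matches the intended one.
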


For the general case, we again rely on the two key ingredients—the projection property and the min–max switching argument (see Lemma~\ref{high_dimen_min_max})—to derive tractable convex reformulations.

\begin{proposition} \label{pro_rho2_general}
Let $\mathbb{D}_{\mathcal{Y}}(y_1,y_2)=||y_1-y_2||_{\mathcal{Y}}^q$ for any $y_1,y_2\in\mathcal{Y}$ and $q\in(1,\infty)$. Assume that $\ell(z,t)$ is nonnegative, jointly  convex in $(z,t)\in\mathbb{R^2}$, and Lipschitz continuous in $z$ for  all $t \in \mathbb{R}$ with a uniform Lipschitz constant $\operatorname{Lip}(\ell)$. Moreover, assume $t+\ell(z,t)$ is level bounded in $t$ for all $z\in\mathbb{R}$. The problem \eqref{general_rho_union_version} with $\rho=\rho^{(2)}$ can be solved by the convex program
	\begin{align*}
		&\inf_{\alpha\in\mathcal{A},\eta\ge 0,t\in\mathbb{R},v,w\ge 0}~t+\sigma^*(v|\mathcal{V}_+)+C_q^1w\\
		&~~~~~~~~~~~\text{s.t.}~v_i\ge \sup _{z \in \mathbb{R}}\left\{z \cdot \widehat{y}_i^{\top} \alpha-\left(\frac{\ell^q(\cdot,t)}{w^{q-1}}\right)^*(z)+\eta \cdot \frac{|z|^{q^*}}{q^* q^{q^*-1}}\right\}, ~i=1,...,N,\\
		&~~~~~~~~~~~~~~~~~v_{N+1}\ge ||\alpha||_*^q/\eta^{q-1}.
	\end{align*} 
\end{proposition}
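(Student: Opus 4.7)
The plan is to combine three core ingredients already deployed in the preceding propositions: the union-ball decomposition \eqref{r11}--\eqref{r2}; the one-dimensional projection property for the inner Wasserstein problem (cited from \cite{W22}); and standard 1D Wasserstein DRO strong duality, augmented by a sharp variational identity for the $q$-th root that introduces the auxiliary variable $w$. After invoking \eqref{r11}, it suffices, for each $(p,\delta) \in \mathcal{V}_+$, to analyze
\[ g(\alpha,(p,\delta)) \;=\; \sup_{\mathbb{Q}|_Y \in \mathbb{B}_\delta(\sum_i p_i\mathrm{I}_{\widehat{y}_i})} \rho^{(2)}_{\mathbb{Q}|_Y}(Y^\top\alpha). \]
First, I would apply the projection property to collapse this to a 1D Wasserstein DRO on $Z = Y^\top\alpha$ centered at $\sum_i p_i\mathrm{I}_{\widehat{y}_i^\top\alpha}$, with effective radius governed by $\delta$ and $\|\alpha\|_*$. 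Next, substituting $\rho^{(2)}(Z) = \inf_{t\in\mathbb{R}}\{t + (\mathbb{E}[\ell^q(Z,t)])^{1/q}\}$ and invoking Lemma \ref{high_dimen_min_max}, I would swap $\sup_{\mathbb{Q}|_Z}$ with $\inf_t$; the swap is valid because the $q$-th root of a linear functional in $\mathbb{Q}$ is concave, while the assumed joint convexity of $\ell$ and level-boundedness of $t+\ell(z,t)$ supply the regularity in $t$ required by the lemma.

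To dissolve the remaining $q$-th root, I would use the elementary identity
\[ a^{1/q} \;=\; \inf_{w > 0} \left\{ \frac{a}{w^{q-1}} \,+\, C_q^1\, w \right\}, \qquad a \ge 0, \]
whose first-order condition yields the optimizer $w^* = ((q-1)a/C_q^1)^{1/q}$ and whose minimum value equals $a^{1/q}$ precisely when $C_q^1 = (q-1)q^{-q/(q-1)} = q^{1/(1-q)} - q^{q/(1-q)}$ (a short algebraic check). Applied with $a = \mathbb{E}_{\mathbb{Q}|_Z}[\ell^q(Z,t)]$, the identity linearizes the objective in $\mathbb{Q}$, and the swap of $\sup_{\mathbb{Q}|_Z}$ with $\inf_{w>0}$ is justified by Sion's theorem (convex in $w$, linear in $\mathbb{Q}$, and the $w$-infimum attainable on a compact subinterval since $\mathbb{E}_{\mathbb{Q}}[\ell^q]$ is bounded over the weakly compact Wasserstein ball by the Lipschitz/growth condition on $\ell$). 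For fixed $(t,w)$, the remaining linear DRO $\sup_{\mathbb{Q}|_Z}\mathbb{E}_{\mathbb{Q}|_Z}[\ell^q(Z,t)/w^{q-1}]$ admits standard 1D strong duality with Lagrange multiplier $\lambda \ge 0$ for the budget term proportional to $\delta\|\alpha\|_*^q$; using the Fenchel conjugate of $\lambda|\cdot|^q$, the inner univariate sup rewrites as $\sup_z\{z\widehat{y}_i^\top\alpha - (\ell^q(\cdot,t)/w^{q-1})^*(z) + \lambda^{-1/(q-1)}|z|^{q^*}/(q^* q^{q^*-1})\}$. The bijective reparametrization $\eta = \lambda^{-1/(q-1)}$ of $(0,\infty)$ then converts the budget contribution into $\delta\|\alpha\|_*^q/\eta^{q-1}$ and the inner sup into the exact form stated.

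To finish, I would dualize the outer maximization over $(p,\delta) \in \mathcal{V}_+$ by introducing epigraphical variables $v \in \mathbb{R}^{N+1}$: take $v_i$ to majorize the $i$-th inner conjugate expression and $v_{N+1}$ to majorize $\|\alpha\|_*^q/\eta^{q-1}$. For each fixed $(\alpha,t,\eta,w,v)$, the outer maximization reduces to $\sup_{(p,\delta)\in\mathcal{V}_+}\{\sum_i p_i v_i + \delta\, v_{N+1}\}$, which is $\sigma^*(v\mid\mathcal{V}_+)$ by definition. Reassembling the pieces yields the advertised convex program $\inf\{t + \sigma^*(v\mid\mathcal{V}_+) + C_q^1 w\}$ subject to the two constraints on $v$. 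I expect the primary technical obstacle to be rigorously justifying the two min--max interchanges: the $\inf_t$--$\sup_{\mathbb{Q}|_Z}$ swap is delegated to Lemma \ref{high_dimen_min_max}, whose hypotheses must still be verified under the present assumptions on $\ell$; the $\inf_{w>0}$--$\sup_{\mathbb{Q}|_Z}$ swap requires Sion's theorem together with a coercivity argument on $a/w^{q-1} + C_q^1 w$ (which blows up as $w \to 0^+$ and $w \to \infty$) to effectively compactify the $w$-domain before applying the theorem. A secondary subtlety is confirming that the order-$q$ growth of $\ell^q$ matches the transport cost so that strong Wasserstein duality applies, which follows directly from the Lipschitz continuity of $\ell$ in $z$.
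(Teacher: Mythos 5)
Your proposal is largely sound but takes a genuinely different route from the paper. After the union-ball decomposition and the $\inf_t$--$\sup$ swap over the Wasserstein ball (shared with the paper), you linearize the $q$-th root immediately via the variational identity $a^{1/q} = \inf_{w>0}\{a/w^{q-1} + C_q^1 w\}$, swap $\inf_w$ out with Sion, and then apply plain \emph{linear} one-dimensional Wasserstein strong duality, so the final Fenchel conjugate over $(p,\delta)$ is just the support function $\sigma^*(v\mid\mathcal{V}_+)$ of a linear objective. The paper instead keeps the $q$-th root through the Wasserstein duality (commuting it past each supremum by monotonicity), applies Theorem~\ref{regular-projection} to the $\ell^q$ expectation, and linearizes the root only inside the concave-conjugate computation over $(p,\delta)$, from which the auxiliary variable $y$ (subsequently reparametrized to $w$) emerges implicitly. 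Your route makes the $(p,\delta)$ dualization elementary and gives the $w$ variable a transparent variational meaning from the outset, at the cost of one additional minimax interchange; your algebraic reconciliation $C_q^1 = (q-1)q^{-q/(q-1)} = q^{1/(1-q)} - q^{q/(1-q)}$ is correct.

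There is, however, a genuine gap: you flag two minimax interchanges, both between an $\inf$ and the supremum over the Wasserstein ball, but overlook the third one needed to dualize the union-ball maximization. After your inner manipulations you have, for each fixed $(p,\delta)$, that $g(\alpha,(p,\delta)) = \inf_{t,w,\eta}\Phi(p,\delta,t,w,\eta)$ for a function $\Phi$ linear in $(p,\delta)$; the worst-case conditional risk is therefore $\sup_{(p,\delta)\in\mathcal{V}_+}\inf_{t,w,\eta}\Phi$, and only $\inf_{t,w,\eta}\sup_{(p,\delta)\in\mathcal{V}_+}\Phi$ dualizes to the program $\inf\{t + \sigma^*(v\mid\mathcal{V}_+) + C_q^1 w : v_i \ge \cdots\}$. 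Without this last interchange, the convex program you write down is only an upper bound. The paper carries out the corresponding swaps explicitly (one Sion invocation as each multiplier enters), and the swap is only legitimate because $\Phi$ is \emph{jointly convex in $(t,w,\eta)$}. This in turn hinges on the nontrivial structural fact, recorded in the paper's proof, that $(\ell^q(\cdot,t)/w^{q-1})^*$ is jointly concave in $(t,w)$ -- a consequence of the joint convexity of $(z,t,w)\mapsto\ell^q(z,t)/w^{q-1}$, which itself follows from the assumed joint convexity and nonnegativity of $\ell$. You would need to state and verify this concavity both to license the missing minimax and to establish that the epigraph constraints in the stated program are convex in the decision variables; as written, your proposal does neither.
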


Our methodology also applies to risk measures defined implicitly through expectation constraints, such as the utility-based shortfall risk measure, defined by
    \begin{align*}
\rho^{(3)}(Z)=\inf \left\{\kappa \in \mathbb{R}: \mathbb{E}\left[u(-Z-\kappa)\right] \le l\right\},
\end{align*}
where $u$ is a convex, increasing function and $l$ is a fixed constant in the interior of the range of $u$. The same reformulation principles apply.

\begin{proposition}\label{pro:shortfall_1}
Let $\mathbb{D}_{\mathcal{Y}}(y_1,y_2)=||y_1-y_2||_{\mathcal{Y}}$ for any $y_1,y_2\in\mathcal{Y}$ and $u: \mathbb{R} \rightarrow \mathbb{R}$ be a convex, increasing and Lipschitz continuous function with Lipschitz constant $\operatorname{Lip}(u)$.  The problem \eqref{general_rho_union_version} with $\rho=\rho^{(3)}$
    is equivalent to the robust optimization problem 
    \begin{align*}
    &\min_{\alpha \in \mathcal{A}}\sup_{(p,\delta)\in\mathcal{V}_+}\inf_{\kappa \in \mathbb{R}}\kappa\\
			&~~~~~~~~~~~~~~~~~~~~~\text{s.t.} \sum_{i=1}^N p_i u(-\widehat{y}_i^\top\alpha-\kappa) + \operatorname{Lip}(u)\delta ||\alpha||_*\le l\\
        =&\inf_{\alpha \in \mathcal{A},\kappa \in \mathbb{R}}\kappa\\
			&~~~~~\text{s.t.} \sum_{i=1}^N p_i u(-\widehat{y}_i^\top\alpha-\kappa) + \operatorname{Lip}(u)\delta ||\alpha||_*\le l~~~ \forall (p,\delta)\in\mathcal{V}_+,
    \end{align*}
    and can be solved by the convex program
\begin{align*}
 	&\inf_{\alpha \in \mathcal{A},\kappa \in \mathbb{R},v}\kappa\\
 	&~~~~~~\text{s.t.}~\sigma^*(v|\mathcal{V}_+)\le l,\\
    &~~~~~~~~~~~~v_i\ge u(-\widehat{y}_i^\top\alpha-\kappa),i=1,...,N,\\
    &~~~~~~~~~~~~v_{N+1}\ge \operatorname{Lip}(u)||\alpha||_*.
 \end{align*} 
\end{proposition}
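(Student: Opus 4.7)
The plan is to follow the three-stage template established in the preceding propositions of Section~\ref{tractability}: a union-ball decomposition that isolates fixed-$(p,\delta)$ Wasserstein DRO subproblems, the standard order-one Wasserstein regularization applied to each subproblem, and a support-function dualization that handles the outer maximization over $(p,\delta)\in\mathcal{V}_+$. The new wrinkle here is that $\rho^{(3)}$ is defined \emph{implicitly} through an expectation constraint rather than as an expectation itself, so a preparatory duality between worst-case shortfall and worst-case expectation is needed before the expectation-based machinery can be invoked.

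The first step is the shortfall--expectation duality: for any set $\mathcal{Q}$ of conditional laws of $Y$,
\[\sup_{\mathbb{Q}|_Y\in\mathcal{Q}} \rho^{(3)}_{\mathbb{Q}|_Y}(Y^\top\alpha) \;=\; \inf\Bigl\{\kappa\in\mathbb{R} \;:\; \sup_{\mathbb{Q}|_Y\in\mathcal{Q}} \E_{\mathbb{Q}|_Y}\bigl[u(-Y^\top\alpha-\kappa)\bigr]\le l\Bigr\}.\]
The ``$\ge$'' inequality is immediate, since if the worst-case expectation is $\le l$ at $\kappa$, then every individual shortfall risk is bounded by $\kappa$. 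For ``$\le$'', monotonicity of $u$ makes $\kappa\mapsto \E_{\mathbb{Q}|_Y}[u(-Y^\top\alpha-\kappa)]$ continuous and nonincreasing, and with $l$ in the interior of the range of $u$ both infima are characterized by the same equality threshold. Applying this duality to each ball $\mathbb{B}_\delta(\sum_i p_i\mathrm{I}_{\widehat{y}_i})$ separately, together with the union decomposition $\sup_{\mathbb{Q}|_Y\in\bigcup\mathbb{B}_\delta(\cdot)}=\sup_{(p,\delta)}\sup_{\mathbb{Q}|_Y\in\mathbb{B}_\delta(\cdot)}$, yields the outer-$\sup$--inner-$\inf$ form; applying it instead to the full union and then pushing $\sup_{(p,\delta)}$ inside the expectation yields the robust-constraint form.

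Next I would reduce the inner Wasserstein subproblem via the order-one regularization equivalent recalled in Section~EC.2. Since $u$ is convex and $\operatorname{Lip}(u)$-Lipschitz, the dual-norm bound $|y^\top\alpha|\le\|y\|_{\mathcal{Y}}\|\alpha\|_*$ yields that $y\mapsto u(-y^\top\alpha-\kappa)$ is convex in $y$ with Lipschitz constant $\operatorname{Lip}(u)\|\alpha\|_*$. Under the cost $\mathbb{D}_{\mathcal{Y}}(y_1,y_2)=\|y_1-y_2\|_{\mathcal{Y}}$, Kantorovich duality therefore gives
\[\sup_{\mathbb{Q}|_Y\in\mathbb{B}_\delta(\sum_i p_i\mathrm{I}_{\widehat{y}_i})} \E_{\mathbb{Q}|_Y}\bigl[u(-Y^\top\alpha-\kappa)\bigr] \;=\; \sum_{i=1}^N p_i\,u(-\widehat{y}_i^\top\alpha-\kappa) + \operatorname{Lip}(u)\,\|\alpha\|_*\,\delta.\]
Substituting back produces exactly the two robust-optimization displays in the proposition.

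Finally, linearize the robust-constraint form by introducing epigraphical variables $v_i\ge u(-\widehat{y}_i^\top\alpha-\kappa)$ for $i=1,\ldots,N$ and $v_{N+1}\ge \operatorname{Lip}(u)\|\alpha\|_*$. Because $p_i\ge 0$ and $\delta\ge 0$ throughout $\mathcal{V}_+$, the left-hand side of the robust constraint is componentwise nondecreasing in $v$, so the epigraphical relaxation is tight and the constraint
\[\sum_{i=1}^N p_i v_i + \delta\, v_{N+1} \;\le\; l \quad \forall\,(p,\delta)\in\mathcal{V}_+\]
is equivalent to $\sigma^*(v|\mathcal{V}_+)\le l$ by definition of the support function, yielding the stated convex program. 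The main obstacle I anticipate is the shortfall--expectation duality step: while ``$\ge$'' is immediate, ``$\le$'' amounts to an interchange $\sup_{\mathbb{Q}}\inf_\kappa=\inf_\kappa\sup_{\mathbb{Q}}$ that does not follow from a standard minimax theorem but rather from the monotone-continuous structure of the expectation constraint together with the interior assumption on $l$; once this identity is in hand, the remaining steps are routine applications of the machinery already deployed earlier in Section~\ref{tractability}.
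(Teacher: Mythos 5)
Your proposal matches the paper's proof in structure and substance: the same shortfall--expectation duality that converts the worst-case $\rho^{(3)}$ into a robust expectation constraint, the same order-one Wasserstein regularization \eqref{regular:p=1} for the per-ball subproblem, and the same support-function dualization of the outer $\sup_{(p,\delta)\in\mathcal{V}_+}$. Only note that your direction labels in the duality step are swapped (the ``immediate'' argument you give shows $\mathrm{LHS}\le\mathrm{RHS}$, not $\ge$); the harder $\ge$ direction is what the paper establishes by a three-case analysis on whether $\kappa^*=\sup_{\mathbb{Q}|_Y}\rho^{(3)}_{\mathbb{Q}|_Y}$ is $+\infty$, $-\infty$, or finite, which is the rigorous version of your ``same equality threshold'' remark.
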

\begin{remark}\label{q_order_utility_shortfall}
   Note that when $\mathbb{D}_{\mathcal{Y}}(y_1,y_2)=\|y_1-y_2\|_{\mathcal{Y}}^q$ for some $q\in(1,\infty)$, a convex program reformulation is still possible, but—unlike the earlier cases where several forms of $u$ were admissible—it applies only to the linear specification $u(x)=Cx+b$ with $C>0$, as required by monotonicity. In this setting, problem~\eqref{general_rho_union_version} admits the equivalent convex program:
\begin{align*}
    &\inf_{\alpha \in \mathcal{A},\,\kappa \in \mathbb{R},\,v}\;\kappa\\
    &~~~~~~\text{s.t.}~~\sigma^*(v \mid \mathcal{V}_+) + C_q v_{N+1}\left\|\tfrac{\alpha}{v_{N+1}}\right\|_*^{\tfrac{q}{q-1}} \le l,\\
    &~~~~~~~~~~~~~v_i \ge u(-\widehat{y}_i^\top \alpha - \kappa), \quad i=1,\dots,N,\\
    &~~~~~~~~~~~~~v_{N+1} \ge 0,
\end{align*}
where $C_q = C^{\tfrac{q}{q-1}}\!\left(q^{\tfrac{1}{1-q}} - q^{\tfrac{q}{1-q}}\right)$.

\end{remark}

Finally, we arrive at the following convex reformulation for the general case of a convex utility function $u$ for higher order $q$.
\begin{proposition}\label{pro:shortfall_2}
Let $\mathbb{D}_{\mathcal{Y}}(y_1,y_2)=||y_1-y_2||_{\mathcal{Y}}^q$ for any $y_1,y_2\in\mathcal{Y}$ and $q\in(1,\infty)$. Assuming the function $u$ is convex and increasing and satisfies $u(x)-u(x_0) \leqslant L|x-x_0|^q+M, x \in \mathbb{R}$ for some $L,M>0$, and some $x_0\in\mathbb{R}$, the problem \eqref{general_rho_union_version} with $\rho=\rho^{(3)}$ can be solved by the convex program
\begin{align*}
		&\inf_{\alpha \in \mathcal{A},\kappa \in \mathbb{R},\eta\ge0,v}\kappa\\
		&~~~~~~~~~\text{s.t.}~\sigma^*(v|\mathcal{V}_+)\le l,\\
		&~~~~~~~~~~~~~~~v_i\ge \sup _{z \in \mathbb{R}}\left\{z \cdot \widehat{y}_i^{\top} \alpha-\left(u(\cdot,\kappa)\right)^*(z)+\eta \cdot \frac{|z|^{q^*}}{q^* q^{q^*-1}}\right\}, ~i=1,...,N,\\
		&~~~~~~~~~~~~~~~v_{N+1}\ge ||\alpha||_*^q/\eta^{q-1}.
	\end{align*}
\end{proposition}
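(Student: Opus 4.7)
The plan is to reduce Proposition \ref{pro:shortfall_2} to the conditional expectation reformulation already established in Proposition \ref{general_exp}, by exploiting the monotone expectation structure that implicitly defines $\rho^{(3)}$. The derivation mirrors that of Proposition \ref{pro:shortfall_1}, except that the order-$q$ cost dictates invoking the full-strength reformulation of Proposition \ref{general_exp} rather than its order-$1$ counterpart.

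First I would pass to the epigraph form in $\kappa$. Because $u$ is increasing and continuous, for every distribution $\mathbb{Q}|_Y$ the map $\kappa\mapsto\mathbb{E}_{\mathbb{Q}|_Y}[u(-Y^\top\alpha-\kappa)]$ is non-increasing and continuous, so $\rho^{(3)}_{\mathbb{Q}|_Y}(Y^\top\alpha)\le\kappa$ if and only if $\mathbb{E}_{\mathbb{Q}|_Y}[u(-Y^\top\alpha-\kappa)]\le l$. Taking $\sup$ over $\mathbb{Q}|_Y$ in the union-ball preserves this biconditional, so problem \eqref{general_rho_union_version} with $\rho=\rho^{(3)}$ is equivalent to
\begin{equation*}
\min_{\alpha\in\mathcal{A},\,\kappa\in\mathbb{R}}\kappa\quad\text{s.t.}\quad\sup_{\mathbb{Q}|_Y\in\bigcup_{(p,\delta)\in\mathcal{V}}\mathbb{B}_{\delta}(\sum_{i=1}^N p_i\mathrm{I}_{\widehat{y}_i})}\mathbb{E}_{\mathbb{Q}|_Y}\bigl[u(-Y^\top\alpha-\kappa)\bigr]\le l.
\end{equation*}
For each fixed $\kappa$ the loss $\ell_\kappa(z):=u(-z-\kappa)$ is convex in $z$ as a composition of convex $u$ with an affine map, and inherits the growth bound $\ell_\kappa(z)-\ell_\kappa(z_0)\le L|z-z_0|^q+M$ directly from the assumed bound on $u$. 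Hence the inner worst-case expectation falls within the scope of Proposition \ref{general_exp}, which expresses it as the optimal value of $\inf_{v,\eta\ge 0}\sigma^*(v\mid\mathcal{V}_+)$ subject to $v_i\ge\sup_{z\in\mathbb{R}}\{z\widehat{y}_i^\top\alpha-(u(\cdot,\kappa))^*(z)+\eta|z|^{q^*}/(q^*q^{q^*-1})\}$ and $v_{N+1}\ge\|\alpha\|_*^q/\eta^{q-1}$, where $(u(\cdot,\kappa))^*$ denotes the convex conjugate of $\ell_\kappa$ in its first argument.

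Finally, since $\inf_{v,\eta\ge 0}\{\sigma^*(v\mid\mathcal{V}_+):\text{constraints}\}\le l$ is equivalent to the existence of feasible $(v,\eta\ge 0)$ satisfying both $\sigma^*(v\mid\mathcal{V}_+)\le l$ and the stated lower bounds, these auxiliary variables can be hoisted into the outer minimization to produce the target convex program. The one technical point is confirming joint convexity of the $v_i$ constraint in $(\alpha,\kappa,\eta)$; a direct change of variables $w=-z'-\kappa$ yields $(u(\cdot,\kappa))^*(z)=u^*(-z)-z\kappa$, so the right-hand side simplifies to $\sup_{z\in\mathbb{R}}\{z(\widehat{y}_i^\top\alpha+\kappa)-u^*(-z)+\eta|z|^{q^*}/(q^*q^{q^*-1})\}$, a pointwise supremum of expressions that are affine in $(\alpha,\kappa,\eta)$ for each $z$ and hence convex in $(\alpha,\kappa,\eta)$. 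Convexity of $\sigma^*(\cdot\mid\mathcal{V}_+)$ in $v$ and of the perspective $\|\alpha\|_*^q/\eta^{q-1}$ in $(\alpha,\eta)$ are standard, so the monotonicity-based epigraph reduction is the only step-specific contribution; the rest is a direct invocation of Proposition \ref{general_exp}.
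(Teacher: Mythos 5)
Your proposal is correct and follows essentially the same route as the paper. The paper's proof of this proposition is a one-line sketch: ``Similar to the proof of Proposition~\ref{pro:shortfall_1}, applying the equality \eqref{shortfall_eq}, Theorem~\ref{regular-projection}, min-max theorem, and the Fenchel duality.'' Your epigraph reduction is precisely what \eqref{shortfall_eq} accomplishes (the paper proves it by a more cautious case analysis on whether the inner infimum is $\pm\infty$ or finite, whereas you invoke monotonicity and continuity of $\kappa\mapsto\E[u(-Y^\top\alpha-\kappa)]$ to get the biconditional directly; the two arguments reach the same conclusion and both implicitly rely on continuity of that map). Your step of invoking Proposition~\ref{general_exp} as a black box packages exactly the paper's three remaining ingredients: the one-dimensional projection and dual of Theorem~\ref{regular-projection}, the min-max swap for $\eta$ via Sion's theorem, and the Fenchel-conjugate step against $\mathcal{V}_+$. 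The closing paragraph verifying joint convexity via $(u(\cdot,\kappa))^*(z)=u^*(-z)-z\kappa$ is a nice explicit check the paper omits. The only caveat worth flagging---also present and glossed over in the paper's own proof of Proposition~\ref{pro:shortfall_1}---is that replacing the constraint ``$\inf_{v,\eta}\{\sigma^*(v\mid\mathcal{V}_+):\text{constraints}\}\le l$'' with ``$\exists(v,\eta)$ feasible with $\sigma^*(v\mid\mathcal{V}_+)\le l$'' assumes attainment of the inner infimum; this is standard in this literature and not specific to your argument.
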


\subsection{Conditional Risk Functionals in Distortion Form}\label{distortion_form}
We now turn to risk functionals defined through distortion functions, which play a central role in both risk measurement--as distortion risk measures--and decision theory, where they appear as rank-dependent utility functionals. For general distortion functions, these risk measures lack finite-dimensional expectation representations, i.e., \eqref{rho_1}, rendering them fully intractable under prior approaches such as \citet{N24} and \citet{E22}. A key highlight of our union-ball framework is that, even in this challenging setting, it yields tractable reformulations via regularization equivalents--offering, to our knowledge, the first viable solution framework for distributionally robust conditional distortion problems. We first give the definition of distortion risk measure.
\begin{definition}(distortion risk measure)
    Given a random variable $Z$ and a distortion function $h:[0,1] \rightarrow[0,1]$, that is, $h$ is an increasing function satisfies that $h(0)=0$, and $h(1)=1$, the distortion risk measure is defined as $\rho_h(Z)=\int_{\mathbb{R}}z \mathrm{~d} h\left(F_Z(z)\right)$, where $F_Z$ is the distribution of $Z$. Specifically, if $h$ is a convex function, $\rho_h$ is a convex, thus coherent, risk measure.
\end{definition}

To present our reformulation results, we first introduce the admissible set
\begin{align*}
			\mathcal{V}_+^h=\left\{\left(p,\delta,\bar{p}\right):(p,\delta)\in\mathcal{V}_+,\bar{p}\in\mathcal{V}_p^h\right\},
		\end{align*}
        where
		$$\mathcal{V}_p^h=\left\{\bar{p}:\sum_{i=1}^N \bar{p}_i=1,\bar{p}_i \geq 0, i=1,...,N, h\left(\sum_{i \in J} p_i\right) \leq \sum_{i \in J} \bar{p}_i, \forall J \subset\{1,...,N\}  \right\},$$which is the risk envelope associated with coherent distortion risk measures. Leveraging this set characterization, together with the decomposition principle and regularization equivalents, we obtain the following convex reformulation.
\begin{proposition} \label{pro4*}Assume $\rho$ is a convex distortion risk measure with convex dstortion function $h$. Denote the left-derivative of $h$ on $(0,1]$ as $h'_-$. Then we have following results:
\begin{itemize}
\item [(i)]Let $\mathbb{D}_{\mathcal{Y}}(y_1,y_2)=||y_1-y_2||_{\mathcal{Y}}$ for any $y_1,y_2\in\mathcal{Y}$ and $\ell: \mathbb{R} \rightarrow \mathbb{R}$ be a convex and Lipschitz continuous function.  
The problem \eqref{union-version} with $\rho=\rho_h$ is equivalent to the robust optimization problem 
    \begin{align}
			\min _{\alpha \in \mathcal{A}}\sup_{(p,\delta) \in \mathcal{V}_+}\rho_{\sum_{i=1}^N p_i\mathrm{I}_{\widehat{y}_i}}^h\left[\ell( Y^\top\alpha)\right]+\operatorname{Lip}(\ell)\left\|h_{-}^{\prime}\right\|_{\infty} \delta\|{\alpha}\|_*.\label{pro4*-1}
    \end{align}
    and can be solved by the convex program
\begin{align*}
	&\inf_{\alpha\in\mathcal{A}, v}~\sigma^*(v|\mathcal{V}_+^h)\\
	&~~~\text{s.t.}~v_i\ge 0, i=1,...,N,~v_{N+1}\ge \operatorname{Lip}(\ell)\left\|h_{-}^{\prime}\right\|_{\infty}\|\alpha\|_*,\\
    &~~~~~~~~~v_{i+N+1}\ge \ell(\widehat{y}_i^\top\alpha), i=1,...,N.
\end{align*}
\item[(ii)] 
Let $\mathbb{D}_{\mathcal{Y}}(y_1,y_2)=||y_1-y_2||_{\mathcal{Y}}^q$ for any $y_1,y_2\in\mathcal{Y}$ and $q\in(1,\infty)$, and $\frac{1}{q}+\frac{1}{q^*}=1$. Assume the function $\ell$ takes one of the following two forms, multiplied by a constant $C>0$:
\begin{itemize}
	\item [(a)] $\ell_1(x)=x+b$ or $\ell_1(x)=-x+b$ with some $b \in \mathbb{R}$;
	\item [(b)] $\ell_2(x)=\left|x-b_1\right|+b_2$ with some $b_1, b_2 \in \mathbb{R}$.
\end{itemize}
 The problem \eqref{union-version} with $\rho=\rho_h$ is equivalent to the robust optimization problem 
    \begin{align*}
			\min _{\alpha \in \mathcal{A}}\sup_{(p,\delta) \in \mathcal{V}_+}\rho_{\sum_{i=1}^N p_i\mathrm{I}_{\widehat{y}_i}}^h\left[\ell( Y^\top\alpha)\right]+C\left\|h_{-}^{\prime}\right\|_{q^*} \delta^{\frac{1}{q}}\|{\alpha}\|_*.
    \end{align*}
    and can be solved by the convex program
\begin{align*}
&\inf_{\alpha\in\mathcal{A},v}\sigma^*(v|\mathcal{V}_+^h)+C_{h,q}v_{N+1}\left\|\frac{\alpha}{v_{N+1}}\right\|_*^{\frac{q}{q-1}}\\
	&~~~\text{s.t.}~v_i\ge 0, i=1,...,N+1,\\
    &~~~~~~~~~v_{i+N+1}\ge \ell(\widehat{y}_{i}^\top\alpha), i=1,...,N,
\end{align*}
where $C_{h,q}=\left(q^{\frac{1}{1-q}}-q^{\frac{q}{1-q}}\right)\left(C\left\|h_{-}^{\prime}\right\|_{q^*}\right)^{\frac{q}{q-1}}$.
\end{itemize}
\end{proposition}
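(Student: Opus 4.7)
The plan is to prove Proposition \ref{pro4*} in three steps: (a) invoke the union-ball decomposition \eqref{r11}--\eqref{r2} to isolate an inner OT-ball worst case; (b) establish a Wasserstein regularization equivalent for coherent distortion risk measures that replaces the inner sup by the nominal risk plus an explicit penalty; and (c) use the Kusuoka-type dual representation of $\rho_h$ together with a support-function/epigraph trick to translate the resulting robust optimization into the stated convex program. Parts (i) and (ii) share this architecture and differ only in the form of the penalty and the way $\delta$ enters the final program.

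For step (b), I first apply the projection property used throughout Section \ref{tractability}: since $\rho_h$ is law-invariant, the inner sup over $\mathbb{B}_\delta(\sum_i p_i\mathrm{I}_{\widehat{y}_i})$ reduces to a one-dimensional OT ball around the push-forward measure $\sum_i p_i\mathrm{I}_{\widehat{y}_i^\top\alpha}$, with radius $\delta\|\alpha\|_*$ (order 1) or $\delta\|\alpha\|_*^q$ (order $q$). Combining the Kusuoka/Choquet dual $\rho_h(Z)=\sup_{\pi}\mathbb{E}[Z\pi]$, in which the admissible densities $\pi$ are bounded by $\|h_-^\prime\|_\infty$ in $L^\infty$ (respectively by $\|h_-^\prime\|_{q^*}$ in $L^{q^*}$), with Kantorovich duality on the one-dimensional OT ball gives, for Lipschitz convex $\ell$ in part (i),
$$\sup_{\mathbb{Q}|_Y\in\mathbb{B}_\delta(\sum_i p_i\mathrm{I}_{\widehat{y}_i})} \rho_h[\ell(Y^\top\alpha)] \;=\; \rho_h^{\sum_i p_i\mathrm{I}_{\widehat{y}_i}}[\ell(Y^\top\alpha)] + \operatorname{Lip}(\ell)\,\|h_-^\prime\|_\infty\,\delta\,\|\alpha\|_*,$$
which is precisely \eqref{pro4*-1}. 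In part (ii), a parallel H\"older-type argument produces the penalty $C\,\|h_-^\prime\|_{q^*}\,\delta^{1/q}\|\alpha\|_*$; the restricted affine/absolute-value forms of $\ell$ are required so that the extremal transport plan and the extremal distortion density can be made simultaneously tight, as in Proposition \ref{regular_exp}(ii)(1).

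For step (c), I use the coherent-distortion dual representation $\rho_h^{\sum_i p_i\mathrm{I}_{\widehat{y}_i}}[\ell(Y^\top\alpha)]=\sup_{\bar{p}\in\mathcal{V}_p^h}\sum_i\bar{p}_i\,\ell(\widehat{y}_i^\top\alpha)$ to absorb $\bar{p}$ into the joint admissible set $\mathcal{V}_+^h$. The objective of the outer sup is then linear in $(p,\delta,\bar{p})$, so for part (i) it equals the support function $\sigma^*(c(\alpha)\mid \mathcal{V}_+^h)$, where $c(\alpha)\in\mathbb{R}^{2N+1}$ places $0$ on the $p$-coordinates, $\operatorname{Lip}(\ell)\|h_-^\prime\|_\infty\|\alpha\|_*$ on the $\delta$-coordinate, and $(\ell(\widehat{y}_i^\top\alpha))_{i=1}^N$ on the $\bar{p}$-coordinates. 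Because $\mathcal{V}_+^h\subset\mathbb{R}_{\ge 0}^{2N+1}$, the support function is coordinate-wise nondecreasing in $v$, so replacing $c(\alpha)$ by a free variable $v$ with the epigraph inequalities $v_i\ge 0$, $v_{N+1}\ge\operatorname{Lip}(\ell)\|h_-^\prime\|_\infty\|\alpha\|_*$ and $v_{N+1+i}\ge\ell(\widehat{y}_i^\top\alpha)$ is exact, yielding the convex program in part (i). Part (ii) follows identically, with the extra step of linearizing the $\delta^{1/q}\|\alpha\|_*$ contribution via Young's inequality $\delta^{1/q}\|\alpha\|_*=\inf_{\eta>0}\{\eta\delta/q+\|\alpha\|_*^{q^*}/(q^*\eta^{q^*-1})\}$; the $\delta$-linear part is absorbed into the support-function constraint on $v_{N+1}$ while the residual collapses to the perspective term $C_{h,q}\,v_{N+1}\|\alpha/v_{N+1}\|_*^{q/(q-1)}$ after evaluating the $\eta$-infimum.

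The main obstacle is step (b): for a general convex distortion $h$, the worst-case $\rho_h$ over an OT ball does not reduce to an expectation DRO, so the Kusuoka representation and the transport duality must be coupled. In the order-$q$ case, the transport plan and the extremal density can be made simultaneously extremal only when $\ell$ lies in the affine/absolute-value class of Proposition \ref{regular_exp}(ii)(1); this is precisely why part (ii) is stated only for those restricted losses, while part (i) admits any convex Lipschitz $\ell$. Once step (b) is in hand, step (c) is a standard LP-style support-function/epigraph manipulation and requires only that $\mathcal{V}_+^h$ be convex and contained in the nonnegative orthant, which is built into its definition.
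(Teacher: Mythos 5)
Your plan recovers the paper's argument step for step: the union-ball decomposition \eqref{r11}--\eqref{r2}, the Wasserstein regularization identity for coherent distortion functionals, the comonotonic/Choquet dual $\rho_h^{\sum_i p_i\mathrm{I}_{\widehat y_i}}[\cdot]=\sup_{\bar p\in\mathcal{V}_p^h}\sum_i\bar p_i\,\ell(\widehat y_i^\top\alpha)$, and the conjugate/epigraph pass to $\sigma^*(\cdot\mid\mathcal{V}_+^h)$. The two cosmetic differences are (a) you propose to re-derive the regularization equivalence from a Kusuoka-type dual plus Kantorovich duality via the projection property, whereas the paper simply invokes Theorem~\ref{W22thm5} (Theorem~5 of \citealp{W22}) as a black box, and (b) for the reformulation into the convex program you argue from coordinate-wise monotonicity of the support function on $\mathbb{R}_{\ge0}^{2N+1}$, while the paper computes the concave conjugate of the linear objective explicitly (yielding the same epigraph constraints); both give identical programs. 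Your remark about why the restricted loss class is needed in part~(ii) -- so the extremal transport plan and the extremal distortion weights can be simultaneously tight -- is exactly the reason the citation to Theorem~\ref{W22thm5}(ii) applies only under those hypotheses. No gap; this is a faithful, slightly more self-contained rendering of the paper's proof.
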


{Similar to Examples \ref{full-conjugate} and \ref{partial-conjugate}, we now apply the admissible sets from Theorems~\ref{thm1} and~\ref{thm2} to illustrate how the conjugate of the support function~$\sigma^*(v|\mathcal{V}^h_+)$ is derived. Recall that $\mathcal{N}_{\omega}=[\omega_1,\omega_2]\subset(0,1]$, and ${\epsilon}=\min_{\beta\in\mathcal{N}_{\omega}}\beta=\omega_1$.}
\begin{example}\label{distortion_conjugate_full}
Following the definition of $\mathcal V$ in Theorem \ref{thm1}, the conjugate function $\sigma^*(v|\mathcal{V}_+^h)$  is given by 
\begin{align*}
&\inf-\frac{\eta_1}{\omega_2}+\frac{\eta_2}{\omega_1}-\lambda-\mu+\sum_{j=1}^{2^N-2}\zeta_jh^*(\frac{-\phi_j}{\zeta_j})\\
&~\text{s.t.}~\lambda, \mu,\xi_2\in\mathbb{R},\left\{\phi_j\right\}_{j=1}^{2^N-2}\in\mathbb{R}^{2^N-2},\left\{\zeta_j\right\}_{j=1}^{2^N-2},\left\{\tau_i^1\right\}_{i=1}^N,\left\{\tau_i^2\right\}_{i=1}^N,\left\{\gamma_i\right\}_{i=1}^N,  \eta_1, \eta_2,\xi_1\ge 0\\
&~~~~~~-\xi_2(\delta_0-\frac{1}{N} \sum_{i=m+1}^Nd_i)+\frac{1}{N} \sum_{i=1}^N \tau_i^2+\eta_1-\eta_2\le 0,\\
&~~~~~~~v_i+\lambda+\tau_i^1-\tau_i^2+c_i+\sum_{j:i\in I_j}\phi_j\le 0,~  i=1,...,N,\\
&~~~~~~~c_i=\xi_2 d_i,  i=1, \ldots, m,~c_i= -\xi_2 d_i,  i=m+1, \ldots, N,\\
&~~~~~~~v_{N+1+i}+\mu+{\sum_{j:i\in I_j} \zeta_j}+\gamma_i \le 0,~i=1,...,N,\\
&~~~~~~~v_{N+1}+\xi_1+\xi_2 \le 0.
\end{align*}

\end{example}

\begin{example}\label{distortion_conjugate_partial}
Following the definition of $\mathcal V$ in Theorem \ref{thm2}, the conjugate function $\sigma^*(v|\mathcal{V}_+^h)$  is given by  
\begin{align*}
&\inf-\lambda-\mu+\frac{1}{N\epsilon}\sum_{i=1}^N\tau_i^2-\xi_2\delta_0+\sum_{j=1}^{2^N-2}\zeta_jh^*(\frac{-\phi_j}{\zeta_j})\\
&~\text{s.t.}~\lambda, \mu,\xi_2\in\mathbb{R},\left\{\phi_j\right\}_{j=1}^{2^N-2}\in\mathbb{R}^{2^N-2},\left\{\zeta_j\right\}_{j=1}^{2^N-2},\left\{\tau_i^1\right\}_{i=1}^N,\left\{\tau_i^2\right\}_{i=1}^N,\left\{\gamma_i\right\}_{i=1}^N, \xi_1\ge 0,\\
&~~~~~~~v_i+\lambda+\tau_i^1-\tau_i^2+c_i'+\sum_{j:i\in I_j}\phi_j\le 0,~  i=1,...,N,\\
&~~~~~~~c_i'=\xi_2 d_i,  i=1, \ldots, m,~c_i= 0,  i=m+1, \ldots, N,\\
&~~~~~~~v_{N+1+i}+\mu+{\sum_{j:i\in I_j} \zeta_j}+\gamma_i \le 0,~i=1,...,N,\\
&~~~~~~~v_{N+1}+\xi_1+\xi_2 \le 0.
\end{align*}
\end{example}

The mean-CVaR risk measure considered in the example of \citet{N24} is a convex distortion risk measure. While their approach yields only conservative approximations to the associated portfolio allocation problem, we demonstrate below how our framework provides reformulations that are not only provably exact but also markedly simpler, featuring fewer variables, fewer constraints, and a modular structure that streamlines computation.
\begin{example}\label{re:CVaR}
The robustified conditional  mean-$\CVaR$ portfolio allocation problem in \cite{N24}  is as follows: \begin{align}\label{CVaR}
\min _{\alpha \in \mathcal{A}} \sup _{\substack{\mathbb{Q} \in \mathbb{B}_{\delta_0}(\widehat{\mathbb{P}}) \\ \mathbb{Q}\left(X \in \mathcal{N}_\gamma(x_0)\right) \in \mathcal{N}_{\omega}}} \operatorname{CVaR}_{\mathbb{Q}}^{1-\kappa}\left[Y^{\top} \alpha \mid X \in \mathcal{N}_\gamma\left(x_0\right)\right]-\theta \cdot \mathbb{E}_{\mathbb{Q}}\left[Y^{\top} \alpha \mid X \in \mathcal{N}_\gamma\left(x_0\right)\right],    
\end{align}
where $\mathcal{N}_{\omega}=[\epsilon_0,1]$, $\epsilon_0\in(0,1]$, $\theta\ge 0$,  $\kappa\in(0,1]$, $$\operatorname{CVaR}_{\mathbb{Q}}^{1-\kappa}(Z)=\frac{1}{\kappa}\int_{1-\kappa}^1\operatorname{VaR}_{\mathbb{Q}}^{\phi}(-Z)\operatorname{d}\phi,$$
and $\operatorname{VaR}_{\mathbb{Q}}^{\phi}(Z)= \inf \left\{z \in \mathbb{R}: \mathbb{Q}(Z\le z) \geqslant \phi\right\}$. Suppose in addition that $\mathcal{X}=\mathbb{R}^{n_1}, \mathcal{Y}=\mathbb{R}^{n_2}, \mathbb{D}_{\mathcal{X}}(x, \widehat{x})=\|x-\widehat{x}\|^2$ and $\mathbb{D}_{\mathcal{Y}}(y, \widehat{y})=\|y-\widehat{y}\|_2^2$. The problem \eqref{CVaR} can be solved by the convex program 
\begin{align*}
&\inf_{\alpha\in\mathcal{A},t\in\mathbb{R},z\ge 0,v_{N+1},s}~z^\top b+s\\
&~~~~~~~~~~~~\text{s.t.}~v_{N+1}\ge0,~A^\top z\ge (\ell(\widehat{y}_1^\top\alpha,t),\cdots,\ell(\widehat{y}_N^\top\alpha,t),v_{N+1},0)^\top,\\
&~~~~~~~~~~~~~~~~~~s\ge C_{\theta,\kappa}v_{N+1}\left\|\frac{\alpha}{v_{N+1}}\right\|_2^{2},
\end{align*}
where $\ell(z,t)=t+\frac{1}{\kappa}(-z-t)_+-\theta z$,  $A\in\mathbb{R}^{(N+6)\times(N+2)}$,and  $b\in\mathbb{R}^{N+6}$ are defined by 
 \begin{align*}
         A = \begin{pmatrix}
		1& \cdots & 1 & 0 &0 \\
		-1& \cdots & -1 & 0&0\\
		0& \cdots & 0 & 0&1\\
		0& \cdots & 0 & 0&-1\\
		1 & \cdots & 0 & 0&-\frac{1}{N}\\
		\vdots & \ddots & \vdots&\vdots&\vdots \\
		0 & \cdots & 1&0&-\frac{1}{N}\\
		-a_1 & \cdots & -a_{N}&-a_{N+1}&-a_{N+2}\\
		a_1 & \cdots & a_{N}&a_{N+1}& a_{N+2}
	\end{pmatrix},\quad\text{ with }
  a = \begin{pmatrix}-d_1 \\ \vdots \\ -d_m \\ d_{m+1} \\ \vdots \\ d_N \\ -1 \\[.5ex]\displaystyle \delta_0 - \frac1N\sum_{i=m+1}^N d_i\end{pmatrix},\quad\text{ and }
  b = 
    \begin{pmatrix}
      1 \\ -1 \\[3pt]
      \tfrac1{\epsilon_0} \\ -1 \\[3pt]
      0 \\ \vdots \\ 0
    \end{pmatrix}.
    \end{align*}
\end{example}
\begin{remark}\label{comparison_mean_CVaR}
    One can verify that the constraint $s\ge C_{\theta,\kappa}v_{N+1}\left\|\frac{\alpha}{v_{N+1}}\right\|_2^{2}$ can be reformulated as the following SOC constraint
    \begin{align*}
        \left\|\left[\begin{array}{c}2 \alpha \\s-C_{\theta,\kappa}v_{N+1}\end{array}\right]\right\|_2 \leq s+C_{\theta,\kappa}v_{N+1}.
    \end{align*}
Thus, our reformulation is a SOCP, while the reformulation of Proposition 4.6 in \cite{N24} is a SDP. From this perspective, our formulation has a simpler form and is easy to apply.
\end{remark}

\section{Scalable Solution for Large-Scale Distributionally Robust CRM}
We now highlight another key advantage of the union-ball decomposition introduced in the previous section: beyond yielding unified and tractable reformulations, it naturally facilitates the development of scalable solution methods. While our earlier convex programs--for instance, those targeting rank-dependent utility--are exact and structurally elegant, they become computationally prohibitive as the sample size $N$ grows. In this section, we show how the decomposition structure lends itself to a cutting-plane algorithm, effectively addressing large-scale distributionally robust CRM problems--an area largely unexplored in the existing literature.

We base our development on the distributonally robust conditional rank-dependent utility problem \eqref{pro4*-1}, which subsumes distributonally robust conditional expectation problem as a special case. Following Proposition \ref{pro4*}, 
we have 
\begin{align}
&\min _{\alpha \in \mathcal{A}}\sup_{(p,\delta) \in \mathcal{V}_+}\rho_{\sum_{i=1}^N p_i\mathrm{I}_{\widehat{y}_i}}^h\left[\ell( Y^\top\alpha)\right]+\operatorname{Lip}(\ell)\left\|h_{-}^{\prime}\right\|_{\infty} \delta\|{\alpha}\|_*\label{q=1,distortion}\\
=&\min _{\alpha \in \mathcal{A}}\sup_{(p,\delta) \in \mathcal{V}_+,\bar{p}\in\mathcal{V}_p^h}\sum_{i=1}^N\bar{p}_i \ell\left( \widehat{y}_i^\top\alpha\right)+\operatorname{Lip}(\ell)\left\|h_{-}^{\prime}\right\|_{\infty} \delta\|{\alpha}\|_*\notag\\
:=&\min _{\alpha \in \mathcal{A}}\sup_{(p,\delta,\bar{p}) \in \mathcal{V}_+^h}\sum_{i=1}^N\bar{p}_i \ell\left( \widehat{y}_i^\top\alpha\right)+\operatorname{Lip}(\ell)\left\|h_{-}^{\prime}\right\|_{\infty} \delta\|{\alpha}\|_*,\label{q=1,distortion,ex}
\end{align}
where $\mathcal{V}_+^h=\left\{(p,\delta,\bar{p}):(p,\delta) \in \mathcal{V}_+,\bar{p}\in\mathcal{V}_p^h\right\}$. 

The computational burden lies in solving the inner supremum problem where the set $\mathcal{V}_p^h$ involves exponentially many constraints as $N$ increases. The structure of inner maximization problem of \eqref{q=1,distortion,ex}, which is linear in variable $(\delta,\bar{p})$, for any fixed $\alpha \in \mathcal{A}$, naturally leads to the design of a cutting plane method. The central idea is to iteratively refine the uncertainty set by adding only the most violated constraints, allowing convergence without necessarily enumerating all possibilities.

\begin{algorithm}[p]
\caption{Cutting-Plane Scheme}\label{alg:cutting-plane}
\DontPrintSemicolon
\SetKwInOut{Input}{Input}\SetKwInOut{Output}{Output}

\Input{tolerance $\psi_{\mathrm{tol}}>0$; feasible $(p_f,\delta_f)\in\mathcal{V}_+$; hypothesis set $\mathcal{A}$.}
\Output{parameter ${\alpha}^*$ and an (approximate) worst-case scenario $(p^*,\delta^*,\bar p^*)$.}

\textbf{Initialization:} Set the scenario pool $\mathcal{U}_1\gets\{(p_f,\delta_f,\bar p_f)\}$ with $\bar p_f:=p_f$; set $j\gets 1$.\;

\Repeat{$u^b_j -l^b_j \le \psi_{\mathrm{tol}}$}{
  \textbf{Lower bound with uncertainty set $\mathcal{U}_j$:}\;
  \begin{equation*}
  l^b_j \gets \min_{\alpha\in\mathcal{A}} \max_{(p,\delta,\bar p)\in \mathcal{U}_j}
  \left\{
     \sum_{i=1}^N \bar p_i\,\ell(\widehat{y}_i^\top \alpha)
     + \operatorname{Lip}(\ell)\,\lVert h_{-}'\rVert_\infty\,\delta\,\lVert \alpha\rVert_*
  \right\}
  \end{equation*}
  Let $\alpha_j$ be an optimal solution to obtain the lower bound $l^b_j$.\;

  \textbf{Upper bound with $\alpha_j$:}\;
  Form the order statistics of the realized losses
  \begin{equation*}
    r_{(1)} \le \cdots \le r_{(N)},\qquad r_i := \ell(\widehat{y}_i^\top \alpha_j).
  \end{equation*}
  Solve
  \begin{align}
  u^b_j \gets \sup_{(p,\delta)\in\mathcal{V}_+}\;
   &\sum_{k=1}^{N-1} 
     h\!\left(\sum_{i=1}^k p_{(i)}\right)\,\big(r_{(k)}-r_{(k+1)}\big)
     + r_{(N)}  + \operatorname{Lip}(\ell)\,\lVert h_{-}'\rVert_\infty\,\delta\,\lVert \alpha_j\rVert_* . \label{dis}
  \end{align}
  Let $(p_j^*,\delta_j^*)$ be an optimal solution to obtain the upper bound $u^b_j$ and set
  \begin{equation*}
     \bar p_{j(i)}^*
     := h\!\left(\sum_{k=1}^{i} p_{j(k)}^*\right)
      - h\!\left(\sum_{k=1}^{i-1} p_{j(k)}^*\right),\qquad i=1,\dots,N.
  \end{equation*}

  \textbf{Gap check:} If $u^b_j - l^b_j \le \psi_{\mathrm{tol}}$, terminate with
  $\alpha^*\gets \alpha_j$ and $(p^*,\delta^*,\bar p^*)\gets(p_j^*,\delta_j^*,\bar p_j^*)$.\;

  \textbf{Pool augmentation:} Otherwise enlarge the scenario pool\;
  \begin{equation*}
     \mathcal{U}_{j+1} \gets \mathcal{U}_j \cup \{(p_j^*,\delta_j^*,\bar p_j^*)\},
  \end{equation*}
  and set $j\gets j+1$ to continue.\;
}
\end{algorithm}

To efficiently solve \eqref{q=1,distortion,ex}, we proceed as follows:
\begin{enumerate}
\item We begin with an initial single-scenario uncertainty set $\mathcal{U}_1=\{(p,\delta,\bar{p})=(p_f,\delta_f,{p}_f)\}$, where $(p_f,\delta_f)\in \mathcal{V}_+$. The associated scenario $p_f$ clearly lies within $\mathcal{V}_{p_f}^h$. Note that we can always find a feasible solution $(p_f,\delta_f)\in \mathcal{V}_+$ when adopting $\mathcal{V}$ specified in Theorems \ref{thm1} and \ref{thm2}. To guarantee the feasibility of \eqref{q=1,distortion} with $\mathcal{V}$ specified in Theorems \ref{thm1} and \ref{thm2}, let $\delta_0=\delta_{min}+\Delta$, where $\Delta>0$ and $\delta_{min}$ is given by Propositions \ref{feasible_con} and \ref{feasible_con_2}, respectively. Moreover, let $\{v_i\}_{i=1}^N$ be the optimal solution such that the optimization problem in Propositions \ref{feasible_con} and \ref{feasible_con_2} attain the optimal value $\delta_{min}$, respectively. Then for $\mathcal{V}$ specified in Theorem \ref{thm1}, let $p_{f}=(\frac{v_1}{\sum_{i=1}^Nv_i},...,\frac{v_N}{\sum_{i=1}^Nv_i})^\top$ and $\delta_f=\frac{N\Delta}{\sum_{i=1}^Nv_i}$ and for $\mathcal{V}$ specified in Theorem \ref{thm2}, let $p_{f}=(\frac{v_1}{N\epsilon},...,\frac{v_N}{N\epsilon})^\top$ and $\delta_f=\Delta$. One can verify that $(p_f,\delta_f)\in \mathcal{V}_+$ and thus $(p_f,\delta_f,{p}_f)\in \mathcal{V}_+^h$. Hence, we can always obtain a feasible scenario in $\mathcal{V}_+^h$ and then $\mathcal{U}_1\subset\mathcal{V}_+^h$.

\item At iteration $j$, we solve the restricted optimization problem:
\begin{align*}
\min_{\alpha \in \mathcal{A}} \sup_{(p,\delta,\bar{p}) \in \mathcal{U}_j}\sum_{i=1}^N \bar{p}_i \ell\left(\widehat{y}_i^\top \alpha\right)+\operatorname{Lip}(\ell)\left\|h_{-}^{\prime}\right\|_{\infty} \delta\|{\alpha}\|_*.
\end{align*}
Let the optimal solution and optimal value obtained be $(\alpha_j,l^b_j)$, which provides a lower bound to \eqref{q=1,distortion,ex}.

\item Fixing $\alpha_j$, we solve the full inner maximization of \eqref{q=1,distortion,ex} to obtain the worst-case solution $(p_j^*,\delta_j^*)$ and the corresponding upper bound $u^b_j$ to the problem \eqref{q=1,distortion,ex}. One can verify that given $\alpha_j$, we can actually directly solve the inner supremum problem of problem \eqref{q=1,distortion} because the distortion risk measure has explicit form (see \eqref{dis}) and thereby avoid an exponentially growing number of constraints in \eqref{q=1,distortion,ex}.

\item We check the convergence criterion by comparing the gap $u^b_j - l^b_j$ to a predefined tolerance $\psi_{\text{tol}}>0$. If the gap satisfies $u^b_j - l^b_j\le \psi_{\text{tol}}$, the algorithm terminates, returning $\alpha_j$ and $l^b_j$ as approximate optimal solution and optimal value. Otherwise, the scenario $(p_j^*,\delta_j^*,\bar{p}_j^*)$ is appended to the uncertainty set, forming $\mathcal{U}_{j+1}=\mathcal{U}_j\cup(p_j^*,\delta_j^*,\bar{p}_j^*)$, and the process repeats. A critical observation that significantly enhances efficiency is that, given $\alpha\in\mathcal{A}$, for each worst-case distribution $p^*$, the optimal associated distortion-weighted distribution $\bar{p}^*$, achieving the supremum, is explicitly computed by:
$$\bar{p}_{(i)}^*=h\left(\sum_{k=1}^{i} p_{(k)}^*\right)-h\left(\sum_{k=1}^{i-1} p_{(k)}^*\right)$$ 
with the $(\cdot)$ denoting sorting of scenarios according to $\ell(\widehat{y}_i^\top\alpha)$ in ascending order, i.e. $\ell\left(\widehat{y}_{(1)}^\top \alpha\right) \leq \ldots \leq\ell\left(\widehat{y}_{(N)}^\top \alpha\right).$ Hence, each iteration only needs to incorporate a single scenario $(p_j^*,\delta_j^*,\bar{p}_j^*)$ to  the uncertainty set rather than explicitly enumerating exponential constraints, $\{(p_j^*,\delta_j^*,\bar{p}_j): \bar{p}_j\in \mathcal{V}_{p_j^*}^h\}$, to  the uncertainty set.

The fact that $u^b_j - l^b_j$ converges to $0$ as $j$ increases is guaranteed by that  the lower bound of \eqref{q=1,distortion,ex}, $l^b_j$,  increases with respect to iteration $j$ and converges to \eqref{q=1,distortion,ex} as $j$ increases due to $\mathcal{U}_j\subset\mathcal{U}_{j+1}\subset...\subset\mathcal{V}_+^h$.

\end{enumerate}

\section{Numerical Experiment}
In this section, we demonstrate the practical effectiveness of our unified and tractable optimal transport framework through simulation experiments on a portfolio allocation problem governed by a distributionally robust conditional rank-dependent expected utility (RDEU) criterion. These experiments enable a systematic evaluation of our method in a controlled setting and highlight its advantages over both unconditional models and existing conditional approaches. Leveraging the cutting-plane algorithm developed in the previous section, we solve the resulting problems efficiently and underscore the value of incorporating side information in data-driven decision-making.

We adopt the setting of Proposition~\ref{pro4*}(i) for the choice of loss function and distance metric, where the loss function $\ell$ is Lipschitz continuous and convex, and the distance metric is defined as $\mathbb{D}_{\mathcal{Y}}(y_1, y_2) = \|y_1 - y_2\|_{\mathcal{Y}}$. The distributionally robust conditional model is solved using our union-ball formulation, with the admissible set $\mathcal{V}$ specified in Theorem~\ref{thm1}. 

We summarize below the models considered in the comparison, each of which is solved using the cutting-plane algorithm.

\begin{itemize}
    \item[(i)] \textbf{Union-Ball Conditional Distributionally Robust Optimization Model (UB-CDRO)}: This is our proposed model, presented as 
\begin{align*}
    &\min_{\alpha \in \mathcal{A}} \sup_{\mathbb{Q}|_{Y} \in \mathop{\bigcup}\limits_{(p, \delta) \in \mathcal{V}} \mathbb{B}_{\delta}\left(\sum_{i=1}^N p_i\mathrm{I}_{\widehat{y}_i}\right)} \rho_{\mathbb{Q}|_{Y}}\left[\ell(Y, \alpha)\right].
\end{align*}

    \item[(ii)] \textbf{Unconditional Sample Average Approximation (SAA)}: This baseline model uses the empirical distribution of $Y$ without conditioning on $X$-namely, $\widehat{\mathbb{P}}_{N} = \frac{1}{N} \sum_{i=1}^{N} \mathrm{I}_{\widehat{y}_{i}}$:
    \begin{align*}
        \min _{\alpha \in \mathcal{A}} \rho_{\widehat{\mathbb{P}}_{N}}^h\left[\ell( Y^\top\alpha)\right].
    \end{align*}

    \item[(iii)] \textbf{Unconditional Distributionally Robust Optimization Model (UDRO)}: This model solves the unconditional distributionally robust optimization problem, using the empirical distribution of $Y$ as the reference measure. Specifically, this model is presented as 
\begin{align*}
    \min_{\alpha \in \mathcal{A}} \sup_{\mathbb{Q}_{Y} \in \mathbb{B}_{\delta_0}\left(\widehat{\mathbb{P}}_{N}\right)} \rho_{\mathbb{Q}_{Y}}\left[\ell(Y, \alpha)\right].
\end{align*}
    \item[(iv)] \textbf{Conditional Sample Average Approximation (CSAA)}: This model is defined as
\begin{align*}
\min_{\alpha \in \mathcal{A}} \rho_{\widehat{\mathbb{P}}_{K_N}}^h\left[\ell(Y^\top \alpha)\right],
\end{align*}
where $\widehat{\mathbb{P}}_{K_N}$ denotes the empirical conditional distribution of $Y$ on $X\in \mathcal{N}_\gamma(x_0)$. Specifically, $K_N$ represents the number of samples whose associated covariates fall within the neighborhood $\mathcal{N}_\gamma(x_0)$. Sorting $(\widehat{x}_{i},\widehat{y}_{i})$ as $(\widehat{x}_{(j)},\widehat{y}_{(j)})$ such that $\widehat{x}_{(j)} \in \mathcal{N}_\gamma(x_0)$, for $j = 1, \dots, K_N$ and $\widehat{x}_{(j)} \notin \mathcal{N}_\gamma(x_0)$, for $j = K_N+1, \dots, N$. Then $\widehat{\mathbb{P}}_{K_N} = \frac{1}{K_N} \sum_{j=1}^{K_N} \mathrm{I}_{\widehat{y}_{(j)}}$.

    \item[(v)] \textbf{Conditional Distributionally Robust Optimization Model (CDRO)}: This model solves the distributionally robust counterpart of CSAA, using the empirical conditional distribution $\mathbb{P}_{K_N}$ as the reference distribution. Specifically, this model is presented as 
\begin{align*}
    \min_{\alpha \in \mathcal{A}} \sup_{\mathbb{Q}|_{Y} \in \mathbb{B}_{\delta_0}\left(\widehat{\mathbb{P}}_{K_N}\right)} \rho_{\mathbb{Q}|_{Y}}\left[\ell(Y, \alpha)\right].
\end{align*}
\end{itemize}

In the experiment, we run 50 times for each sample sizes $N$ , choose $\delta=\delta_{min}+\frac{1}{10}\frac{\sqrt{ln 100}}{N}$ to make sure that the model (i) is feasible, and choose $\delta_0=100/N$ such that the radius $\delta\rightarrow 0$ as $N\rightarrow \infty$. For portfolio allocation problem setting, $\alpha\in\mathbb{R}^6_+$ denotes decision vector and $Y\in\mathbb{R}^6$ denotes an uncertain quantity of interest with side information $X\in\mathbb{R}^3$. We assume the joint distribution of $(X,Y)$ is as follows. The marginal distribution of $X$ is $X_1 \rightsquigarrow \mathcal{N}(0.1,10),~X_2 \rightsquigarrow \mathcal{N}(0,1)$ and $X_3 \rightsquigarrow \mathcal{N}(0,1)$. Let $\mu^\top=
(86.8625,71.6059,75.3759,97.6258,52.7854,84.8973)$,
$$L=\begin{bmatrix}
136.687 & 0 & 0 & 0 & 0 & 0\\
8.79766 & 142.279 & 0 & 0 & 0 & 0\\
16.1504 & 15.0637 & 122.613 & 0 & 0 & 0\\
18.4944 & 15.6961 & 26.344 & 139.148 & 0 & 0\\
3.41394 & 16.5922 & 14.8795 & 13.9914 & 151.732 & 0\\
24.8156 & 18.7292 & 17.1574 & 6.36536 & 24.7703 & 144.672
\end{bmatrix},$$
which is the same as the setting in Section 5.2 of \cite{BV17}. Moreover, let $\text{gate}(x_2)=\frac{1}{1+\exp\{-4(x_2-\tfrac{1}{2})\}}$, $s\left(x_2\right)=\left(1,1-\frac{1}{2} \operatorname{gate}\left(x_2\right), 1+6 \operatorname{gate}\left(x_2\right), 1,1+6 \operatorname{gate}\left(x_2\right), 1\right),$
and $D\left(x_2\right)=\operatorname{diag}\left(s\left(x_2\right)\right)$.
 The conditional distribution of $Y$ given $X=(x_1,x_2,x_3)$ is defined as 
 $$Y \mid X=x \rightsquigarrow \mathcal{N}\left(m(x), \Sigma\left(x_2\right)\right),$$
where $$m(x)=\mu+x_1 v_{1}+x_2 v_{2}+\text{tanh}(x_3) v_3,~\text{and }\Sigma\left(x_2\right)=L D\left(x_2\right)^2 L^{\top}$$ with ${v}_1=(30, -40, 0, 15, -10,  5)^\top$, ${v}_2=(-400, 60, 70, 80, 90, 100)^\top$, and  ${v}_3=(0,  20, -25, 0,  15, -12)^\top$. And we choose the conditional set $\mathcal{N}=\{x: \|x-x_0\|_1\le 1.2\}$, where $x_0=(0.1, 0.1, 0)$.

Figures \ref{comparison_figure1} and \ref{comparison_figure2} report out-of-sample performance for the five models introduced earlier (UB-CDRO, SAA, UDRO, CSAA, CDRO) under various convex distortion functions and Lipschitz-continuous convex loss functions. The plots summarize the distribution of out-of-sample risk values; the shaded regions denote the interquartile range (IQR, 15th-85th percentiles), and the bold colored curves trace the sample means across sample sizes $N$. The figure indicates that for various distortion functions and loss functions, UB-CDRO consistently outperforms all conditional and unconditional benchmarks: its out-of-sample performance is more stable (smaller IQR), exhibits a lower mean, and approaches the oracle optimal value more rapidly as $N$ increases. Although the conditional methods (CSAA, CDRO) converge to the conditional optimum, the bands of conditional methods remain wide even at $N=400$, indicating persistent variability. Moreover, the unconditional methods (SAA, UDRO) do not converge to that optimum as $N$ grows. This is expected because the conditional distribution can differ substantially from the marginal distribution; consequently, the decision $\alpha$ that is optimal for the marginal distribution generally differs from the conditional-optimal decision $\alpha^{*}$. In fact, SAA and UDRO converge to the marginal-optimal decision rather than to $\alpha^{*}$. Thus, unconditional methods (SAA, UDRO) exhibit strong stability but poor convergence, whereas conditional methods (CSAA, CDRO) achieve fast convergence but lack stability. By contrast, UB-CDRO is strong in both stability and convergence, surpassing all conditional and unconditional benchmarks.

\begin{figure}[htbp]
  \centering
  \captionsetup[subfigure]{labelformat=parens, labelsep=space}

  \begin{subfigure}{\linewidth}
    \centering
    \includegraphics[width=.9\linewidth]{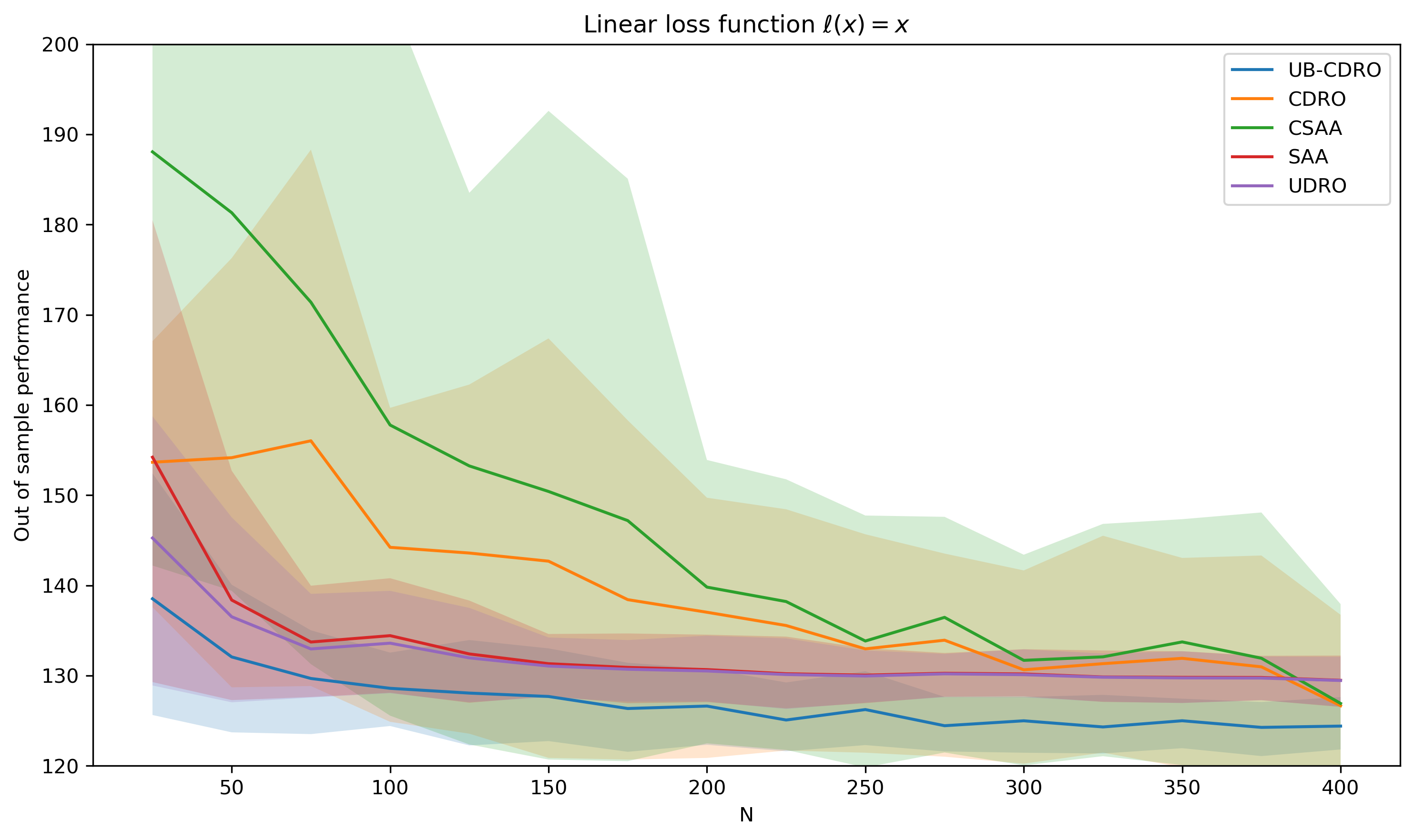}
    \caption{ }\label{fig:one-a}
  \end{subfigure}\vspace{0.6em}

  \begin{subfigure}{\linewidth}
    \centering
    \includegraphics[width=.9\linewidth]{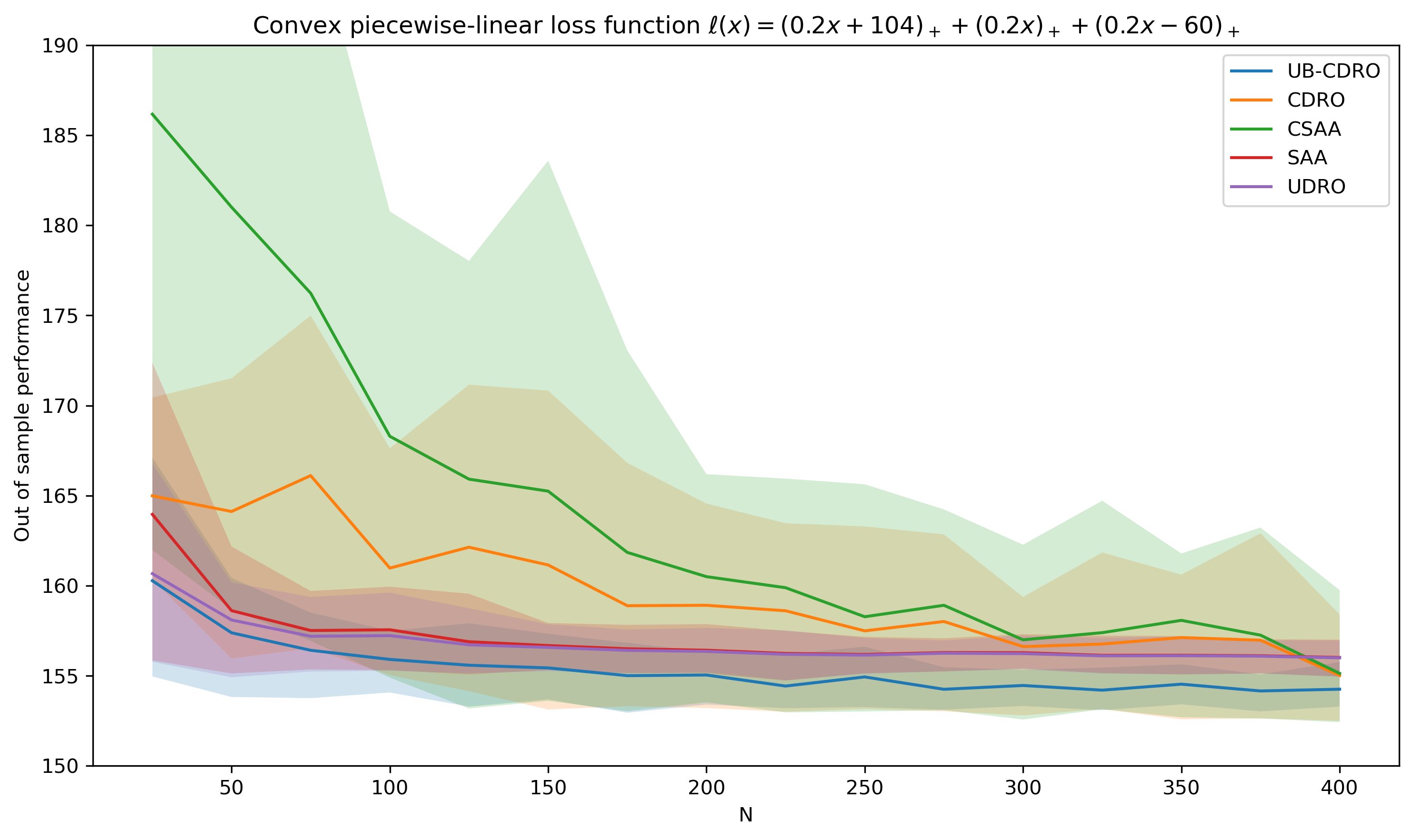}
    \caption{ }\label{fig:one-b}
  \end{subfigure}\vspace{0.6em}

  \caption{Comparison of Out-of-Sample Performance Across Models with Distortion Function $h(x)=x^2$}
  \label{comparison_figure1}
\end{figure}

\begin{figure}[htbp]
  \centering
  \captionsetup[subfigure]{labelformat=parens, labelsep=space}

  \begin{subfigure}{\linewidth}
    \centering
    \includegraphics[width=.9\linewidth]{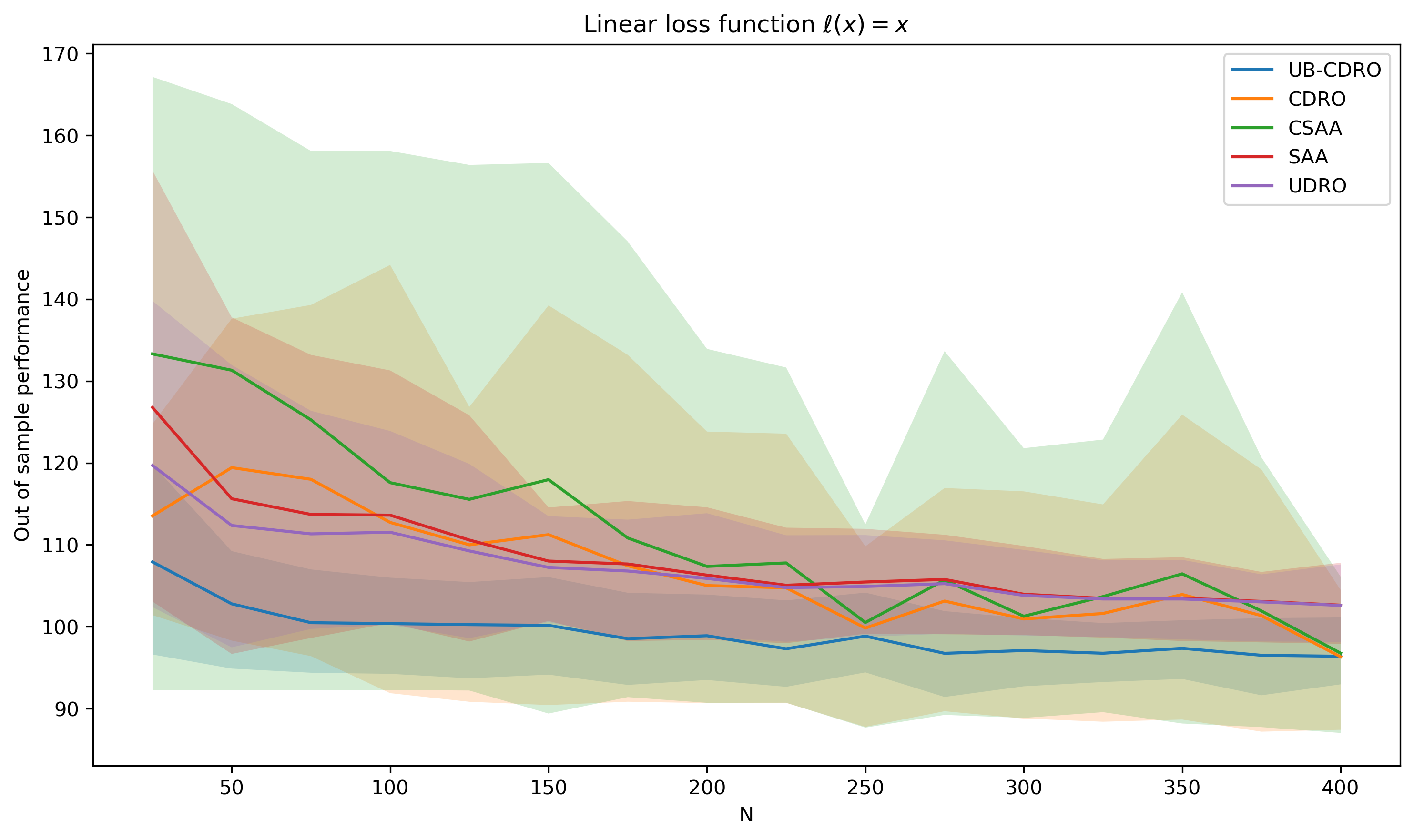}
    \caption{ }\label{fig:one-c}
  \end{subfigure}\vspace{0.6em}

  \begin{subfigure}{\linewidth}
    \centering
    \includegraphics[width=.9\linewidth]{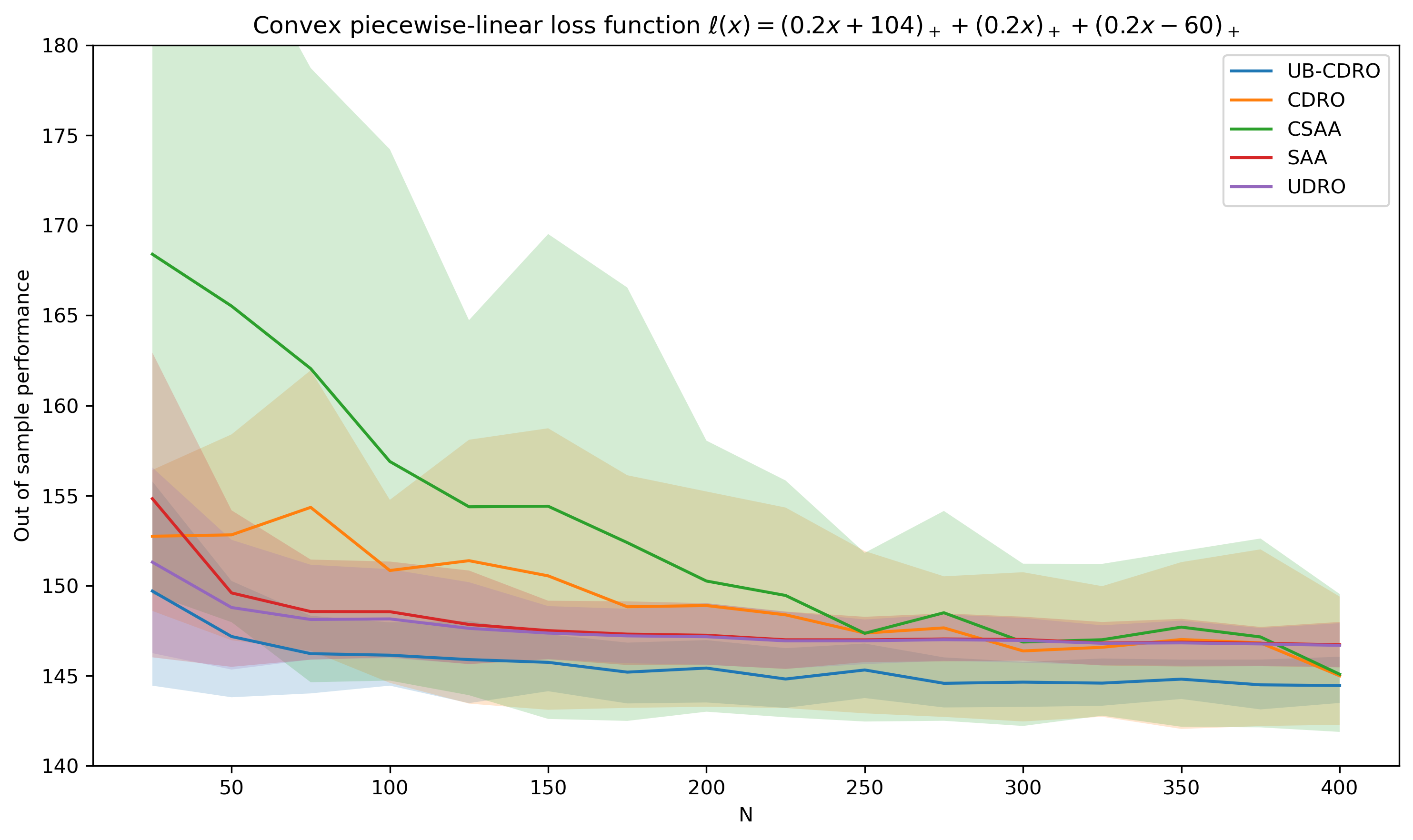}
    \caption{ }\label{fig:one-d}
  \end{subfigure}

  \caption{Comparison of Out-of-Sample Performance Across Models with Distortion Function $h(x)=\frac{e^x-1}{e-1}$}
  \label{comparison_figure2}
\end{figure}


\section*{Supplement: Technical Appendices}
\subsection*{EC.1. Regularization Reformulations for Wasserstein DRO \citep{W22}}
\label{ec:reformulations}
In this section, we first review some regularization results in \cite{W22}, which will be used throughout the proofs of Section 3. Let $L^q$ be the set of all random variables with a finite $q$th moment For $q \geqslant 1$, and $L^{\infty}$ be the set of all bounded random variables, under an arbitrary norm of $\mathcal{Y}$, $\|\cdot\|_\mathcal{Y}$. If the cost function on $\mathcal{Y}$ is $\mathbb{D}_{\mathcal{Y}}(y_1,y_2)=||y_1-y_2||_{\mathcal{Y}}$ for any $y_1,y_2\in\mathcal{Y}$, and $\ell$ is convex and Lipschitz continuous, the unconditional worst-case risk problem \eqref{r2} is equivalent to 
\begin{align}\label{regular:p=1}
	\sum_{i=1}^N p_i \ell(\widehat{y}_i^\top\alpha) + \operatorname{Lip}(\ell)\delta ||\alpha||_*,
\end{align}
where $||\cdot||_*$ is the dual norm of $||\cdot||_{\mathcal{Y}}$, and $\operatorname{Lip}(\ell)$ is the Lipschitz constant of $\ell$.  Moreover, for $\mathbb{D}_{\mathcal{Y}}(y_1,y_2)=||y_1-y_2||_{\mathcal{Y}}^q$ for any $y_1,y_2\in\mathcal{Y}$, with $q\in(1,\infty]$, \cite{W22} have some regularization results for the unconditional worst-case risk problem \eqref{r2} with some special loss functions. 
\begin{theorem}\label{W22thm3}(\cite{W22} Theorem 3)
Let $\ell: \mathbb{R} \rightarrow \mathbb{R}$ be a convex function. For $q \in(1, \infty]$, suppose that $\mathbb{E}[|\ell(Z)|]<\infty$ for all $Z \in L^q$, and $\mathbb{D}_{\mathcal{Y}}(y_1,y_2)=||y_1-y_2||_{\mathcal{Y}}^q$, for any $y_1,y_2\in\mathcal{Y}$. Then the following statements are equivalent.
\begin{itemize}
	\item [(i)]There exists $C>0$ such that for any $(p,\delta)\in \mathcal{V}$ with $\delta\ge 0$, it holds that
	$$
	\sup _{{\mathbb{Q}}|_{Y} \in \mathbb{B}_{\delta}(\sum_{i=1}^Np_i\mathrm{I}_{\widehat{y}_i})} \E_{{\mathbb{Q}}|_{Y}}\left[\ell(Y^\top \alpha) \right]=\sum_{i=1}^N p_i \ell(\widehat{y}_i^\top\alpha) +C\delta^\frac{1}{q} ||\alpha||_*
	$$
	\item[(ii)] The function $\ell$ takes one of the following two forms, multiplied by $C$ :
	\begin{itemize}
		\item [(a)]$\ell_1(x)=x+b$ or $\ell_1(x)=-x+b$ with some $b \in \mathbb{R}$;
		\item[(b)]  $\ell_2(x)=\left|x-b_1\right|+b_2$ with some $b_1, b_2 \in \mathbb{R}$.
	\end{itemize}
\end{itemize}
\end{theorem}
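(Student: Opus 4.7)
The plan is to prove the two implications separately, both leveraging Gao--Kleywegt strong duality for Wasserstein DRO together with the projection identity that collapses the vector supremum to the direction of $\alpha$. I would first write
\begin{equation*}
\sup_{\mathbb{Q}|_Y \in \mathbb{B}_{\delta}(\sum_{i=1}^N p_i \mathrm{I}_{\widehat{y}_i})} \E_{\mathbb{Q}|_Y}[\ell(Y^\top\alpha)] = \inf_{\lambda \geq 0}\left\{\lambda\delta + \sum_{i=1}^N p_i\,\sup_{t \in \R}\bigl\{\ell(t) - \mu\,|t - \widehat{y}_i^\top\alpha|^q\bigr\}\right\},
\end{equation*}
with $\mu := \lambda/\|\alpha\|_*^q$, where the identification $\min\{\|y-\widehat{y}_i\|_{\mathcal{Y}} : y^\top\alpha = t\} = |t-\widehat{y}_i^\top\alpha|/\|\alpha\|_*$ has been used. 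Setting $E_\mu(z) := \sup_{t}\{\ell(t) - \mu|t-z|^q\}$, condition (i) rewrites as
\begin{equation*}
\inf_{\mu \geq 0}\left\{\mu\,\|\alpha\|_*^q\,\delta + \sum_{i=1}^N p_i\,\bigl(E_\mu(\widehat{y}_i^\top\alpha) - \ell(\widehat{y}_i^\top\alpha)\bigr)\right\} = C\delta^{1/q}\|\alpha\|_*,
\end{equation*}
and this is the identity I would use throughout.

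For $(ii) \Rightarrow (i)$ I would treat each admissible shape directly. For $\ell_1(t) = C(\pm t + b)$, differentiation of $\ell_1(t) - \mu|t-z|^q$ places the supremizer at $t^\star = z \pm (C/(q\mu))^{1/(q-1)}$, giving $E_\mu(z) - \ell_1(z) = C_q\,\mu^{-1/(q-1)}$ for an explicit constant $C_q$ depending only on $C$ and $q$---crucially independent of $z$. For $\ell_2(t) = C(|t - b_1| + b_2)$, whose subgradient is two-valued $\{\pm C\}$, the same calculation on each branch of the absolute value yields the same additive-constant structure. Plugging into the outer $\inf_{\mu \geq 0}$ and optimizing explicitly reproduces $C\delta^{1/q}\|\alpha\|_*$, matching (i).

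The direction $(i) \Rightarrow (ii)$ is the main obstacle: we must convert the rigid RHS into a structural constraint on $\ell$. Since the RHS depends only on $\delta$ and $\|\alpha\|_*$---not on the points $\widehat{y}_i^\top\alpha$---varying the atoms while keeping $\delta$ and $\alpha$ fixed forces $E_\mu(z) - \ell(z)$ to be \emph{independent of} $z$ for every $\mu > 0$; equivalently, the $q$-Moreau-type envelope of $\ell$ differs from $\ell$ by a function of $\mu$ alone. I plan to exploit this rigidity by differentiating the envelope identity in $z$: wherever the supremizer $t^\star(\mu,z)$ is unique, the envelope theorem gives $\partial_z E_\mu(z) = q\mu\,|t^\star - z|^{q-1}\,\mathrm{sign}(t^\star - z)$, which must equal $\ell'(z)$. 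Consequently $|\ell'(z)|$ can depend only on $\mu$, hence is constant in $z$ wherever it exists, and convexity of $\ell$ together with monotonicity of $\ell'$ leaves exactly two possibilities---$\ell'$ constant (the linear family $\ell_1$) or $\ell'$ a two-level step between $-a$ and $+a$ at a single jump (the abs-linear family $\ell_2$). The delicate part is making the uniqueness-of-$t^\star$ argument global across $\mu$ and $z$ and ruling out boundary cases; I would handle this by using the strict convexity of $|t-z|^q$ for $q > 1$ to guarantee uniqueness off a Lebesgue-null set and then extending by continuity, with the extremal case $q = \infty$ handled by a limiting argument from $q < \infty$.
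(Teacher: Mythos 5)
Note first that this theorem is not proved in the paper at all: it is quoted verbatim in Appendix EC.1 as a known result of \cite{W22} (``we first review some regularization results in \cite{W22}''), and the paper invokes it as a black box. There is therefore no ``paper proof'' to compare against; what follows evaluates your attempt on its own terms.

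Your overall architecture---Gao--Kleywegt duality, scalar projection, and envelope analysis of $E_\mu(z) := \sup_t\{\ell(t)-\mu|t-z|^q\}$---is the right strategy, and your $(ii)\Rightarrow(i)$ computations are correct once you also check the behaviour across the kink of $\ell_2$ (you wave at this with ``the same calculation on each branch,'' but you do need to rule out that the supremizer sits on the opposite side of the kink from $z$; it works out because $|\ell_2'|\equiv C$ on both branches). The genuine gap is in $(i)\Rightarrow(ii)$, at the sentence ``Consequently $|\ell'(z)|$ can depend only on $\mu$, hence is constant in $z$.'' From $\ell'(z)=\partial_z E_\mu(z)=q\mu|t^\star(\mu,z)-z|^{q-1}\operatorname{sign}(t^\star-z)$ you obtain $|\ell'(z)|=q\mu|t^\star(\mu,z)-z|^{q-1}$, but the right-hand side manifestly depends on $z$ through $t^\star(\mu,z)$, so no such conclusion follows from the $z$-derivative alone. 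You need a second piece of information that you never state: differentiating the envelope identity $E_\mu(z)-\ell(z)=\Phi(\mu)$ in $\mu$ gives $\Phi'(\mu)=-|t^\star(\mu,z)-z|^q$, and since the left side is $z$-free this forces $|t^\star(\mu,z)-z|$ to be independent of $z$ for each fixed $\mu$; only then does $|\ell'(z)|=q\mu|t^\star-z|^{q-1}$ become a $z$-free quantity and hence constant. (Alternatively, combine the first-order condition $\ell'(t^\star)=q\mu|t^\star-z|^{q-1}\operatorname{sign}(t^\star-z)$ with the envelope identity to get $\ell'(z)=\ell'(t^\star)$, and then use monotonicity of $\ell'$ plus $|t^\star-z|\to\infty$ as $\mu\to 0^+$ to argue $\ell'$ is constant on a half-line.) Without one of these two closing arguments, the key rigidity step is unjustified, and the classification into the two admissible families does not go through.
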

\begin{theorem} \label{W22thm4}(\cite{W22} Theorem 4) 
Let $\ell: \mathbb{R} \rightarrow \mathbb{R}_{+}$ be a Lipschitz continuous and convex function. For any $q \in(1, \infty)$, assume $\mathbb{D}_{\mathcal{Y}}(y_1,y_2)=||y_1-y_2||_{\mathcal{Y}}^q$, for any $y_1,y_2\in\mathcal{Y}$. Then the following statements are equivalent.
\begin{itemize}
	\item [(i)]There exists $C>0$ such that for any $(p,\delta)\in \mathcal{V}$ with $\delta\ge 0$, it holds that
	$$
	\sup _{{\mathbb{Q}}|_{Y} \in \mathbb{B}_{\delta}(\sum_{i=1}^Np_i\mathrm{I}_{\widehat{y}_i})} \E_{{\mathbb{Q}}|_{Y}}\left[\ell^q(Y^\top\alpha) \right] = \left[\left(\sum_{i=1}^N p_i \ell^q(\widehat{y}_i^\top\alpha) \right)^{1/q} + C\delta^\frac{1}{q} ||\alpha||_*\right]^{q}.
	$$
	\item[(ii)] The function $\ell$ takes one of the following four forms, multiplied by $C$:
	\begin{itemize}
		\item [(a)]$\ell_1(x)=(x-b)_{+}$with some $b \in \mathbb{R}$;
		\item[(b)] $\ell_2(x)=(x-b)_{-}$with some $b \in \mathbb{R}$;
		\item[(c)] $\ell_3(x)=\left(\left|x-b_1\right|-b_2\right)_{+}$with some $b_1 \in \mathbb{R}$ and $b_2 \geqslant 0$;
		\item[(d)] $\ell_4(x)=\left|x-b_1\right|+b_2$ with some $b_1 \in \mathbb{R}$ and $b_2>0$.
	\end{itemize}
\end{itemize}
\end{theorem}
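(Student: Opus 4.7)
The plan is to establish the two implications separately. For the direction (ii) to (i), I would first apply the projection property of Wasserstein DRO to collapse the $d$-dimensional worst-case problem to a one-dimensional one: writing $z_i := \widehat{y}_i^\top \alpha$,
$$\sup_{{\mathbb{Q}}|_{Y} \in \mathbb{B}_{\delta}(\sum_{i=1}^N p_i \mathrm{I}_{\widehat{y}_i})} \E[\ell^q(Y^\top\alpha)] = \sup_{{\mathbb{Q}}|_{Z} \in \mathbb{B}_{\delta \|\alpha\|_*^q}(\sum_{i=1}^N p_i \mathrm{I}_{z_i})} \E[\ell^q(Z)].$$
For each of the four candidate forms of $\ell$, the pointwise Lipschitz bound $\ell(z+\xi) \le \ell(z) + |\xi|$ (after normalizing $C=1$) holds because each is a composition of a $1$-Lipschitz operation such as $(\cdot)_+$, $(\cdot)_-$, or $|\cdot|$ with an affine map of slope $\pm 1$. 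Combining this with the $L^q$ Minkowski inequality yields
$$\left(\E[\ell^q(Z+\xi)]\right)^{1/q} \le \left(\E[\ell^q(Z)]\right)^{1/q} + \left(\E[|\xi|^q]\right)^{1/q},$$
and taking the supremum over perturbations with $\E[|\xi|^q] \le \delta \|\alpha\|_*^q$ gives the upper bound in (i). Matching achievability follows by constructing explicit worst-case shifts---for $\ell_1(x)=(x-b)_+$, move only the atoms with $z_i \ge b$ uniformly upward; for $\ell_4(x)=|x-b_1|+b_2$, push each atom in the direction that enlarges $|z_i-b_1|$---each of which saturates the relevant Minkowski inequalities.

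For the converse direction (i) to (ii), I would extract a functional equation on $\ell$ by specializing to single-atom references $p=\mathrm{I}_{\widehat{y}}$. Writing $z := \widehat{y}^\top\alpha$ and $r := \delta^{1/q}\|\alpha\|_*$, the identity in (i) reduces to
$$\sup\{\E_{\mathbb{Q}}[\ell^q(Z)] : \mathcal{W}(\mathbb{Q},\mathrm{I}_z) \le r^q\} = (\ell(z)+Cr)^q \text{ for all } z \in \R,\ r \ge 0.$$
Applying the standard dual reformulation of Wasserstein DRO on the left, this becomes $\inf_{\lambda \ge 0}\{\lambda r^q + \sup_\xi[\ell^q(z+\xi) - \lambda|\xi|^q]\} = (\ell(z)+Cr)^q$. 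Matching first-order terms in $r$ as $r \to 0^+$ forces $\ell$ to grow linearly with slope exactly $C$ in at least one direction outside its zero set, while convexity and Lipschitz continuity restrict all slopes to $\{-C,0,C\}$. A further test on two-atom reference distributions---placing one atom on a left-active branch and the other on a right-active branch---rules out hybrid shapes and pins $\ell$ down to one of the four enumerated forms $C\ell_1,\ldots,C\ell_4$.

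The main obstacle is the converse direction (i) to (ii), which must exclude all convex Lipschitz functions other than the four enumerated forms. The delicate step is translating the functional equation into sharp pointwise constraints on $\ell'$, since smoothed hinge losses and mollified absolute-value functions mimic the enumerated forms to leading order in $r$. Overcoming this requires pushing the asymptotic analysis of the inner supremum $\sup_\xi\{\ell^q(z+\xi) - \lambda|\xi|^q\}$ to higher orders as $\lambda \to \infty$, so as to exclude any curvature in $\ell$ beyond the allowed kinks, and then stitching the local rigidity results together globally using convexity.
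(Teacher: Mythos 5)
The paper does not prove this result; it is a verbatim restatement of Theorem 4 in Wu, Li, and Mao (2022), with the supplement's Section EC.1 explicitly presented as a review of results from that reference. There is therefore no in-paper proof to compare against, so I evaluate your proposal on its own merits.

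Your forward direction (ii) $\Rightarrow$ (i) has the right skeleton---project to one dimension, use the pointwise bound $\ell(z+\xi)\le\ell(z)+C|\xi|$, and then Minkowski's inequality in $L^q$ of the coupling---but the achievability step needs fixing: Minkowski is tight only when $|\xi|$ is \emph{proportional} to $\ell(Z)$ under the coupling, so for $\ell_1(x)=(x-b)_+$ each atom with $z_i\ge b$ should be shifted by an amount proportional to $z_i-b$, not uniformly, and the degenerate case $\sum_i p_i\ell^q(z_i)=0$ needs a separate mass-to-infinity construction. The real gap is in the converse. Your claim that ``convexity and Lipschitz continuity restrict all slopes to $\{-C,0,C\}$'' is false on its own: $\ell(x)=\max\{0,x/2\}$ is convex and $1$-Lipschitz yet has a piece of slope $1/2$. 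What the single-atom, first-order-in-$r$ test actually yields is the sharper identity $\max(|\ell'_+(z)|,|\ell'_-(z)|)=C$ at \emph{every} $z$ with $\ell(z)>0$: the optimal small perturbation is a deterministic shift by $\pm r$ (splitting mass or sending mass far away contributes only $O(r^q)=o(r)$ since $q>1$), giving leading behavior $\ell^q(z)+q\ell^{q-1}(z)\max(\ell'_+(z),-\ell'_-(z))\,r+o(r)$, which must match $\ell^q(z)+qC\ell^{q-1}(z)r+o(r)$. This pointwise derivative constraint, together with convexity (monotone one-sided slopes), nonnegativity, and the fact that $\{\ell=0\}$ is a closed interval, already forces one of the four enumerated shapes; the two-atom test you propose is unnecessary. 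Your concern about smoothed hinges ``mimicking the enumerated forms to leading order in $r$'' is likewise unfounded, since any $C^1$ mollification has $|\ell'(z)|<C$ strictly at some finite $z$ with $\ell(z)>0$ and therefore already fails the first-order matching; no higher-order dual asymptotics in $\lambda$ are needed.
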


\begin{theorem} (\cite{W22} Corollary 1)
	For any $q \in[1, \infty)$ and $C>0$, let $\rho^{(1)}$ be defined by \eqref{rho_1} with $\ell(z,t)=(C\bar{\ell})^q(z,t)$ and $\bar{\ell}(z,t)=C\bar{\ell}(z-t)$, and $\rho^{(2)}$ be defined by \eqref{rho_2} with ${\ell}(z,t)={\ell}(z-t)$, and assume $\mathbb{D}_{\mathcal{Y}}(y_1,y_2)=||y_1-y_2||_{\mathcal{Y}}^q$, for any $y_1,y_2\in\mathcal{Y}$. Take $(p,\delta)\in \mathcal{V}$ with $\delta\ge 0$.
	\begin{itemize}
		\item [(i)]Let the function $\bar{\ell}$ take one of the following two forms:
		\begin{itemize}
			\item[(a)] $\bar{\ell}_1(x)=\left(\left|x-b_1\right|-b_2\right)_{+}$with some $b_1 \in \mathbb{R}$ and $b_2 \geqslant 0$;
			\item[(b)] $\bar{\ell}_2(x)=\left|x-b_1\right|+b_2$ with some $b_1 \in \mathbb{R}$ and $b_2>0$.
		\end{itemize} Then 
		$$\sup _{{\mathbb{Q}}|_{Y} \in \mathbb{B}_{\delta}(\sum_{i=1}^Np_i\mathrm{I}_{\widehat{y}_i})} \rho^{(1)}_{{\mathbb{Q}}|_{Y}}\left(Y^\top \alpha \right) =  \left[\left(\rho^{(1)}_{\sum_{i=1}^N p_i\mathrm{I}_{\widehat{y}_i}}\left(Y^\top \alpha \right)\right)^{1/q} + C\delta^{1/q} ||\alpha||_*\right]^q.$$
		\item[(ii)] For $C>1$, let the function $\ell$ take one of the following three forms:
		\begin{itemize}
			\item [(a)]$\ell_1(x)=(x-b)_{+}$with some $b \in \mathbb{R}$;
			\item[(b)] $\ell_2(x)=\left(\left|x-b_1\right|-b_2\right)_{+}$with some $b_1 \in \mathbb{R}$ and $b_2 \geqslant 0$;
			\item[(c)] $\ell_3(x)=\left|x-b_1\right|+b_2$ with some $b_1 \in \mathbb{R}$ and $b_2>0$,
		\end{itemize}  or $\ell(z, t)=C(|z|-t)_{+}$. It holds that
		$$\sup _{{\mathbb{Q}}|_{Y} \in \mathbb{B}_{\delta}(\sum_{i=1}^Np_i\mathrm{I}_{\widehat{y}_i})} \rho^{(2)}_{{\mathbb{Q}}|_{Y}}\left(Y^\top \alpha \right) =  \rho^{(2)}_{\sum_{i=1}^N p_i\mathrm{I}_{\widehat{y}_i}}\left(Y^\top \alpha \right) + C\delta^{1/q}||\alpha||_*.$$
	\end{itemize}
\end{theorem}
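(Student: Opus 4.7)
The plan is to reduce both identities to the base regularization results of Theorem \ref{W22thm4} via a min-max interchange. For part (i), since $\rho^{(1)}_{\mathbb{Q}|_Y}(Y^\top\alpha)=\inf_{t\in\mathbb{R}^k}\E_{\mathbb{Q}|_Y}[(C\bar\ell(Y^\top\alpha-t))^q]$, I would first argue that
\begin{align*}
\sup_{\mathbb{Q}|_Y\in\mathbb{B}_\delta(\mathbb{P}_0)}\inf_{t}\E_{\mathbb{Q}|_Y}\!\bigl[(C\bar\ell(Y^\top\alpha-t))^q\bigr]
=\inf_{t}\sup_{\mathbb{Q}|_Y\in\mathbb{B}_\delta(\mathbb{P}_0)}\E_{\mathbb{Q}|_Y}\!\bigl[(C\bar\ell(Y^\top\alpha-t))^q\bigr],
\end{align*}
where $\mathbb{P}_0=\sum_{i=1}^Np_i\mathrm{I}_{\widehat{y}_i}$. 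This swap is justified by Sion's minimax theorem applied to the bifunction $F(\mathbb{Q}|_Y,t):=\E_{\mathbb{Q}|_Y}[(C\bar\ell(Y^\top\alpha-t))^q]$, which is linear (hence concave and upper semicontinuous in the Wasserstein topology) in $\mathbb{Q}|_Y$ over the convex, weakly relatively compact ball $\mathbb{B}_\delta(\mathbb{P}_0)$, and convex in $t$ (since $\bar\ell$ is a nonnegative convex piecewise-linear function from the admissible list and $x\mapsto x^q$ is convex and nondecreasing on $\mathbb{R}_+$); compactness of the effective $t$-domain follows from level-boundedness of the objective in $t$, allowing restriction of $t$ to a compact sublevel set without changing the infimum.

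Having exchanged sup and inf, for each fixed $t$ the shifted map $x\mapsto\bar\ell(x-t)$ still belongs to the admissible class (a) or (b) of Theorem \ref{W22thm4}, with the translation absorbed into the parameter $b_1$. Applying that theorem to the inner maximization yields
\begin{align*}
\sup_{\mathbb{Q}|_Y\in\mathbb{B}_\delta(\mathbb{P}_0)}\E_{\mathbb{Q}|_Y}\!\bigl[(C\bar\ell(Y^\top\alpha-t))^q\bigr]
=\left[\left(\sum_{i=1}^Np_i(C\bar\ell(\widehat{y}_i^\top\alpha-t))^q\right)^{1/q}+C\delta^{1/q}\|\alpha\|_*\right]^q.
\end{align*}
Since $x\mapsto x^q$ is strictly increasing on $[0,\infty)$ and the additive term $C\delta^{1/q}\|\alpha\|_*$ is independent of $t$, I can pull the outer $q$th power out of the infimum, $\inf_t(A(t)+B)^q=(\inf_tA(t)+B)^q$, and identify $\inf_t A(t)$ with $(\rho^{(1)}_{\mathbb{P}_0}(Y^\top\alpha))^{1/q}$ by definition of $\rho^{(1)}$, delivering the announced formula.

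For part (ii), the template is identical but applied to $\rho^{(2)}_{\mathbb{Q}|_Y}(Y^\top\alpha)=\inf_t\{t+(\E_{\mathbb{Q}|_Y}[\ell^q(Y^\top\alpha-t)])^{1/q}\}$. After interchanging sup and inf (now using that the functional $\mathbb{Q}|_Y\mapsto(\E_{\mathbb{Q}|_Y}[\ell^q(\cdot-t)])^{1/q}$ is concave in $\mathbb{Q}|_Y$, as the composition of a linear map with the concave function $x\mapsto x^{1/q}$ on $\mathbb{R}_+$, and convex in $t$ for the admissible $\ell$), I apply Theorem \ref{W22thm4} inside, which gives $(\sup\E[\ell^q(\cdot-t)])^{1/q}=(\sum_ip_i\ell^q(\widehat{y}_i^\top\alpha-t))^{1/q}+C\delta^{1/q}\|\alpha\|_*$. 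The additive constant $C\delta^{1/q}\|\alpha\|_*$ pulls out of $\inf_t\{t+\cdot\}$, leaving $\rho^{(2)}_{\mathbb{P}_0}(Y^\top\alpha)+C\delta^{1/q}\|\alpha\|_*$. The main obstacle I anticipate is the rigorous min-max interchange; if Sion's hypotheses prove delicate (e.g., verifying tightness of Wasserstein balls under an order-$q$ cost when $q>1$), I would fall back on a direct two-sided argument: the inequality $\sup\inf\le\inf\sup$ holds automatically, and for the reverse I would exploit an $\varepsilon$-optimal minimizer $t_\varepsilon$ in the outer problem together with the explicit worst-case construction underlying Theorem \ref{W22thm4} to produce a near-optimal $\mathbb{Q}_\varepsilon\in\mathbb{B}_\delta(\mathbb{P}_0)$ that attains the right-hand side up to $\varepsilon$, then let $\varepsilon\downarrow 0$.
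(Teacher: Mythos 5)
The statement you are proving is not established in the paper at all: it is quoted verbatim in Appendix EC.1 as Corollary 1 of \cite{W22}, so there is no in-paper proof to match. Your reconstruction—reduce to the single-$t$ regularization identities of Theorem \ref{W22thm4} by interchanging the supremum over the ball with the infimum over $t$, absorb the shift $z\mapsto z-t$ into the parameters $b,b_1$, and then pull the $t$-independent term $C\delta^{1/q}\|\alpha\|_*$ through the monotone maps $x\mapsto x^{1/q}$ and $x\mapsto x^q$—is exactly the machinery the paper itself deploys for its original results (Lemma \ref{high_dimen_min_max} combined with Theorem \ref{W22thm4} in the proofs of Propositions \ref{general_min_exp}, \ref{pro2*}, \ref{general_min_exp_1}), so the route is sound in outline and is almost certainly how the cited corollary is obtained in \cite{W22}.

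Three points of rigor deserve tightening. First, for Sion's theorem you need a \emph{single} compact $t$-set that works uniformly over all $\mathbb{Q}|_Y$ in the ball; pointwise level-boundedness of $t\mapsto \E_{\mathbb{Q}|_Y}[\ell(Y^\top\alpha,t)]$ only gives a compact sublevel set per $\mathbb{Q}|_Y$. The paper's Lemma \ref{high_dimen_min_max} closes this gap by using the uniform Lipschitz constant and the ball radius to show $\{t(\mathbb{Q}|_Y):\mathbb{Q}|_Y\in\mathbb{B}_\delta(\mathbb{Q}_0)\}$ lies in a fixed ball around $t(\mathbb{Q}_0)$; your argument should do the same. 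This is also precisely where the hypothesis $C>1$ in part (ii) enters (coercivity of $t+C\left(\E[\bar\ell^q(\cdot-t)]\right)^{1/q}$ as $t\to-\infty$), which your sketch never uses. Second, your fallback argument for $\sup\inf\ge\inf\sup$ does not work as stated: the worst-case $\mathbb{Q}_\varepsilon$ constructed for a fixed $t_\varepsilon$ only bounds $\E_{\mathbb{Q}_\varepsilon}[\ell(\cdot,t_\varepsilon)]$ from below, whereas the quantity you need, $\inf_t\E_{\mathbb{Q}_\varepsilon}[\ell(\cdot,t)]$, may be attained at a different $t$ and be strictly smaller; so if Sion fails you cannot simply recycle the fixed-$t$ worst case. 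Third, the case $\ell(z,t)=C(|z|-t)_+$ in part (ii) needs the small observation that for $t\ge0$ it is of the form $\left(|x-b_1|-b_2\right)_+$ with $b_1=0,\,b_2=t$, while for $t<0$ it equals $|x|+(-t)$, i.e.\ form (d) of Theorem \ref{W22thm4}; without this case split the invocation of that theorem is incomplete.
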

\begin{theorem} \label{W22thm5}(\cite{W22} Theorem 5) 
	For $q \in[1, \infty]$, let $h:[0,1] \rightarrow \mathbb{R}$ be an increasing and convex distortion function satisfying $\lim _{x \rightarrow 1-} h(x)=h(1)$ and $\left\|h_{-}^{\prime}\right\|_{q^*} \in(0, \infty)$,  let $\ell: \mathbb{R} \rightarrow \mathbb{R}$ be a convex function, and let  $\mathbb{D}_{\mathcal{Y}}(y_1,y_2)=||y_1-y_2||_{\mathcal{Y}}^q$, for any $y_1,y_2\in\mathcal{Y}$.
	\begin{itemize}
		\item[(i)] 	Let $q=1$. If $\operatorname{Lip}(\ell)<\infty$, then for any $(p,\delta)\in \mathcal{V}$ with $\delta\ge 0$, we have
		$$
	\sup _{{\mathbb{Q}}|_{Y} \in \mathbb{B}_{\delta}(\sum_{i=1}^Np_i\mathrm{I}_{\widehat{y}_i})} \rho_{{\mathbb{Q}}|_{Y}}^h\left(\ell\left(Y^{\top} \alpha\right)\right)=\rho_{\sum_{i=1}^Np_i\mathrm{I}_{\widehat{y}_i}}^{h}\left(\ell\left(Y^{\top} \alpha\right)\right)+\operatorname{Lip}(\ell)\left\|h_{-}^{\prime}\right\|_{\infty} \delta\|\alpha\|_*.
		$$
		\item[(ii)] Let $q \in(1, \infty]$. If $\rho_h(|\ell(Z)|)<\infty$ for all $Z \in L^q$, then the following statements are equivalent.
		\begin{itemize}
			\item[(1)] For any $(p,\delta)\in \mathcal{V}$ with $\delta\ge 0$, there exists $C>0$ such that
			$$
			\sup _{{\mathbb{Q}}|_{Y} \in \mathbb{B}_{\delta}(\sum_{i=1}^Np_i\mathrm{I}_{\widehat{y}_i})} \rho_{{\mathbb{Q}}|_{Y}}^h\left(\ell\left(Y^{\top} \alpha\right)\right)=\rho_{\sum_{i=1}^Np_i\mathrm{I}_{\widehat{y}_i}}^{h}\left(\ell\left(Y^{\top} \alpha\right)\right)+C\left\|h_{-}^{\prime}\right\|_{q^*} \delta^{1/q}\|\alpha\|_*.
			$$
			\item[(2)] The function $\ell$ takes one of the following two forms, multiplied by $C$ :
			\begin{itemize}
				\item[(a)] $\ell_1(x)=x+b$ or $\ell_1(x)=-x+b$ with some $b \in \mathbb{R}$;
				\item[(b)] $\ell_2(x)=\left|x-b_1\right|+b_2$ with some $b_1, b_2 \in \mathbb{R}$.
			\end{itemize}	 
		\end{itemize}
	\end{itemize}
\end{theorem}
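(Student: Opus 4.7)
The plan is to address the two cases separately, using two structural ingredients: the projection property of Wasserstein balls with $q$-th power cost, and the quantile representation $\rho_h^{\mathbb{Q}}(Z) = \int_0^1 F_Z^{-1}(u)\, dh(u)$ available whenever $h$ is increasing and convex. First I would apply the projection step to reduce both parts to a one-dimensional problem on $Z = Y^\top\alpha$,
$$\sup_{\mathbb{Q}|_Y \in \mathbb{B}_\delta(\widehat{\mathbb{Q}})}\rho_h^{\mathbb{Q}|_Y}(\ell(Y^\top\alpha)) \;=\; \sup_{\mathbb{Q}|_Z \in \mathbb{B}_{\delta \|\alpha\|_*^q}(\widehat{\mathbb{Q}}_Z)}\rho_h^{\mathbb{Q}|_Z}(\ell(Z)),$$
so that thereafter I only work with univariate distributions and the $L^q$ metric on quantile functions, $W_q^q(\mathbb{Q}_1,\mathbb{Q}_2)=\int_0^1|F^{-1}_{\mathbb{Q}_1}(u)-F^{-1}_{\mathbb{Q}_2}(u)|^q\,du$.

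For part (i) ($q=1$), I would establish matching upper and lower bounds. The upper bound comes from a Kantorovich--Rubinstein-type Lipschitz estimate for distortion risk: using $|dh(u)|\le \|h'_-\|_\infty\,du$ and the quantile form of $W_1$, one gets $|\rho_h^{\mathbb{Q}_1}(\ell(Z))-\rho_h^{\mathbb{Q}_2}(\ell(Z))| \le \|h'_-\|_\infty \operatorname{Lip}(\ell)\,W_1(\mathbb{Q}_1,\mathbb{Q}_2)$ after pushing $\ell$ through the Wasserstein distance; combined with the projection step this yields the claimed additive term $\operatorname{Lip}(\ell)\|h'_-\|_\infty\delta\|\alpha\|_*$. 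For the lower bound I would construct, for each $\epsilon>0$, a perturbation of the empirical center whose atoms are shifted by a direction $v$ with $\|v\|_\mathcal{Y}=1$ and $\langle\alpha,v\rangle=\|\alpha\|_*$ (approximately, by duality of norms), with magnitudes chosen so that the $W_1$ transport cost equals $\delta$ and the ordering of shifted values aligns with the support of $h'_-$ that is $\|h'_-\|_\infty-\epsilon$ close to its essential supremum; this saturates the bound in the limit. A mollification argument extends the construction from smooth $\ell$ to general Lipschitz convex $\ell$.

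For part (ii) ($q\in(1,\infty]$), the easier ``(2)$\Rightarrow$(1)'' direction uses the quantile representation and Hölder's inequality. For $\ell(x)=\pm x+b$, translation invariance reduces the sup to $\sup\int_0^1(F^{-1}_\mathbb{Q}(u)-F^{-1}_{\widehat{\mathbb{Q}}}(u))\,h'_-(u)\,du$ subject to $\|F^{-1}_\mathbb{Q}-F^{-1}_{\widehat{\mathbb{Q}}}\|_q\le (\delta\|\alpha\|_*^q)^{1/q}$, which by Hölder is attained with value $C\|h'_-\|_{q^*}\delta^{1/q}\|\alpha\|_*$ when $F^{-1}_\mathbb{Q}-F^{-1}_{\widehat{\mathbb{Q}}}$ is proportional to $(h'_-)^{q^*/q}$; the existence of such a quantile (hence such a $\mathbb{Q}$) is standard because the proportionality preserves monotonicity. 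The case $\ell_2(x)=|x-b_1|+b_2$ is handled by splitting the quantile into the portions above and below $b_1$ and applying the linear case on each piece, where the absolute value exactly preserves the $L^q$-budget.

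The ``(1)$\Rightarrow$(2)'' necessity is the main obstacle. My approach is to exploit the freedom to vary $(p,\delta)$ and $\alpha$ and to identify the rigidity conditions in Hölder. Fix a center $\widehat{\mathbb{Q}}_Z$ with a smooth density supported on a compact interval, and for small $\delta$ expand the worst-case value to first order: the optimality conditions force the optimal quantile perturbation $\Delta F^{-1}$ to be proportional to the composition $h'_-\circ F_{\ell(Z)}$ weighted by the subgradient field of $\ell$. Equality in the claimed regularization expression with constant $C\|h'_-\|_{q^*}$ (\emph{uniformly} across $(p,\delta)$) then demands that this composition factor through $h'_-$ alone, which forces $\ell$ to have a subdifferential that is piecewise constant in value --- i.e., $\ell$ is piecewise linear with at most one kink. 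A final case analysis, combined with convexity of $\ell$ and the requirement that the kink at $b_1$ correspond to the symmetric shape $|x-b_1|+b_2$, rules out all other possibilities. Handling the non-smoothness of $\ell$ and the rank reordering induced by composing with $\ell$ is the delicate technical issue; I would manage it by approximating $\ell$ with smooth convex mollifications, working out the rigidity on each mollified problem, and passing to the limit, which preserves the two canonical forms.
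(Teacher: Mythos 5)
First, note that the paper itself does not prove this statement: Theorem EC.\ref{W22thm5} is quoted verbatim from \cite{W22} (Theorem 5) as background in Appendix EC.1, so there is no in-paper argument to compare yours against; your proposal has to stand on its own. The sufficiency half of your plan is essentially the standard route and looks sound: the projection property reduces to the univariate problem with radius $\delta\|\alpha\|_*^q$, the quantile representation plus H\"older gives the upper bound $\operatorname{Lip}(\ell)\|h'_-\|_{q^*}\delta^{1/q}\|\alpha\|_*$ for any Lipschitz convex $\ell$, and for the affine case the additive perturbation $F^{-1}\mapsto F^{-1}+c\,(h'_-)^{q^*-1}$ attains H\"older equality (monotonicity is preserved because $h'_-$ is nondecreasing), giving exactly $C\|h'_-\|_{q^*}\delta^{1/q}\|\alpha\|_*$. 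Two caveats: for $\ell_2(x)=|x-b_1|+b_2$ the saturating perturbation must be indexed by the rank of $\ell(Z)=|Z-b_1|$ (push the point at the $u$-th quantile of $|Z-b_1|$ away from $b_1$ by $c\,(h'_-(u))^{q^*-1}$), not obtained by ``applying the linear case on each side of $b_1$''; splitting at $b_1$ does not by itself control the rank interleaving of the two sides, which is precisely what you need for the Lipschitz step and the H\"older step to be simultaneously tight. Also, the statement covers $q=\infty$, which your $\delta^{1/q}$/quantile-$L^q$ manipulations do not address.

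The genuine gap is the necessity direction (1)$\Rightarrow$(2). Your argument is a heuristic first-order rigidity claim: you assert that the optimal quantile perturbation ``must be proportional to $h'_-\circ F_{\ell(Z)}$ weighted by the subgradient field of $\ell$'' and that uniform equality of the constant then forces $|\ell'|\equiv C$, but neither step is established. Concretely: (a) you fix a center with a smooth compactly supported density, whereas the admissible centers in the statement are the discrete measures $\sum_i p_i\mathrm{I}_{\widehat y_i}$ with $(p,\delta)\in\mathcal{V}$, so your test distributions are not even in the family over which (1) quantifies, and the argument must be recast to extract information from varying $p$, $\delta$ and $\alpha$ within that family; (b) the key exclusions are not carried out: one must rule out flat or low-slope pieces (e.g.\ $(x-b)_+$ or $(|x-b_1|-b_2)_+$, for which a center concentrated in the flat region yields a worst-case value strictly below the claimed formula for small $\delta$) and unequal slope magnitudes (tested by centers located far in each tail, where $\ell$ is effectively affine with slope $s_2$ or $|s_1|$, forcing $|s_1|=s_2=C$); your sketch gestures at a ``final case analysis'' but never produces these tests, and the composition/rank-reordering issue you flag as ``delicate'' is exactly where a first-order expansion can fail for non-monotone $\ell$. (c) The mollification-and-pass-to-the-limit step is asserted, not argued, and it is not clear that the exact-equality hypothesis (1) transfers to the mollified losses, since mollification generally destroys the equality you are trying to exploit. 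As written, part (ii) necessity is a plan rather than a proof.
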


\newpage
\subsection*{EC.2. Main Proofs}
\subsection*{EC.2.1. Proofs of Section \ref{main-result}}
\begin{proof}[{\bf Proof of Proposition \ref{feasible_con}}]
    To obtain the sufficient and necessary feasible condition of problem \eqref{m1}, we only need to find the sufficient and necessary feasible condition of problem \eqref{innerproblem*} and thus only need to find  the sufficient and necessary condition such that the ambiguity set $\mathcal{K}({\mathcal{N}_\gamma\left(x_0\right),\mathcal{N}_{\omega},\mathbb{B}_{\delta_0}}(\widehat{\mathbb{P}}))$ is not empty. We consider the following three cases: (i) $\frac{N-m}{N}>\omega_2$; (ii) $\frac{N-m}{N}\in[\omega_1,\omega_2]=\mathcal{N}_\omega$; and (iii) $\frac{N-m}{N}<\omega_1$. Before the discussion case by case, we first note that 
    \begin{align*}
        \mathcal{K}({\mathcal{N}_\gamma\left(x_0\right),\mathcal{N}_{\omega},\mathbb{B}_{\delta_0}}(\widehat{\mathbb{P}}))=\left(\mathop{\bigcup}\limits_{(p, \delta) \in \mathcal{V}_1} \mathbb{B}_{\delta}\left(\sum_{i=1}^N p_i \mathrm{I}_{\widehat{y}_i}\right)\right)\mathop{\bigcup}\left(\mathop{\bigcup}\limits_{(p, \delta) \in \mathcal{V}_2} \mathbb{B}_{\delta}^{\circ}\left(\sum_{i=1}^N p_i \mathrm{I}_{\widehat{y}_i}\right)\right):=\mathcal{U_B}
    \end{align*}
    from the proof of Theorem \ref{thm1}, where $\mathcal{V}_1=\{(p,\delta):\exists \epsilon \text{~such that }(p,\delta,\epsilon)\in \mathcal{V}_0, p_i=\frac{\epsilon}{N}, \text{ for }i=m+1,..,N\}$ and $\mathcal{V}_2=\{(p,\delta):\exists \epsilon \text{~such that }(p,\delta,\epsilon)\in \mathcal{V}_0, \exists i\in\{m+1,..,N\}\text{~such that }p_i<\frac{\epsilon}{N}\}$. Thus,  $\mathcal{K}({\mathcal{N}_\gamma\left(x_0\right),\mathcal{N}_{\omega},\mathbb{B}_{\delta_0}}(\widehat{\mathbb{P}}))$ is not empty is equivalent to $\mathcal{U_B}$ is not empty.
    \begin{itemize}
        \item[(i)] For $\frac{N-m}{N}>\omega_2$, one can verify that $\mathcal{V}_1=\emptyset$ due to that for every $(p,\delta)\in\mathcal{V}_1$, we have $\sum_{i=m+1}^Np_i=\frac{N-m}{N}\epsilon>1$, which is contradictory to the constraint $\sum_{i=1}^Np_i=1$. Then, we have 
    \begin{align*}
        \mathcal{K}({\mathcal{N}_\gamma\left(x_0\right),\mathcal{N}_{\omega},\mathbb{B}_{\delta_0}}(\widehat{\mathbb{P}}))=\mathop{\bigcup}\limits_{(p, \delta) \in \mathcal{V}_2} \mathbb{B}_{\delta}^{\circ}\left(\sum_{i=1}^N p_i \mathrm{I}_{\widehat{y}_i}\right)=\mathop{\bigcup}\limits_{(p, \delta) \in \mathcal{V}} \mathbb{B}_{\delta}^{\circ}\left(\sum_{i=1}^N p_i \mathrm{I}_{\widehat{y}_i}\right),
    \end{align*}
    where the second equality follows from that $\mathcal{V}=\mathcal{V}_1\cup\mathcal{V}_2$ and $\mathcal{V}_1=\emptyset$. Thus, $\mathcal{K}({\mathcal{N}_\gamma\left(x_0\right),\mathcal{N}_{\omega},\mathbb{B}_{\delta_0}}(\widehat{\mathbb{P}}))$ is not empty if and only if there exists $(p,\delta)\in\mathcal{V}$ such that $\delta>0$, and thus if and only if there exists $(p,\delta,\epsilon)\in\mathcal{V}_0$ such that $\delta = \epsilon\left(\delta_0-\frac{1}{N}\sum_{i=m+1}^Nd_i\right)-\sum_{i=1}^{m}p_id_i+\sum_{i=m+1}^Np_id_i>0$. One can verify that there exists $(p,\delta,\epsilon)\in\mathcal{V}_0$ such that $\delta>0$ is equivalent to
    \begin{align*}
        \max_{\substack{ p_i\in\left[0,\frac{\epsilon}{N}\right], i=1,..., N, \\\sum_{i=1}^Np_i=1,\epsilon\in [\frac{1}{\omega_2},\frac{1}{\omega_1}]}}\epsilon\left(\delta_0-\frac{1}{N}\sum_{i=m+1}^Nd_i\right)-\sum_{i=1}^{m}p_id_i+\sum_{i=m+1}^Np_id_i>0,
    \end{align*}
    and thus equivalent to
    \begin{align*}
        \delta_0>&\min_{\substack{ p_i\in\left[0,\frac{\epsilon}{N}\right], i=1,..., N, \\\sum_{i=1}^Np_i=1,\epsilon\in [\frac{1}{\omega_2},\frac{1}{\omega_1}]}}\frac{1}{N}\left(\sum_{i=1}^{m}\frac{Np_i}{\epsilon}d_i+\sum_{i=m+1}^N\left(1-\frac{Np_i}{\epsilon}\right)d_i\right)\\
        =&\min_{\substack{v_i\in[0,1],i=1,...,N, \\ 1/N\sum_{i=1}^Nv_i\in[\omega_1,\omega_2]}}\frac{1}{N}\left(\sum_{i=1}^mv_id_i+\sum_{i=m+1}^N(1-v_i)d_i\right)\\
        =&\delta_{min}.
    \end{align*}
    We conclude that in this case problem \eqref{m1} is feasible if and only if $\delta_0>\delta_{min}$. 
    \item[(ii)] For $\frac{N-m}{N}\in[\omega_1,\omega_2]=\mathcal{N}_\omega$, we find that $\widehat{\mathbb{P}}\left(X \in \mathcal{N}_\gamma\left(x_0\right)\right) \in[\omega_1,\omega_2]=\mathcal{N}_{\omega}$ and then $\widehat{\mathbb{P}}|_Y\in \mathcal{K}({\mathcal{N}_\gamma\left(x_0\right),\mathcal{N}_{\omega},\mathbb{B}_{\delta_0}}(\widehat{\mathbb{P}}))$. Thus, $\mathcal{K}({\mathcal{N}_\gamma\left(x_0\right),\mathcal{N}_{\omega},\mathbb{B}_{\delta_0}}(\widehat{\mathbb{P}}))$ is not empty if and only if $\delta_0\ge 0$. Moreover, $\min_{v_i\in[0,1],i=1,...,N}\frac{1}{N}\left(\sum_{i=1}^mv_id_i+\sum_{i=m+1}^N(1-v_i)d_i\right)\ge 0$ and can attain lower bound $0$ in $v_i=0$ for $i=1,...,m$, and $v_i=1$ for $i=m+1,...,N$, and we find that $\sum_{i=1}^Nv_i=\frac{N-m}{N}\in[\omega_1,\omega_2]$. Thus, $$\delta_{min}=\min_{\substack{v_i\in[0,1],i=1,...,N, \\ 1/N\sum_{i=1}^Nv_i\in[\omega_1,\omega_2]}}\frac{1}{N}\left(\sum_{i=1}^mv_id_i+\sum_{i=m+1}^N(1-v_i)d_i\right)=0.$$
    From above discussion, we conclude that in this case problem \eqref{m1} is feasible if and only if $\delta_0\ge\delta_{min}$. 
    \item[(iii)] For $\frac{N-m}{N}<\omega_1$, if $\delta_0\ge \delta_{min}$, we consider in the two cases: (iiia) $\delta_0>\delta_{min}$, and (iiib) $\delta_0=\delta_{min}$.
    \begin{itemize}
        \item[(iiia)] If $\delta_0>\delta_{min}$, there exist $\{v_i\}_{i=1}^N$ satisfies that $v_i\in[0,1],i=1,...,N, 1/N\sum_{i=1}^Nv_i\in[\omega_1,\omega_2]$ such that $\delta_0>\frac{1}{N}\left(\sum_{i=1}^mv_id_i+\sum_{i=m+1}^N(1-v_i)d_i\right)$. Given $\epsilon\in[\frac{1}{\omega_2},\frac{1}{\omega_1}]$, let  $p_i=\frac{\epsilon v_i}{N}\in[0,\frac{\epsilon}{N}]$, and $\delta=\epsilon\left(\delta_0-\frac{1}{N}\sum_{i=m+1}^Nd_i\right)-\sum_{i=1}^{m}p_id_i+\sum_{i=m+1}^Np_id_i$. Then $$\delta=\epsilon\left[\delta_0-\frac{1}{N}\left(\sum_{i=1}^mv_id_i+\sum_{i=m+1}^N(1-v_i)d_i\right)\right]>0,$$ and thus there exists $(p,\delta)\in\mathcal{V}$ such that $\delta>0$. As a result, there exists $(p,\delta)\in\mathcal{V}$ such that $\mathbb{B}_{\delta}^{\circ}\left(\sum_{i=1}^N p_i \mathrm{I}_{\widehat{y}_i}\right)$ is not empty. From the construction of $\mathcal{K}({\mathcal{N}_\gamma\left(x_0\right),\mathcal{N}_{\omega},\mathbb{B}_{\delta_0}}(\widehat{\mathbb{P}}))$, we conclude that $\mathcal{K}({\mathcal{N}_\gamma\left(x_0\right),\mathcal{N}_{\omega},\mathbb{B}_{\delta_0}}(\widehat{\mathbb{P}}))$ is not empty.
        \item[(iiib)]  If $\delta_0=\delta_{min}$, from the definition of $\delta_{min}$, we find that there exist $\{v_i^*\}_{i=1}^N$ satisfies that $v_i^*\in[0,1],i=1,...,N, 1/N\sum_{i=1}^Nv_i^*\in[\omega_1,\omega_2]$ such that  $$\delta_0=\delta_{min}=\frac{1}{N}\left(\sum_{i=1}^mv_i^*d_i+\sum_{i=m+1}^N(1-v_i^*)d_i\right)$$ due to that $\delta_{min}$ is a linear problem over a closed and convex set. We claim that $\{v_i^*\}_{i=1}^N$ must satisfy that $v_i^*=1$ for $i=m+1,..,N$. Otherwise, without loss of generality, we assume $v_i^*=1$ for $i=m+1,...,k$ and $v_i^*<1$ for $i=k+1,...,N$. Let $v_i=v_i^*-\Delta_i\ge 0$ for $i=1,..,m$ and $v_i=1$ for $i=m+1,...,N$, where $\Delta_i\ge 0$ and $\sum_{i=1}^m\Delta_i=\sum_{i=k+1}^N(1-v_i^*)$. We find $\{v_i\}_{i=1}^N$ satisfies that $v_i\in[0,1],i=1,...,N,  1/N\sum_{i=1}^Nv_i\in[\omega_1,\omega_2]$ and such $\{v_i\}_{i=1}^N$ must exists due to that $\frac{N-m}{N}<\omega_1\le \omega_2$. Thus we have $\sum_{i=1}^{m}v_id_i+\sum_{i=m+1}^N(1-v_i)d_i<\sum_{i=1}^{m}v_i^*d_i+\sum_{i=m+1}^N(1-v_i^*)d_i=\delta_{min}$ and the strict inequality contradicts the definition of $\delta_{min}$. As a result, the claim holds. Given $\epsilon\in[\frac{1}{\omega_2},\frac{1}{\omega_1}]$, using the same construction on (iiia), we find that there exists $(p,\delta)\in\mathcal{V}$ such that $\delta=0$ with $p_i=\frac{\epsilon}{N}$ for $i=m+1,...,N$. Thus, $(p,\delta)\in\mathcal{V}_1$. As a result, there exists $(p,\delta)\in\mathcal{V}_1$ such that $\mathbb{B}_{\delta}\left(\sum_{i=1}^N p_i \mathrm{I}_{\widehat{y}_i}\right)=\{\sum_{i=1}^N p_i \mathrm{I}_{\widehat{y}_i}\}$ is not empty. From the construction of $\mathcal{K}({\mathcal{N}_\gamma\left(x_0\right),\mathcal{N}_{\omega},\mathbb{B}_{\delta_0}}(\widehat{\mathbb{P}}))$, we conclude that $\mathcal{K}({\mathcal{N}_\gamma\left(x_0\right),\mathcal{N}_{\omega},\mathbb{B}_{\delta_0}}(\widehat{\mathbb{P}}))$ is not empty.
    \end{itemize}
    Combining the above two cases, we find that if $\delta_0\ge \delta_{min}$, $\mathcal{K}({\mathcal{N}_\gamma\left(x_0\right),\mathcal{N}_{\omega},\mathbb{B}_{\delta_0}}(\widehat{\mathbb{P}}))$ is not empty, and thus problem \eqref{m1} is feasible. For the ``only if" part, if $\delta_0<\delta_{min}$, one can verify that for any $(p,\delta)\in\mathcal{V}$,
    \begin{align*}
        \delta=&\epsilon\left(\delta_0-\frac{1}{N}\sum_{i=m+1}^Nd_i\right)-\sum_{i=1}^{m}p_id_i+\sum_{i=m+1}^Np_id_i\\
        <&\epsilon\left(\delta_{min}-\frac{1}{N}\sum_{i=m+1}^Nd_i\right)-\sum_{i=1}^{m}p_id_i+\sum_{i=m+1}^Np_id_i\\
        =&\min_{\substack{v_i\in[0,1],i=1,...,N, \\ 1/N\sum_{i=1}^Nv_i\in[\omega_1,\omega_2]}}\sum_{i=1}^m\left(\frac{\epsilon v_i}{N}-p_i\right)d_i+\sum_{i=m+1}^N\left(p-\frac{\epsilon v_i}{N}\right)d_i\\
        \le&0,
    \end{align*}
    where the last inequality follows from that there exists $\{v_i^0\}_{i=1}^N=\{Np_i/\epsilon\}_{i=1}^N$ satisfies $v_i^0\in[0,1],i=1,...,N,$ and  $1/N\sum_{i=1}^Nv_i^0\in[\omega_1,\omega_2]$ such that $\sum_{i=1}^m\left(\frac{\epsilon v_i^0}{N}-p_i\right)d_i+\sum_{i=m+1}^N\left(p-\frac{\epsilon v_i^0}{N}\right)d_i=0$. From the construction of $\mathcal{K}({\mathcal{N}_\gamma\left(x_0\right),\mathcal{N}_{\omega},\mathbb{B}_{\delta_0}}(\widehat{\mathbb{P}}))$, we conclude that $\mathcal{K}({\mathcal{N}_\gamma\left(x_0\right),\mathcal{N}_{\omega},\mathbb{B}_{\delta_0}}(\widehat{\mathbb{P}}))$ is empty. From above discussion, we conclude that in this case problem \eqref{m1} is feasible if and only if $\delta_0\ge\delta_{min}$. 
    \end{itemize}
    \qedhere
\end{proof}

\begin{proof}[{\bf Proof of Theorem \ref{thm1}}]
	Let $\mathcal{V}_1=\{(p,\delta):\exists \epsilon \text{~such that }(p,\delta,\epsilon)\in \mathcal{V}_0, p_i=\frac{\epsilon}{N}, \text{ for }i=m+1,..,N\}$ and $\mathcal{V}_2=\{(p,\delta):\exists \epsilon \text{~such that }(p,\delta,\epsilon)\in \mathcal{V}_0, \exists i\in\{m+1,..,N\}\text{~such that }p_i<\frac{\epsilon}{N}\}$. Then $\mathcal{V}=\mathcal{V}_1\cup\mathcal{V}_2$. We first prove that 
	\begin{align*}
		\mathcal{K}({\mathcal{N}_\gamma\left(x_0\right),\mathcal{N}_{\omega},\mathbb{B}_{\delta_0}}(\widehat{\mathbb{P}}))= \left(\mathop{\bigcup}\limits_{(p, \delta) \in \mathcal{V}_1} \mathbb{B}_{\delta}\left(\sum_{i=1}^N p_i \mathrm{I}_{\widehat{y}_i}\right)\right)\mathop{\bigcup}\left(\mathop{\bigcup}\limits_{(p, \delta) \in \mathcal{V}_2} \mathbb{B}_{\delta}^{\circ}\left(\sum_{i=1}^N p_i \mathrm{I}_{\widehat{y}_i}\right)\right),
	\end{align*}
	where $\mathbb{B}_{\delta}\left(\sum_{i=1}^Np_i\mathrm{I}_{\widehat{y}_i}\right)$ is the ambiguity set on $\mathcal{Y}$, defined as \begin{align*}
		\mathbb{B}_{\delta}\left(\sum_{i=1}^Np_i\mathrm{I}_{\widehat{y}_i}\right)=\left\{\mathbb{Q}|_Y\in \mathcal{M}(\mathcal{Y}):\mathcal{W}\left(\sum_{i=1}^Np_i\mathrm{I}_{\widehat{y}_i},\mathbb{Q}|_Y\right)\le \delta\right\},
	\end{align*}
	and $\mathbb{B}_{\delta}^{\circ}\left(\sum_{i=1}^Np_i\mathrm{I}_{\widehat{y}_i}\right)$ is its interior point set, defined as
	\begin{align*}
		\mathbb{B}_{\delta}^{\circ}\left(\sum_{i=1}^Np_i\mathrm{I}_{\widehat{y}_i}\right)=\left\{\mathbb{Q}|_Y\in\mathcal{M}(\mathcal{Y}):\mathcal{W}\left(\sum_{i=1}^Np_i\mathrm{I}_{\widehat{y}_i},\mathbb{Q}|_Y\right)< \delta\right\}.
	\end{align*}
	To obtain the set equivalence, we first prove the following containment relation \begin{align*}
		\mathcal{K}({\mathcal{N}_\gamma\left(x_0\right),\mathcal{N}_{\omega},\mathbb{B}_{\delta_0}}(\widehat{\mathbb{P}}))\subset\left(\mathop{\bigcup}\limits_{(p, \delta) \in \mathcal{V}_1} \mathbb{B}_{\delta}\left(\sum_{i=1}^N p_i \mathrm{I}_{\widehat{y}_i}\right)\right)\mathop{\bigcup}\left(\mathop{\bigcup}\limits_{(p, \delta) \in \mathcal{V}_2} \mathbb{B}_{\delta}^{\circ}\left(\sum_{i=1}^N p_i \mathrm{I}_{\widehat{y}_i}\right)\right).
	\end{align*}
	For any $\mathbb{Q} \in \mathbb{B}_{\delta_0}(\widehat{\mathbb{P}}), \mathbb{Q}\left(X \in \mathcal{N}_\gamma\left(x_0\right)\right) \in\mathcal{N}_{\omega}$, assume that the optimal transport between $\widehat{\mathbb{P}}$ and $\mathbb{Q}$ is $\pi^*$. Then  $\mathbb{Q}$ have the following representation by support set and $\pi^*$,
	$$
	\mathbb{Q}=\sum_{i=1}^Np_i\mathbb{Q}_i^1+\sum_{i=1}^N\left(\frac{1}{N}-p_i\right)\mathbb{Q}_i^2,
	$$
	where $\text{supp }\mathbb{Q}_i^1\subset\{(x,y):x\in\mathcal{N}_\gamma\left(x_0\right)\}$, $\text{supp }\mathbb{Q}_i^2\subset\mathcal{X}\times\mathcal{Y}\backslash\{(x,y):x\in\mathcal{N}_\gamma\left(x_0\right)\}$, $\mathbb{Q}_i^1$ and $\mathbb{Q}_i^2$ are transferred from $\mathrm{I}_{(\widehat{x}_i,\widehat{y}_i)}$ under $\pi^*$ with probability $p_i$ and $\frac{1}{N}-p_i$, respectively, $p_i\in[0,\frac{1}{N}]$ and $\sum_{i=1}^Np_i\in\mathcal{N}_{\omega}$. Based on this decomposition of $\mathbb{Q}$, if the joint distribution of $(X,Y)$ is $\mathbb{Q}$, the conditional distribution of $Y$ given $X\in \mathcal{N}_\gamma(x_0)$, $\mathbb{Q}|_Y$, can be written as $\mathbb{Q}|_Y=\frac{\sum_{i=1}^Np_i\mathbb{Q}_i^1|_Y}{\sum_{i=1}^Np_i}$. Note that $\text{supp }\mathbb{Q}_i^{2}\subset \mathcal{X}\times \mathcal{Y}\backslash  \{(x,y):x \in \mathcal{N}_\gamma\left(x_0\right)\}$, $(\widehat{x}_i,\widehat{y}_i)\in\{(x,y):x \in \mathcal{N}_\gamma\left(x_0\right)\}$, $i=m+1,...,N$, and  $\widehat{x}_i^p=\arg\min_{x \in \partial\mathcal{N}_\gamma\left(x_0\right)} \mathbb{D}_{\mathcal{X}}\left(x, \widehat{x}_i\right)$.  Then $\int\mathbb{D}_{\mathcal{X}}(\widehat{x}_i,x)~d\mathbb{Q}_i^{2}(x,y)>d_i$ for $i=m+1,...,N$. Due to the continuity of $\mathbb{D}_{\mathcal{X}}$, there exists $\phi_\mathbb{Q}^i>0$ and $\psi_\mathbb{Q}^i>0,~i=m+1,...,N$ such that $\int\mathbb{D}_{\mathcal{X}}(\widehat{x}_i,x)~d\mathbb{Q}_i^{2}(x,y)\ge \mathbb{D}_{\mathcal{X}}(\widehat{x}_i,\widehat{x}_i^p+\phi^i_\mathbb{Q}{(\widehat{x}_i^p-\widehat{x}_i)})=d_i+\psi^i_\mathbb{Q}$. Let $\psi_\mathbb{Q}=\min_{i=m+1,...,N}\psi_\mathbb{Q}^i$. 
	One can verify that under the optimal transport $\pi^*$, we have
	$$
	\begin{aligned}
		\mathcal{W}(\widehat{\mathbb{P}},\mathbb{Q})=&\sum_{i=1}^mp_i\int \mathbb{D}_{\mathcal{X}}(\widehat{x}_i,x)+\mathbb{D}_{\mathcal{Y}}(\widehat{y}_i,y)~d\mathbb{Q}_i^{1}(x,y)\\
		&+\sum_{i=1}^m(\frac{1}{N}-p_i)\int \mathbb{D}_{\mathcal{X}}(\widehat{x}_i,x)+\mathbb{D}_{\mathcal{Y}}(\widehat{y}_i,y)~d\mathbb{Q}_i^{2}(x,y)\\
		&+\sum_{i=m+1}^Np_i\int \mathbb{D}_{\mathcal{X}}(\widehat{x}_i,x)+\mathbb{D}_{\mathcal{Y}}(\widehat{y}_i,y)~d\mathbb{Q}_i^{1}(x,y)\\
		&+\sum_{i=m+1}^N(\frac{1}{N}-p_i)\int \mathbb{D}_{\mathcal{X}}(\widehat{x}_i,x)+\mathbb{D}_{\mathcal{Y}}(\widehat{y}_i,y)~d\mathbb{Q}_i^{2}(x,y)\\
		\ge&\sum_{i=1}^mp_i\int d_i+\mathbb{D}_{\mathcal{Y}}(\widehat{y}_i,y)~d\mathbb{Q}_i^{1}(x,y)\\
		&+\sum_{i=1}^m(\frac{1}{N}-p_i)\int \mathbb{D}_{\mathcal{X}}(\widehat{x}_i,\widehat{x}_i)+\mathbb{D}_{\mathcal{Y}}(\widehat{y}_i,\widehat{y}_i)~d\mathbb{Q}_i^{2}(x,y)\\
		&+\sum_{i=m+1}^Np_i\int \mathbb{D}_{\mathcal{X}}(\widehat{x}_i,\widehat{x}_i)+\mathbb{D}_{\mathcal{Y}}(\widehat{y}_i,y)~d\mathbb{Q}_i^{1}(x,y)\\
		&+\sum_{i=m+1}^N(\frac{1}{N}-p_i)\int \mathbb{D}_{\mathcal{X}}(\widehat{x}_i,\widehat{x}_i^p+\phi_{\mathbb{Q}}^i{(\widehat{x}_i^p-\widehat{x}_i)})+\mathbb{D}_{\mathcal{Y}}(\widehat{y}_i,\widehat{y}_i)~d\mathbb{Q}_i^{2}(x,y).
	\end{aligned}
	$$
	Rearranging and combining the equations, and applying the definition of the optimal transport, we obtain
	\begin{align}\label{decompose}
		\mathcal{W}(\widehat{\mathbb{P}},\mathbb{Q})\ge&\sum_{i=1}^Np_i\int\mathbb{D}_{\mathcal{Y}}(\widehat{y}_i,y)~d\mathbb{Q}_i^{1}|_{Y}(y)+\sum_{i=1}^mp_id_i+\sum_{i=m+1}^N\left(\frac{1}{N}-p_i\right)d_i+\sum_{i=m+1}^N\left(\frac{1}{N}-p_i\right)\psi_{\mathbb{Q}}\notag\\
		\ge&\left(\sum_{j=1}^Np_j\right)\mathcal{W}(\sum_{i=1}^N\frac{p_i}{\sum_{j=1}^Np_j}\mathrm{I}_{\widehat{y}_i},\sum_{i=1}^N\frac{p_i}{\sum_{j=1}^Np_j}\mathbb{Q}_i^{1}|_{Y})\notag\\
		&+\sum_{i=1}^mp_id_i+\sum_{i=m+1}^N\left(\frac{1}{N}-p_i\right)d_i+\sum_{i=m+1}^N\left(\frac{1}{N}-p_i\right)\psi_{\mathbb{Q}}.
	\end{align}
	Denote $\epsilon=\frac{1}{\sum_{j=1}^Np_j}$, and $p_i'=\frac{p_i}{\sum_{j=1}^Np_j}$, $i=1,...,N$. Then $\frac{1}{\epsilon}\in\mathcal{N}_{\omega}$, $p_i'\in[0,\frac{\epsilon}{N}]$, $i=1,...,N$, $\sum_{i=1}^Np_i'=1$, and $\mathbb{Q}|_Y=\sum_{i=1}^Np_i'\mathbb{Q}_i^1|_Y$. Let $\widehat{\mathbb{P}}^*|_{Y}=\sum_{i=1}^Np_i'\mathrm{I}_{\widehat{y}_i}.$ Then inequality \eqref{decompose} can be rearranged as
	\begin{align}\label{decompse1}
		\mathcal{W}(\widehat{\mathbb{P}}^*|_{Y},\mathbb{Q}|_Y)\le& \epsilon \mathcal{W}(\widehat{\mathbb{P}},\mathbb{Q})-\sum_{i=1}^mp_i'd_i-\sum_{i=m+1}^N\left(\frac{\epsilon}{N}-p_i'\right)d_i-\sum_{i=m+1}^N\left(\frac{\epsilon}{N}-p_i'\right)\psi_{\mathbb{Q}}\notag\\
		\le&\epsilon\delta_0-\sum_{i=1}^mp_i'd_i-\sum_{i=m+1}^N\left(\frac{\epsilon}{N}-p_i'\right)d_i-\sum_{i=m+1}^N\left(\frac{\epsilon}{N}-p_i'\right)\psi_{\mathbb{Q}}.
	\end{align}
    We consider in the following two cases.
    \begin{itemize}
        \item[(i)] If $p_i'=\frac{\epsilon}{N}$ for $i=m+1,...,N$, from inequality \eqref{decompse1}, we have
        \begin{align}\label{decompose2_V1}
            \mathcal{W}(\widehat{\mathbb{P}}^*|_{Y},\mathbb{Q}|_Y)
		\le&\epsilon\delta_0-\sum_{i=1}^mp_i'd_i-\sum_{i=m+1}^N\left(\frac{\epsilon}{N}-p_i'\right)d_i-\sum_{i=m+1}^N\left(\frac{\epsilon}{N}-p_i'\right)\psi_{\mathbb{Q}}\notag\\
        =&\epsilon\delta_0-\sum_{i=1}^mp_i'd_i=\delta.
        \end{align}
        \item[(ii)] If there exists $i\in\{m+1,...,N\}$ such that $p_i'<\frac{\epsilon}{N}$, from inequality \eqref{decompse1}, we have
        \begin{align}\label{decompose2_V2}
            \mathcal{W}(\widehat{\mathbb{P}}^*|_{Y},\mathbb{Q}|_Y)
		\le&\epsilon\delta_0-\sum_{i=1}^mp_i'd_i-\sum_{i=m+1}^N\left(\frac{\epsilon}{N}-p_i'\right)d_i-\sum_{i=m+1}^N\left(\frac{\epsilon}{N}-p_i'\right)\psi_{\mathbb{Q}}\notag\\
        <&\epsilon\delta_0-\sum_{i=1}^mp_i'd_i-\sum_{i=m+1}^N\left(\frac{\epsilon}{N}-p_i'\right)d_i=\delta.
        \end{align}
    \end{itemize}
	We conclude that, for any $\mathbb{Q}\in  \mathbb{B}_{\delta_0}(\widehat{\mathbb{P}}), \mathbb{Q}\left(X \in \mathcal{N}_\gamma\left(x_0\right)\right) \in\mathcal{N}_{\omega}$, the conditional distribution of $Y$ given $X\in \mathcal{N}_\gamma\left(x_0\right)$, $\mathbb{Q}|_Y$, satisfies the inequality \eqref{decompose2_V1} or inequality \eqref{decompose2_V2}, where $\frac{1}{\epsilon}\in\mathcal{N}_{\omega}$, $p_i'\in[0,\frac{\epsilon}{N}]$, $i=1,...,N$, $\sum_{i=1}^Np_i'=1$. Thus, we have 
	\begin{align*}
		\mathcal{K}({\mathcal{N}_\gamma\left(x_0\right),\mathcal{N}_{\omega},\mathbb{B}_{\delta_0}(\widehat{\mathbb{P}})})\subset\left(\mathop{\bigcup}\limits_{(p, \delta) \in \mathcal{V}_1} \mathbb{B}_{\delta}\left(\sum_{i=1}^N p_i \mathrm{I}_{\widehat{y}_i}\right)\right)\mathop{\bigcup}\left(\mathop{\bigcup}\limits_{(p, \delta) \in \mathcal{V}_2} \mathbb{B}_{\delta}^{\circ}\left(\sum_{i=1}^N p_i \mathrm{I}_{\widehat{y}_i}\right)\right).
	\end{align*}
	Next, we want to prove the inverse inclusion. For any distribution $$\mathbb{Q|}_{Y}
	\in \left(\mathop{\bigcup}\limits_{(p, \delta) \in \mathcal{V}_1} \mathbb{B}_{\delta}\left(\sum_{i=1}^N p_i \mathrm{I}_{\widehat{y}_i}\right)\right)\mathop{\bigcup}\left(\mathop{\bigcup}\limits_{(p, \delta) \in \mathcal{V}_2} \mathbb{B}_{\delta}^{\circ}\left(\sum_{i=1}^N p_i \mathrm{I}_{\widehat{y}_i}\right)\right),$$ there exists $(p, \delta) \in \mathcal{V}_1$ such that $$\mathcal{W}(\sum_{i=1}^Np_i\mathrm{I}_{\widehat{y}_i},\mathbb{Q}|_Y)\le\delta,$$ or exists $(p, \delta) \in \mathcal{V}_2$ such that $$\mathcal{W}(\sum_{i=1}^Np_i\mathrm{I}_{\widehat{y}_i},\mathbb{Q}|_Y)<\delta.$$  
    Assume that the optimal transport between $\sum_{i=1}^Np_i\mathrm{I}_{\widehat{y}_i}$ and $\mathbb{Q|}_{Y}$ is $\pi^*|_{Y}$. Under the optimal transport, $\mathbb{Q|}_{Y}$ has the following decomposition:
	$$
	\mathbb{Q}|_{Y}=\sum_{i=1}^Np_i\mathbb{Q}_i|_{Y},
	$$
	where $\mathbb{Q}_i|_{Y}$ is transferred from $\mathrm{I}_{\widehat{y}_i}$ under the optimal transport. One can verify that for $(p,\delta)\in\mathcal{V}$, there exists $\psi_\mathbb{Q}>0$ such that $\mathcal{W}(\sum_{i=1}^Np_i\mathrm{I}_{\widehat{y}_i},\mathbb{Q}|_{{Y}})\le \delta-\sum_{i=m+1}^N(\frac{\epsilon}{N}-p_i)\psi_{\mathbb{Q}}$. Specifically, if $(p,\delta)\in\mathcal{V}_1$, we have $p_i=\frac{\epsilon}{N}$ for $i=m+1,...,N$ and $\mathcal{W}(\sum_{i=1}^Np_i\mathrm{I}_{\widehat{y}_i},\mathbb{Q}|_{{Y}})\le \delta=\delta-\sum_{i=m+1}^N(\frac{\epsilon}{N}-p_i)\psi_{\mathbb{Q}}$ for any $\psi_{\mathbb{Q}}>0$. Otherwise, if $(p,\delta)\in\mathcal{V}_2$, we have $\mathcal{W}(\sum_{i=1}^Np_i\mathrm{I}_{\widehat{y}_i},\mathbb{Q}|_{{Y}})\le \delta-\sum_{i=m+1}^N(\frac{\epsilon}{N}-p_i)\psi_{\mathbb{Q}}$ for some $\psi_{\mathbb{Q}}>0$, due to $\mathcal{W}(\sum_{i=1}^Np_i\mathrm{I}_{\widehat{y}_i},\mathbb{Q}|_{{Y}})<\delta$. Then we can construct 
	$$
	\begin{aligned}
		\mathbb{Q}=&\sum_{i=1}^m\frac{p_i}{\epsilon}\mathrm{I}_{\widehat{x}_i^p}\times\mathbb{Q}_i|_{Y}+\sum_{i=m+1}^N\frac{p_i}{\epsilon}\mathrm{I}_{\widehat{x}_i}\times\mathbb{Q}_i|_{Y}\\
		&+\sum_{i=1}^m(\frac{1}{N}-\frac{p_i}{\epsilon})\mathrm{I}_{(\widehat{x}_i,\widehat{y}_i)}+\sum_{i=m+1}^N(\frac{1}{N}-\frac{p_i}{\epsilon})\mathrm{I}_{(\widehat{x}_i^p+\phi_{\mathbb{Q}}^i{(\widehat{x}_i^p-\widehat{x}_i)},\widehat{y}_i)},
	\end{aligned}
	$$
	where $\phi_\mathbb{Q}^i$ satisfies that $\mathbb{D}_{\mathcal{X}}(\widehat{x}_i,\widehat{x}_i^p+\phi^i_\mathbb{Q}{(\widehat{x}_i^p-\widehat{x}_i)})=d_i+\psi^i_\mathbb{Q}=d_i+\psi^i_\mathbb{Q}$. Note that the conditional distribution of $Y$ given $X\in \mathcal{N}_\gamma\left(x_0\right)$ under the joint distribution $\mathbb{Q}$ is $\mathbb{Q}|_{Y}$. Moreover, one can verify that $\widehat{\mathbb{P}}$ can be represented as 
	$$
	\begin{aligned}
		\widehat{\mathbb{P}}=&\sum_{i=1}^m\frac{p_i}{\epsilon}\mathrm{I}_{(\widehat{x}_i,\widehat{y}_i)}+\sum_{i=m+1}^N\frac{p_i}{\epsilon}\mathrm{I}_{(\widehat{x}_i,\widehat{y}_i)}\\
		&+\sum_{i=1}^m(\frac{1}{N}-\frac{p_i}{\epsilon})\mathrm{I}_{(\widehat{x}_i,\widehat{y}_i)}+\sum_{i=m+1}^N(\frac{1}{N}-\frac{p_i}{\epsilon})\mathrm{I}_{(\widehat{x}_i,\widehat{y}_i)},
	\end{aligned}
	$$
	and using the definition of optimal transport, we obtain
	$$
	\begin{aligned}
		\mathcal{W}(\widehat{\mathbb{P}},\mathbb{Q})\le &\sum_{i=1}^m\frac{p_i}{\epsilon}(d_i+\int \mathbb{D}_{\mathcal{Y}}(\widehat{y}_i,y)~d\mathbb{Q}_i|_{Y}(y))\\
		&+\sum_{i=m+1}^N\frac{p_i}{\epsilon}(\mathbb{D}_{\mathcal{X}}(\widehat{x}_i,\widehat{x}_i)+\int \mathbb{D}_{\mathcal{Y}}(\widehat{y}_i,y)~d\mathbb{Q}_i|_{Y}(y))\\
		&+\sum_{i=1}^m(\frac{1}{N}-\frac{p_i}{\epsilon})(\mathbb{D}_{\mathcal{X}}(\widehat{x}_i,\widehat{x}_i)+\mathbb{D}_{\mathcal{Y}}(\widehat{y}_i,\widehat{y}_i))\\
		&+\sum_{i=m+1}^N(\frac{1}{N}-\frac{p_i}{\epsilon}) (\mathbb{D}_{\mathcal{X}}(\widehat{x}_i,\widehat{x}_i^p+\phi_{\mathbb{Q}}{(\widehat{x}_i^p-\widehat{x}_i)})+\mathbb{D}_{\mathcal{Y}}(\widehat{y}_i,\widehat{y}_i)).
	\end{aligned}
	$$
	And thus, we have
	\begin{align*}
		\mathcal{W}(\widehat{\mathbb{P}},\mathbb{Q})\le&\sum_{i=1}^m\frac{p_i}{\epsilon}\int \mathbb{D}_{\mathcal{Y}}(\widehat{y}_i,y)~d\mathbb{Q}_i|_{Y}(y)+\sum_{i=m+1}^N\frac{p_i}{\epsilon}\int \mathbb{D}_{\mathcal{Y}}(\widehat{y}_i,y)~d\mathbb{Q}_i|_{Y}(y)\\
		&+\sum_{i=1}^m\frac{p_i}{\epsilon}d_i+\sum_{i=m+1}^N(\frac{1}{N}-\frac{p_i}{\epsilon})d_i+\sum_{i=m+1}^N(\frac{1}{N}-\frac{p_i}{\epsilon})\psi_{\mathbb{Q}}\\
		=&\frac{1}{\epsilon}\mathcal{W}(\sum_{i=1}^Np_i\mathrm{I}_{\widehat{y}_i},\mathbb{Q}|_{Y})+\sum_{i=1}^m\frac{p_i}{\epsilon}d_i+\sum_{i=m+1}^N(\frac{1}{N}-\frac{p_i}{\epsilon})d_i+\sum_{i=m+1}^N(\frac{1}{N}-\frac{p_i}{\epsilon})\psi_{\mathbb{Q}}\\
		\le&\delta_0.
	\end{align*}
	We conclude that for any $$\mathbb{Q|}_{Y}
	\in \left(\mathop{\bigcup}\limits_{(p, \delta) \in \mathcal{V}_1} \mathbb{B}_{\delta}\left(\sum_{i=1}^N p_i \mathrm{I}_{\widehat{y}_i}\right)\right)\mathop{\bigcup}\left(\mathop{\bigcup}\limits_{(p, \delta) \in \mathcal{V}_2} \mathbb{B}_{\delta}^{\circ}\left(\sum_{i=1}^N p_i \mathrm{I}_{\widehat{y}_i}\right)\right),$$ there exists $\mathbb{Q}\in \mathbb{B}_{\delta_0}$ satisfies $\mathbb{Q}\left(X \in \mathcal{N}_\gamma\left(x_0\right)\right) \geq \varepsilon$ such that the conditional distribution of $Y$ given $X\in \mathcal{N}_\gamma\left(x_0\right)$ is $\mathbb{Q}|_{Y}$. Then we have 
	\begin{align*}
		\left(\mathop{\bigcup}\limits_{(p, \delta) \in \mathcal{V}_1} \mathbb{B}_{\delta}\left(\sum_{i=1}^N p_i \mathrm{I}_{\widehat{y}_i}\right)\right)\mathop{\bigcup}\left(\mathop{\bigcup}\limits_{(p, \delta) \in \mathcal{V}_2} \mathbb{B}_{\delta}^{\circ}\left(\sum_{i=1}^N p_i \mathrm{I}_{\widehat{y}_i}\right)\right)\subset\mathcal{K}({\mathcal{N}_\gamma\left(x_0\right),\mathcal{N}_\omega,\mathbb{B}_{\delta_0}(\widehat{\mathbb{P}})}),
	\end{align*} and thus 
	\begin{align*}
		\left(\mathop{\bigcup}\limits_{(p, \delta) \in \mathcal{V}_1} \mathbb{B}_{\delta}\left(\sum_{i=1}^N p_i \mathrm{I}_{\widehat{y}_i}\right)\right)\mathop{\bigcup}\left(\mathop{\bigcup}\limits_{(p, \delta) \in \mathcal{V}_2} \mathbb{B}_{\delta}^{\circ}\left(\sum_{i=1}^N p_i \mathrm{I}_{\widehat{y}_i}\right)\right)=\mathcal{K}({\mathcal{N}_\gamma\left(x_0\right),\mathcal{N}_\omega,\mathbb{B}_{\delta_0}(\widehat{\mathbb{P}})}).
	\end{align*}
Denote $$\mathcal{U_B}=\left(\mathop{\bigcup}\limits_{(p, \delta) \in \mathcal{V}_1} \mathbb{B}_{\delta}\left(\sum_{i=1}^N p_i \mathrm{I}_{\widehat{y}_i}\right)\right)\mathop{\bigcup}\left(\mathop{\bigcup}\limits_{(p, \delta) \in \mathcal{V}_2} \mathbb{B}_{\delta}^{\circ}\left(\sum_{i=1}^N p_i \mathrm{I}_{\widehat{y}_i}\right)\right).$$ From the above discussion, the problem \eqref{m1} is equivalent to $$
\min_{\alpha \in \mathcal{A}} \sup_{{\mathbb{Q}}|_{Y} \in \mathcal{U_B}} 
\rho_{{\mathbb{Q}}|_{Y}}\left[\ell(Y, \alpha) \right].
$$
We next prove that given $\alpha\in\mathcal{A}$,
\begin{align}\label{supremum_equivalence}
    \sup_{{\mathbb{Q}}|_{Y} \in \mathcal{U_B}} 
\rho_{{\mathbb{Q}}|_{Y}}\left[\ell(Y, \alpha) \right]=\sup_{{\mathbb{Q}}|_{Y} \in \bigcup\limits_{(p,\delta) \in \mathcal{V}} 
\mathbb{B}_{\delta}\left(\sum_{i=1}^N p_i\mathrm{I}_{\widehat{y}_i}\right)} 
\rho_{{\mathbb{Q}}|_{Y}}\left[\ell(Y, \alpha) \right].
\end{align}
We recall that the risk measure $\rho$ and the loss function $\ell$ satisfy Assumption \ref{assum2}. 
Based on Assumption \ref{assum2}, we only need to prove the following claim.
 \begin{quote}
    {\it \textbf{Claim} For any ${\mathbb{Q}}|_{Y}\in\mathbb{B}_{\delta}\left(\sum_{i=1}^N p_i \mathrm{I}_{\widehat{y}_i}\right)$ with $(p,\delta)\in\mathcal{V}_2$ and sufficiently large $n\in\mathbb{N}$, denoted by $Y\sim {\mathbb{Q}}|_{Y}$ and $Y_0\sim\sum_{i=1}^N p_i \mathrm{I}_{\widehat{y}_i}$, there exist ${\mathbb{Q}}|_{Y_n}\in\mathcal{U_B}$ and $Y_n\sim {\mathbb{Q}}|_{Y_n}$ such that $\mathcal{W}(Y,{Y}_n)\le \frac{1}{n}$. }
 \end{quote}
  In fact, if the claim holds, for any ${\mathbb{Q}}|_{Y}\in\mathbb{B}_{\delta}\left(\sum_{i=1}^N p_i \mathrm{I}_{\widehat{y}_i}\right)$ with $(p,\delta)\in\mathcal{V}_2$, we find that $\{{\mathbb{Q}}|_{Y_n}\}\subset\mathcal{U_B}$ satisfies that $\mathcal{W}({\mathbb{Q}}|_{Y_n},{\mathbb{Q}}|_{Y})\rightarrow 0$ as $n\to 0$ and then $\rho\left[\ell(Y_n, \alpha) \right]\rightarrow\rho\left[\ell(Y, \alpha) \right]$ as $n\rightarrow\infty$, due to the OT continuity of $\rho\circ\ell$. Moreover, for any ${\mathbb{Q}}|_{Y}\in\mathbb{B}_{\delta}\left(\sum_{i=1}^N p_i \mathrm{I}_{\widehat{y}_i}\right)$ with $(p,\delta)\in\mathcal{V}_1$, let $\mathbb{Q}|_{Y_n}={\mathbb{Q}}|_{Y}$. Then for any $\epsilon>0$, we have 
  \begin{align*}      \sup_{{\mathbb{Q}}|_{Y} \in \mathcal{U_B}} 
\rho_{{\mathbb{Q}}|_{Y}}\left[\ell(Y, \alpha) \right]-\epsilon\le\sup_{{\mathbb{Q}}|_{Y} \in \bigcup\limits_{(p,\delta) \in \mathcal{V}} 
\mathbb{B}_{\delta}\left(\sum_{i=1}^N p_i\mathrm{I}_{\widehat{y}_i}\right)} 
\rho_{{\mathbb{Q}}|_{Y}}\left[\ell(Y, \alpha) \right]\le\sup_{{\mathbb{Q}}|_{Y} \in \mathcal{U_B}} 
\rho_{{\mathbb{Q}}|_{Y}}\left[\ell(Y, \alpha) \right]+\epsilon,
  \end{align*}
and thus the equation \eqref{supremum_equivalence} holds. Next we prove the claim in the two cases: $\delta>0$ and $\delta=0$.
\begin{itemize}
    \item[(i)] For ${\mathbb{Q}}|_{Y}\in\mathbb{B}_{\delta}\left(\sum_{i=1}^N p_i \mathrm{I}_{\widehat{y}_i}\right)$ with $(p,\delta)\in\mathcal{V}_2$ and $\delta>0$, we find that $\mathbb{B}_{\delta}\left(\sum_{i=1}^N p_i \mathrm{I}_{\widehat{y}_i}\right)$ is the closure of $\mathbb{B}_{\delta}^\circ\left(\sum_{i=1}^N p_i \mathrm{I}_{\widehat{y}_i}\right)$ under the cost $\mathcal{W}$. Thus, there exist ${\mathbb{Q}}|_{Y_n}\in\mathbb{B}_{\delta}^\circ\left(\sum_{i=1}^N p_i \mathrm{I}_{\widehat{y}_i}\right)\subset\mathcal{U_B}$ and $Y_n\sim {\mathbb{Q}}|_{Y_n}$ such that $\mathcal{W}(Y,{Y}_n)\le \frac{1}{n}$ for sufficiently large $n\in\mathbb{N}$. 
    \item[(ii)] For ${\mathbb{Q}}|_{Y}\in\mathbb{B}_{\delta}\left(\sum_{i=1}^N p_i \mathrm{I}_{\widehat{y}_i}\right)$ with $(p,\delta)\in\mathcal{V}_2$ and $\delta=0$, we find that $\mathbb{B}_{\delta}\left(\sum_{i=1}^N p_i \mathrm{I}_{\widehat{y}_i}\right)$ is a single distribution set and ${\mathbb{Q}}|_{Y}=\sum_{i=1}^N p_i \mathrm{I}_{\widehat{y}_i}$. We consider the following two cases.
    \begin{itemize}
        \item[(iia)] If $\delta_0>\delta_{min}$, from the proof of Proposition \ref{feasible_con}, there exist $(p^*,\delta^*)\in\mathcal{V}$ such that $\delta^*>0$. Let $Y\sim\sum_{i=1}^N p_i \mathrm{I}_{\widehat{y}_i}$ and $Y^*\sim\sum_{i=1}^Np^*_i\mathrm{I}_{\widehat{y}_i}$. One can verify that $\mathcal{W}(Y,Y^*)\le \max_{i,j=1,...,N}\mathbb{D}_\mathcal{Y}(\widehat{y}_i,\widehat{y}_j):=D$. For $n\in\mathbb{N}$ satisfies that $nD\ge 1$, let $p^n=(1-\frac{1}{nD}) p+\frac{1}{nD}p^*$ and $\delta^n=(1-\frac{1}{nD})\delta+\frac{1}{nD}\delta^*$. Due to the convexity of $\mathcal{V}$, we have $(p^n,\delta^n)\in\mathcal{V}$ and thus $\sum_{i=1}^Np^n_i\mathrm{I}_{\widehat{y}_i}\in\mathcal{U_B}$ because  $\delta^*>0$ and then $\delta_n>0$. Let $Y_n\sim\sum_{i=1}^Np^n_i\mathrm{I}_{\widehat{y}_i}$. Then we find that $\mathcal{W}(Y,{Y}_n)\le \frac{1}{nD}\mathcal{W}(Y,{Y}^*) \le \frac{1}{n}$.
        \item[(iib)] If $\delta_0=\delta_{min}$, we claim that for every $(p,\delta)\in\mathcal{V}_2$, $\delta<0$. Note that $\delta_0$ can take the smallest radius $\delta_{min}$ if and only if $\frac{N-m}{N}\le \omega_2$. In fact, if $(p,\delta)\in\mathcal{V}_2$, there exists $\epsilon$ such that $(p,\delta,\epsilon)\in \mathcal{V}_0$, and exists $i\in\{m+1,..,N\}$ such that $p_i<\frac{\epsilon}{N}$. Note that $\delta=\epsilon\left(\delta_{0}-\frac{1}{N}\sum_{i=m+1}^Nd_i\right)-\sum_{i=1}^{m}p_id_i+\sum_{i=m+1}^Np_id_i$. If there exist $(p,\delta)\in\mathcal{V}_2$ satisfies that $\delta\ge0$, we have 
        \begin{align*}
            \delta_0\ge&\sum_{i=1}^{m}\frac{p_i}{\epsilon}d_i+\sum_{i=m+1}^N(\frac{1}{N}-\frac{p_i}{\epsilon})d_i\\
            =&\frac{1}{N}\left(\sum_{i=1}^{m}\frac{Np_i}{\epsilon}d_i+\sum_{i=m+1}^N(1-\frac{Np_i}{\epsilon})d_i\right)\\
            :=&\frac{1}{N}\left(\sum_{i=1}^{m}v_i'd_i+\sum_{i=m+1}^N(1-v_i')d_i\right)\\
            >&\min_{\substack{v_i\in[0,1],i=1,...,N, \\ 1/N\sum_{i=1}^Nv_i\in[\omega_1,\omega_2]}}\frac{1}{N}\left(\sum_{i=1}^{m}v_id_i+\sum_{i=m+1}^N(1-v_i)d_i\right)=\delta_{min},
        \end{align*}
        where the strict inequality follows from that we can always find $\{v_i\}_{i=1}^N$ such that $\sum_{i=1}^{m}v_i'd_i+\sum_{i=m+1}^N(1-v_i')d_i>\sum_{i=1}^{m}v_id_i+\sum_{i=m+1}^N(1-v_i)d_i$. In fact, for $(p,\delta)\in\mathcal{V}_2$, we find that $v_i'$s satisfy $v_i '\in[0,1], i=1, \ldots, N$, $ 1 / N \sum_{i=1}^N v_i' \in\left[\omega_1, \omega_2\right]$, and there exists $i \in\{m+1, \ldots, N\}$ such that $v_i'<1$. Without loss of generality, we assume $v_i'=1$ for $i=m+1,...,k$ and $v_i'<1$ for $i=k+1,...,N$. If $1 / N \sum_{i=1}^N v_i'\le\frac{N-m}{N}$, let $v_i=0$ for $i=1,..,m$ and $v_i=1$ for $i=m+1,...,N$. Otherwise, let $v_i=v_i'-\Delta_i\ge 0$ for $i=1,..,m$ and $v_i=1$ for $i=m+1,...,N$, where $\Delta_i\ge 0$ and $\sum_{i=1}^m\Delta_i=\sum_{i=k+1}^N(1-v_i')$. We find $\{v_i\}_{i=1}^N$ satisfies that $v_i\in[0,1],i=1,...,N,  1/N\sum_{i=1}^Nv_i\in[\omega_1,\omega_2]$ and such $\{v_i\}_{i=1}^N$ must exists due to that $\frac{N-m}{N}\le \omega_2$. Under the above construction of $\{v_i\}_{i=1}^N$, we have $\sum_{i=1}^{m}v_id_i+\sum_{i=m+1}^N(1-v_i)d_i<\sum_{i=1}^{m}v_i'd_i+\sum_{i=m+1}^N(1-v_i')d_i$. The above strict inequality contradicts the assumption $\delta_0=\delta_{min}$ and thus we have that for every $(p,\delta)\in\mathcal{V}_2$, $\delta<0$. Thus, ${\mathbb{Q}}|_{Y}\in\mathbb{B}_{\delta}\left(\sum_{i=1}^N p_i \mathrm{I}_{\widehat{y}_i}\right)$ is an empty set for every $(p,\delta)\in\mathcal{V}_2$.
    \end{itemize}
\end{itemize}
Combining the above two cases, we complete the proof of claim and thus complete the proof of Theorem \ref{thm1}.\qedhere
\end{proof}

\begin{proof}[{\bf Proof of Proposition \ref{feasible_con_2}}]
    To obtain the sufficient and necessary feasible condition of problem \eqref{m2}, we only need to find the sufficient and necessary feasible condition of problem \eqref{conditional-version} and thus only need to find the sufficient and necessary condition such that the ambiguity set $\mathcal{K}({\mathcal{N}_\gamma(x_0),\mathbb{B}_{\delta_0}(\widehat{\mathbb{P}};\mathcal{N}_{\omega})})$ is not empty. Note that
\begin{align*}
		\mathcal{K}({\mathcal{N}_\gamma(x_0),\mathbb{B}_{\delta_0}(\widehat{\mathbb{P}};\mathcal{N}_{\omega})})=\mathop{\bigcup}\limits_{(p, \delta) \in \mathcal{V}} \mathbb{B}_{\delta}\left(\sum_{i=1}^N p_i\mathrm{I}_{\widehat{y}_i}\right)
	\end{align*}
 from the proof of Theorem \ref{thm2}. Thus, $\mathcal{K}({\mathcal{N}_\gamma(x_0),\mathbb{B}_{\delta_0}(\widehat{\mathbb{P}};\mathcal{N}_{\omega})})$ is not empty if and only if there exists $(p,\delta) \in\mathcal{V}$ such that $\delta=\delta_0-\sum_{i=1}^m p_i d_i\ge0$. One can verify that there exists $(p,\delta)\in\mathcal{V}$ such that $\delta\ge 0$ is equivalent to
 \begin{align*}
     \max_{\substack{p_i\in[0,\frac{1}{N\epsilon}],i=1,...,N,\\ \sum_{i=1}^N p_i=1}}\delta_0-\sum_{i=1}^m p_i d_i\ge0,
 \end{align*}
 and thus equivalent to
 \begin{align*}
     \delta_0\ge& \min_{\substack{p_i\in[0,\frac{1}{N\epsilon}],i=1,...,N,\\ \sum_{i=1}^N p_i=1}}\sum_{i=1}^m p_i d_i\\
     =&\min_{\substack{v_i\in[0,1],i=1,...,N,\\ \sum_{i=1}^N v_i=N\epsilon}}\frac{1}{N\epsilon}\sum_{i=1}^m v_i d_i=\delta_{min}.
 \end{align*}
 \qedhere
\end{proof}
\begin{proof}[{\bf Proof of Theorem \ref{thm2}}]
	For problem \eqref{m2}, note that $\mathbb{Q}$ is the joint distribution of $(X,Y)$ and the objective function is only related to $Y$. Then we can reformulate the inner problem of problem \eqref{m2} as 
	\begin{align}\label{innerproblem-m2}
		\sup _{\mathbb{Q}|_{{Y}} \in\mathcal{K}({\mathcal{N}_\gamma(x_0),\mathbb{B}_{\delta_0}(\widehat{\mathbb{P}};\mathcal{N}_{\omega})}) } \rho_{\mathbb{Q}|_{{Y}}}\left[\ell(Y, \alpha) \right],
	\end{align}
	where $\mathbb{Q}|_{{Y}}$ is the marginal distribution of $Y$ and $\mathcal{K}({\mathcal{N}_\gamma(x_0),\mathbb{B}_{\delta_0}(\widehat{\mathbb{P}};\mathcal{N}_{\omega})})$ is the marginal distribution set on $\mathcal{Y}$, which is defined as
	\begin{align*}
		\mathcal{K}({\mathcal{N}_\gamma(x_0),\mathbb{B}_{\delta_0}(\widehat{\mathbb{P}};\mathcal{N}_{\omega})})=\Bigg\{\mathbb{Q}|_Y:&\mathbb{Q}|_Y\text{ is the marginal distribution of }\mathbb{Q}\text{ on }\mathcal{Y},\\
		&\text{ where } \mathbb{Q}\in\mathbb{B}_{\delta_0}(\widehat{\mathbb{P}};\mathcal{N}_{\omega})\text{ satisfies that }\mathbb{Q}\left(X \in \mathcal{N}_\gamma(x_0)\right) =1
		\Bigg\}.
	\end{align*}
	Thus, we only need to prove that 
	\begin{align*}
		\mathcal{K}({\mathcal{N}_\gamma(x_0),\mathbb{B}_{\delta_0}(\widehat{\mathbb{P}};\mathcal{N}_{\omega})})=\mathop{\bigcup}\limits_{(p, \delta) \in \mathcal{V}} \mathbb{B}_{\delta}\left(\sum_{i=1}^N p_i\mathrm{I}_{\widehat{y}_i}\right).
	\end{align*}
	For any $\mathbb{Q}\in \mathbb{B}_{\delta_0}(\widehat{\mathbb{P}};\mathcal{N}_{\omega})$, based on definition of the trimming set and the aforementioned notifications, we can find that there exists $p$ satisfies that given $\epsilon=\min_{\beta\in\mathcal{N}_{\omega}}\beta$, $p_i\in[0,\frac{1}{N\epsilon}], i=1,..., N, \sum_{i=1}^Np_i=1$, such that $\mathcal{W}\left(\sum_{i=1}^Np_i\mathrm{I}_{(\widehat{x}_i,\widehat{y}_i)}, \mathbb{Q}\right) \leqslant \delta_0$. Furthermore, if $\mathbb{Q}(X\in\mathcal{N}_\gamma(x_0))=1$, assume that the optimal transport between $\mathbb{Q}$ and $\sum_{i=1}^Np_i\mathrm{I}_{(\widehat{x}_i,\widehat{y}_i)}$ is $\pi$.
	Then we can decompose $\mathbb{Q}$ as $\mathbb{Q}=\sum_{i=1}^Np_i\mathbb{Q}^i$, where $\mathbb{Q}^i$ is transferred by $\mathrm{I}_{(\widehat{x}_i,\widehat{y}_i)}$ under the optimal transport $\pi$. Similar to the proof the Theorem \ref{thm1}, we have 
	$$
	\begin{aligned}
		\mathcal{W}(\sum_{i=1}^Np_i\mathrm{I}_{(\widehat{x}_i,\widehat{y}_i)},\mathbb{Q})=&\sum_{i=1}^mp_i\int \mathbb{D}_{\mathcal{X}}(\widehat{x}_i,x)+\mathbb{D}_{\mathcal{Y}}(\widehat{y}_i,y)~d\mathbb{Q}_i(x,y)\\
		&+\sum_{i=m+1}^Np_i\int \mathbb{D}_{\mathcal{X}}(\widehat{x}_i,x)+\mathbb{D}_{\mathcal{Y}}(\widehat{y}_i,y)~d\mathbb{Q}_i(x,y)\\
		\ge&\sum_{i=1}^mp_i\int d_i+\mathbb{D}_{\mathcal{Y}}(\widehat{y}_i,y)~d\mathbb{Q}_i(x,y)\\
		&+\sum_{i=m+1}^Np_i\int \mathbb{D}_{\mathcal{X}}(\widehat{x}_i,\widehat{x}_i)+\mathbb{D}_{\mathcal{Y}}(\widehat{y}_i,y)~d\mathbb{Q}_i(x,y)\\
		=&\sum_{i=1}^Np_i\int\mathbb{D}_{\mathcal{Y}}(\widehat{y}_i,y)~d\mathbb{Q}_i|_Y(y)+\sum_{i=1}^mp_id_i\\
		\ge&\mathcal{W}(\sum_{i=1}^Np_i\mathrm{I}_{\widehat{y}_i},\mathbb{Q}|_Y)+\sum_{i=1}^mp_id_i.
	\end{aligned}
	$$
	Rearranging the above inequality, we find that the marginal distribution on $\mathcal{Y}$ of $\mathbb{Q}$ satisfies that
	$$
	\mathcal{W}(\sum_{i=1}^Np_i\mathrm{I}_{\widehat{y}_i},\mathbb{Q}|_Y)\le\mathcal{W}(\sum_{i=1}^Np_i\mathrm{I}_{(\widehat{x}_i,\widehat{y}_i)},\mathbb{Q})-\sum_{i=1}^mp_id_i\le\delta.
	$$
	And thus, 
	\begin{align*}
		\mathcal{K}({\mathcal{N}_\gamma(x_0),\mathbb{B}_{\delta_0}(\widehat{\mathbb{P}};\mathcal{N}_{\omega})})\subset\mathop{\bigcup}\limits_{(p, \delta) \in \mathcal{V}} \mathbb{B}_{\delta}\left(\sum_{i=1}^N p_i\mathrm{I}_{\widehat{y}_i}\right).
	\end{align*}
	Next, we prove the inverse containment. For any $$\mathbb{Q}|_Y\in\mathop{\bigcup}\limits_{(p, \delta) \in \mathcal{V}} \mathbb{B}_{\delta}\left(\sum_{i=1}^N p_i\mathrm{I}_{\widehat{y}_i}\right),$$
	there exists $(p, \delta) \in \mathcal{V}$ such that $\mathbb{Q}|_Y\in\mathbb{B}_{\delta}\left(\sum_{i=1}^N p_i\mathrm{I}_{\widehat{y}_i}\right)$. Assume the optimal transport between $\mathbb{Q}|_Y$ and $\sum_{i=1}^N p_i\mathrm{I}_{\widehat{y}_i}$ is $\pi|_Y$. Then we can decompose it as $\mathbb{Q}|_Y=\sum_{i=1}^Np_i\mathbb{Q}^i|_Y$, where $\mathbb{Q}^i|_Y$ is transferred by $\mathrm{I}_{\widehat{y}_i}$ under the optimal transport $\pi|_Y$. Then we can construct a distribution $\widetilde{\mathbb{Q}}=\sum_{i=1}^mp_i\mathrm{I}_{\widehat{x}_i^p}\times\mathbb{Q}^i|_Y+\sum_{i=m+1}^Np_i\mathrm{I}_{\widehat{x}_i}\times\mathbb{Q}^i|_Y$ and verify that the marginal distribution on $\mathcal{Y}$ of  $\widetilde{\mathbb{Q}}$ is  $\mathbb{Q}|_Y$. Furthermore, we have
	\begin{align*}
		\mathcal{W}\left(\sum_{i=1}^Np_i\mathrm{I}_{(\widehat{x}_i,\widehat{y}_i)}, \widetilde{\mathbb{Q}}\right)\le& \sum_{i=1}^mp_i\int \mathbb{D}_{\mathcal{X}}(\widehat{x}_i,x)+\mathbb{D}_{\mathcal{Y}}
		(\widehat{y}_i,y)~d\mathrm{I}_{\widehat{x}_i^p}(x)\times\mathbb{Q}^i|_Y(y)\\
		&+\sum_{i=m+1}^Np_i\int \mathbb{D}_{\mathcal{X}}(\widehat{x}_i,x)+\mathbb{D}_{\mathcal{Y}}
		(\widehat{y}_i,y)~d\mathrm{I}_{\widehat{x}_i}(x)\times\mathbb{Q}^i|_Y(y)\\
		=&\sum_{i=1}^Np_i\int \mathbb{D}_{\mathcal{Y}}
		(\widehat{y}_i,y)~d\mathbb{Q}^i|_Y(y)+\sum_{i=1}^mp_id_i\\
		=&\mathcal{W}\left(\sum_{i=1}^N p_i\mathrm{I}_{\widehat{y}_i},\mathbb{Q}|_Y\right)+\sum_{i=1}^mp_id_i\\
		\le& \delta_0.
	\end{align*}
	From the above inequality and the construction of $\widetilde{\mathbb{Q}}$, we have $\widetilde{\mathbb{Q}}\in\mathbb{B}_{\delta_0}(\widehat{\mathbb{P}};\mathcal{N}_{\omega})$ and $\mathbb{Q}\left(X \in \mathcal{N}_\gamma(x_0)\right) =1$. Then we have 
	\begin{align*}
		\mathop{\bigcup}\limits_{(p, \delta) \in \mathcal{V}} \mathbb{B}_{\delta}\left(\sum_{i=1}^N p_i\mathrm{I}_{\widehat{y}_i}\right)\subset\mathcal{K}({\mathcal{N}_\gamma(x_0),\mathbb{B}_{\delta_0}(\widehat{\mathbb{P}};\mathcal{N}_{\omega})}),
	\end{align*}
	and thus
	\begin{align*}
		\mathcal{K}({\mathcal{N}_\gamma(x_0),\mathbb{B}_{\delta_0}(\widehat{\mathbb{P}};\mathcal{N}_{\omega})})=\mathop{\bigcup}\limits_{(p, \delta) \in \mathcal{V}} \mathbb{B}_{\delta}\left(\sum_{i=1}^N p_i\mathrm{I}_{\widehat{y}_i}\right).
	\end{align*}
\qedhere
\end{proof}

\subsection*{EC.2.2. Proofs of Section \ref{tractability}}
\begin{proof}[{\bf Proof of Proposition \ref{gen}}]
    Suppose $\rho_{\mathbb{Q}|_Y}$ is a law-invariant risk measure that is concave in distribution, then $g(\alpha,(p,\delta))$ is concave in $(p,\delta)$. In fact, one can verify that
\begin{align*}
    g(\alpha,(p,\delta))=&\sup_{{\mathbb{Q}}|_{Y}}\left\{\rho_{{\mathbb{Q}}|_{Y}}\left[\ell(Y, \alpha) \right]+\sigma({\mathbb{Q}}|_{Y},p,\delta|\mathbb{B}_{\delta}(\sum_{i=1}^Np_i\mathrm{I}_{\widehat{y}_i}))\right\}\\
    :=&\sup_{{\mathbb{Q}}|_{Y}}H({\mathbb{Q}}|_{Y},p,\delta),
\end{align*}
where $\sigma({\mathbb{Q}}|_{Y},p,\delta|\mathbb{B}_{\delta}(\sum_{i=1}^Np_i\mathrm{I}_{\widehat{y}_i}))$ is the indicator function of $\mathbb{B}_{\delta}(\sum_{i=1}^Np_i\mathrm{I}_{\widehat{y}_i})$ and $\mathbb{B}_{\delta}(\sum_{i=1}^Np_i\mathrm{I}_{\widehat{y}_i})$ is a convex set of $({\mathbb{Q}}|_{Y},p,\delta)$. Thus, we find that $H({\mathbb{Q}}|_{Y},p,\delta)$ is jointly concave in $({\mathbb{Q}}|_{Y},p,\delta)$ and then $g(\alpha,(p,\delta))$ is jointly concave in $(p,\delta)$ by taking the  supremum of $H({\mathbb{Q}}|_{Y},p,\delta)$ in ${\mathbb{Q}}|_{Y}$. Moreover, suppose $\rho_{\mathbb{Q}|_Y}\circ \ell$ is convex in $\alpha$. We find that
$g(\alpha,(p,\delta))=\sup _{{\mathbb{Q}}|_{Y} \in \mathbb{B}_{\delta}(\sum_{i=1}^Np_i\mathrm{I}_{\widehat{y}_i})} \rho_{{\mathbb{Q}}|_{Y}}\left[\ell(Y, \alpha) \right]$ is a supremum of convex functions with respect to $\alpha$ and thus $g(\alpha,(p,\delta))$ is convex in $\alpha$.\qedhere
\end{proof}

\begin{proof}[{\bf Proof of Lemma \ref{OT-continuity}}]
We begin by introducing an auxiliary lemma that will be used throughout the proof.

\begin{lemma}\label{UI_probability_ell(Yalpha)}
   Assume $Y_n\in L^q\sim \mathbb{Q}_n$ for $n\in\mathbb{N}$ and $Y\in L^q\sim \mathbb{Q}$ such that $\E\left[\mathbb{D}_{\mathcal{Y}}(Yn,Y)\right]=\mathcal{W}(\mathbb{Q}_n, \mathbb{Q})\rightarrow 0$ as $n\rightarrow\infty$, where $\mathbb{D}_{\mathcal{Y}}(y_1,y_2)=||y_1-y_2||_{\mathcal{Y}}^q$ for any $y_1,y_2\in\mathcal{Y}$ with $q\in[1,\infty)$. If $\ell$ is continuous and satisfies that there exists some $C>0$ such that $|\ell(x)|\le C(1+|x|^q)$, given $\alpha\in\mathcal{A}$, $\E\left[\ell(Y_n^\top\alpha)\right]\to \E\left[\ell(Y^\top\alpha)\right]$ as $n\to\infty$.
\end{lemma}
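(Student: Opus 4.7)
The plan is to realize each pair $(\mathbb{Q}_n,\mathbb{Q})$ through an optimal coupling and then invoke Vitali's convergence theorem. By Theorem 4.1 of Villani, for each $n$ one can construct random variables $(Y_n,\widetilde{Y}_n)$ on a common probability space with $Y_n\sim\mathbb{Q}_n$, $\widetilde{Y}_n\sim\mathbb{Q}$, and $\E[\|Y_n-\widetilde{Y}_n\|_{\mathcal{Y}}^q]=\mathcal{W}(\mathbb{Q}_n,\mathbb{Q})\to 0$. Because $\widetilde{Y}_n\sim\mathbb{Q}$ holds for every $n$, the identity $\E[\ell(\widetilde{Y}_n^\top\alpha)]=\E[\ell(Y^\top\alpha)]$ reduces the claim to showing
\begin{align*}
\E\bigl[\ell(Y_n^\top\alpha)-\ell(\widetilde{Y}_n^\top\alpha)\bigr]\to 0.
\end{align*}

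For in-probability convergence of the integrand, the dual-norm inequality $|Y_n^\top\alpha-\widetilde{Y}_n^\top\alpha|\le\|\alpha\|_*\,\|Y_n-\widetilde{Y}_n\|_{\mathcal{Y}}$ yields $L^q$ convergence of these scalar differences, and hence convergence in probability. Because $\widetilde{Y}_n\sim\mathbb{Q}$ is identically distributed across $n$, the family $\{\widetilde{Y}_n^\top\alpha\}$ is tight; combining tightness with the continuity of $\ell$ (which is uniformly continuous on compact sets) then produces $\ell(Y_n^\top\alpha)-\ell(\widetilde{Y}_n^\top\alpha)\to 0$ in probability through a standard truncation argument.

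The main obstacle is verifying uniform integrability. From the growth bound $|\ell(x)|\le C(1+|x|^q)$ one obtains
\begin{align*}
|\ell(Y_n^\top\alpha)|+|\ell(\widetilde{Y}_n^\top\alpha)|\le 2C+C\|\alpha\|_*^q\bigl(\|Y_n\|_{\mathcal{Y}}^q+\|\widetilde{Y}_n\|_{\mathcal{Y}}^q\bigr),
\end{align*}
so it suffices to check uniform integrability of $\{\|Y_n\|_{\mathcal{Y}}^q\}$ and $\{\|\widetilde{Y}_n\|_{\mathcal{Y}}^q\}$. The latter is immediate since $\widetilde{Y}_n\sim\mathbb{Q}\in L^q$. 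For the former, Minkowski gives $\|Y_n\|_{\mathcal{Y}}^q\le 2^{q-1}\bigl(\|Y_n-\widetilde{Y}_n\|_{\mathcal{Y}}^q+\|\widetilde{Y}_n\|_{\mathcal{Y}}^q\bigr)$, and since the first summand has $L^1$-norm $\mathcal{W}(\mathbb{Q}_n,\mathbb{Q})\to 0$, it is uniformly integrable, while the second is handled as before. Vitali's convergence theorem then closes the argument, yielding $\E[\ell(Y_n^\top\alpha)]\to\E[\ell(Y^\top\alpha)]$. I anticipate no serious difficulty beyond bookkeeping for uniform integrability; the exponent $q$ in the growth bound is precisely matched to the order of the ground cost so that the Wasserstein convergence transfers cleanly to convergence of $\E[\ell(Y^\top\alpha)]$.
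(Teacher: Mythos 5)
Your proof is correct, and the two halves of the argument diverge from the paper's in complementary ways.

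For uniform integrability, your route is genuinely simpler. The paper establishes UI of $\{\|Y_n\|^q\}$ indirectly: first it deduces weak convergence $\mathbb{Q}_n\Rightarrow\mathbb{Q}$ and moment convergence $\E[\|Y_n\|^q]\to\E[\|Y\|^q]$ from $\mathcal{W}(\mathbb{Q}_n,\mathbb{Q})\to 0$, and then proves by hand (the ``Claim'' inside the proof) that these two facts force UI, via a Skorokhod-style truncation with $g_M(y)=\min\{\|y\|^q,M\}$. You bypass all of this by writing $\|Y_n\|^q\le 2^{q-1}\bigl(\|Y_n-\widetilde{Y}_n\|^q+\|\widetilde{Y}_n\|^q\bigr)$ and observing that the first summand has $L^1$-norm $\mathcal{W}(\mathbb{Q}_n,\mathbb{Q})\to 0$ (hence UI, by splitting into a finite prefix and an $\epsilon$-tail) while the second is identically distributed with a single $L^1$ law. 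This is the cleaner argument and, in fact, you could have applied it directly to the coupling $\E[\|Y_n-Y\|^q]=\mathcal{W}(\mathbb{Q}_n,\mathbb{Q})$ that the lemma statement already hands you, without introducing the new random variables $\widetilde{Y}_n$ at all.

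That re-coupling costs you a bit on the other half. Because your reference variable $\widetilde{Y}_n$ changes with $n$, the in-probability convergence $\ell(Y_n^\top\alpha)-\ell(\widetilde{Y}_n^\top\alpha)\to 0$ is no longer an instance of the standard fact that continuous maps preserve convergence in probability toward a fixed limit, so you are forced into the tightness-plus-uniform-continuity truncation you sketch. It works (tightness of $\{\widetilde{Y}_n^\top\alpha\}$ is free since they are identically distributed, and uniform continuity of $\ell$ on $[-K-1,K+1]$ closes the estimate), but the paper avoids it entirely: with the fixed $Y$ from the statement, $Y_n^\top\alpha\to Y^\top\alpha$ in $L^q$, hence in probability, and continuity of $\ell$ immediately gives $\ell(Y_n^\top\alpha)\to\ell(Y^\top\alpha)$ in probability. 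Both routes then finish with Vitali. Net assessment: a correct alternative proof, strictly simpler for the UI step, slightly heavier for the in-probability step, and the re-coupling is avoidable overhead given that the statement already supplies an optimal coupling with a single $Y$.
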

\begin{proof}[{\bf Proof of Lemma \ref{UI_probability_ell(Yalpha)}}]
    In the following proof, denote $\mathcal{W}(\mathbb{Q}_n, \mathbb{Q})$ as $\mathcal{W}_q(\mathbb{Q}_n, \mathbb{Q})$, $||\cdot||_{\mathcal{Y}}$ as $||\cdot||$ and $||\cdot||_*$ as the dual norm of $||\cdot||$. We first prove that $\{\|Y_n\|^q\}_n$ is uniformly integrable, that is $\lim_{M\rightarrow\infty}\sup_n \E[\|Y_n\|^q\mathrm{I}_{\{\|Y_n\|^q>M\}}]=0$. From the assumption that $\mathcal{W}_q(\mathbb{Q}_n, \mathbb{Q})\rightarrow 0$ as $n\rightarrow\infty$, we have these two facts: (i) $\mathbb{Q}_n\implies\mathbb{Q}$ (weak convergence); (ii) $\E\left[\|Y_n\|^q\right]\to\E\left[\|Y\|^q\right]$ as $n\to\infty$. One can verify that $\mathcal{W}_1(\mathbb{Q}_n, \mathbb{Q})\le\mathcal{W}_q(\mathbb{Q}_n, \mathbb{Q})$ and thus  $\mathcal{W}_1(\mathbb{Q}_n, \mathbb{Q})\rightarrow 0$ as $n\rightarrow\infty$. Using the Kantorovich–Rubinstein duality for $\mathcal{W}_1$, for any $1$-Lipschitz continuous function $f$, we have $\E_{\mathbb{Q}_n}\left[f(Y_n)\right]\rightarrow\E_{\mathbb{Q}}\left[f(Y)\right]$. This implies (i). To see (ii), using triangle inequality, we have 
    \begin{align*}
        \left|\left(\E\left[\|Y_n\|^q\right]\right)^{1/q}-\left(\E\left[\|Y\|^q\right]\right)^{1/q}\right|\le\left(\E\left[\|Y_n-Y\|^q\right]\right)^{1/q}=\mathcal{W}_q(\mathbb{Q}_n, \mathbb{Q})\to 0 \text{ as }n\to \infty.
    \end{align*}
    Hence, we have $\E\left[\|Y_n\|^q\right]\to\E\left[\|Y\|^q\right]$ as $n\to\infty$ and conclude the proof of (ii). Let $g(y)=||y||^q$ and $g_M(y)=\min\{g(y),M\}$ with $M>0$. Due to that $g_M$ is bounded and continuous, by (i), we have $\E\left[g_M(Y_n)\right]\to \E\left[g_M(Y)\right]$ as $n\to \infty$. Let $T_n(M)=\E\left[\|Y_n\|^q\mathrm{I}_{\{\|Y_n\|^q>M\}}\right]$ with $M>0$. Then $T_n(M)=\E\left[g(Y_n)\right]-\E\left[g_M(Y_n)\right]+M\E\left[\mathrm{I}_{\{g(Y_n)>M\}}\right]$. If 
    \begin{align}\label{YnUI}
        \sup_n T_n(M)\to 0 \text{ as } M\to\infty,
    \end{align} by the definition of uniformly integrable, we conclude that $\{\|Y_n\|^q\}_n$ is uniformly integrable. To prove \eqref{YnUI},  we only need to prove the following claim. \begin{quote}
    {\it \textbf{Claim}   (i) and (ii) imply that $\sup_n\E\left[g(Y_n)\right]-\E\left[g_M(Y_n)\right] \to 0$ as $M\to \infty$. }\end{quote}
    In fact, we first note that $M\mathrm{I}_{\{g>M\}}\le 2(g-g_{\frac{M}{2}})$. If the claim holds, we have 
    \begin{align*}
        \sup_n T_n(M)=&\sup_n \left\{\E\left[g(Y_n)\right]-\E\left[g_M(Y_n)\right]+M\E\left[\mathrm{I}_{\{g(Y_n)>M\}}\right]\right\}\\
        \le &\sup_n \left\{\E\left[g(Y_n)\right]-\E\left[g_M(Y_n)\right]+2\left(\E\left[g(Y_n)\right]-\E\left[g_{\frac{M}{2}}(Y_n)\right]\right)\right\}\\
        \le &\sup_n \left\{\E\left[g(Y_n)\right]-\E\left[g_M(Y_n)\right]\right\}+\sup_n \left\{2\left(\E\left[g(Y_n)\right]-\E\left[g_{\frac{M}{2}}(Y_n)\right]\right)\right\}\\
        \to&0 \text{ as } M\to\infty.
    \end{align*}
    Thus, we have $\{\|Y_n\|^q\}_n$ is uniformly integrable. Next, we prove the claim. For any $n\in\mathbb{N}$ and $M>0$, we have 
    \begin{align*}
        &\E\left[g(Y_n)\right]-\E\left[g_M(Y_n)\right]\le\left|\E\left[g(Y_n)\right]-\E\left[g_M(Y_n)\right]\right|\\
        \le&\left|\E\left[g(Y_n)\right]-\E\left[g(Y)\right]\right|+\left|\E\left[g(Y)\right]-\E\left[g_M(Y)\right]\right|+\left|\E\left[g_M(Y)\right]-\E\left[g_M(Y_n)\right]\right|.
    \end{align*}
    By (ii), given $\epsilon>0$, there exists $N_1$ such that for any $n\ge N_1$, we have $\left|\E\left[g(Y_n)\right]-\E\left[g(Y)\right]\right|<\epsilon/3$. Using (i) and the fact that $g_M$ is bounded and continuous, given $\epsilon>0$, for any $M>0$, there exists $N_2(M)$ such that for any $n\ge N_2(M)$, we have $\left|\E\left[g_M(Y)\right]-\E\left[g_M(Y_n)\right]\right|<\epsilon/3$. Note that $g_M\nearrow g$ as $M\to\infty$ and $g\in L^1$ ($Y_n\in L^q$). By monotone convergence theorem, we have $\E\left[g(Y)\right]-\E\left[g_M(Y)\right]\searrow 0$ as $M\to\infty$. Thus, given $\epsilon>0$, there exists $M_0>0$ such that for any $M\ge M_0$, we have $\left|\E\left[g(Y)\right]-\E\left[g_M(Y)\right]\right|<\epsilon/3$. Combining the above cases, we have the following result. Fix any $M\ge M_0$. Let $N=\max\{N_1,N_2(M)\}$. Then for any $n\ge N$, we have $\E\left[g(Y_n)\right]-\E\left[g_M(Y_n)\right]<\epsilon$. Moreover, for any $1\le n\le N$, we have $\E\left[g(Y_n)\right]-\E\left[g_M(Y_n)\right]\searrow 0$ as $M\to\infty$ by monotone convergence theorem. Thus, given $\epsilon>0$, there exists $M_1$ such that for $M\ge M_1$, we have $\E\left[g(Y_n)\right]-\E\left[g_M(Y_n)\right]\le \epsilon$ with $1\le n\le N$. Then, for $M\ge\max\{M_0,M_1\}$, we have 
    \begin{align*}
        \sup_n\E\left[g(Y_n)\right]-\E\left[g_M(Y_n)\right]=&\max\{\max_{1\le n\le N}\E\left[g(Y_n)\right]-\E\left[g_M(Y_n)\right],\sup_{n\ge N}\E\left[g(Y_n)\right]-\E\left[g_M(Y_n)\right]\}\\
        \to&0 \text{ as } M\to 0.
    \end{align*}
    Thus, we conclude the proof of that $\{\|Y_n\|^q\}_n$ is uniformly integrable. In the following, we are going to prove the following two facts.
    \begin{itemize}
        \item[(a)] Given $\alpha\in\mathcal{A}$, $\{\ell(Y_n^\top\alpha)\}_n$ is uniformly integrable.
        \item[(b)] Given $\alpha\in\mathcal{A}$, $\{\ell(Y_n^\top\alpha)\}_n\to \ell(Y^\top\alpha)$ in probability.
    \end{itemize}
    Because $|\ell(x)|\le C(1+|x|^q)$ with $C>0$, we have $|\ell(Y_n^\top\alpha)|\le C(1+|Y_n^\top\alpha|^q)\le C(1+\|\alpha\|_*^q\|Y_n\|^q)$. Then $|\ell(Y_n^\top\alpha)|\mathrm{I}_{\{|\ell(Y_n^\top\alpha)|>M\}}\le C(1+\|\alpha\|_*^q\|Y_n\|^q)\mathrm{I}_{\{C(1+\|\alpha\|_*^q\|Y_n\|^q)>M\}}$. Thus,
    \begin{align*}
        \sup_n \E\left[|\ell(Y_n^\top\alpha)|\mathrm{I}_{\{|\ell(Y_n^\top\alpha)|>M\}}\right]\le& \sup_n \E\left[C(1+\|\alpha\|_*^q\|Y_n\|^q)\mathrm{I}_{\{C(1+\|\alpha\|_*^q\|Y_n\|^q)>M\}}\right]\\
        \le & \sup_n C\E\left[C\mathrm{I}_{\{(1+\|\alpha\|_*^q\|Y_n\|^q)>M\}}\right]+\sup_n C\|\alpha\|_*^q\E\left[\|Y_n\|^q\mathrm{I}_{\{\|Y_n\|^q>\frac{M-C}{C\|\alpha\|_*^q}\}}\right]\\
        \to&0 \text{ as } M\to \infty.
    \end{align*}
    We complete the proof of (a). To see (b), due to (ii), we find that $Y_n\to Y$ in $L^q$. Then $Y_n^\top\alpha\to Y^\top\alpha$ in $L^q$ and thus in probability. Because of $\ell$ is continuous, we obtain $\ell(Y_n^\top\alpha)\to \ell(Y^\top\alpha)$ in probability. Combining (a) and (b), using Vitali's convergence theorem, we have $\E\left[|\ell(Y_n^\top\alpha)- \ell(Y^\top\alpha)|\right]\to 0$ as $n\to\infty$, and thus $\E\left[\ell(Y_n^\top\alpha)\right]\to \E\left[\ell(Y^\top\alpha)\right]$ as $n\to\infty$.
    \qedhere
\end{proof}

We are now ready to complete the proof of Lemma \ref{OT-continuity}.

{{\bf Continued proof of Lemma \ref{OT-continuity}}}
Assume $Y_n\in L^q\sim \mathbb{Q}_n$ for $n\in\mathbb{N}$ and $Y\in L^q\sim \mathbb{Q}$ such that $\E\left[\mathbb{D}_{\mathcal{Y}}(Yn,Y)\right]=\mathcal{W}(\mathbb{Q}_n, \mathbb{Q})\rightarrow 0$ as $n\rightarrow\infty$, where $\mathbb{D}_{\mathcal{Y}}(y_1,y_2)=||y_1-y_2||_{\mathcal{Y}}^q$ for any $y_1,y_2\in\mathcal{Y}$ with $q\in[1,\infty)$. We consider the following risk measure case by case.
    \begin{itemize}
        \item[(i)] $\rho(Y)=\E\left[\ell(Y^\top\alpha) \right]$. One can verify that $\ell$ in Propositions \ref{regular_exp} and  \ref{general_exp} is continuous and satisfies that there exists $C>0$ such that $|\ell(x)|\le C(1+|x|^q)$. Applying Lemma \ref{UI_probability_ell(Yalpha)}, we have $\E\left[\ell(Y_n^\top\alpha)\right]\to \E\left[\ell(Y^\top\alpha)\right]$ as $n\to \infty$ if $\mathcal{W}(\mathbb{Q}_n, \mathbb{Q})\rightarrow 0$ as $n\rightarrow\infty$. We conclude that $\rho$ is OT-continuity.
        \item[(ii)] $\rho(Y)=\inf _{t \in \mathbb{R}^k} \mathbb{E}\left[\ell(Y^\top\alpha, t)\right]$. Because $\ell(z,t): \mathbb{R}\times\mathbb{R}^k\rightarrow\mathbb{R}$ in Propositions \ref{general_min_exp} and \ref{pro2*} is convex and level-bounded in $t$ for all $z \in \mathbb{R}$, the infimum is attained. Thus for any $n\in\mathbb{N}$, there exists $t_n\in \mathbb{R}$ such that
            \begin{align}\label{inf_exp_convergence}
                \left|\inf _{t \in \mathbb{R}^k} \mathbb{E}\left[\ell(Y_n^\top\alpha, t)\right]-\inf _{t \in \mathbb{R}^k} \mathbb{E}\left[\ell(Y^\top\alpha, t)\right]\right|\le\left| \mathbb{E}\left[\ell(Y_n^\top\alpha, t_n)\right]-\mathbb{E}\left[\ell(Y^\top\alpha, t_n)\right]\right|.
            \end{align}
            One can verify that $\ell$ in Propositions \ref{general_min_exp} and  \ref{pro2*} is continuous and satisfies that there exists $C>0$ such that $|\ell(x)|\le C(1+|x|^q)$. Applying Lemma \ref{UI_probability_ell(Yalpha)}, we have $\mathbb{E}\left[\ell(Y_n^\top\alpha, t_n)\right]\to \mathbb{E}\left[\ell(Y_n^\top\alpha, t_n)\right]$ as $n\to \infty$ if $\mathcal{W}(\mathbb{Q}_n, \mathbb{Q})\rightarrow 0$ as $n\rightarrow\infty$. Thus, by inequality \eqref{inf_exp_convergence}, we have $\inf _{t \in \mathbb{R}^k} \mathbb{E}\left[\ell(Y_n^\top\alpha, t)\right]\to\inf _{t \in \mathbb{R}^k} \mathbb{E}\left[\ell(Y^\top\alpha, t)\right]$ as $n\to \infty$.  We conclude that $\rho$ is OT-continuity.
        \item[(iii)] $\rho(Y)=\inf _{t \in \mathbb{R}}\left\{t+\left(\mathbb{E}\left[\ell^q(Y^\top\alpha, t)\right]\right)^{1 /q}\right\}$. Because $t+\ell(z,t): \mathbb{R}\times\mathbb{R}\rightarrow\mathbb{R}$ in Propositions \ref{general_min_exp_1} and \ref{pro_rho2_general} is convex and level-bounded in $t$ for all $z \in \mathbb{R}$, the infimum is attained. This applies that there exists $t_n\in \mathbb{R}$ such that\begin{align}\label{inf_q_exp_convergence}              &\left|\inf _{t \in \mathbb{R}^k} \left\{t+\mathbb{E}\left[\ell^q(Y_n^\top\alpha, t)\right]^{1/q}\right\}-\inf _{t \in \mathbb{R}^k} \left\{t+\mathbb{E}\left[\ell^q(Y^\top\alpha, t)\right]^{1/q}\right\}\right|\\\le&\left|\E\left[\ell^q(Y_n^\top\alpha,t_n) \right]^{1/q}-\E\left[\ell^q(Y^\top\alpha,t_n) \right]^{1/q}\right|.            \end{align}
        One can verify that $\ell^q$ in Propositions \ref{general_min_exp} and  \ref{pro2*} is continuous and satisfies that there exists $C>0$ such that $|\ell^q(x)|\le C(1+|x|^q)$. Applying Lemma \ref{UI_probability_ell(Yalpha)}, we have $\mathbb{E}\left[\ell^q(Y_n^\top\alpha, t_n)\right]\to \mathbb{E}\left[\ell^q(Y_n^\top\alpha, t_n)\right]$ as $n\to \infty$ if $\mathcal{W}(\mathbb{Q}_n, \mathbb{Q})\rightarrow 0$ as $n\rightarrow\infty$. Thus, by inequality \eqref{inf_q_exp_convergence}, we have $\inf _{t \in \mathbb{R}^k} \left\{t+\mathbb{E}\left[\ell^q(Y_n^\top\alpha, t)\right]^{1/q}\right\}\to\inf _{t \in \mathbb{R}^k} \left\{t+\mathbb{E}\left[\ell^q(Y^\top\alpha, t)\right]^{1/q}\right\}$ as $n\to \infty$.  We conclude that $\rho$ is OT-continuity.
            \item[(iv)] $\rho(Y)=\inf \left\{\kappa \in \mathbb{R}: \mathbb{E}\left[u(-Y^\top\alpha-\kappa)\right] \le l\right\}$, where $u$ is convex and increasing and $l$ is a fixed constant in the interior of the range of $u$.
            One can verify that $u$ in Propositions \ref{pro:shortfall_1} and  \ref{pro:shortfall_2}, and Remark \ref{q_order_utility_shortfall} is continuous and satisfies that there exists $C>0$ such that $|u(x)|\le C(1+|x|^q)$. Applying Lemma \ref{UI_probability_ell(Yalpha)}, for every $\kappa\in\mathbb{R}$,  we have $\mathbb{E}\left[u(-Y_n^\top\alpha-\kappa)\right]\to \mathbb{E}\left[u(-Y^\top\alpha-\kappa)\right]$ as $n\to \infty$ if $\mathcal{W}(\mathbb{Q}_n, \mathbb{Q})\rightarrow 0$ as $n\rightarrow\infty$. 
            Because $l$ is a fixed constant in the interior of the range of $u$, the sublevel set $\left\{\kappa: \mathbb{E}\left[u(-Y^\top\alpha-\kappa)\right] \le l\right\}$ is nonempty, closed and bounded-below interval, that is $\left\{\kappa: \mathbb{E}\left[u(-Y^\top\alpha-\kappa)\right] \le l\right\}=[\rho(Y),\infty)=[\rho(\mathbb{Q}),\infty)$, where $\rho(Y)=\rho(\mathbb{Q})\in\mathbb{R}$, and for any $\epsilon>0$, we have $\mathbb{E}\left[u(-Y^\top\alpha-\rho(\mathbb{Q})-\epsilon)\right]<l$. Moreover, by the definition of shortfall risk measure, we have $\mathbb{E}\left[u(-Y^\top\alpha-\rho(\mathbb{Q})+\epsilon)\right]>l$ for any $\epsilon>0$. 
            Because of that $\mathbb{E}\left[u(-Y_n^\top\alpha-\kappa)\right]\to \mathbb{E}\left[u(-Y^\top\alpha-\kappa)\right]$ as $n\to \infty$, for sufficiently large $n$, we have $\mathbb{E}\left[u(-Y_n^\top\alpha-\rho(\mathbb{Q})-\epsilon)\right]<l$ and $\mathbb{E}\left[u(-Y_n^\top\alpha-\rho(\mathbb{Q})+\epsilon)\right]>l$. Thus, $\rho(\mathbb{Q}_n)\in(\rho(\mathbb{Q})-\epsilon,\rho(\mathbb{Q})+\epsilon]$ for sufficiently large $n$. Letting $\epsilon$ decrease to $0$ yields $\rho(\mathbb{Q}_n)\rightarrow \rho(\mathbb{Q})$ as $n\rightarrow \infty$. We conclude that $\rho$ is OT-continuity.
            \item[(v)] $\rho(Y)=\rho^h\left[\ell( Y^\top\alpha)\right]$, where $h$ is a convex distortion function. Let $\ell: \mathbb{R} \rightarrow \mathbb{R}$ be a convex and Lipschitz continuous function. By H\"older inequality and the Lipschitz continuous of $\ell$, we have 
            \begin{align*}
                \left|\rho^h\left[\ell(Y_n^\top\alpha) \right]-\rho^h\left[\ell(Y^\top\alpha) \right]\right|&\le \|h_-'\|_{q^*}\E\left[\left|\ell(Y_n^\top\alpha)-\ell(Y^\top\alpha) \right|^q\right]^{1/q}\\
                &\le \|h_-'\|_{q^*}\operatorname{Lip}(\ell)\|\alpha\|_*\E\left[\|Y_n-Y\|^q\right]^{1/q}\\
                &= \|h_-'\|_{q^*}\operatorname{Lip}(\ell)\|\alpha\|_*\mathcal{W}(\mathbb{Q}_n, \mathbb{Q}).
            \end{align*}
            Following the above inequality, we have $\E\left[\ell(Y_n^\top\alpha) \right]\rightarrow\E\left[\ell(Y^\top\alpha) \right]$ as $n\rightarrow\infty$, and thus $\rho$ is OT-continuity.
    \end{itemize}
    \qedhere
\end{proof}

\subsection*{EC.2.2.1. Proofs of Section \ref{CE}}

\vspace{0.1in}
\begin{proof}[{\bf Proof of Proposition \ref{regular_exp}}]
    \begin{itemize}
		\item [(i)]Let $\mathbb{D}_{\mathcal{Y}}(y_1,y_2)=||y_1-y_2||_{\mathcal{Y}}$, $\forall y_1,y_2\in\mathcal{Y}$ and $||\cdot||_*$ be the dual norm of $||\cdot||_{\mathcal{Y}}$. Applying the equation \eqref{regular:p=1}, we find that the problem \eqref{pro1:exp1-1} can be reformulated as
		\begin{align*}
			\min_{\alpha\in\mathcal{A}}\sup_{(p,\delta) \in \mathcal{V},\delta\ge0}\sup_{\mathbb{Q}|_{Y}\in \mathbb{B}_{\delta}\left(\sum_{i=1}^N p_i\mathrm{I}_{\widehat{y}_i}\right)}\E_{\mathbb{Q}|_{Y}}\left[\ell(Y^\top\alpha)\right]
			=\min_{\alpha\in\mathcal{A}}\sup_{(p,\delta) \in \mathcal{V}_+}\sum_{i=1}^N p_i \ell( \widehat{y}_i^\top\alpha)+\operatorname{Lip}(\ell)\delta\|{\alpha}\|_*.
		\end{align*}
        Let $p'=(p,\delta)$ and $$f(p')=f(p,\delta)=\sum_{i=1}^N p_i \ell( \widehat{y}_i^\top\alpha)+\operatorname{Lip}(\ell)\delta\|{\alpha}\|_*.$$ 
		Using the Fenchel duality theory, we can reformulate the above inner supremum problem as
		\begin{align*}
			&\sup_{p'\in\mathcal{V}_+}\sum_{i=1}^N p_i \ell( \widehat{y}_i^\top\alpha)+\operatorname{Lip}(\ell)\delta\|{\alpha}\|_*\\
			=&\sup_{p'}\left\{\sum_{i=1}^N p_i \ell( \widehat{y}_i^\top\alpha)+\operatorname{Lip}(\ell)\delta\|{\alpha}\|_*-\sigma(p'\mid \mathcal{V}_+)\right\} \\
			=&\inf_{v}\left\{\sigma^*(v\mid\mathcal{V}_+)-f_*(v)\right\},
		\end{align*}
		where $\sigma^*(v\mid \mathcal{V}_+)$ is the convex conjugate function of $\sigma(p'|\mathcal{V}_+)$,
		and $f_*(v)$ is the concave conjugate function of $f(p')$,
		\begin{align*}
			f_*(v)=\inf_{p'\ge 0}v^\top p'-\nu(\alpha)^\top p'=\begin{cases}0 & \text { if } v-\nu(\alpha)\ge 0, \\ -\infty & \text { else, }\end{cases}
		\end{align*}
		with $\nu(\alpha)^\top=(\ell(\widehat{y}_1^\top\alpha),\cdots,\ell(\widehat{y}_{N}^\top\alpha),\operatorname{Lip}(\ell)||\alpha||_*)\in\mathbb{R}^{N+1}.$ Thus, we can reformulate the problem \eqref{pro1:exp1-1} as 
		\begin{align*}
			&\inf_{\alpha,v}\sigma^*(v\mid\mathcal{V}_+)\\
			&~\text{s.t.}~\alpha\in\mathcal{A}, v-\nu(\alpha)\ge0.
		\end{align*}
        \item[(ii)]Let $\mathbb{D}_{\mathcal{Y}}(y_1,y_2)=||y_1-y_2||_{\mathcal{Y}}^q$ for any $y_1,y_2\in\mathcal{Y}$ and $q\in(1,\infty)$.
        \begin{itemize}
            \item[(1)] Applying the result in Theorem \ref{W22thm3},  we find that the problem \eqref{pro1:exp1-1} can be reformulated as
		\begin{align*}
			\min_{\alpha\in\mathcal{A}}\sup_{(p,\delta) \in \mathcal{V},\delta\ge0}\sup_{\mathbb{Q}|_{Y}\in \mathbb{B}_{\delta}\left(\sum_{i=1}^N p_i\mathrm{I}_{\widehat{y}_i}\right)}\E_{\mathbb{Q}|_{Y}}\left[\ell(Y^\top\alpha)\right]
        =\min_{\alpha\in\mathcal{A}}\sup_{p'\in\mathcal{V}_+}\sum_{i=1}^N p_i \ell( \widehat{y}_i^\top\alpha)+C\delta^{\frac{1}{q}}\|{\alpha}\|_*.
		\end{align*}
		Let 
		$g(p')=\sum_{i=1}^N p_i \ell( \widehat{y}_i^\top\alpha)+C\delta^\frac{1}{q}\|{\alpha}\|_*.$  
		Using the Fenchel duality theory, we can reformulate the above inner supremum problem as
		\begin{align*}
			\sup_{p'\in\mathcal{V}_+}\sum_{i=1}^N p_i \ell( \widehat{y}_i^\top\alpha)+C\delta^\frac{1}{q}\|{\alpha}\|_*=\inf_{v}\left\{\sigma^*(v\mid\mathcal{V}_+)-g_*(v)\right\},
		\end{align*}
		where $\sigma^*(v\mid \mathcal{V}_+)$ is the convex conjugate function of $\sigma(p'|\mathcal{V}_+)$,
		and $g_*(v)$ is the concave conjugate function of $g(p')$, defined as 
		\begin{align*}
			g_*(v)=&\inf_{p'\ge 0}v^\top p'-\sum_{i=1}^N p_i \ell( \widehat{y}_i^\top\alpha)-C\delta^\frac{1}{q}\|{\alpha}\|_*\\
			=&\begin{cases}-C_qv_{N+1}\left\|\frac{\alpha}{v_{N+1}}\right\|_*^{\frac{q}{q-1}} & \text { if } v-\nu'(\alpha)\ge 0, \\ -\infty & \text { else, }\end{cases}
		\end{align*}
		with $C_q=\left(q^{\frac{1}{1-q}}-q^{\frac{q}{1-q}}\right)C^{\frac{q}{q-1}}$, and $\nu'(\alpha)^\top=(\ell(\widehat{y}_1^\top\alpha),\cdots,\ell(\widehat{y}_{N}^\top\alpha),0)\in\mathbb{R}^{N+1}$. Thus, we can reformulate the problem \eqref{pro1:exp1-1} as 
		\begin{align*}
			&\inf_{\alpha,v}~\sigma^*(v|\mathcal{V}_+)+C_qv_{N+1}\left\|\frac{\alpha}{v_{N+1}}\right\|_*^{\frac{q}{q-1}}\\
			&~\text{s.t.}~\alpha\in\mathcal{A},~v-\nu'(\alpha)\ge0,
		\end{align*}
		\item[(2)] Applying the result in Theorem \ref{W22thm4},  we can reformulate the problem \eqref{pro1:exp1-1} as
		\begin{align*}
			&\min_{\alpha\in\mathcal{A}}\sup_{(p,\delta) \in \mathcal{V},\delta\ge0}\sup_{\mathbb{Q}|_{Y}\in \mathbb{B}_{\delta}\left(\sum_{i=1}^N p_i\mathrm{I}_{\widehat{y}_i}\right)}\E_{{\mathbb{Q}}|_{Y}}\left[\ell^q(Y^\top \alpha) \right]\\
			=&\min_{\alpha\in\mathcal{A}}\sup_{(p,\delta) \in \mathcal{V}_+}\left(\left( \E_{\sum_{i=1}^N p_i\mathrm{I}_{\widehat{y}_i}}\left[\ell^q(Y, \alpha) \right]\right)^{1/q} + C\delta^{\frac{1}{q}} ||\alpha||_*\right)^{q}\\
			=&\min_{\alpha\in\mathcal{A}}\left(\sup_{(p,\delta) \in \mathcal{V}_+}\left( \E_{\sum_{i=1}^N p_i\mathrm{I}_{\widehat{y}_i}}\left[\ell^q(Y, \alpha) \right]\right)^{1/q} +C \delta^{\frac{1}{q}} ||\alpha||_*\right)^{q}\\
			=&\min_{\alpha\in\mathcal{A}}\left(\sup_{p'\in \mathcal{V}_+}\left( \E_{\sum_{i=1}^N p_i\mathrm{I}_{\widehat{y}_i}}\left[\ell^q(Y, \alpha) \right]\right)^{1/q} +C \delta^{\frac{1}{q}} ||\alpha||_*\right)^{q}.
		\end{align*}
		Next, we focus on solving the following problem
		\begin{align*}
			\sup_{p'\in \mathcal{V}_+}\left( \E_{\sum_{i=1}^N p_i\mathrm{I}_{\widehat{y}_i}}\left[\ell^q(Y, \alpha) \right]\right)^{1/q} +C \delta^{\frac{1}{q}} ||\alpha||_*.
		\end{align*}
		Let 
		$h(p')=\left(\sum_{i=1}^N p_i \ell^q( \widehat{y}_i^\top\alpha)\right)^\frac{1}{q}+C\delta^\frac{1}{q}\|{\alpha}\|_*.$ Using the Fenchel duality theory, we can reformulate the above inner supremum problem as
		\begin{align*}
			\sup_{p'}\left\{h(p')-\sigma(p'\mid \mathcal{V}_+)\right\}=\inf_{v}\left\{\sigma^*(v\mid\mathcal{V}_+)-h_*(v)\right\},
		\end{align*}
		where $\sigma^*(v\mid \mathcal{V}_+)$ is the convex conjugate function of $\sigma(p'|\mathcal{V}_+)$,
		and $h_*(v)$ is the concave conjugate function of $h(p')$,
		\begin{align*}
			h_*(v)=&\inf_{p'\ge 0}v^\top p'-h(p')\\
			=&\sup-C_q^1y^{\frac{1}{1-q}}-C_qv^2_{N+1}\left\|\frac{\alpha}{v^2_{N+1}}\right\|_*^{\frac{q}{q-1}}\\
			&~\text{s.t.}~v=y\cdot\nu''(\alpha)+v^2,~y\ge0,~v^2\ge 0,\\
			=&\sup-C_q^1y^{\frac{1}{1-q}}-C_qv_{N+1}\left\|\frac{\alpha}{v_{N+1}}\right\|_*^{\frac{q}{q-1}}\\
			&~\text{s.t.}~v-y\cdot\nu''(\alpha)\ge 0,~y\ge0,
		\end{align*}
		where $C_q=C^{\frac{q}{q-1}}\left(q^{\frac{1}{1-q}}-q^{\frac{q}{1-q}}\right)$, $C_q^1=q^{\frac{1}{1-q}}-q^{\frac{q}{1-q}}$, and \begin{align*}
			\nu''(\alpha)^\top=(\ell^q(\widehat{y}_1^\top\alpha),..., \ell^q(\widehat{y}_N^\top\alpha),0)\in\mathbb{R}^{N+1}.
		\end{align*}
		Thus, we have
		\begin{align*}
			&\min_{\alpha\in\mathcal{A}}\left(\sup_{p'\in \mathcal{V}_+}\left( \E_{\sum_{i=1}^N p_i\mathrm{I}_{\widehat{y}_i}}\left[\ell^q(Y, \alpha) \right]\right)^{1/q} +C \delta^{\frac{1}{q}} ||\alpha||_*\right)^{q}\\
			=&\min_{\alpha\in\mathcal{A}}\left(\inf_{v}\left\{\sigma^*(v\mid\mathcal{V}_+)-h_*(v)\right\}\right)^q\\
			=&\inf_{\alpha\in\mathcal{A},v}\left(\sigma^*(v\mid\mathcal{V}_+)-h_*(v)\right)^q\\
			=&\inf_{\alpha,v,y}\left[\sigma^*(v|\mathcal{V}_+)+C_q^1y^{\frac{1}{1-q}}+C_qv_{N+1}\left\|\frac{\alpha}{v_{N+1}}\right\|_*^{\frac{q}{q-1}}\right]^q\\
			&~~\text{s.t.}~\alpha\in\mathcal{A},~v- y\cdot\nu''(\alpha)\ge 0,~y\ge0.
		\end{align*}
		Thus, we obtain the desired result.
        \end{itemize}
        \end{itemize}
        \qedhere
\end{proof}

The following result, which reduces the worst-case expectation problem via the projection property of \cite{W22}, will be instrumental in simplifying subsequent proofs.

{\begin{theorem}\label{regular-projection}
Let $\mathbb{D}_{\mathcal{Y}}(y_1,y_2)=||y_1-y_2||_{\mathcal{Y}}^q$ for any $y_1,y_2\in\mathcal{Y}$, with $q\in(1,\infty]$. Let $1/q+1/q^*=1$. Assume $\ell: \mathbb{R} \rightarrow \mathbb{R}$ is convex and satisfies $\ell(x)-\ell(x_0) \leqslant L|x-x_0|^q+M, x \in \mathbb{R}$ for some $L,M>0$, and $x_0\in\mathbb{R}$, for any distribution $F_0$, and  $\delta \geqslant 0$.  we have \begin{align}
	&\sup _{{\mathbb{Q}}|_{Y} \in \mathbb{B}_{\delta}(F_0)} \E_{{\mathbb{Q}}|_{Y}}\left[\ell(Y^\top \alpha) \right]\notag\\
    =&\inf _{\lambda \geqslant 0}\E_{F_0}\left[\sup_{y \in \mathbb{R}}\left\{\ell(y)-\lambda|y-Y^\top\alpha|^q\right\}\right]+\lambda \delta\|\alpha\|_*^q\label{S23_regular_0}\\
	=&\inf _{\eta\ge0} \E_{F_0}\left[\sup _{z \in \mathbb{R}}\left\{z \cdot {Y}^{\top} \alpha-\ell^*(z)+\eta \cdot \frac{|z|^{q^*}}{q^* q^{q^*-1}}\right\}\right]+\delta \frac{\|\alpha\|_*^q}{\eta^{q-1}}.\label{S23_regular}
\end{align}
Moreover, we also find that problems \eqref{S23_regular_0} and \eqref{S23_regular} are convex optimization problems.
\end{theorem}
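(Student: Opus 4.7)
Proof plan.

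The strategy is to reduce the high-dimensional Wasserstein worst-case expectation to a one-dimensional problem by projection, then invoke strong duality, and finally rewrite the resulting dual via Legendre--Fenchel conjugation to obtain the second form. First, I will apply the projection property from \citet{W22} for the cost $\|y_1-y_2\|_{\mathcal{Y}}^q$: since $y\mapsto \ell(y^\top\alpha)$ depends on $y$ only through the linear functional $y\mapsto y^\top\alpha$ and $|y^\top\alpha - y'^\top\alpha|\le\|y-y'\|_{\mathcal{Y}}\|\alpha\|_*$ with equality attainable along the dual-norm supporting direction, the worst-case transport is supported along that direction. This yields
\begin{equation*}
\sup_{\mathbb{Q}|_{Y}\in\mathbb{B}_\delta(F_0)}\E_{\mathbb{Q}|_Y}[\ell(Y^\top\alpha)] = \sup_{\mathbb{Q}|_Z\in\mathbb{B}_{\delta\|\alpha\|_*^q}(F_{Y^\top\alpha})}\E_{\mathbb{Q}|_Z}[\ell(Z)],
\end{equation*}
where on the right we use the 1-D cost $|z_1-z_2|^q$ and $F_{Y^\top\alpha}$ is the push-forward of $F_0$ through $y\mapsto y^\top\alpha$.

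Second, I will invoke the standard Wasserstein DRO strong duality theorem (see \citet{G23}) on the 1-D problem. The hypothesis that $\ell$ is convex with growth $\ell(x)-\ell(x_0)\le L|x-x_0|^q+M$ is exactly the regularity needed to guarantee no duality gap and finiteness of the dual. This gives
\begin{equation*}
\sup_{\mathbb{Q}|_Z\in\mathbb{B}_{\delta'}(F_{Y^\top\alpha})}\E_{\mathbb{Q}|_Z}[\ell(Z)] = \inf_{\lambda\ge0}\left\{\lambda\delta' + \E_{F_0}\!\left[\sup_{y\in\mathbb{R}}\{\ell(y)-\lambda|y-Y^\top\alpha|^q\}\right]\right\},
\end{equation*}
and substituting $\delta'=\delta\|\alpha\|_*^q$ yields \eqref{S23_regular_0}.

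Third, for the equivalence with \eqref{S23_regular}, I will rewrite the inner supremum by Legendre--Fenchel conjugation. Setting $u=y-Y^\top\alpha$ and using the biconjugate identity $\ell(u+Y^\top\alpha)=\sup_z\{z(u+Y^\top\alpha)-\ell^*(z)\}$, the growth condition justifies interchanging the two suprema, giving
\begin{equation*}
\sup_y\{\ell(y)-\lambda|y-Y^\top\alpha|^q\} = \sup_z\left\{z\cdot Y^\top\alpha - \ell^*(z) + \sup_u\{zu-\lambda|u|^q\}\right\}.
\end{equation*}
A direct calculation gives $\sup_u\{zu-\lambda|u|^q\}=|z|^{q^*}/(q^*q^{q^*-1}\lambda^{q^*-1})$ for $\lambda>0$, where $1/q+1/q^*=1$. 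The monotone reparametrization $\eta:=1/\lambda^{q^*-1}$ (equivalently $\lambda=1/\eta^{q-1}$) is a bijection on $(0,\infty)$ and converts the penalty $\lambda\delta\|\alpha\|_*^q$ into $\delta\|\alpha\|_*^q/\eta^{q-1}$, producing \eqref{S23_regular}. The boundary case $\lambda=0$ (respectively $\eta=\infty$) is handled by the growth assumption, which forces either the inner sup to be $+\infty$ or a finite limit that is dominated by interior values.

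Finally, for convexity (with $\alpha$ fixed): in \eqref{S23_regular_0}, the map $\lambda\mapsto \sup_y\{\ell(y)-\lambda|y-Y^\top\alpha|^q\}$ is a pointwise supremum of affine functions in $\lambda$, hence convex; expectation preserves convexity; and $\lambda\mapsto\lambda\delta\|\alpha\|_*^q$ is linear. In \eqref{S23_regular}, the inner supremum is again a supremum of affine functions in $\eta$, hence convex in $\eta$; expectation preserves it; and the perspective-type term $\delta\|\alpha\|_*^q/\eta^{q-1}$ is convex in $\eta>0$ since it is the perspective transform of the convex function $\alpha\mapsto\|\alpha\|_*^q$ evaluated at fixed $\alpha$. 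The main technical obstacle is justifying the exchange of suprema in Step 3 together with the reparametrization's behavior at the boundary $\lambda=0$; this is handled by the convexity of $\ell$ (so $\ell^*$ is a proper closed convex function) and the $q$-growth bound, which jointly make the two sides lower semicontinuous and coercive in the dual variable.
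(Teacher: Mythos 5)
Your proof is correct and follows essentially the same route as the paper's: projection to the one-dimensional pushforward via Wu et al.'s Proposition~1, strong duality from Gao and Kleywegt, the biconjugate rewrite of the inner supremum with the explicit $q$--$q^*$ conjugate pair, the monotone reparametrization $\eta=(1/\lambda)^{1/(q-1)}$, and the pointwise-supremum-of-affine-functions argument for convexity. The only (welcome) difference is that you are more explicit about the boundary case $\lambda=0$ and more careful to write the projected ball radius as $\delta\|\alpha\|_*^q$, which is the version consistent with the $\lambda\delta\|\alpha\|_*^q$ term appearing in the dual; one minor slip is that the interchange of the two suprema in the conjugation step needs no growth condition (sup-sup always interchange), the growth bound being needed only for finiteness and no-duality-gap.
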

\begin{proof}[{\bf Proof of Theorem \ref{regular-projection}}]
	By the projection result in Proposition 1 of \cite{W22}, we have 
	\begin{align*}
		\sup _{{\mathbb{Q}}|_{Y} \in \mathbb{B}_{\delta}(F_0)} \E_{{\mathbb{Q}}|_{Y}}\left[\ell(Y^\top \alpha) \right]=&\sup _{\left\{{\mathbb{Q}}|_{Y^\top\alpha}: {\mathbb{Q}}|_{Y} \in \mathbb{B}_{\delta}(F_0)\right\}} \E_{{\mathbb{Q}}|_{Y^\top\alpha}}\left[\ell(Y^\top \alpha) \right]\\
		=&\sup _{{\mathbb{Q}}|_{Y^\top\alpha} \in \mathbb{B}_{\delta\|\alpha\|_*}(F_0|_{Y^\top\alpha})} \E_{{\mathbb{Q}}|_{Y^\top\alpha}}\left[\ell(Y^\top \alpha) \right].
	\end{align*}
	Using the strong duality result of Theorem 1 in \cite{G23}, we have that 
	\begin{align*}
		\sup _{{\mathbb{Q}}|_{Y^\top\alpha} \in \mathbb{B}_{\delta\|\alpha\|_*}(F_0|_{Y^\top\alpha})} \E_{{\mathbb{Q}}|_{Y^\top\alpha}}\left[\ell(Y^\top \alpha) \right]=\inf _{\lambda \geqslant 0}\left\{\E_{F_0} \left[\sup _{y \in \mathbb{R}}\left\{\ell(y)-\lambda|y-Y^\top\alpha|^q\right\}\right]+\lambda \delta\|\alpha\|_*^q\right\}.
	\end{align*}
    Thus the first equality holds. Now we focus on the second equality. Since $\ell$ is a real-valued convex function on $\mathbb{R}$, we have $\ell=\ell^{* *}$, i.e., $\ell(y)=\sup _{z \in \mathbb{R}}\left\{z y-\ell^*(z)\right\}$ for all $y \in \mathbb{R}$. Using this relation, \eqref{S23_regular_0} can be reformulated as follows.
	$$
	\begin{aligned}
		& \inf_{\lambda \geqslant 0} \mathbb{E}_{F_0}\left[\sup _{y \in \mathbb{R}}\left\{\ell(y)-\lambda\left|y-Y^{\top} \alpha\right|^q\right\}\right]+\lambda \delta\|\alpha\|_*^q \\
		= & \inf_{\lambda \geqslant 0} \mathbb{E}_{F_0}\left[\sup _{\Delta \in \mathbb{R}}\left\{\ell\left(Y^{\top} \alpha+\Delta\right)-\lambda|\Delta|^q\right\}\right]+\lambda \delta\|\alpha\|_*^q \\
		= & \inf_{\lambda \geqslant 0} \mathbb{E}_{F_0}\left[\sup _{\Delta \in \mathbb{R}}\left\{\sup _{z \in \mathbb{R}}\left\{z\left(Y^{\top} \alpha+\Delta\right)-\ell^*(z)\right\}-\lambda|\Delta|^q\right\}\right]+\lambda \delta\|\alpha\|_*^q \\
		= & \inf_{\lambda \geqslant 0} \mathbb{E}_{F_0}\left[\sup _{z \in \mathbb{R}}\left\{z \cdot Y^{\top} \alpha-\ell^*(z)+\sup _{\Delta \in \mathbb{R}}\left\{z \Delta-\lambda|\Delta|^q\right\}\right\}\right]+\lambda \delta\|\alpha\|_*^q \\
		= & \inf_{\lambda \geqslant 0} \mathbb{E}_{F_0}\left[\sup _{z \in \mathbb{R}}\left\{z \cdot Y^{\top} \alpha-\ell^*(z)+\left(\frac{1}{\lambda q}\right)^{1 /(q-1)}\left(1-\frac{1}{q}\right)|z|^{q /(q-1)}\right\}\right]+\lambda \delta\|\alpha\|_*^q \\
		= & \inf_{\lambda \geqslant 0} \mathbb{E}_{F_0}\left[\sup _{z \in \mathbb{R}}\left\{z \cdot Y^{\top} \alpha-\ell^*(z)+\left(\frac{1}{\lambda}\right)^{q^*-1} \frac{|z|^{q^*}}{q^* q^{q^*-1}}\right\}\right]+\lambda \delta\|\alpha\|_*^q .
	\end{aligned}
	$$
	By setting $\eta=(1 / \lambda)^{1 /(q-1)}$, the above equation reduces to
	$$
	\inf_{\eta \geqslant 0} \mathbb{E}_{F_0}\left[\sup _{z \in \mathbb{R}}\left\{z \cdot Y^{\top} \alpha-\ell^*(z)+\eta \cdot \frac{|z|^{q^*}}{q^* q^{q^*-1}}\right\}\right]+\delta\cdot \frac{\|\alpha\|_*^q}{\eta^{q-1}}.
	$$
	Hence, we have verified that \eqref{S23_regular_0} and \eqref{S23_regular} are equivalent. Let us now prove that \eqref{S23_regular} is a convex optimization problem. The first term in \eqref{S23_regular} is convex since the mapping $\eta \mapsto z \cdot Y^{\top} \alpha-\ell^*(z)+$ $\eta|z|^q /\left(q p^{q-1}\right)$ is convex for any fixed $z \in \mathbb{R}$ and any fixed $\alpha \in \mathcal{A}$. Thus, taking the spremum of $z$ is also convex. Also noting that $\|\alpha\|_*^p / \eta^{p-1}$ is a convex function in $\eta \ge 0$.  Therefore, we conclude that \eqref{S23_regular_0} is a convex optimization problem. This completes the proof. \qedhere
\end{proof}

\begin{proof}[{\bf Proof of Proposition \ref{general_exp}}]
    Applying Theorem \ref{regular-projection}, we find that problem \eqref{pro1:exp1-1} can be reformulated as
		\begin{align*}
			&\min_{\alpha\in\mathcal{A}}\sup_{(p,\delta) \in \mathcal{V},\delta\ge0}\sup_{\mathbb{Q}|_{Y}\in \mathbb{B}_{\delta(p,\epsilon)}\left(\sum_{i=1}^N p_i\mathrm{I}_{\widehat{y}_i}\right)}\E_{\mathbb{Q}|_{Y}}\left[\ell(Y^\top\alpha)\right]\\
			=&\min_{\alpha\in\mathcal{A}}\sup_{(p,\delta) \in \mathcal{V}_+}\inf _{\eta\ge0} \sum_{i=1}^Np_i\cdot\sup _{z \in \mathbb{R}}\left\{z \cdot \widehat{y}_i^{\top} \alpha-\ell^*(z)+\eta \cdot \frac{|z|^{q^*}}{q^* q^{q^*-1}}\right\}+\delta \frac{\|\alpha\|_*^q}{\eta^{q-1}}\\
			=&\min_{\alpha\in\mathcal{A}}\sup_{(p,\delta^) \in \mathcal{V}_+}\inf _{\eta\ge0} \sum_{i=1}^Np_i\cdot\sup _{z \in \mathbb{R}}\left\{z \cdot \widehat{y}_i^{\top} \alpha-\ell^*(z)+\eta \cdot \frac{|z|^{q^*}}{q^* q^{q^*-1}}\right\}+\delta\frac{\|\alpha\|_*^q}{\eta^{q-1}}.
		\end{align*}
		One can verify that the objective function of the above optimization problem is linear in $(p,\delta)\in \mathcal{V}_+$ and convex in $\eta\ge0$, and that $\mathcal{V}_+$ is a convex compact set. Applying min-max theorem in \cite{S58}, we have 
		\begin{align*}
			&\min_{\alpha\in\mathcal{A}}\sup_{(p,\delta) \in \mathcal{V}_+}\inf _{\eta\ge0} \sum_{i=1}^Np_i\cdot\sup _{z \in \mathbb{R}}\left\{z \cdot \widehat{y}_i^{\top} \alpha-\ell^*(z)+\eta \cdot \frac{|z|^{q^*}}{q^* q^{q^*-1}}\right\}+\delta\frac{\|\alpha\|_*^q}{\eta^{q-1}}\\
			=&\inf_{\alpha\in\mathcal{A},\eta\ge0}\sup_{(p,\delta) \in \mathcal{V}_+} \sum_{i=1}^Np_i\cdot\sup _{z \in \mathbb{R}}\left\{z \cdot \widehat{y}_i^{\top} \alpha-\ell^*(z)+\eta \cdot \frac{|z|^{q^*}}{q^* q^{q^*-1}}\right\}+\delta\frac{\|\alpha\|_*^q}{\eta^{q-1}}.
		\end{align*}
		Let $f(p,\delta)=\sum_{i=1}^Np_i\cdot \sup _{z \in \mathbb{R}}\left\{z \cdot \widehat{y}_i^{\top} \alpha-\ell^*(z)+\eta \cdot \frac{|z|^{q^*}}{q^* q^{q^*-1}}\right\}+\delta\frac{\|\alpha\|_*^q}{\eta^{q-1}}$. Similar to the proof of Proposition \ref{regular_exp}, using Fenchel duality, we obtain the desired result. \qedhere
\end{proof}

\subsection*{EC.2.2.2. Proofs of Section \ref{Expectation Form}}

In the following, we give a minimax lemma which will be used in the proof of propositions in Section \ref{Expectation Form}.\begin{lemma}\label{high_dimen_min_max}    Let $\mathbb{D}_{\mathcal{Y}}(y_1,y_2)=||y_1-y_2||_{\mathcal{Y}}^q$ for any $y_1,y_2\in\mathcal{Y}$ and $q\in[1,\infty)$, $\delta\ge 0$, and $\mathbb{Q}_0$ be a distribution on $\mathcal{Y}$. We have the following results.
\begin{itemize}
    \item[(i)]Assume the loss function $\ell(z,t):\mathbb{R}\times\mathbb{R}^k\rightarrow \mathbb{R}$ is the form $\ell(z,t)=\sum_{q_1\in\mathcal{I}}C_{q_1}\left(\bar{\ell}_{q_1}(z,t)\right)^{q_1}$, with $\mathcal{I}\subset[1,q]$ and $C_{q_1}\ge 0$, where $\bar{\ell}_{q_1}(z,t)$ is nonnegative on $\mathbb{R}^{k+1}$, convex and level bounded in $t\in\mathbb{R}^k$ for all $z \in \mathbb{R}$, and Lipschitz continuous in $z$ for  all $t \in \mathbb{R}^k$ with a uniform Lipschitz constant $\operatorname{Lip}(\bar{\ell}_{q_1})$, for $q_1\in[1,q]$. Then we have     \begin{align*}&\sup_{{\mathbb{Q}}|_{Y} \in 	\mathbb{B}_{\delta}\left(\mathbb{Q}_0\right)}\inf_{t\in\mathbb{R}^k}\E_{\mathbb{Q}|_Y}\left[\ell(Y^\top\alpha,t)\right]=\inf_{t\in\mathbb{R}^k}\sup_{{\mathbb{Q}}|_{Y} \in 	\mathbb{B}_{\delta}\left(\mathbb{Q}_0\right)}\E_{\mathbb{Q}|_Y}\left[\ell(Y^\top\alpha,t)\right].	\end{align*} 
    \item[(ii)] Let $k=1$. Assume the loss function $\ell(z,t):\mathbb{R}\times\mathbb{R}\rightarrow \mathbb{R}$ is nonnegative on $\mathbb{R}^{2}$, convex in $t\in\mathbb{R}$ for all $z \in \mathbb{R}$, and Lipschitz continuous in $z$ for  all $t \in \mathbb{R}$ with a uniform Lipschitz constant $\operatorname{Lip}({\ell})$. Moreover, assume $t+\ell(z,t)$ is level bounded in $t\in\mathbb{R}$ for all $z \in \mathbb{R}$. Then we have     \begin{align*}&\sup_{{\mathbb{Q}}|_{Y} \in 	\mathbb{B}_{\delta}\left(\mathbb{Q}_0\right)}\inf_{t\in\mathbb{R}}\left\{t+\left(\E_{\mathbb{Q}|_Y}\left[\ell^q(Y^\top\alpha,t)\right]\right)^{1/q}\right\}=\inf_{t\in\mathbb{R}}\sup_{{\mathbb{Q}}|_{Y} \in 	\mathbb{B}_{\delta}\left(\mathbb{Q}_0\right)}\left\{t+\left(\E_{\mathbb{Q}|_Y}\left[\ell^q(Y^\top\alpha,t)\right]\right)^{1/q}\right\}.	\end{align*} 
\end{itemize}\end{lemma}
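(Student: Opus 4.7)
Both identities will be derived from Sion's minimax theorem (cited in the paper as [S58]). Write the saddle functions as $F(\mathbb{Q}|_Y, t) := \mathbb{E}_{\mathbb{Q}|_Y}[\ell(Y^\top\alpha, t)]$ for part~(i) and $G(\mathbb{Q}|_Y, t) := t + (\mathbb{E}_{\mathbb{Q}|_Y}[\ell^q(Y^\top\alpha, t)])^{1/q}$ for part~(ii). For each one I will verify the Sion hypotheses on $\mathbb{B}_\delta(\mathbb{Q}_0) \times T$, where $T$ is a compact convex subset of the $t$-domain to be chosen below: the ambiguity ball is convex and compact in the $\mathcal{W}_q$-topology, $T$ is convex, and the saddle function is concave and upper semi-continuous in $\mathbb{Q}|_Y$ and convex and lower semi-continuous in $t$.

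\textbf{Key steps.} Convexity of $\mathbb{B}_\delta(\mathbb{Q}_0)$ is standard; its $\mathcal{W}_q$-compactness follows from the uniform bound on $q$-th moments enforced by the ball. For part~(i), $F$ is \emph{linear} in $\mathbb{Q}|_Y$, and its continuity in the $\mathcal{W}_q$-topology is inherited from the Lipschitz-in-$z$ assumption on each $\bar\ell_{q_1}$ together with the structural form $\ell = \sum_{q_1\in\mathcal{I}} C_{q_1}\bar\ell_{q_1}^{q_1}$, which has at most $q$-th order growth. Convexity of $F$ in $t$ comes from: $\bar\ell_{q_1}(z,\cdot)$ is nonnegative convex, $x\mapsto x^{q_1}$ is increasing and convex on $\mathbb{R}_+$ for $q_1\geq 1$, so $\bar\ell_{q_1}^{q_1}(z,\cdot)$ is convex in $t$; a nonnegative combination followed by expectation preserves convexity. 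For part~(ii), $G$ is concave in $\mathbb{Q}|_Y$ as a composition of the concave nondecreasing map $x\mapsto x^{1/q}$ with the linear map $\mathbb{Q}|_Y \mapsto \mathbb{E}_{\mathbb{Q}|_Y}[\ell^q(Y^\top\alpha,t)]$, plus the linear term $t$. Convexity in $t$ uses the $L^q(\mathbb{Q}|_Y)$-norm: since $\ell(Y^\top\alpha,\cdot)$ is nonnegative and pointwise convex in $t$, the monotonicity of the norm on nonnegative integrands together with the triangle inequality gives convexity of $t\mapsto \|\ell(Y^\top\alpha,t)\|_{L^q(\mathbb{Q}|_Y)}$, and adding $t$ preserves it. Lower semi-continuity in $t$ for both $F$ and $G$ is automatic from convexity on a finite-dimensional space.

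\textbf{Main obstacle.} The real difficulty is that the native $t$-domain $\mathbb{R}^k$ (resp.\ $\mathbb{R}$) is not compact, so Sion's theorem cannot be applied directly. The hard step is therefore to replace $\mathbb{R}^k$ by a compact convex $T$ without changing either the min-max or the max-min value. The plan is as follows: fix any anchor $t_0$ and set $M := \sup_{\mathbb{Q}|_Y \in \mathbb{B}_\delta(\mathbb{Q}_0)} F(\mathbb{Q}|_Y, t_0)$ (resp.\ for $G$), which is finite by the growth/Lipschitz assumptions and the $q$-th moment bound inside the ball. On both sides of the claimed identity, only $t$'s in the sublevel set $\{t : F(\mathbb{Q}|_Y,t) \leq M+1\}$ can contribute; the level-boundedness of each $\bar\ell_{q_1}(z,\cdot)$ (resp.\ of $t+\ell(z,\cdot)$) together with Lipschitz-in-$z$ control will force these sublevel sets to sit in a common compact convex $T$, uniformly over $\mathbb{Q}|_Y \in \mathbb{B}_\delta(\mathbb{Q}_0)$. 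I expect this uniformity to be the most delicate point, since pointwise level-boundedness in $z$ must be upgraded using the $\mathcal{W}_q$-tightness of the ambiguity ball. Once such a $T$ is in place, Sion's theorem applies on $\mathbb{B}_\delta(\mathbb{Q}_0)\times T$ and yields the desired interchange of $\sup$ and $\inf$ in both parts.
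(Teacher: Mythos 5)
Your plan follows the same overall route as the paper's proof: apply Sion's minimax theorem after replacing $\mathbb{R}^k$ by a compact convex $t$-domain, which is justified by level-boundedness together with Lipschitz-in-$z$ control. The step you flag as the crux -- forcing the relevant sublevel sets to lie in a common compact $T$ uniformly over $\mathbb{Q}|_Y\in\mathbb{B}_\delta(\mathbb{Q}_0)$ -- is exactly what the paper spells out, but the mechanism is not $\mathcal{W}_q$-tightness as you conjecture. Rather, the paper combines the Lipschitz bound with Jensen's and H\"older's inequalities to obtain the uniform, $\mathbb{Q}$-independent additive perturbation
\begin{align*}
\left|\left(\E_{\mathbb{Q}|_Y}\!\left[\bar\ell_{q_j}(Y^\top\alpha,t)^{q_j}\right]\right)^{1/q_j}-\left(\E_{\mathbb{Q}_0}\!\left[\bar\ell_{q_j}(Y^\top\alpha,t)^{q_j}\right]\right)^{1/q_j}\right|\le \operatorname{Lip}(\bar\ell_{q_j})\,\|\alpha\|_*\,\delta^{1/q},
\end{align*}
valid for all $t$ and all $\mathbb{Q}|_Y$ in the ball, and then uses level-boundedness of the anchor objective $F(\mathbb{Q}_0,\cdot)$ (itself proved via Fatou's lemma on the nonnegative integrand) to pin all minimizers inside a fixed ball around $t(\mathbb{Q}_0)$. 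If you carry out this H\"older-transfer step, the rest of your plan goes through. One further caution: your assertion that $\mathbb{B}_\delta(\mathbb{Q}_0)$ is compact in the $\mathcal{W}_q$-topology is not justified on an unbounded $\mathcal{Y}$ (the uniform $q$-th moment bound gives weak compactness, not $\mathcal{W}_q$-compactness), but since Sion's theorem only requires one of the two feasible sets to be compact and your compact $T$ supplies this, the claim is unnecessary and does not affect the argument.
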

\begin{proof}[{\bf Proof of Lemma \ref{high_dimen_min_max}}]
     \begin{itemize}
         \item[(i)]Denoted by $\pi^1_\ell\left(\mathbb{Q}|_Y,t\right)=\E_{\mathbb{Q}|_Y}\left[\ell(Y^\top\alpha,t)\right]$. First, we show three facts below. (a) $\pi^1_\ell\left(\mathbb{Q}|_Y,t\right)$ is concave in $\mathbb{Q}|_Y$ for all $t \in \mathbb{R}^k$; (b) $\pi^1_\ell\left(\mathbb{Q}|_Y,t\right)$ is convex in $t$ for all $\mathbb{Q}|_Y$; (c) $\pi^1_\ell\left(\mathbb{Q}|_Y,t\right)$ is level bounded in $t$ for all $\mathbb{Q}|_Y$. The fact (a) is trivial. Note that $\ell$ is nonnegative on $\mathbb{R}^{k+1}$ and convex in $t\in\mathbb{R}^k$ for all $z\in\mathbb{R}$.  For any distribution $\mathbb{Q|}_Y$ on $\mathcal{Y}$, $\lambda \in[0,1]$ and $t_1, t_2 \in \mathbb{R}^k$, it holds that
    \begin{align*}
\E_{\mathbb{Q}|_Y}\left[\ell(Y^\top\alpha,\lambda t_1+(1-\lambda)t_2)\right]&\le \E_{\mathbb{Q}|_Y}\left[\lambda\ell(Y^\top\alpha,t_1)+(1-\lambda)\ell(Y^\top\alpha,t_2)\right]\\&=\lambda\E_{\mathbb{Q}|_Y}\left[\ell(Y^\top\alpha,t_1\right]+(1-\lambda)\E_{\mathbb{Q}|_Y}\left[\ell(Y^\top\alpha,t_2)\right],
    \end{align*}
where the inequality from the convexity of $\ell$ with respect to $t\in\mathbb{R}^k$. This implies (b). We claim that $\pi^1_\ell\left(\mathbb{Q}|_Y,t\right)$ is level bounded in $t$ for all $\mathbb{Q}|_Y$. Otherwise, for fixed $\mathbb{Q}|_Y$, there exists $A\in\mathbb{R}$ such that $S_A:=\{t\in\mathbb{R}^k:\pi^1_\ell\left(\mathbb{Q}|_Y,t\right)\le A\}$ is unbounded. Then for each integer $n\ge 1$, there exist $t_n\in S_A$ satisfies that $\|t_n\|\ge n$ for arbitrary norm in $\mathbb{R}^k$. For any $q_1\in\mathcal{I}$, because $\bar{\ell}_{q_1}(z,t)$ is level bounded for any $z\in \mathbb{R}$, we have $\bar{\ell}_{q_1}(z,t_n)\rightarrow\infty$ when $n\rightarrow \infty$. In fact, if there exists $M\in\mathbb{R}$ such that $\bar{\ell}_{q_1}(z,t_n)\le M$ for any $n$, one can verify that the set $\{t:\bar{\ell}_{q_1}(z,t)\le M\}$ is unbounded, which is contradictory to that $\bar{\ell}_{q_1}(z,t)$ is level bounded in $t$ for any $z$. And then $\bar{\ell}_{q_1}(z,t_n)\rightarrow\infty$ when $n\rightarrow \infty$, implies that $\ell(z,t_n)\rightarrow\infty$ when $n\rightarrow \infty$ for any $z\in\mathbb{R}$ due to the form of $\ell$. By nonnegativity of $\ell$, using Fatou lemma. we have 
\begin{align*}
    \liminf_{n\rightarrow\infty}\pi^1_\ell\left(\mathbb{Q}|_Y,t_n\right)=\liminf_{n\rightarrow\infty}\E_{\mathbb{Q}|_Y}\left[\ell(Y^\top\alpha,t_n)\right]\ge \E_{\mathbb{Q}|_Y}\left[\liminf_{n\rightarrow\infty}\ell(Y^\top\alpha,t_n)\right]=\infty.
\end{align*}
This is contradictory to that $\liminf_{n\rightarrow\infty}\pi^1_\ell\left(\mathbb{Q}|_Y,t_n\right)\le A$. Hence, we conclude the proof of (c). By (b), we find that $\pi^1_\ell\left(\mathbb{Q}|_Y,t\right)$ is lower semicontinuous in $t$. Combining this fact with (c), and applying Weierstrass theorem, there exists a level set $S_{A_0}$ of $\pi^1_\ell\left(\mathbb{Q}|_Y,t\right)$ such that all minimizers of the problem $\inf _{t \in \mathbb{R}} \pi^1_{\ell}(\mathbb{Q}|_Y, t)$ are contained in it. The set $S_{A_0}$ is closed and bounded because that $\pi^1_\ell\left(\mathbb{Q}|_Y,t\right)$ is lower semicontinuous and level bounded, and thus, $S_{A_0}$ is a compact set. Then, the set of all minimizers of the problem $\inf _{t \in \mathbb{R}^k} \pi^1_{\ell}(\mathbb{Q}|_Y, t)$ is a subset of a compact set. Denote by $t(\mathbb{Q}|_Y):=\arg\inf_{\|t\|} \{\arg \min _t \pi^1_{\ell}(\mathbb{Q}|_Y, t)\}$. We will show that $\left\{t(\mathbb{Q}|_Y): \mathbb{Q}|_Y \in \mathbb{B}_\delta\left(\mathbb{Q}_0\right)\right\}$ is a subset of a compact set. For any $\mathbb{Q}|_Y \in \mathbb{B}_\delta\left(\mathbb{Q}_0\right)$ and $t\in\mathbb{R}^k$, let $Y\sim\mathbb{Q}|_Y$ and $Y_0\sim\mathbb{Q}_0$, and $q_j\in\mathcal{I}$, we have
\begin{align}\label{higher_dimen_1}
   &\left|\left(\E\left[C_{q_j}\left(\bar{\ell}_{q_j}(Y^\top\alpha,t)\right)^{q_j}\right]\right)^{1/{q_j}}-\left(\E\left[C_{q_j}\left(\bar{\ell}_{q_j}(Y_0^\top\alpha,t)\right)^{q_j}\right]\right)^{1/{q_j}}\right| \notag\\
   \le& C_{q_j}\left(\mathbb{E}\left[\left|\bar{\ell}_{q_j}(Y^\top\alpha,t)-\bar{\ell}_{q_j}(Y_0^\top\alpha,t)\right|^{q_j}\right]\right)^{1 / {q_j}}\notag\\
   \le &C_{q_j}\left(\mathbb{E}\left[\operatorname{Lip}(\bar{\ell}_{q_j})\left|(Y-Y_0)^\top\alpha\right|^{q_j}\right]\right)^{1 / {q_j}}\notag\\
   \le &C_{q_j}\operatorname{Lip}(\bar{\ell}_{q_j})\left(\mathbb{E}\left[\left|(Y-Y_0)^\top\alpha\right|^{q}\right]\right)^{1 / {q}}\notag\\
   \le &C_{q_j}\operatorname{Lip}(\bar{\ell}_{q_j})\|\alpha\|_*\left(\mathbb{E}\left[\left\|Y-Y_0\right\|^q\right]\right)^{1 /q} \le C_{q_j}\operatorname{Lip}(\bar{\ell}_{q_j})\|\alpha\|_* \delta^{1/q},
\end{align}
where the first, third and fourth inequalities from the triangle inequality, Jensen's inequality and \" Holder inequality, respectively, and we use the definition of OT ball in the last inequality. From \eqref{higher_dimen_1}, we have that
\begin{align}\label{higher_dimen_2}
    \pi^1_\ell\left(\mathbb{Q}|_Y,t\left(\mathbb{Q}_0\right)\right)\le \sum_{j=1,2}\left(\left(\E\left[C_{q_j}\left(\bar{\ell}_{q_j}(Y_0^\top\alpha,t\left(\mathbb{Q}_0\right))\right)^{q_j}\right]\right)^{1/{q_j}}+C_{q_j}\operatorname{Lip}(\bar{\ell}_{q_j})\|\alpha\|_* \delta^{1/q}\right)^{{q_j}}.
\end{align}
Note that $\left(\E\left[C_{q_j}\left(\bar{\ell}_{q_j}(Y_0^\top\alpha,t)\right)^{q_j}\right]\right)^{1/{q_j}} \rightarrow \infty$ as $\|t\| \rightarrow \infty$ for $q_j\in\mathcal{I}$ due to that $\bar{\ell}_{q_j}(z,t)$ is  level bounded. There exists $\Delta>0$ such that $$\left(\E\left[C_{q_j}\left(\bar{\ell}_{q_j}(Y_0^\top\alpha,t)\right)^{q_j}\right]\right)^{1/{q_j}}\ge C_{q_j}\operatorname{Lip}(\bar{\ell}_{q_j})\|\alpha\|_* \delta^{1/q},$$ and $$\left(\E\left[C_{q_j}\left(\bar{\ell}_{q_j}(Y_0^\top\alpha,t)\right)^{q_j}\right]\right)^{1/{q_j}}>\left(\E\left[C_{q_j}\left(\bar{\ell}_{q_j}(Y_0^\top\alpha,t\left(\mathbb{Q}_0\right))\right)^{q_j}\right]\right)^{1/{q_j}}+2C_{q_j}\operatorname{Lip}(\bar{\ell}_{q_j})\|\alpha\|_* \delta^{1/q},$$ for all $t \notin\{t:\|t-t(\mathbb{Q}_0)\|\le\Delta\}$ and $q_j\in\mathcal{I}$. This, combined with \eqref{higher_dimen_1}, imply that
\begin{align}\label{higher_dimen_3}
 \pi^1_{\ell}(\mathbb{Q}|_Y, t) \ge& \sum_{j=1,2}\left(\left(\E\left[C_{q_j}\left(\bar{\ell}_{q_j}(Y_0^\top\alpha,t)\right)^{q_j}\right]\right)^{1/{q_j}}-C_{q_j}\operatorname{Lip}(\bar{\ell}_{q_j})\|\alpha\|_* \delta^{1/q}\right)^{{q_j}}\notag\\
 >&\sum_{j=1,2}\left(\left(\E\left[C_{q_j}\left(\bar{\ell}_{q_j}(Y_0^\top\alpha,t\left(\mathbb{Q}_0\right))\right)^{q_j}\right]\right)^{1/{q_j}}+C_{q_j}\operatorname{Lip}(\bar{\ell}_{q_j})\|\alpha\|_* \delta^{1/q}\right)^{{q_j}},   
\end{align}
for all $t \notin\{t:\|t-t(\mathbb{Q}_0)\|\le\Delta\}$. Applying \eqref{higher_dimen_2} and \eqref{higher_dimen_3}, we have $\left\{t(\mathbb{Q}|_Y): \mathbb{Q}|_Y \in \mathbb{B}_\delta\left(\mathbb{Q}_0\right)\right\} \subseteq\{t:\|t-t(\mathbb{Q}_0)\|\le\Delta\}$. Using the minimax theorem in \cite{S58}, it holds that
$$
\begin{aligned}
\sup_{{\mathbb{Q}}|_{Y} \in \mathbb{B}_{\delta}\left(\mathbb{Q}_0\right)}\inf_{t\in\mathbb{R}^k} \pi^1_{\ell}(\mathbb{Q}|_Y, t) & =\sup_{{\mathbb{Q}}|_{Y} \in \mathbb{B}_{\delta}\left(\mathbb{Q}_0\right)}\inf_{t\in\{t:\|t-t(\mathbb{Q}_0)\|\le\Delta\}} \pi^1_{\ell}(\mathbb{Q}|_Y, t)\\
& =\inf_{t\in\{t:\|t-t(\mathbb{Q}_0)\|\le\Delta\}}\sup_{{\mathbb{Q}}|_{Y} \in \mathbb{B}_{\delta}\left(\mathbb{Q}_0\right)} \pi^1_{\ell}(\mathbb{Q}|_Y, t) \ge \inf_{t\in\mathbb{R}^k}\sup_{{\mathbb{Q}}|_{Y} \in \mathbb{B}_{\delta}\left(\mathbb{Q}_0\right)} \pi^1_{\ell}(\mathbb{Q}|_Y, t).
\end{aligned}
$$
The converse direction is trivial. Hence, we complete the proof of (i).
\item[(ii)] The proof is similar to (i).  
     \end{itemize}
     \qedhere
\end{proof}

\begin{proof}[{\bf Proof of Proposition \ref{general_min_exp}}]
    \begin{itemize}
			\item[(i)] Let $\mathbb{D}_{\mathcal{Y}}(y_1,y_2)=||y_1-y_2||_{\mathcal{Y}}$, $\forall y_1,y_2\in\mathcal{Y}$. Applying Lemma \ref{high_dimen_min_max}, equation \eqref{regular:p=1} and minimax theorem, problem \eqref{r11} with \eqref{r2} can be reformulated as 
			\begin{align*}
				&\min_{\alpha\in\mathcal{A}}\sup_{(p,\delta) \in \mathcal{V}}\sup_{\mathbb{Q}|_{Y}\in \mathbb{B}_{\delta}\left(\sum_{i=1}^N p_i\mathrm{I}_{\widehat{y}_i}\right)}\rho^{(1)}_{\mathbb{Q}|_{Y}}\left(Y^\top\alpha\right)\\
				=&\min_{\alpha\in\mathcal{A}}\sup_{(p,\delta) \in \mathcal{V}}\sup_{\mathbb{Q}|_{Y}\in \mathbb{B}_{\delta}\left(\sum_{i=1}^N p_i\mathrm{I}_{\widehat{y}_i}\right)}\inf_{t\in\mathbb{R}^k}\mathbb{E}_{\mathbb{Q}|_{Y}}\left[\ell(Y^\top\alpha, t)\right]\\
				=&\min_{\alpha\in\mathcal{A}}\sup_{(p,\delta) \in \mathcal{V}}\inf_{t\in\mathbb{R}^k}\sup_{\mathbb{Q}|_{Y}\in \mathbb{B}_{\delta}\left(\sum_{i=1}^N p_i\mathrm{I}_{\widehat{y}_i}\right)}\mathbb{E}_{\mathbb{Q}|_{Y}}\left[\ell(Y^\top\alpha, t)\right]\\
				=&\min_{\alpha\in\mathcal{A}}\sup_{(p,\delta) \in \mathcal{V},\delta\ge0}\inf_{t\in\mathbb{R}^k}\mathbb{E}_{\sum_{i=1}^N p_i\mathrm{I}_{\widehat{y}_i}}\left[\ell(Y^\top\alpha, t)\right]+\operatorname{Lip}(\ell)\delta||\alpha||_*\\
				=&\min_{\alpha\in\mathcal{A}}\inf_{t\in\mathbb{R}^k}\sup_{(p,\delta) \in \mathcal{V},\delta\ge0}\mathbb{E}_{\sum_{i=1}^N p_i\mathrm{I}_{\widehat{y}_i}}\left[\ell(Y^\top\alpha, t)\right]+\operatorname{Lip}(\ell)\delta ||\alpha||_*\\
                =&\min_{\alpha\in\mathcal{A}}\inf_{t\in\mathbb{R}^k}\sup_{(p,\delta) \in \mathcal{V}_+}\mathbb{E}_{\sum_{i=1}^N p_i\mathrm{I}_{\widehat{y}_i}}\left[\ell(Y^\top\alpha, t)\right]+\operatorname{Lip}(\ell)\delta ||\alpha||_*.
			\end{align*}
			Applying the Fenchel duality, we have the desired result.
            \item[(ii)] Applying Corollary 1 of \cite{W22} and minimax theorem, we find that problem \eqref{r11} with \eqref{r2} can be reformulated as
		\begin{align*}
			&\min_{\alpha\in\mathcal{A}}\sup_{(p,\delta) \in \mathcal{V}}\sup_{\mathbb{Q}|_{Y}\in \mathbb{B}_{\delta}\left(\sum_{i=1}^N p_i\mathrm{I}_{\widehat{y}_i}\right)}\rho^{(1)}_{\mathbb{Q}|_{Y}}\left(Y^\top\alpha\right)\\
			=&\min_{\alpha\in\mathcal{A}}\sup_{(p,\delta) \in \mathcal{V},\delta\ge0}\left(\left( \rho^{(1)}_{\sum_{i=1}^N p_i\mathrm{I}_{\widehat{y}_i}}\left(Y^\top\alpha \right)\right)^{1/q} + C\delta^{\frac{1}{q}} ||\alpha||_*\right)^{q}\\
			=&\min_{\alpha\in\mathcal{A}}\sup_{(p,\delta) \in \mathcal{V}_+}\left(\left( \inf_{t\in\mathbb{R}}\E_{\sum_{i=1}^N p_i\mathrm{I}_{\widehat{y}_i}}\left[\ell(Y^\top\alpha,t) \right]\right)^{1/q} + C\delta^{\frac{1}{q}} ||\alpha||_*\right)^{q}\\
			=&\min_{\alpha\in\mathcal{A}}\left(\sup_{(p,\delta) \in \mathcal{V}_+}\inf_{t\in\mathbb{R}}\left( \E_{\sum_{i=1}^N p_i\mathrm{I}_{\widehat{y}_i}}\left[\ell(Y^\top\alpha,t) \right]\right)^{1/q} + C\delta^{\frac{1}{q}} ||\alpha||_*\right)^{q}\\
			=&\min_{\alpha\in\mathcal{A}}\left(\inf_{t\in\mathbb{R}}\sup_{(p,\delta) \in \mathcal{V}_+}\left( \E_{\sum_{i=1}^N p_i\mathrm{I}_{\widehat{y}_i}}\left[\ell(Y^\top\alpha,t) \right]\right)^{1/q} + C\delta^{\frac{1}{q}} ||\alpha||_*\right)^{q}\\
			=&\inf_{\alpha\in\mathcal{A},t\in\mathbb{R}}\left(\sup_{(p,\delta) \in \mathcal{V}_+}\left( \E_{\sum_{i=1}^N p_i\mathrm{I}_{\widehat{y}_i}}\left[\ell(Y^\top\alpha,t) \right]\right)^{1/q} + C\delta^{\frac{1}{q}} ||\alpha||_*\right)^{q}.
		\end{align*}
		Thus we only need to consider the inner problem
		\begin{align*}
			\sup_{(p,\delta) \in \mathcal{V}_+}\left( \E_{\sum_{i=1}^N p_i\mathrm{I}_{\widehat{y}_i}}\left[\ell(Y^\top\alpha,t) \right]\right)^{1/q} + C\delta^{\frac{1}{q}} ||\alpha||_*,  
		\end{align*} for $q\in(1,\infty)$. 
		Similar to the proof of Proposition \ref{regular_exp}, we can apply the Fenchel duality theory to the above problem and then obtain the desired result. 
    \end{itemize}
    \qedhere
\end{proof}

\begin{proof}[{\bf Proof of Proposition \ref{pro2*}}]
     Applying Lemma \ref{high_dimen_min_max}, Theorem \ref{regular-projection} and minimax theorem in \cite{S58}, problem \eqref{r11} with \eqref{r2} can be reformulated as 
			\begin{align*}
				&\min_{\alpha\in\mathcal{A}}\sup_{(p,\delta) \in \mathcal{V}}\sup_{\mathbb{Q}|_{Y}\in \mathbb{B}_{\delta}\left(\sum_{i=1}^N p_i\mathrm{I}_{\widehat{y}_i}\right)}\rho^{(1)}_{\mathbb{Q}|_{Y}}\left(Y^\top\alpha\right)\\
				=&\min_{\alpha\in\mathcal{A}}\sup_{(p,\delta) \in \mathcal{V}}\inf_{t\in\mathbb{R}^k}\sup_{\mathbb{Q}|_{Y}\in \mathbb{B}_{\delta}\left(\sum_{i=1}^N p_i\mathrm{I}_{\widehat{y}_i}\right)}\E_{\mathbb{Q}|_Y}\left[\ell(Y^\top\alpha,t)\right]\\
				=&\min_{\alpha\in\mathcal{A}}\sup_{(p,\delta) \in \mathcal{V},\delta\ge0}\inf_{t\in\mathbb{R}^k,\eta\ge0}\sum_{i=1}^Np_i\cdot\sup _{z \in \mathbb{R}}\left\{z \cdot \widehat{y}_i^{\top} \alpha-(\ell(\cdot,t))^*(z)+\eta \cdot \frac{|z|^{q^*}}{q^* q^{q^*-1}}\right\}+\delta \frac{\|\alpha\|_*^q}{\eta^{q-1}}\\
				=&\inf_{\alpha\in\mathcal{A},t\in\mathbb{R}^k,\eta\ge0}\sup_{(p,\delta) \in \mathcal{V}_+}\sum_{i=1}^Np_i\cdot\sup _{z \in \mathbb{R}}\left\{z \cdot \widehat{y}_i^{\top} \alpha-(\ell(\cdot,t))^*(z)+\eta \cdot \frac{|z|^{q^*}}{q^* q^{q^*-1}}\right\}+\delta \frac{\|\alpha\|_*^q}{\eta^{q-1}}.
			\end{align*}
			Applying the Fenchel duality, we have the desired result. \qedhere
\end{proof}

\begin{proof}[{\bf Proof of Example \ref{re:variance}}]
    The robustified conditional  mean-variance portfolio allocation problem in \cite{N24}  is as follows: \begin{align*}
\min _{\alpha \in \mathcal{A}} \sup _{\substack{\mathbb{Q} \in \mathbb{B}_{\delta_0}(\widehat{\mathbb{P}}) \\ \mathbb{Q}\left(X \in \mathcal{N}_\gamma(x_0)\right) \in \mathcal{N}_{\omega}}} \operatorname{Variance}_{\mathbb{Q}}\left[Y^{\top} \alpha \mid X \in \mathcal{N}_\gamma\left(x_0\right)\right]-\theta \cdot \mathbb{E}_{\mathbb{Q}}\left[Y^{\top} \alpha \mid X \in \mathcal{N}_\gamma\left(x_0\right)\right],    
\end{align*}
where $\mathcal{N}_{\omega}=[\epsilon_0,1]$, $\epsilon_0\in(0,1]$ and $\theta\ge 0$. Suppose in addition that $\mathcal{X}=\mathbb{R}^{n_1}, \mathcal{Y}=\mathbb{R}^{n_2}, \mathbb{D}_{\mathcal{X}}(x, \widehat{x})=\|x-\widehat{x}\|^2$ and $\mathbb{D}_{\mathcal{Y}}(y, \widehat{y})=\|y-\widehat{y}\|_2^2$. By Theorem \ref{thm1} and the minimization representation of variance, one can verify that the problem \eqref{variance} can be reformulated as 
\begin{align*}
&\min _{\alpha \in \mathcal{A}} \sup_{(p,\delta) \in {\cal V}} \sup_{{\mathbb{Q}}|_{Y} \in 
	\mathbb{B}_{\delta}\left(\sum_{i=1}^N p_i\mathrm{I}_{\widehat{y}_i}\right)} 
\operatorname{Variance}_{\mathbb{Q}|_Y}\left[Y^{\top} \alpha \right]-\theta \cdot \mathbb{E}_{\mathbb{Q}|_Y}\left[Y^{\top} \alpha \right]\\
=&\min _{\alpha \in \mathcal{A}} \sup_{(p,\delta) \in {\cal V}} \sup_{{\mathbb{Q}}|_{Y} \in 
	\mathbb{B}_{\delta}\left(\sum_{i=1}^N p_i\mathrm{I}_{\widehat{y}_i}\right)}\inf_{t\in\mathbb{R}}\E_{\mathbb{Q}|_Y}\left[\left(Y^{\top} \alpha-t\right)^2-\theta\cdot Y^{\top} \alpha\right]\\
:=&\min _{\alpha \in \mathcal{A}} \sup_{(p,\delta) \in {\cal V}} \sup_{{\mathbb{Q}}|_{Y} \in 
	\mathbb{B}_{\delta}\left(\sum_{i=1}^N p_i\mathrm{I}_{\widehat{y}_i}\right)}\inf_{t\in\mathbb{R}}\E_{\mathbb{Q}|_Y}\left[\ell(Y^\top\alpha,t)\right],
	\end{align*}
	where $\ell(z,t)=\left(z-t\right)^2-\theta z$. By Lemma \ref{high_dimen_min_max}, we have that 
	\begin{align*}
&\sup_{{\mathbb{Q}}|_{Y} \in 
	\mathbb{B}_{\delta}\left(\sum_{i=1}^N p_i\mathrm{I}_{\widehat{y}_i}\right)}\inf_{t\in\mathbb{R}}\E_{\mathbb{Q}|_Y}\left[\ell(Y^\top\alpha,t)\right]=\inf_{t\in\mathbb{R}}\sup_{{\mathbb{Q}}|_{Y} \in 
	\mathbb{B}_{\delta}\left(\sum_{i=1}^N p_i\mathrm{I}_{\widehat{y}_i}\right)}\E_{\mathbb{Q}|_Y}\left[\ell(Y^\top\alpha,t)\right].
	\end{align*}
 Due to the above equation, we can reformulate problem \eqref{variance} as
	\begin{align*}
\min _{\alpha \in \mathcal{A}} \sup_{(p,\delta) \in {\cal V}} \inf_{t\in\mathbb{R}}\sup_{{\mathbb{Q}}|_{Y} \in 
	\mathbb{B}_{\delta}\left(\sum_{i=1}^N p_i\mathrm{I}_{\widehat{y}_i}\right)}\E_{\mathbb{Q}|_Y}\left[\ell(Y^\top\alpha,t)\right].
	\end{align*}
	Applying Theorem \ref{regular-projection} and min-max theorem, the problem \eqref{variance} can be reformulated as
	\begin{align*}
&\min _{\alpha \in \mathcal{A}} \sup_{(p,\delta) \in {\cal V}} \inf_{t\in\mathbb{R}}\sup_{{\mathbb{Q}}|_{Y} \in 
	\mathbb{B}_{\delta}\left(\sum_{i=1}^N p_i\mathrm{I}_{\widehat{y}_i}\right)}\E_{\mathbb{Q}|_Y}\left[\ell(Y^\top\alpha,t)\right]\\
=&\min_{\alpha\in\mathcal{A}}\sup_{(p,\delta) \in {\cal V},\delta \ge 0}\inf_{t\in\mathbb{R},\eta\ge0}\sum_{i=1}^Np_i\cdot\sup _{z \in \mathbb{R}}\left\{z \cdot \widehat{y}_i^{\top} \alpha-\ell^*(z,t)+\eta \cdot \frac{|z|^{q^*}}{q^* q^{q^*-1}}\right\}+\delta \frac{\|\alpha\|_*^q}{\eta^{q-1}}\\
=&\inf_{\alpha\in\mathcal{A},t\in\mathbb{R},\eta\ge0}\sup_{(p,\delta) \in \mathcal{V}_+}\sum_{i=1}^Np_i\cdot\sup _{z \in \mathbb{R}}\left\{z \cdot \widehat{y}_i^{\top} \alpha-\ell^*(z,t)+\eta \cdot \frac{z^2}{4}\right\}+\delta \frac{\|\alpha\|_*^2}{\eta}\\
=&\inf_{\alpha\in\mathcal{A},t\in\mathbb{R},\eta\ge0}\sup_{(p,\delta) \in \mathcal{V}_+}\sum_{i=1}^Np_i\cdot\sup _{z \in \mathbb{R}}\left\{z \cdot \widehat{y}_i^{\top} \alpha-\ell^*(z,t)+\eta \cdot \frac{z^2}{4}\right\}+\delta \frac{\|\alpha\|_*^2}{\eta}\\
=&\inf_{\alpha\in\mathcal{A},t\in\mathbb{R},1\ge\eta\ge0}\sup_{(p,\delta) \in \mathcal{V}_+}\sum_{i=1}^Np_i\frac{\left(\widehat{y}_i^{\top} \alpha-\theta/2-t\right)^2}{1-\eta}+\delta \frac{\|\alpha\|_*^2}{\eta}-\frac{\theta^2}{4}-t\theta.
\end{align*}
Applying the Fenchel duality, the problem \eqref{variance} can be reformulated as an infimum problem
\begin{align*}
&\inf_{\alpha\in\mathcal{A},0\le\eta\le 1,t\in\mathbb{R},v}~\sigma^*(v|\mathcal{V}_+)-\frac{\theta^2}{4}-t\theta\\
&~~~~~~~~~~\text{s.t.}~v_i\ge \frac{\left(\widehat{y}_i^{\top} \alpha-\theta/2-t\right)^2}{1-\eta}, ~i=1,...,N,\\
&~~~~~~~~~~~~~~~~v_{N+1}\ge ||\alpha||_*^2/\eta,
\end{align*}
where $\sigma^*(v|\mathcal{V}_+)$, the conjugate function of $\sigma(v|\mathcal{V}_+)$, is given in Example \ref{full-conjugate}. Thus, substituting the $\sigma^*(v|\mathcal{V}_+)$ in Example \ref{full-conjugate}, the robustified conditional  mean-variance portfolio allocation problem \eqref{variance}, proposed in \cite{N24}, can be reformulated as
\begin{align*}
&\inf_{\alpha\in\mathcal{A},0\le\eta\le1,t\in\mathbb{R},z,s}~z^\top b-\frac{\theta^2}{4}-t\theta\\
&~~~~~~~~~~~\text{s.t.}~z\ge 0,~A^\top z\ge s\\
&~~~~~~~~~~~~~~~~~s_i\ge \frac{\left(\widehat{y}_i^{\top} \alpha-\theta/2-t\right)^2}{1-\eta}, ~i=1,...,N,\\
&~~~~~~~~~~~~~~~~~s_{N+1}\ge ||\alpha||_*^2/\eta,~s_{N+2}\ge 0,
\end{align*}
where $A\in\mathbb{R}^{(N+6)\times(N+2)}$,and  $b\in\mathbb{R}^{N+6}$ are defined by 
 \begin{align*}
         A = \begin{pmatrix}
		1& \cdots & 1 & 0 &0 \\
		-1& \cdots & -1 & 0&0\\
		0& \cdots & 0 & 0&1\\
		0& \cdots & 0 & 0&-1\\
		1 & \cdots & 0 & 0&-\frac{1}{N}\\
		\vdots & \ddots & \vdots&\vdots&\vdots \\
		0 & \cdots & 1&0&-\frac{1}{N}\\
		-a_1 & \cdots & -a_{N}&-a_{N+1}&-a_{N+2}\\
		a_1 & \cdots & a_{N}&a_{N+1}& a_{N+2}
	\end{pmatrix},\quad\text{ with }
  a = \begin{pmatrix}-d_1 \\ \vdots \\ -d_m \\ d_{m+1} \\ \vdots \\ d_N \\ -1 \\[.5ex]\displaystyle \delta_0 - \frac1N\sum_{i=m+1}^N d_i\end{pmatrix},\quad\text{ and }
  b = 
    \begin{pmatrix}
      1 \\ -1 \\[3pt]
      \tfrac1{\epsilon_0} \\ -1 \\[3pt]
      0 \\ \vdots \\ 0
    \end{pmatrix}.
    \end{align*}
    \qedhere
\end{proof}

\begin{proof}[{\bf Proof of Proposition \ref{general_min_exp_1}}]
    The proof is similar to Proposition \ref{general_min_exp}.
\end{proof}

\begin{proof}[{\bf Proof of Proposition \ref{pro_rho2_general}}]
    The problem problem \eqref{r11} with \eqref{r2} with $\rho=\rho^{(2)}$ can be reformulated as
    \begin{align*}
		&\min_{\alpha\in\mathcal{A}}\sup_{(p,\delta) \in \mathcal{V}}\sup_{\mathbb{Q}|_{Y}\in \mathbb{B}_{\delta}\left(\sum_{i=1}^N p_i\mathrm{I}_{\widehat{y}_i}\right)}\rho^{(2)}_{\mathbb{Q}|_{Y}}\left(Y^\top\alpha\right)\\
		=&\min_{\alpha\in\mathcal{A}}\sup_{(p,\delta) \in \mathcal{V}}\inf_{t\in\mathbb{R}}~\sup_{\mathbb{Q}|_{Y}\in \mathbb{B}_{\delta}\left(\sum_{i=1}^N p_i\mathrm{I}_{\widehat{y}_i}\right)}t+\left(\E_{\mathbb{Q}|_Y}\left[\ell^q(Y^\top\alpha,t)\right]\right)^{1/q}\\
        =&\inf_{\alpha\in\mathcal{A},t\in\mathbb{R}}\sup_{(p,\delta) \in \mathcal{V}}~t+\sup_{\mathbb{Q}|_{Y}\in \mathbb{B}_{\delta}\left(\sum_{i=1}^N p_i\mathrm{I}_{\widehat{y}_i}\right)}\left(\E_{\mathbb{Q}|_Y}\left[\ell^q(Y^\top\alpha,t)\right]\right)^{1/q}\\
        =&\inf_{\alpha\in\mathcal{A},t\in\mathbb{R}}\sup_{(p,\delta) \in \mathcal{V},\delta\ge0}~t+\left(\sup_{\mathbb{Q}|_{Y}\in \mathbb{B}_{\delta}\left(\sum_{i=1}^N p_i\mathrm{I}_{\widehat{y}_i}\right)}\E_{\mathbb{Q}|_Y}\left[\ell^q(Y^\top\alpha,t)\right]\right)^{1/q}\\
		=&\inf_{\alpha\in\mathcal{A},t\in\mathbb{R}}\sup_{(p,\delta) \in \mathcal{V}_+}~t+\left(\inf_{\lambda\ge 0}\sum_{i=1}^Np_i\cdot\sup_{z\in\mathbb{R}}\left\{\ell^q(z,t)-\lambda\left|z-\widehat{y}_i^\top\alpha\right|^q\right\}+\lambda\delta\left\|\alpha\right\|_*^q\right)^{1/q}\\
        =&\inf_{\alpha\in\mathcal{A},t\in\mathbb{R}}~t+\left(\sup_{(p,\delta) \in \mathcal{V}_+}\inf_{\lambda\ge 0}\sum_{i=1}^Np_i\cdot\sup_{z\in\mathbb{R}}\left\{\ell^q(z,t)-\lambda\left|z-\widehat{y}_i^\top\alpha\right|^q\right\}+\lambda\delta\left\|\alpha\right\|_*^q\right)^{1/q}\\
        =&\inf_{\alpha\in\mathcal{A},t\in\mathbb{R}}~t+\left(\inf_{\lambda\ge 0}\sup_{(p,\delta) \in \mathcal{V}_+}\sum_{i=1}^Np_i\cdot\sup_{z\in\mathbb{R}}\left\{\ell^q(z,t)-\lambda\left|z-\widehat{y}_i^\top\alpha\right|^q\right\}+\lambda\delta\left\|\alpha\right\|_*^q\right)^{1/q}\\
        =&\inf_{\alpha\in\mathcal{A},t\in\mathbb{R},\lambda\ge 0}~t+\left(\sup_{(p,\delta) \in \mathcal{V}_+}\sum_{i=1}^Np_i\cdot\sup_{z\in\mathbb{R}}\left\{\ell^q(z,t)-\lambda\left|z-\widehat{y}_i^\top\alpha\right|^q\right\}+\lambda\delta\left\|\alpha\right\|_*^q\right)^{1/q}\\
        =&\inf_{\alpha\in\mathcal{A},t\in\mathbb{R},\lambda\ge 0}~t+\sup_{(p,\delta) \in \mathcal{V}_+}\left(\sum_{i=1}^Np_i\cdot\sup_{z\in\mathbb{R}}\left\{\ell^q(z,t)-\lambda\left|z-\widehat{y}_i^\top\alpha\right|^q\right\}+\lambda\delta\left\|\alpha\right\|_*^q\right)^{1/q},
			\end{align*}
    where the first equality is from Lemma EC.8 of \cite{W22}, the second and sixth equalities are from minimax theorem in \cite{S58}, the fourth equality is from Theorem \ref{regular-projection}, and other equalities hold for the reason that $x^{1/q}$ is a nonnegative and increasing function and thus it does not change optimal solution of the optimization problem. Using the Fenchel duality, we can reformulate the above problem as
    \begin{align*}
        &\inf_{\alpha\in\mathcal{A},t\in\mathbb{R},\lambda\ge 0}~t+\sup_{(p,\delta) \in \mathcal{V}_+}\left(\sum_{i=1}^Np_i\cdot\sup_{z\in\mathbb{R}}\left\{\ell^q(z,t)-\lambda\left|z-\widehat{y}_i^\top\alpha\right|^q\right\}+\lambda\delta\left\|\alpha\right\|_*^q\right)^{1/q}\\
        =&\inf_{\alpha\in\mathcal{A},t\in\mathbb{R},\lambda\ge 0,v,y}t+\sigma^*\left(v|\mathcal{V}_+\right)+C_q^1y^{\frac{1}{1-q}}\\
        &~~~~~~~~~~\text{s.t.}~v_i\ge y\cdot \sup_{z\in\mathbb{R}}\left\{\ell^q(z,t)-\lambda\left|z-\widehat{y}_i^\top\alpha\right|^q\right\}, i=1,...,N,\\
        &~~~~~~~~~~~~~~~~v_{N+1}\ge \lambda y\left\|\alpha\right\|_*^q, y\ge0.
    \end{align*}
    Let $u=\lambda y$ and $w=y^{\frac{1}{1-q}}$. By the convexity of $\ell$ in $z$, one can verify that $\frac{\ell^q(z,t)}{w^{q-1}}$ is convex in $z$, and thus $\left(\frac{\ell^q(\cdot,t)}{w^{q-1}}\right)^{**}=\frac{\ell^q(\cdot,t)}{w^{q-1}}$. Moreover, we find that $\frac{\ell^q(z,t)}{w^{q-1}}$ is jointly convex in $(z,t,w)$ and thus $\left(\frac{\ell^q(\cdot,t)}{w^{q-1}}\right)^{*}$ is jointly concave in $(t,w)$ according to the definition of concave conjugate function. Let $u=\eta^{1-q}$. Similar to the proof of Theorem \ref{regular-projection}, the problem can be further reformulated as
    \begin{align*}
        &\inf_{\alpha\in\mathcal{A},t\in\mathbb{R},u\ge 0,v,y}t+\sigma^*\left(v|\mathcal{V}_+\right)+C_q^1y^{\frac{1}{1-q}}\\
        &~~~~~~~~~~\text{s.t.}~v_i\ge  \sup_{z\in\mathbb{R}}\left\{y\ell^q(z,t)-u\left|z-\widehat{y}_i^\top\alpha\right|^q\right\}, i=1,...,N,\\
        &~~~~~~~~~~~~~~~~v_{N+1}\ge u\left\|\alpha\right\|_*^q, y\ge0,\\
        =&\inf_{\alpha\in\mathcal{A},t\in\mathbb{R},u\ge 0,v,w}t+\sigma^*\left(v|\mathcal{V}_+\right)+C_q^1w\\
        &~~~~~~~~~~\text{s.t.}~v_i\ge  \sup_{z\in\mathbb{R}}\left\{\frac{\ell^q(z,t)}{w^{q-1}}-u\left|z-\widehat{y}_i^\top\alpha\right|^q\right\}, i=1,...,N,\\
        &~~~~~~~~~~~~~~~~v_{N+1}\ge u\left\|\alpha\right\|_*^q, w\ge0,\\
        =&\inf_{\alpha\in\mathcal{A},\eta\ge 0,t\in\mathbb{R},v,w\ge 0}~t+\sigma^*(v|\mathcal{V}_+)+C_q^1w\\
		&~~~~~~~~~~\text{s.t.}~v_i\ge \sup _{x \in \mathbb{R}}\left\{x \cdot \widehat{y}_i^{\top} \alpha-\left(\frac{\ell^q(\cdot,t)}{w^{q-1}}\right)^*(x)+\eta \cdot \frac{|x|^{q^*}}{q^* q^{q^*-1}}\right\}, ~i=1,...,N,\\
		&~~~~~~~~~~~~~~~~v_{N+1}\ge ||\alpha||_*^q/\eta^{q-1}.
    \end{align*}
    \qedhere
\end{proof}

\begin{proof}[{\bf Proof of Proposition \ref{pro:shortfall_1}}]
    Let $\mathcal{P}=\bigcup\limits_{(p,\delta) \in \mathcal{V}} \mathbb{B}_{\delta}\left(\sum_{i=1}^N p_i\mathrm{I}_{\widehat{y}_i}\right)$ be a convex set. We first prove that 
    \begin{align}\label{shortfall_eq}
       \sup_{{\mathbb{Q}}|_{Y} \in \mathcal{P}}\inf \left\{\kappa \in \mathbb{R}: \mathbb{E}_{{\mathbb{Q}}|_{Y}}\left[u(-Y^\top\alpha-\kappa)\right] \le l\right\}
                =&\inf_{\kappa \in \mathbb{R}}\kappa\\
			&~~\text{s.t.}\sup_{{\mathbb{Q}}|_{Y} \in \mathcal{P}}\mathbb{E}_{{\mathbb{Q}}|_{Y}}\left[u(-Y^\top\alpha-\kappa)\right] \le l.\notag
    \end{align}
     If $\sup_{{\mathbb{Q}}|_{Y} \in \mathcal{P}}\mathbb{E}_{{\mathbb{Q}}|_{Y}}\left[u(-Y^\top\alpha-\kappa)\right] \le l$, we have $\mathbb{E}_{{\mathbb{Q}}|_{Y}}\left[u(-Y^\top\alpha-\kappa)\right] \le l$ for any ${\mathbb{Q}}|_{Y}\in \mathcal{P}$. Then $$\left\{\kappa:\sup_{{\mathbb{Q}}|_{Y} \in \mathcal{P}}\mathbb{E}_{{\mathbb{Q}}|_{Y}}\left[u(-Y^\top\alpha-\kappa)\right] \le l\right\}\subseteq \left\{\kappa : \mathbb{E}_{{\mathbb{Q}}|_{Y}}\left[u(-Y^\top\alpha-\kappa)\right] \le l\right\}.$$ Thus, we have     \begin{align*}        \inf \left\{\kappa \in \mathbb{R}: \mathbb{E}_{{\mathbb{Q}}|_{Y}}\left[u(-Y^\top\alpha-\kappa)\right] \le l\right\}\le \inf\left\{\kappa\in \mathbb{R}:\sup_{{\mathbb{Q}}|_{Y} \in \mathcal{P}}\mathbb{E}_{{\mathbb{Q}}}\left[u(-Y^\top\alpha-\kappa)\right] \le l\right\},~\forall{\mathbb{Q}}|_{Y}\in\mathcal{P},    \end{align*}    and     \begin{align}\label{shortfall_le}      \sup_{{\mathbb{Q}}|_{Y}\in \mathcal{P}}\inf \left\{\kappa \in \mathbb{R}: \mathbb{E}_{{\mathbb{Q}}|_{Y}}\left[u(-Y^\top\alpha-\kappa)\right] \le l\right\}\le\inf\left\{\kappa\in \mathbb{R}:\sup_{{\mathbb{Q}}|_{Y} \in \mathcal{P}}\mathbb{E}_{{\mathbb{Q}}|_{Y}}\left[u(-Y^\top\alpha-\kappa)\right] \le l\right\}.  \end{align}
    Let $\kappa^*=\sup_{{\mathbb{Q}}|_{Y}\in \mathcal{P}}\inf \left\{\kappa \in \mathbb{R}: \mathbb{E}_{{\mathbb{Q}}|_{Y}}\left[u(-Y^\top\alpha-\kappa)\right] \le l\right\}$. We prove the equality \eqref{shortfall_eq} in the following three cases: (i) $\kappa^*=\infty$; (ii) $\kappa^*=-\infty$; (iii) $\kappa^*$ is finite.  \begin{itemize}      \item[(i)] From the inequality \eqref{shortfall_le}, we directly obtain      $$\infty\ge\inf\left\{\kappa\in \mathbb{R}:\sup_{{\mathbb{Q}}|_{Y} \in \mathcal{P}}\mathbb{E}_{{\mathbb{Q}}}\left[u(-Y^\top\alpha-\kappa)\right] \le l\right\}\ge \kappa^*=\infty.$$      Thus, the equality \eqref{shortfall_eq} holds.      \item[(ii)] If $\kappa^*=-\infty$, for any ${\mathbb{Q}}|_{Y}\in \mathcal{P}$, we have $\mathbb{E}_{{\mathbb{Q}}|_{Y}}\left[u(-Y^\top\alpha-\kappa)\right] \le l$ for any $\kappa\in\mathbb{R}$. Then for any $\kappa\in\mathbb{R}$, we have $\mathbb{E}_{{\mathbb{Q}}|_{Y}}\left[u(-Y^\top\alpha-\kappa)\right] \le l$ for any ${\mathbb{Q}}|_{Y}\in \mathcal{P}$. Thus, for any $\kappa\in\mathbb{R}$, we have $\sup_{{\mathbb{Q}}|_{Y}\in\mathcal{P}}\mathbb{E}_{{\mathbb{Q}}|_{Y}}\left[u(-Y^\top\alpha-\kappa)\right] \le l$. Hence,       $$\inf\left\{\kappa\in \mathbb{R}:\sup_{{\mathbb{Q}}|_{Y} \in \mathcal{P}}\mathbb{E}_{{\mathbb{Q}}|_{Y}}\left[u(-Y^\top\alpha-\kappa)\right] \le l\right\}=-\infty=\kappa^*.$$ 
    \item[(iii)] Note that $$\kappa^*\ge\inf\left\{\kappa\in \mathbb{R}:\mathbb{E}_{{\mathbb{Q}}|_{Y}}\left[u(-Y^\top\alpha-\kappa)\right] \le l\right\}~ \forall ~{\mathbb{Q}}|_{Y}\in\mathcal{P}.$$ 
    Thus, $$\mathbb{E}_{{\mathbb{Q}}|_{Y}}\left[u(-Y^\top\alpha-\kappa^*)\right] \le l,~ \forall ~{\mathbb{Q}}|_{Y}\in\mathcal{P},$$
    which implies that $$\sup_{{\mathbb{Q}}|_{Y}\in\mathcal{P}}\mathbb{E}_{{\mathbb{Q}}|_{Y}}\left[u(-Y^\top\alpha-\kappa^*)\right] \le l.$$
    Then $\kappa^*\in \left\{\kappa:\sup_{{\mathbb{Q}}|_{Y} \in \mathcal{P}}\mathbb{E}_{{\mathbb{Q}}|_{Y}}\left[u(-Y^\top\alpha-\kappa)\right] \le l\right\}$, and thus
    $$\kappa^*\ge\inf\left\{\kappa\in \mathbb{R}:\sup_{{\mathbb{Q}}|_{Y} \in \mathcal{P}}\mathbb{E}_{{\mathbb{Q}}|_{Y}}\left[u(-Y^\top\alpha-\kappa)\right] \le l\right\}.$$
    Combining with inequality \eqref{shortfall_le}, we obtain the equality \eqref{shortfall_eq}.
    \end{itemize}
    
    Combining the above three cases, we obtain the equality \eqref{shortfall_eq} in all instances. Note that similar reasoning also appears in Proposition 1 of \cite{G19}, where the finiteness of
    $$\inf\left\{\kappa\in \mathbb{R}:\sup_{{\mathbb{Q}}|_{Y} \in \mathcal{P}}\mathbb{E}_{{\mathbb{Q}}|_{Y}}\left[u(-Y^\top\alpha-\kappa)\right] \le l\right\}$$ 
    is assumed. Here, we establish \eqref{shortfall_eq} in the general case to ensure completeness.

    Next, we apply the equality \eqref{shortfall_eq} to derive the desired result. In particular, for $\rho = \rho^{(3)}$, the problem \eqref{general_rho_union_version} becomes
		\begin{align*}
			&\min_{\alpha \in \mathcal{A}} \sup_{{\mathbb{Q}}|_{Y} \in \bigcup\limits_{(p,\delta) \in \mathcal{V}} 
				\mathbb{B}_{\delta}\left(\sum_{i=1}^N p_i\mathrm{I}_{\widehat{y}_i}\right)} 
			\rho_{{\mathbb{Q}}|_{Y}}\left[Y^\top\alpha \right] \\
			=&\min_{\alpha \in \mathcal{A}} \sup_{{\mathbb{Q}}|_{Y} \in \bigcup\limits_{(p,\delta) \in \mathcal{V}} 
				\mathbb{B}_{\delta}\left(\sum_{i=1}^N p_i\mathrm{I}_{\widehat{y}_i}\right)}\inf \left\{\kappa \in \mathbb{R}: \mathbb{E}_{{\mathbb{Q}}|_{Y}}\left[u(-Y^\top\alpha-\kappa)\right] \le l\right\}\\
			=&\min_{\alpha \in \mathcal{A}}\inf_{\kappa \in \mathbb{R}}\kappa\\
			&~~~~~~~~\text{s.t.}\sup_{{\mathbb{Q}}|_{Y} \in \bigcup\limits_{(p,\delta) \in \mathcal{V}} 
				\mathbb{B}_{\delta}\left(\sum_{i=1}^N p_i\mathrm{I}_{\widehat{y}_i}\right)}\mathbb{E}_{{\mathbb{Q}}|_{Y}}\left[u(-Y^\top\alpha-\kappa)\right] \le l.
		\end{align*}
		And then using the equation \eqref{regular:p=1}, we have 
		\begin{align*}
			&\min_{\alpha \in \mathcal{A}}\inf_{\kappa \in \mathbb{R}}\kappa\\
			&~~~~~~~~\text{s.t.}\sup_{{\mathbb{Q}}|_{Y} \in \bigcup\limits_{(p,\delta) \in \mathcal{V}} 
				\mathbb{B}_{\delta}\left(\sum_{i=1}^N p_i\mathrm{I}_{\widehat{y}_i}\right)}\mathbb{E}_{{\mathbb{Q}}|_{Y}}\left[u(-Y^\top\alpha-\kappa)\right] \le l\\
			=&\min_{\alpha \in \mathcal{A}}\inf_{\kappa \in \mathbb{R}}\kappa\\
			&~~~~~~~~\text{s.t.}\sup_{(p,\delta)\in\mathcal{V}_+}\sup_{{\mathbb{Q}}|_{Y} \in  
				\mathbb{B}_{\delta}\left(\sum_{i=1}^N p_i\mathrm{I}_{\widehat{y}_i}\right)}\mathbb{E}_{{\mathbb{Q}}|_{Y}}\left[u(-Y^\top\alpha-\kappa)\right] \le l\\
			=&\min_{\alpha \in \mathcal{A}}\inf_{\kappa \in \mathbb{R}}\kappa\\
			&~~~~~~~~\text{s.t.} \sup_{(p,\delta)\in\mathcal{V}_+}\sum_{i=1}^N p_i u(-\widehat{y}_i^\top\alpha-\kappa) + \operatorname{Lip}(u)\delta ||\alpha||_*\le l.
		\end{align*}
        Using the Fenchel duality, we have
		\begin{align*}
			&\min_{\alpha \in \mathcal{A}}\inf_{\kappa \in \mathbb{R}}\kappa\\
			&~~~~~~~~\text{s.t.} \sup_{(p,\delta)\in\mathcal{V}_+}\sum_{i=1}^N p_i u(-\widehat{y}_i^\top\alpha-\kappa) + \operatorname{Lip}(u)\delta ||\alpha||_*\le l\\
			=&\min_{\alpha \in \mathcal{A}}\inf_{\kappa \in \mathbb{R}}\kappa\\
			&~~~~~~~~\text{s.t.} \inf_{v}\left\{\sigma^*(v|\mathcal{V}_+)|v- \nu(\alpha,\kappa)\ge 0\right\}\le l\\
			=&\inf_{\alpha,\kappa,v}\kappa\\
			&~~\text{s.t.}~\alpha \in \mathcal{A},\kappa \in \mathbb{R}, \\
			&~~~~~~~~\sigma^*(v|\mathcal{V}_+)\le l,~v- \nu(\alpha,\kappa)\ge 0,
		\end{align*} 
		where $\nu(\alpha,\kappa)^\top=(u(-\widehat{y}_1^\top\alpha-\kappa),...,u(-\widehat{y}_N^\top\alpha-\kappa),\operatorname{Lip}(u)||\alpha||_*)\in\mathbb{R}^{N+1}$.
    \qedhere
\end{proof}

\begin{proof}[{\bf Proof of Proposition \ref{pro:shortfall_2}}]
    Similar to the proof of Proposition \ref{pro:shortfall_1}, applying the equality \eqref{shortfall_eq}, Theorem \ref{regular-projection}, min-max theorem, and the Fenchel duality, we have the desired result. \qedhere
\end{proof}

\subsection*{EC.2.2.2. Proofs of Section \ref{distortion_form}}

\begin{proof}[{\bf Proof of Proposition \ref{pro4*}}]
	\begin{itemize}
		\item [(i)]For $q=1$ and $\mathbb{D}_{\mathcal{Y}}(y_1,y_2)=||y_1-y_2||_{\mathcal{Y}}$, applying Theorem 5 in \cite{W22}, we find that problem \eqref{r11} with \eqref{r2} can be reformulated as
		\begin{align*}
			\min _{\alpha \in \mathcal{A}}\sup_{(p,\delta) \in \mathcal{V}_+}\rho_{\sum_{i=1}^N p_i\mathrm{I}_{\widehat{y}_i}}^h\left[\ell( Y^\top\alpha)\right]+\operatorname{Lip}(\ell)\left\|h_{-}^{\prime}\right\|_{\infty} \delta\|{\alpha}\|_*.
		\end{align*}
		According to Proposition 10.3 of \cite{D13}, we can reformulate the above inner supremum problem as
		\begin{align*}
			&\sup_{(p,\delta) \in \mathcal{V}_+}\sup_{\bar{p}\in\mathcal{V}_p^h}\sum_{i=1}^N\bar{p}_i \ell\left( \widehat{y}_i^\top\alpha\right)+\operatorname{Lip}(\ell)\left\|h_{-}^{\prime}\right\|_{\infty} \delta\|{\alpha}\|_*\\
			=&\sup_{(p,\delta) \in \mathcal{V}_+,\bar{p}\in\mathcal{V}_p^h}\sum_{i=1}^N\bar{p}_i \ell\left( \widehat{y}_i^\top\alpha\right)+\operatorname{Lip}(\ell)\left\|h_{-}^{\prime}\right\|_{\infty} \delta\|{\alpha}\|_*.
		\end{align*}
		Using the Fenchel duality theory, we have 
		\begin{align*}
			&\sup_{(p,\delta) \in \mathcal{V}_+,\bar{p}\in\mathcal{V}_p^h}\sum_{i=1}^N\bar{p}_i \ell\left( \widehat{y}_i^\top\alpha\right)+\operatorname{Lip}(\ell)\left\|h_{-}^{\prime}\right\|_{\infty} \delta\|{\alpha}\|_*\\
			=&\inf_{v}\left\{\sigma^*(v|\mathcal{V}_+^h)|v-\nu(\alpha)\ge 0\right\},
		\end{align*}
		where
		\begin{align*}
			\mathcal{V}_+^h=\left\{p''=\left(p,\delta,\bar{p}\right):(p,\delta)\in\mathcal{V},\delta\ge0,\bar{p}\in\mathcal{V}_p^h\right\},
		\end{align*}
		$$\mathcal{V}_p^h=\left\{\bar{p}:\sum_{i=1}^N \bar{p}_i=1,\bar{p}_i \geq 0, i=1,...,N, h\left(\sum_{i \in J} p_i\right) \leq \sum_{i \in J} \bar{p}_i, \forall J \subset\{1,...,N\}  \right\},$$ 
		and
		\begin{align*}
			\nu(\alpha)^\top=\left(0,...,0,\operatorname{Lip}(\ell)\left\|h_{-}^{\prime}\right\|_{\infty}\|\alpha\|_*, \ell(\widehat{y}_1^\top\alpha),...,\ell(\widehat{y}_N^\top\alpha)\right)\in\mathbb{R}^{2N+1}.
		\end{align*}
		Combing with the outer minimization problem, we obtain the result in (i).
		\item[(ii)] Similar to the case (i), applying Theorem 5 in \cite{W22}, Proposition 10.3 of \cite{D13}, and the Fenchel duality theory, we obtain the result.
	\end{itemize}
    \qedhere
\end{proof}
\begin{proof}[{\bf Proof of Example \ref{distortion_conjugate_full}}]
    For the full optimal transport model in Section \ref{other_model}, we have $${\cal V}:=\left\{(p,\delta):\exists \epsilon \text{~such that }(p,\delta,\epsilon)\in \mathcal{V}_0 \right\},$$
where 
\begin{align*}
	{\cal V}_0:=\Bigg\{(p,\delta,\epsilon):&\frac{1}{\epsilon}\in \mathcal{N}_{\omega}, p_i\in\left[0,\frac{\epsilon}{N}\right], i=1,..., N, \sum_{i=1}^Np_i=1,\\
	&\delta = \epsilon\left(\delta_0-\frac{1}{N}\sum_{i=m+1}^Nd_i\right)-\sum_{i=1}^{m}p_id_i+\sum_{i=m+1}^Np_id_i\Bigg\}.
\end{align*}
Let $${\cal V}_+:=\left\{(p,\delta):(p,\delta)\in \mathcal{V},\delta\ge 0 \right\},$$ 
and \begin{align*}
	{\cal V}_+^h:=\left\{(p,\delta,\bar{p}):(p,\delta)\in\mathcal{V}_+, \bar{p}\in\mathcal{V}_p^h\right\}.
\end{align*}
Then in the following we obtain the conjugate function of the indicator function of $\mathcal{V}_+^h$ to have the tractable reformulation for the full optimal transport model. The conjugate function of the indicator function of $\mathcal{V}_+^h$, \begin{align*}
\sigma^*(v\mid \mathcal{V}_+^h)=\sup_{(p,\delta,\bar{p})\in\mathcal{V}_+^h}v^\top (p,\delta,\bar{p})=\sup_{(p,\delta,\epsilon)\in\mathcal{V}_0,\delta\ge 0,\bar{p}\in\mathcal{V}_p^h}v^\top (p,\delta,\bar{p})+0\cdot \epsilon.
\end{align*} Denote $p''=(p,\delta,\bar{p},\epsilon)$. 
One can verify that the above problem is a convex optimization problem, and thus we can apply strong duality theory. The Lagrangian function is
$$
\begin{aligned}
&\mathcal{L}(p'')\\
=&(v,0)^\top p''+\lambda\left(\sum_{i=1}^N p_i-1\right)+\mu\left(\sum_{i=1}^N \bar{p}_i-1\right)-\sum_{J \subset[N]} \zeta_J\left(h\left(\sum_{i \in J} p_i\right)-\sum_{i \in J} \bar{p}_i\right) \\
& -\sum_{i=1}^N \tau_i^1\left(-p_i\right)-\sum_{i=1}^N \gamma_i\left(-\bar{p}_i\right)-\sum_{i=1}^N \tau_i^2\left(p_i-\frac{\epsilon}{N}\right)-\eta_1(\frac{1}{R_{\mathcal{N}_{\omega}}}-\epsilon)-\eta_2\left(\epsilon-\frac{1}{L_{\mathcal{N}_{\omega}}}\right)\\
&-\xi_1(-\delta)+\xi_2\left(\delta-\epsilon(\delta_0-\frac{1}{N}\sum_{i=m+1}^Nd_i)+\sum_{i=1}^{m}p_id_i-\sum_{i=m+1}^Np_id_i\right),
\end{aligned}
$$
where $p^\top=(p_1,...,p_N)\in[0,1]^N$, $q^\top=(q_1,...,q_N)\in[0,1]^N$, $\lambda,\mu,\xi_2\in\mathbb{R}$, and $\zeta_J,\tau_i^1,\gamma_i,\tau_i^2,\eta_1,\eta_2,\xi_1\ge0$. Then we simplify the problem $\sup_{p''}\mathcal{L}(p'')$. One can verify that $\sup_{p''}\mathcal{L}(p'')$ can be decomposed into four optimization problems, which are only related to $\epsilon$, $p$, $\bar{p}$, and $\delta$, respectively. For the problem related to $\epsilon$ with constant terms, we have
\begin{align*}
&\sup_{\epsilon\ge 0}\Bigg\{\epsilon\left[-\xi_2(\delta_0-\frac{1}{N} \sum_{i=m+1}^Nd_i)+\frac{1}{N} \sum_{i=1}^N \tau_i^2+\eta_1-\eta_2\right]-\frac{\eta_1}{R_{\mathcal{N}_{\omega}}}+\frac{\eta_2}{L_{\mathcal{N}_{\omega}}}-\lambda-\mu\Bigg\} \\  
=&\begin{cases}-\frac{\eta_1}{R_{\mathcal{N}_{\omega}}}+\frac{\eta_2}{L_{\mathcal{N}_{\omega}}}-\lambda-\mu \\~~~~~~~~~\text { if } -\xi_2(\delta_0-\frac{1}{N} \sum_{i=m+1}^Nd_i)+\frac{1}{N} \sum_{i=1}^N \tau_i^2+\eta_1-\eta_2\le 0\\ \infty ~~~~~ \text { else. }\end{cases}
\end{align*}
Let$$c_i= \begin{cases}\xi_2 d_i, & i=1, \ldots, m \\ -\xi_2 d_i, & i=m+1, \ldots, N.\end{cases}$$
For the problem related to $p$, we have
\begin{align*}
&\sup_{p\ge 0}\left\{\sum_{i=1}^Np_i\left[v_i+\lambda+\tau_i^1-\tau_i^2+c_i\right]-\sum_{J\subset\{1,...,N\}}\zeta_Jh\left(\sum_{i\in J}p_i\right)\right\}\\
=&\sup_{p\ge 0, w_1,...,w_{2^N-2}\ge 0}\left\{\sum_{i=1}^Np_i\left[v_i+\lambda+\tau_i^1-\tau_i^2+c_i\right]-\sum_{j=1}^{2^N-2}\zeta_jh(w_j)|w_j=\sum_{k\in I_j}p_k\right\}\\
=&\inf_{\phi_1, ..., \phi_{2^N-2}}\sup_{p\ge 0, w_1,...,w_{2^N-2}\ge 0}\Bigg\{\sum_{i=1}^Np_i\left[v_i+\lambda+\tau_i^1-\tau_i^2+c_i\right]\\
&~~~~~~~~~~~~~~~~~~~~~~~~~~~~~~~~~~~~~~~~~~~-\sum_{j=1}^{2^N-2}\zeta_jh(w_j)-\sum_{j=1}^{2^N-2}\phi_j(w_j-\sum_{k\in I_j}p_k)\Bigg\}\\
=&\inf_{\phi_1, ..., \phi_{2^N-2}}\Bigg\{\sup_{p\ge 0}\left\{\sum_{i=1}^Np_i\left[v_i+\lambda+\tau_i^1-\tau_i^2+c_i\right]+\sum_{j=1}^{2^N-2}\phi_j\sum_{k\in I_j}p_k\right\}\\
&~~~~~~~~~~~~~~~~~~+\sup_{w_1,...,w_{2^N-2}\ge 0}\left\{-\sum_{j=1}^{2^N-2}\zeta_jh(w_j)-\sum_{j=1}^{2^N-2}\phi_jw_j\right\}\Bigg\}.
\end{align*}
We find that
\begin{align*}
&\sup_{p\ge 0}\left\{\sum_{i=1}^Np_i\left[v_i+\lambda+\tau_i^1-\tau_i^2+c_i\right]+\sum_{j=1}^{2^N-2}\phi_j\sum_{k\in I_j}p_k\right\}\\
=&\sup_{p\ge 0}\left\{\sum_{i=1}^Np_i\left[v_i+\lambda+\tau_i^1-\tau_i^2+c_i+\sum_{j:i\in I_j}\phi_j\right]\right\}\\
=& \begin{cases}0 & \text { if } v_i+\lambda+\tau_i^1-\tau_i^2+c_i+\sum_{j:i\in I_j}\phi_j\le 0,  i=1,...,N \\ \infty & \text { else, }\end{cases}
\end{align*}
and that
\begin{align*}
\sup_{w_1,...,w_{2^N-2}\ge 0}\left\{-\sum_{j=1}^{2^N-2}\zeta_jh(w_j)-\sum_{j=1}^{2^N-2}\phi_jw_j\right\}=\sum_{j=1}^{2^N-2}\zeta_jh^*(\frac{-\phi_j}{\zeta_j}).
\end{align*}
Thus, 
\begin{align*}
&\sup_{p\ge 0}\left\{\sum_{i=1}^Np_i\left[v_i+\lambda+\tau_i^1-\tau_i^2+c_i\right]-\sum_{J\subset\{1,...,N\}}\zeta_Jh(\sum_{i\in J}p_i)\right\}\\
=&\inf_{\phi_1, ..., \phi_{2^N-2}}\sum_{j=1}^{2^N-2}\zeta_jh^*(\frac{-\phi_j}{\zeta_j})\\
&~~~~~~\text{s.t.}~v_i+\lambda+\tau_i^1-\tau_i^2+c_i+\sum_{j:i\in I_j}\phi_j\le 0,~  i=1,...,N.
\end{align*}
For the problem related to $\bar{p}$, we have
\begin{align*}
&\sup_{\bar{p}\ge 0}\left\{\sum_{i=1}^N \bar{p}_i\left[v_{N+1+i}+\mu+{\sum_{j:i\in I_j} \zeta_j}+\gamma_i\right]\right\}\\
=&\begin{cases}0 & \text { if } v_{N+1+i}+\mu+{\sum_{j:i\in I_j} \zeta_j}+\gamma_i \le 0,~ i=1,...,N,\\ \infty & \text { else. }\end{cases}
\end{align*}
For the problem related to $\delta$, we have 
\begin{align*}
\sup_{\delta\ge 0}\left\{\delta^*\left[v_{N+1}+\xi_1+\xi_2\right]\right\}=\begin{cases}0 & \text { if } v_{N+1}+\xi_1+\xi_2 \le 0,\\ \infty & \text { else. }\end{cases}
\end{align*}
Then the conjugate function of $\sigma((p,\delta,\bar{p})|\mathcal{V}_+^h)$, $\sigma^*(v|\mathcal{V}_+^h)$, is equivalent to
\begin{align*}
&\inf-\frac{\eta_1}{R_{\mathcal{N}_{\omega}}}+\frac{\eta_2}{L_{\mathcal{N}_{\omega}}}-\lambda-\mu+\sum_{j=1}^{2^N-2}\zeta_jh^*(\frac{-\phi_j}{\zeta_j})\\
&~\text{s.t.}~\lambda, \mu,\xi_2\in\mathbb{R},\left\{\phi_j\right\}_{j=1}^{2^N-2}\in\mathbb{R}^{2^N-2},\left\{\zeta_j\right\}_{j=1}^{2^N-2},\left\{\tau_i^1\right\}_{i=1}^N,\left\{\tau_i^2\right\}_{i=1}^N,\left\{\gamma_i\right\}_{i=1}^N,  \eta_1, \eta_2,\xi_1\ge 0\\
&~~~~~~-\xi_2(\delta_0-\frac{1}{N} \sum_{i=m+1}^Nd_i)+\frac{1}{N} \sum_{i=1}^N \tau_i^2+\eta_1-\eta_2\le 0,\\
&~~~~~~~v_i+\lambda+\tau_i^1-\tau_i^2+c_i+\sum_{j:i\in I_j}\phi_j\le 0,~  i=1,...,N,\\
&~~~~~~~c_i=\xi_2 d_i,  i=1, \ldots, m,~c_i= -\xi_2 d_i,  i=m+1, \ldots, N,\\
&~~~~~~~v_{N+1+i}+\mu+{\sum_{j:i\in I_j} \zeta_j}+\gamma_i \le 0,~i=1,...,N,\\
&~~~~~~~v_{N+1}+\xi_1+\xi_2 \le 0.
\end{align*}
\qedhere
\end{proof}

\begin{proof}[{\bf Proof of Example \ref{distortion_conjugate_partial}}]
    For the partial optimal transport model in Section \ref{other_model}, we have
$${\cal V}:=\left\{(p,\delta):{\epsilon}=\min_{\beta\in\mathcal{N}_{\omega}}\beta, p_i\in[0,\frac{1}{N\epsilon}], i=1,..., N, \sum_{i=1}^Np_i=1,\delta= \delta_0-\sum_{i=1}^mp_id_i\right\}.$$
One can verify that $\epsilon$ is a constant given $\mathcal{N}_\omega$. Then $$\mathcal{V}_+:=\left\{(p,\delta):(p,\delta)\in\mathcal{V}, \delta\ge 0\right\},$$
and
\begin{align*}
\mathcal{V}_+^h:=\left\{p''=(p,\delta,\bar{p}):(p,\delta)\in\mathcal{V}_+, \bar{p}\in\mathcal{V}_p^h\right\},
\end{align*}
and we want to obtain the conjugate function of the indicator function of $\mathcal{V}_+^h$, \begin{align*}
\sigma^*(v\mid \mathcal{V}_+^h)=\sup_{p''\in\mathcal{V}_+^h}v^\top p''.
\end{align*} 
One can verify that the above problem is a convex optimization problem, and thus we can apply strong duality theory. Let$$c_i'= \begin{cases}\xi_2 d_i, & i=1, \ldots, m \\ 0, & i=m+1, \ldots, N.\end{cases}$$
Similar to the proof of Example \ref{distortion_conjugate_full}, the conjugate function of $\sigma((p,\delta,\bar{p})|\mathcal{V}_+^h)$, $\sigma^*(v|\mathcal{V}_+^h)$, is equivalent to
\begin{align*}
&\inf-\lambda-\mu+\frac{1}{N\epsilon}\sum_{i=1}^N\tau_i^2-\xi_2\delta_0+\sum_{j=1}^{2^N-2}\zeta_jh^*(\frac{-\phi_j}{\zeta_j})\\
&~\text{s.t.}~\alpha\in\mathcal{A},~\lambda, \mu,\xi_2\in\mathbb{R},\left\{\phi_j\right\}_{j=1}^{2^N-2}\in\mathbb{R}^{2^N-2},\left\{\zeta_j\right\}_{j=1}^{2^N-2},\left\{\tau_i^1\right\}_{i=1}^N,\left\{\tau_i^2\right\}_{i=1}^N,\left\{\gamma_i\right\}_{i=1}^N, \xi_1\ge 0,\\
&~~~~~~~v_i+\lambda+\tau_i^1-\tau_i^2+c_i'+\sum_{j:i\in I_j}\phi_j\le 0,~  i=1,...,N,\\
&~~~~~~~c_i'=\xi_2 d_i,  i=1, \ldots, m,~c_i= 0,  i=m+1, \ldots, N,\\
&~~~~~~~v_{N+1+i}+\mu+{\sum_{j:i\in I_j} \zeta_j}+\gamma_i \le 0,~i=1,...,N,\\
&~~~~~~~v_{N+1}+\xi_1+\xi_2 \le 0.
\end{align*}
\qedhere
\end{proof}

\begin{proof}[{\bf Proof of Example \ref{re:CVaR}}]
    The robustified conditional  mean-$\CVaR$ portfolio allocation problem in \cite{N24}  is as follows: \begin{align*}
\min _{\alpha \in \mathcal{A}} \sup _{\substack{\mathbb{Q} \in \mathbb{B}_{\delta_0}(\widehat{\mathbb{P}}) \\ \mathbb{Q}\left(X \in \mathcal{N}_\gamma(x_0)\right) \in \mathcal{N}_{\omega}}} \operatorname{CVaR}_{\mathbb{Q}}^{1-\kappa}\left[Y^{\top} \alpha \mid X \in \mathcal{N}_\gamma\left(x_0\right)\right]-\theta \cdot \mathbb{E}_{\mathbb{Q}}\left[Y^{\top} \alpha \mid X \in \mathcal{N}_\gamma\left(x_0\right)\right],    
\end{align*}
where $\mathcal{N}_{\omega}=[\epsilon_0,1]$, $\epsilon_0\in(0,1]$, $\theta\ge 0$,  $\kappa\in[0,1]$, $$\operatorname{CVaR}_{\mathbb{Q}}^{1-\kappa}(Z)=\frac{1}{\kappa}\int_{1-\kappa}^1\operatorname{VaR}_{\mathbb{Q}}^{\phi}(-Z)\operatorname{d}\phi,$$
and $\operatorname{VaR}_{\mathbb{Q}}^{\phi}(Z)= \inf \left\{z \in \mathbb{R}: \mathbb{Q}(Z\le z) \geqslant \phi\right\}$. Suppose in addition that $\mathcal{X}=\mathbb{R}^{n_1}, \mathcal{Y}=\mathbb{R}^{n_2}, \mathbb{D}_{\mathcal{X}}(x, \widehat{x})=\|x-\widehat{x}\|^2$ and $\mathbb{D}_{\mathcal{Y}}(y, \widehat{y})=\|y-\widehat{y}\|_2^2$. One can verify that the objective function can be reformulated as 
\begin{align*}
&\operatorname{CVaR}_{\mathbb{Q}}^{1-\kappa}\left[Y^{\top} \alpha \mid X \in \mathcal{N}_\gamma\left(x_0\right)\right]-\theta \cdot \mathbb{E}_{\mathbb{Q}}\left[Y^{\top} \alpha \mid X \in \mathcal{N}_\gamma\left(x_0\right)\right]\\
=&(1+\theta)\rho_{\mathbb{Q}}^h[-Y^\top \alpha\mid X \in \mathcal{N}_\gamma\left(x_0\right)],
\end{align*}
where \begin{align*}
h(x)=\begin{cases}\frac{\theta}{1+\theta} x & x\in[0,1-\kappa],\\ \frac{1+\theta\kappa}{\kappa(1+\theta)}x-\frac{1-\kappa}{\kappa(1+\theta)} & x\in(1-\kappa,1].\end{cases}
\end{align*}
Applying Theorem 5 in \cite{W22}, we have
\begin{align*}
&\min _{\alpha \in \mathcal{A}} \sup _{\substack{\mathbb{Q} \in \mathbb{B}_{\delta_0}(\widehat{\mathbb{P}}) \\ \mathbb{Q}\left(X \in \mathcal{N}_\gamma(x_0)\right) \in \mathcal{N}_{\omega}}} (1+\theta)\rho_{\mathbb{Q}}^h\left[-Y^\top \alpha \mid X \in \mathcal{N}_\gamma\left(x_0\right)\right]\\
=&\min _{\alpha \in \mathcal{A}}\sup_{(p,\delta) \in \mathcal{V}_+}(1+\theta)\left[\rho_{\sum_{i=1}^N p_i\mathrm{I}_{\widehat{y}_i}}^h\left(-Y^\top\alpha\right)+\left\|h_{-}^{\prime}\right\|_{2} \delta(p,\epsilon)^\frac{1}{2}\|{\alpha}\|_2\right],
\end{align*}
 where $\left\|h_{-}^{\prime}\right\|_{2}=\left(1+\frac{1-\kappa}{\kappa(1+\theta)^2}\right)^\frac{1}{2}$. Using the minimization form of CVaR and minimax theorem, we reformulate the inner problem of the above problem as
\begin{align*}
&\sup_{(p,\delta) \in \mathcal{V}_+}(1+\theta)\left[\rho_{\sum_{i=1}^N p_i\mathrm{I}_{\widehat{y}_i}}^h\left(-Y^\top\alpha\right)+\left\|h_{-}^{\prime}\right\|_{2} \delta^\frac{1}{2}\|{\alpha}\|_2\right]\\
=&\sup_{(p,\delta) \in \mathcal{V}_+}\inf_{t\in\mathbb{R}}\left\{\E_{\sum_{i=1}^N p_i\mathrm{I}_{\widehat{y}_i}}\left[t+\frac{1}{\kappa}(-Y^\top \alpha-t)_+-\theta\cdot Y^\top \alpha\right]+(1+\theta)\left\|h_{-}^{\prime}\right\|_{2} \delta^\frac{1}{2}\|\alpha\|_2\right\}\\
=&\inf_{t\in\mathbb{R}}\sup_{(p,\delta) \in \mathcal{V}_+}\left\{\E_{\sum_{i=1}^N p_i\mathrm{I}_{\widehat{y}_i}}\left[t+\frac{1}{\kappa}(-Y^\top \alpha-t)_+-\theta\cdot Y^\top \alpha\right]+(1+\theta)\left\|h_{-}^{\prime}\right\|_{2} \delta^\frac{1}{2}\|\alpha\|_2\right\}.
\end{align*}
Applying the Fenchel duality theory, the problem \eqref{CVaR} is equivalent to \begin{align*}
&\inf_{\alpha\in\mathcal{A},t\in\mathbb{R},v}~\sigma^*(v|\mathcal{V}_+)+C_{\theta,\kappa}v_{N+1}\left\|\frac{\alpha}{v_{N+1}}\right\|_2^{2}\\
&~~~~~~\text{s.t.}~v_i\ge \ell(\widehat{y}_i^\top\alpha,t), i=1,...,N,\\
&~~~~~~~~~~~~v_{N+1}\ge 0,
\end{align*}
where $C_{\theta,\kappa}=\frac{1}{4}\left(\left(1+\theta\right)^2+\frac{1-\kappa}{\kappa}\right)$, 
$\ell(y^\top\alpha,t)=t+\frac{1}{\kappa}(-y^\top \alpha-t)_+-\theta\cdot y^\top \alpha$, and $\sigma^*(v|\mathcal{V}_+)$, the conjugate function of $\sigma(v|\mathcal{V}_+)$, is given in Example \ref{full-conjugate}. Thus, substituting the $\sigma^*(v|\mathcal{V}_+)$ in Example \ref{full-conjugate}, the robustified conditional  mean-$\CVaR$ portfolio allocation problem \eqref{CVaR}, proposed in \cite{N24}, can be reformulated as
\begin{align*}
&\inf_{\alpha,t,v,z}~z^\top b+C_{\theta,\kappa}v_{N+1}\left\|\frac{\alpha}{v_{N+1}}\right\|_2^{2}\\
&~~~\text{s.t.}~\alpha\in\mathcal{A},~t\in\mathbb{R},~v-\nu(\alpha,t)\ge0,~A^\top z\ge v,~z\ge 0.
\end{align*}
And thus, it is equivalent to
\begin{align*}
&\inf_{\alpha\in\mathcal{A},t\in\mathbb{R},z\ge 0,v_{N+1},s}~z^\top b+s\\
&~~~~~~~~~~~\text{s.t.}~v_{N+1}\ge0,~A^\top z\ge (\ell(\widehat{y}_1^\top\alpha,t),\cdots,\ell(\widehat{y}_N^\top\alpha,t),v_{N+1},0)^\top,\\
&~~~~~~~~~~~~~~~~~s\ge C_{\theta,\kappa}v_{N+1}\left\|\frac{\alpha}{v_{N+1}}\right\|_2^{2},
\end{align*}
where $A\in\mathbb{R}^{(N+6)\times(N+2)}$,and  $b\in\mathbb{R}^{N+6}$ are defined by 
 \begin{align*}
         A = \begin{pmatrix}
		1& \cdots & 1 & 0 &0 \\
		-1& \cdots & -1 & 0&0\\
		0& \cdots & 0 & 0&1\\
		0& \cdots & 0 & 0&-1\\
		1 & \cdots & 0 & 0&-\frac{1}{N}\\
		\vdots & \ddots & \vdots&\vdots&\vdots \\
		0 & \cdots & 1&0&-\frac{1}{N}\\
		-a_1 & \cdots & -a_{N}&-a_{N+1}&-a_{N+2}\\
		a_1 & \cdots & a_{N}&a_{N+1}& a_{N+2}
	\end{pmatrix},\quad\text{ with }
  a = \begin{pmatrix}-d_1 \\ \vdots \\ -d_m \\ d_{m+1} \\ \vdots \\ d_N \\ -1 \\[.5ex]\displaystyle \delta_0 - \frac1N\sum_{i=m+1}^N d_i\end{pmatrix},\quad\text{ and }
  b = 
    \begin{pmatrix}
      1 \\ -1 \\[3pt]
      \tfrac1{\epsilon_0} \\ -1 \\[3pt]
      0 \\ \vdots \\ 0
    \end{pmatrix}.
    \end{align*}
    \qedhere
\end{proof}

\subsection*{EC.3. Full Reformulation for the Conditional Expectation Case Based on Admissible Sets from Theorems 1 and 2, and Comparison with \cite{N24}}

In this section, we present full reformulations of the conditional expectation problem under our union-ball framework, leveraging the admissible sets characterized in Theorems 1 and 2. These yield tractable convex programs for the full and partial optimal transport models, respectively. While the primary focus is on developing these reformulations, we also include a brief comparison with the formulation proposed in \cite{N24}, highlighting key structural differences in terms of tractability, compactness, and interpretability.

\begin{itemize}
        \item[(A)] Note that full optimal transport model \eqref{m01} given by \cite{N24} is equivalent to problem \eqref{pro1:exp1-1} with set $\mathcal{V}$ given in Theorem \ref{thm1}. Applying the conjugate of the indicator function of $\mathcal{V}_+$ in Example \ref{full-conjugate} to Propositions \ref{regular_exp} and \ref{general_exp}, we have the following results.
        \begin{itemize}
	\item [(i)]Let $\mathbb{D}_{\mathcal{Y}}(y_1,y_2)=||y_1-y_2||_{\mathcal{Y}}$ for any $y_1,y_2\in\mathcal{Y}$ and $\ell: \mathbb{R} \rightarrow \mathbb{R}$ be a convex and Lipschitz continuous function. 
	The problem \eqref{pro1:exp1-1} can be solved by the convex program
	\begin{align*}
	 &\inf_{\alpha \in\mathcal{A},z\ge0}~z^\top b\\
		&~~~~\text{s.t.}~ \left[A^\top z\right]_i\ge \ell(\widehat{y}_i^\top\alpha), ~i=1,...,N, \\
		&~~~~~~~~~~\left[A^\top z\right]_{N+1}\ge \operatorname{Lip}(\ell)||\alpha||_*,\\
        &~~~~~~~~~~\left[A^\top z\right]_{N+2}\ge 0.
	\end{align*}
    Note that the objective function of above reformulation is linear. Moreover, given $\alpha$, this formulation has $N+6$ variables, $2N+8$ linear constraints and $1$ convex constraint.
	\item[(ii)]Let $\mathbb{D}_{\mathcal{Y}}(y_1,y_2)=||y_1-y_2||_{\mathcal{Y}}^q$ for any $y_1,y_2\in\mathcal{Y}$ and $q\in(1,\infty)$. 
\begin{itemize}
	\item [(1)]Assume the loss function $\ell$ takes one of the following two forms, multiplied by $C>0$ :
	\begin{itemize}
		\item [(a)]$\ell_1(x)=x+b$ or $\ell_1(x)=-x+b$ with some $b \in \mathbb{R}$;
		\item[(b)]  $\ell_2(x)=\left|x-b_1\right|+b_2$ with some $b_1, b_2 \in \mathbb{R}$.
	\end{itemize}The problem \eqref{pro1:exp1-1} can be solved by the convex program
	\begin{align*}
		&\inf_{\alpha  \in\mathcal{A},v_{N+1}\ge 0,z\ge 0,s}~z^\top b+s\\
		&~~~~~~~~~~~\text{s.t.}~[A^\top z]_i\ge \ell(\widehat{y}_i^\top\alpha), ~i=1,...,N,\\
		&~~~~~~~~~~~~~~~~~[A^\top z]_{N+1}\ge v_{N+1},\\
            &~~~~~~~~~~~~~~~~~[A^\top z]_{N+2}\ge 0,\\
            &~~~~~~~~~~~~~~~~~s\ge C_qv_{N+1}\left\|\frac{\alpha}{v_{N+1}}\right\|_*^{\frac{q}{q-1}},
	\end{align*}
	where $C_q=C^{\frac{q}{q-1}}\left(q^{\frac{1}{1-q}}-q^{\frac{q}{1-q}}\right)$. Note that the objective function of above reformulation is linear. Moreover, given $\alpha$, this formulation has $N+8$ variables, $2N+9$ linear constraints and $1$ convex constraint.
	\item[(2)] Assume the loss function $\ell$ is of the form $\ell(x) = \left(C \cdot \ell_i(x)\right)^q$, where $C > 0$, $q \in (1, \infty)$, and $\ell_i$ is one of the following:	
	\begin{itemize}
		\item [(a)]$\ell_1(x)=(x-b)_{+}$with some $b \in \mathbb{R}$;
		\item[(b)] $\ell_2(x)=(x-b)_{-}$with some $b \in \mathbb{R}$;
		\item[(c)] $\ell_3(x)=\left(\left|x-b_1\right|-b_2\right)_{+}$with some $b_1 \in \mathbb{R}$ and $b_2 \geqslant 0$;
		\item[(d)] $\ell_4(x)=\left|x-b_1\right|+b_2$ with some $b_1 \in \mathbb{R}$ and $b_2>0$.
	\end{itemize} 
	The problem \eqref{pro1:exp1-1} can be solved by the convex program	
	\begin{align*}
		&\inf_{\alpha\in\mathcal{A},v_{N+1}\ge0,y\geq 0,z\ge 0,s}\left[z^\top b+C_q^1y+s\right]^q\\
		&~~~~~~~~~~~~~~\text{s.t.}~[A^\top z]_i\ge \frac{\ell(\widehat{y}_i^\top\alpha)}{y^{q-1}}, ~i=1,...,N,\\
		&~~~~~~~~~~~~~~~~~~~~[A^\top z]_{N+1}\ge v_{N+1},\\
            &~~~~~~~~~~~~~~~~~~~~[A^\top z]_{N+2}\ge 0,\\
            &~~~~~~~~~~~~~~~~~~~~s\ge C_qv_{N+1}\left\|\frac{\alpha}{v_{N+1}}\right\|_*^{\frac{q}{q-1}},
	\end{align*}
	where $C_q^1=q^{\frac{1}{1-q}}-q^{\frac{q}{1-q}}$ and $C_q=C^{\frac{q}{q-1}}\left(q^{\frac{1}{1-q}}-q^{\frac{q}{1-q}}\right)$. And thus the problem \eqref{pro1:exp1-1} can be solved by the convex program	
	\begin{align*}
		&\inf_{\alpha\in\mathcal{A},v_{N+1}\ge0,y\geq 0,z\ge 0,s}z^\top b+C_q^1y+s\\
		&~~~~~~~~~~~~~~\text{s.t.}~[A^\top z]_i\ge \frac{\ell(\widehat{y}_i^\top\alpha)}{y^{q-1}}, ~i=1,...,N,\\
		&~~~~~~~~~~~~~~~~~~~~[A^\top z]_{N+1}\ge v_{N+1},\\
            &~~~~~~~~~~~~~~~~~~~~[A^\top z]_{N+2}\ge 0,\\
            &~~~~~~~~~~~~~~~~~~~~s\ge C_qv_{N+1}\left\|\frac{\alpha}{v_{N+1}}\right\|_*^{\frac{q}{q-1}}.
	\end{align*} Note that the objective function of above reformulation is linear. Moreover, given $\alpha$, this formulation has $N+9$ variables, $N+10$ linear constraints and $N+1$ convex constraint.
\end{itemize}
\item[(iii)] Let $\mathbb{D}_{\mathcal{Y}}(y_1,y_2)=||y_1-y_2||_{\mathcal{Y}}^q$ for any $y_1,y_2\in\mathcal{Y}$ and $q\in(1,\infty)$. {Assuming the loss function $\ell$ is convex and satisfies $\ell(x)-\ell(x_0) \leqslant L|x-x_0|^q+M, x \in \mathbb{R}$ for some $L,M>0$, and some $x_0\in\mathbb{R}$, the problem} \eqref{pro1:exp1-1} can be solved by the convex program
	\begin{align}
		&\inf_{\alpha\in\mathcal{A},\eta \geq 0,z\ge 0}~z^\top b \nonumber \\
		&~~~~~~~~\text{s.t.}~[A^\top z]_i\ge \sup _{z \in \mathbb{R}}\left\{z \cdot \widehat{y}_i^{\top} \alpha-\ell^*(z)+\eta \cdot \frac{|z|^{q^*}}{q^* q^{q^*-1}}\right\}, ~i=1,...,N, \label{1dcon} \\
		&~~~~~~~~~~~~~~[A^\top z]_{N+1}\ge ||\alpha||_*^q/\eta^{q-1},\nonumber\\
            &~~~~~~~~~~~~~~[A^\top z]_{N+2}\ge 0.\nonumber
	\end{align} 
    Note that the objective function of above reformulation is linear. Moreover, given $\alpha$, this formulation has $N+7$ variables, $N+8$ linear constraints and $N+1$ convex constraint.
\end{itemize}
        \item[(B)] Note that partial optimal transport model \eqref{m02} given by \cite{E22} is equivalent to problem \eqref{pro1:exp1-1} with set $\mathcal{V}$ given in Theorem \ref{thm2}. Applying the conjugate of the indicator function of $\mathcal{V}_+$ in Example \ref{partial-conjugate} to Propositions \ref{regular_exp} and \ref{general_exp}, we have the following results.
        \begin{itemize}
	\item [(i)]Let $\mathbb{D}_{\mathcal{Y}}(y_1,y_2)=||y_1-y_2||_{\mathcal{Y}}$ for any $y_1,y_2\in\mathcal{Y}$ and $\ell: \mathbb{R} \rightarrow \mathbb{R}$ be a convex and Lipschitz continuous function. 
	The problem \eqref{pro1:exp1-1} can be solved by the convex program
	\begin{align*}
	 &\inf_{\alpha \in\mathcal{A},z\ge0}~z^\top b'\\
		&~~~~~\text{s.t.}~ \left[(A')^\top z\right]_i\ge \ell(\widehat{y}_i^\top\alpha), ~i=1,...,N, \\
		&~~~~~~~~~~~\left[(A')^\top z\right]_{N+1}\ge \operatorname{Lip}(\ell)||\alpha||_*.
	\end{align*}
	\item[(ii)]Let $\mathbb{D}_{\mathcal{Y}}(y_1,y_2)=||y_1-y_2||_{\mathcal{Y}}^q$ for any $y_1,y_2\in\mathcal{Y}$ and $q\in(1,\infty)$. 
\begin{itemize}
	\item [(1)]Assume the loss function $\ell$ takes one of the following two forms, multiplied by $C>0$ :
	\begin{itemize}
		\item [(a)]$\ell_1(x)=x+b$ or $\ell_1(x)=-x+b$ with some $b \in \mathbb{R}$;
		\item[(b)]  $\ell_2(x)=\left|x-b_1\right|+b_2$ with some $b_1, b_2 \in \mathbb{R}$.
	\end{itemize}The problem \eqref{pro1:exp1-1} can be solved by the convex program
	\begin{align*}
		&\inf_{\alpha  \in\mathcal{A},v_{N+1}\ge 0,z\ge 0}~z^\top b'+C_qv_{N+1}\left\|\frac{\alpha}{v_{N+1}}\right\|_*^{\frac{q}{q-1}}\\
		&~~~~~~~~~~\text{s.t.}~[(A')^\top z]_i\ge \ell(\widehat{y}_i^\top\alpha), ~i=1,...,N,\\
		&~~~~~~~~~~~~~~~~[(A')^\top z]_{N+1}\ge v_{N+1},
	\end{align*}
	where $C_q=C^{\frac{q}{q-1}}\left(q^{\frac{1}{1-q}}-q^{\frac{q}{1-q}}\right)$.
	\item[(2)] Assume the loss function $\ell$ is of the form $\ell(x) = \left(C \cdot \ell_i(x)\right)^q$, where $C > 0$, $q \in (1, \infty)$, and $\ell_i$ is one of the following:	
	\begin{itemize}
		\item [(a)]$\ell_1(x)=(x-b)_{+}$with some $b \in \mathbb{R}$;
		\item[(b)] $\ell_2(x)=(x-b)_{-}$with some $b \in \mathbb{R}$;
		\item[(c)] $\ell_3(x)=\left(\left|x-b_1\right|-b_2\right)_{+}$with some $b_1 \in \mathbb{R}$ and $b_2 \geqslant 0$;
		\item[(d)] $\ell_4(x)=\left|x-b_1\right|+b_2$ with some $b_1 \in \mathbb{R}$ and $b_2>0$.
	\end{itemize} 
	The problem \eqref{pro1:exp1-1} can be solved by the convex program	
	\begin{align*}
		&\inf_{\alpha\in\mathcal{A},v_{N+1}\ge0,y\geq 0,z\ge 0}\left[z^\top b'+C_q^1y+C_qv_{N+1}\left\|\frac{\alpha}{v_{N+1}}\right\|_*^{\frac{q}{q-1}}\right]^q\\
		&~~~~~~~~~~~~~\text{s.t.}~[(A')^\top z]_i\ge \frac{\ell(\widehat{y}_i^\top\alpha)}{y^{q-1}}, ~i=1,...,N,\\
		&~~~~~~~~~~~~~~~~~~~[(A')^\top z]_{N+1}\ge v_{N+1},
	\end{align*}
	where $C_q^1=q^{\frac{1}{1-q}}-q^{\frac{q}{1-q}}$ and $C_q=C^{\frac{q}{q-1}}\left(q^{\frac{1}{1-q}}-q^{\frac{q}{1-q}}\right)$.  
\end{itemize}
\item[(iii)] Let $\mathbb{D}_{\mathcal{Y}}(y_1,y_2)=||y_1-y_2||_{\mathcal{Y}}^q$ for any $y_1,y_2\in\mathcal{Y}$ and $q\in(1,\infty)$. {Assuming the loss function $\ell$ is convex and satisfies $\ell(x)-\ell(x_0) \leqslant L|x-x_0|^q+M, x \in \mathbb{R}$ for some $L,M>0$, and some $x_0\in\mathbb{R}$, the problem} \eqref{pro1:exp1-1} can be solved by the convex program
	\begin{align*}
		&\inf_{\alpha\in\mathcal{A},\eta \geq 0,z\ge 0}~z^\top b'\\
		&~~~~~~~~\text{s.t.}~[(A')^\top z]_i\ge \sup _{z \in \mathbb{R}}\left\{z \cdot \widehat{y}_i^{\top} \alpha-\ell^*(z)+\eta \cdot \frac{|z|^{q^*}}{q^* q^{q^*-1}}\right\}, ~i=1,...,N, \\
		&~~~~~~~~~~~~~~[(A')^\top z]_{N+1}\ge ||\alpha||_*^q/\eta^{q-1}.
	\end{align*} 
\end{itemize}
    \end{itemize}
\begin{remark}\label{comparison_full}
In what follows, we present a more detailed discussion of the full optimal transport model in comparison with the analysis of \cite{N24}. Specifically, for the case with expectation as the risk measure and $\mathcal{N}_\omega = [\epsilon,1]$ for some $\epsilon \in (0,1]$, \cite{N24} were the first to study this model and established the following equivalence:
\begin{align*}
	&\inf_{\alpha\in\mathcal{A}}\sup_{\substack{\mathbb{Q} \in \mathbb{B}_{\delta_0}(\widehat{\mathbb{P}}) \\ \mathbb{Q}\left(X \in \mathcal{N}_\gamma(x_0)\right) \in[\epsilon,1]}}\mathbb{E}_{\mathbb{Q}}\left[\ell(Y, \alpha) \mid X \in \mathcal{N}_\gamma\left(x_0\right)\right]\\
	= &\begin{cases}\inf& \phi+(N \varepsilon)^{-1} \nu^{+}-N^{-1} \nu^{-} \\ \text {s.t. } & \alpha\in\mathcal{A},~\left(\lambda, s, \nu^{+}, \nu^{-}, \phi, \varphi, \psi\right) \in \mathcal{U} \\ & s_i \geq \sup _{{y}_i \in \mathcal{Y}}\left\{\ell\left({y}_i, \alpha\right)-\lambda_i \mathbb{D}_{\mathcal{Y}}\left({y}_i, \widehat{y}_i\right)\right\} \quad i=1,...,N,\end{cases}
\end{align*}
where $$
\mathcal{U} \triangleq\left\{\begin{array}{ll}
	\left(\lambda, s, \nu^{+}, \nu^{-}, \phi, \varphi, \psi\right) \in \mathbb{R}_{+}^N \times \mathbb{R}^N \times \mathbb{R}_{+} \times \mathbb{R}_{+} \times \mathbb{R} \times \mathbb{R}_{+} \times \mathbb{R}_{+}^N & \text { such that: } \\
	\phi+d_i \varphi+\psi_i-s_i \geq 0 & i=1,...,m \\
    \phi-d_i \varphi+\psi_i-s_i \geq 0 & i=m+1,...,N \\
	\nu^{+}-\nu^{-}+\left(\sum_{i= m+1}^N d_i-N \delta_0\right) \varphi-\sum_{i =1}^N \psi_i \geq 0 & \\
	\varphi-\lambda_i \geq 0 & i=1,...,N
\end{array}\right\}.
$$
Note that the objective function of the above reformulation is linear. Moreover, given $\alpha$, this formulation has $3N+4$ variables, $4N+4$ linear constraints, and $N$ convex constraints. A further comparison with \cite{N24} yields the following observations.
\begin{itemize}
	\item[(i)] 
    
    Although our formulations based on the projection property of \cite{W22} also involve the unsolved supremum problem in constraint \eqref{1dcon}, this problem is one-dimensional, whereas in \cite{N24} it is high-dimensional, resulting in substantially greater complexity.
	\item[(ii)] 
    
    For specific loss functions—such as the mean–variance and mean–CVaR loss functions studied in \cite{N24}—the $N$ supremum problems in the constraints \eqref{1dcon} admit explicit forms, which reduce to SOC constraints. Compared with \cite{N24}, our reformulation is considerably simpler and more tractable, requiring simpler constraints (SOC constraints) while reformulation in \cite{N24} requires more complex constraint (SDP constraints). See Remark~\ref{comparison_mean_variance} for a detailed comparison under the mean–variance loss function with the squared $\ell_2$-norm, and Remark~\ref{comparison_mean_CVaR} for the corresponding comparison under the mean–CVaR loss function with the squared $\ell_2$-norm.
	\item[(iii)] 
    Our reformulations are considerably simpler largely thanks to the union-ball characterization, which leverages the structural properties of standard Wasserstein DRO and requires invoking duality only once, introducing a single multiplier $\eta$. By contrast, the analysis in \cite{N24}, which overlooks this structural property and instead relies on a more cumbersome break-and-conquer approach, applies duality repeatedly—most clearly reflected in the $N$ multipliers $\lambda \in \mathbb{R}_+^N$ that grow with $N$.

\end{itemize}
\end{remark}

\end{document}